\PassOptionsToPackage{usenames,dvipsnames}{xcolor}

\documentclass[10pt]{article}
\usepackage[preprint]{tmlr}
\usepackage{tabularx}
\usepackage{hyperref}
\usepackage{url}
\hypersetup{
    colorlinks=true,
    linkcolor=blue,
    filecolor=magenta,      
    urlcolor=cyan,
    citecolor=blue,
}
\usepackage{float}   

\usepackage{dsfont}
\usepackage{amsmath,amsfonts,amssymb, amsthm}     
\usepackage{accents}
\usepackage{bbm} 
\usepackage{bm} 
\usepackage{nicefrac}
\usepackage{microtype} 
\usepackage{graphicx}
\graphicspath{{./}}
\usepackage{wrapfig,lipsum,booktabs}
\usepackage{titlesec}
\usepackage[capitalize,noabbrev,nameinlink]{cleveref} 
\usepackage[normalem]{ulem}
\usepackage{bbm}
\usepackage{comment}
\usepackage{subcaption} 
\usepackage{xfrac}
\usepackage{natbib}
\usepackage{wrapfig}
\usepackage{booktabs}
\usepackage[toc,page,header]{appendix}
\usepackage{minitoc}
\usepackage{todonotes}
\setuptodonotes{color=orange!30, bordercolor=gray!50, tickmarkheight=0.3cm,size=\small}
\usepackage{booktabs} 
\usepackage{etoc}
\usepackage{color, colortbl}
\usepackage{wrapfig} 
\usepackage[most]{tcolorbox}
\usepackage{rotating}
\usepackage{tikz}
\usetikzlibrary{positioning, arrows.meta, shapes.geometric, calc, backgrounds, fit}

\definecolor{Gray}{gray}{0.95}

\usepackage{hyperref}

\usepackage{amsmath}
\usepackage{amssymb}
\usepackage{mathtools}
\usepackage{amsthm}

\makeatletter
\newtheorem*{rep@theorem}{\rep@title}
\newcommand{\newreptheorem}[2]{%
\newenvironment{rep#1}[1]{%
 \def\rep@title{#2 \ref{##1}}%
 \begin{rep@theorem}}%
 {\end{rep@theorem}}}
\makeatother

\theoremstyle{plain}
\newtheorem{theorem}{Theorem}
\newreptheorem{theorem}{Theorem}

\newtheorem{lemma}[theorem]{Lemma}
\newreptheorem{lemma}{Lemma}

\newreptheorem{corollary}{Corollary}
\theoremstyle{definition}
\newtheorem{definition}[theorem]{Definition}
\newreptheorem{definition}{Definition}
\newtheorem{assumption}[theorem]{Assumption}
\newreptheorem{assumption}{Assumption}
\newtheorem*{definition*}{Definition}
\newtheorem*{assumption*}{Assumption}
\theoremstyle{remark}
\newtheorem*{remark}{Remark}

\tcolorboxenvironment{definition}{
  colback=gray!10!white,
  width=\dimexpr\linewidth+10pt\relax,
  enlarge left by=-5pt,
  enlarge right by=-5pt,
  boxsep=5pt,
  left=0pt,
  right=0pt,
  top=0pt,
  bottom=0pt,
  arc=0pt,
  boxrule=0pt,
  colframe=white
}

\tcolorboxenvironment{definition*}{
  colback=gray!10!white,
  width=\dimexpr\linewidth+10pt\relax,
  enlarge left by=-5pt,
  enlarge right by=-5pt,
  boxsep=5pt,
  left=0pt,
  right=0pt,
  top=0pt,
  bottom=0pt,
  arc=0pt,
  boxrule=0pt,
  colframe=white
}

\tcolorboxenvironment{assumption}{
  colback=gray!10!white,
  width=\dimexpr\linewidth+10pt\relax,
  enlarge left by=-5pt,
  enlarge right by=-5pt,
  boxsep=5pt,
  left=0pt,
  right=0pt,
  top=0pt,
  bottom=0pt,
  arc=0pt,
  boxrule=0pt,
  colframe=white
}

\tcolorboxenvironment{assumption*}{
  colback=gray!10!white,
  width=\dimexpr\linewidth+10pt\relax,
  enlarge left by=-5pt,
  enlarge right by=-5pt,
  boxsep=5pt,
  left=0pt,
  right=0pt,
  top=0pt,
  bottom=0pt,
  arc=0pt,
  boxrule=0pt,
  colframe=white
}
\tcolorboxenvironment{remark}{
  colback=gray!10!white,
  width=\dimexpr\linewidth+10pt\relax,
  enlarge left by=-5pt,
  enlarge right by=-5pt,
  boxsep=5pt,
  left=0pt,
  right=0pt,
  top=0pt,
  bottom=0pt,
  arc=0pt,
  boxrule=0pt,
  colframe=white
}

\tcolorboxenvironment{theorem}{
  colback=blue!5!white,
  width=\dimexpr\linewidth+10pt\relax,
  enlarge left by=-5pt,
  enlarge right by=-5pt,
  boxsep=5pt,
  left=0pt,
  right=0pt,
  top=0pt,
  bottom=0pt,
  arc=0pt,
  boxrule=0pt,
  colframe=white
}

\tcolorboxenvironment{corollary}{
  colback=blue!5!white,
  width=\dimexpr\linewidth+10pt\relax,
  enlarge left by=-5pt,
  enlarge right by=-5pt,
  boxsep=5pt,
  left=0pt,
  right=0pt,
  top=0pt,
  bottom=0pt,
  arc=0pt,
  boxrule=0pt,
  colframe=white
}

\tcolorboxenvironment{lemma}{
  colback=blue!5!white,
  width=\dimexpr\linewidth+10pt\relax,
  enlarge left by=-5pt,
  enlarge right by=-5pt,
  boxsep=5pt,
  left=0pt,
  right=0pt,
  top=0pt,
  bottom=0pt,
  arc=0pt,
  boxrule=0pt,
  colframe=white
}

\title{To Retain or to Adapt? Generalizing Continual Learning}



\makeatletter
\renewcommand\@fnsymbol[1]{\ensuremath{\ifcase#1\or \dagger\or \ast\or
  \ddagger\or \mathsection\or \mathparagraph\or \|\else\@ctrerr\fi}}
\makeatother

\author{\name Giulia Lanzillotta\thanks{Equal contribution.}
  \email glanzillo@ethz.ch \\
  \addr ETH AI Center, Department of Computer Science, ETH Zürich
  \AND
  \name Mandana Samiei\footnotemark[1]
  \email samieima@mila.quebec \\
  \addr Mila - Quebec AI Institute, School of Computer Science, McGill University
  \AND
  \name Doina Precup \\
  \addr  Mila - Quebec AI Institute, School of Computer Science,  McGill University
  \AND
  \name Razvan Pascanu\thanks{Equal supervision.} \\
  \addr Mila - Quebec AI Institute
  \AND
  \name Claire Vernade\footnotemark[2] \\
  \addr University of Technology Nuremberg
}

\begin{document}
\doparttoc 
\faketableofcontents 

\maketitle







\begin{abstract}
The Continual Learning (CL) literature has long been driven by the goal of mitigating catastrophic forgetting. This objective rests on a pervasive, often unstated assumption: that a lifelong learner should approximate the Joint-Task Learning (JTL) solution and retain all previously acquired knowledge. We challenge this retention-centered premise, arguing that in non-stationary environments prioritizing retention can impede real-time adaptation. Shifting the focus to the \emph{Average Lifelong Error} (ALE), we formalize CL as an online optimization problem governed by the interaction between environmental and learning dynamics. We introduce \emph{Transfer Efficiency} as a quantitative measure of the tension between \emph{Instability}, the bias inherited from conflicting past experience, and \emph{Transient Error}, the optimization cost of learning new tasks from scratch. Under mild convergence conditions, holding across linear and neural network models, this decomposition yields a \emph{Critical Task Duration}: a closed-form threshold beyond which historical knowledge transitions from a warm-start advantage to an optimization liability whenever retention induces a positive stationary bias. We validate these theoretical predictions on continual image classification and reinforcement learning benchmarks.
Finally, by connecting continual learning to the online learning framework of predictable sequences, we show that JTL is only one instance of a broader family of objectives, and we propose a new general class of continual learning algorithms, which we call \emph{Predictive Continual Learning}.
Predictive CL algorithms optimize expected future performance under an explicit, dynamically updated model of future tasks. As a proof of concept, we analyze a Window algorithm that interpolates between JTL and Independent-Task Learning (ITL), outperforming both under controlled distributional drift.\looseness=-1
\vspace{-0.5mm}
\end{abstract}

\section{Introduction}

Real-world environments are inherently nonstationary, evolving over time and requiring systems that can adapt continually and do so efficiently. 
\emph{Continual learning (CL)}~\citep[e.g.][]{Ring94,Thrun95,parisi2019continual,hadsell2020embracing}---also known as lifelong learning---addresses this challenge by developing algorithms that can acquire, retain, and update knowledge over extended sequences of experience.
Unlike conventional machine learning, where the data distribution is fixed, continual learning operates under distributional drift, where performance must be maintained relative to a moving target. 

Progress in continual learning has historically been guided and measured by a diverse list of metrics, which reflect the many desiderata of an adaptive agent, namely:
 \emph{forward transfer}, \emph{minimal forgetting}, \emph{preservation of plasticity}, and \emph{computational efficiency}~\citep{schwarz18a,hadsell2020embracing,mundt2023wholistic}. This distinction of separate metrics has inevitably led to the development of methods focused only on one or a subset of those, at the expense of the other desiderata \citep{wolczyk2021continual,dohareLossPlasticityDeep2024a}.
Recently, \citet{kumar2023continual} formally proved that maximizing real-time performance naturally leads to a trade-off between these requirements---one that is driven dynamically by the environment rather than fixed a priori.
Their proposed \emph{Average Lifelong Error (ALE)} objective expresses continual learning as the minimization of the online expected error in a dynamic environment, subject to resource constraints.
Importantly, this perspective frames forgetting as a \emph{functional necessity} to adapt within finite computations. As \citet{kumar2023continual} shows, a learner with limited compute must \emph{selectively retain only the information predictive of the future}, and systematically discard the rest to maintain adaptive capacity.


In the real world, it is difficult to 
know in advance which piece of past information will be useful in the future. 
Consequently, any lifelong learning algorithm must rely on assumptions or an explicit model about the future. 
We argue that algorithms designed to preserve performance on all previously encountered tasks, including most continual learning methods aimed at minimizing catastrophic forgetting, implicitly encode such a model. 
Specifically, the \emph{Joint-Task Learning (JTL)} objective---the average loss over all tasks observed up to the current time--- corresponds to a particular assumption about future structure: the future tasks are drawn from the same distribution as past tasks. 
Equivalently, JTL is the special case of ALE objective under an ergodic environment, where there statistical properties of the task stream remain stationary over time. 

While this assumption can be reasonable in many scenarios, e.g. when the agent gradually explores a larger but fundamentally stable world, it breaks down when the environment changes \emph{irreversibly} and past data become progressively less predictive. 
In such settings, optimizing the JTL objective can place excessive weight on outdated information, hindering adaptation and yielding suboptimal lifelong performance. 
\vspace{-0pt}


\begin{figure}[htbp]
    \centering
    \begin{minipage}[t]{0.57\linewidth}
        \vspace{0pt}
        \centering
        \includegraphics[width=\linewidth]{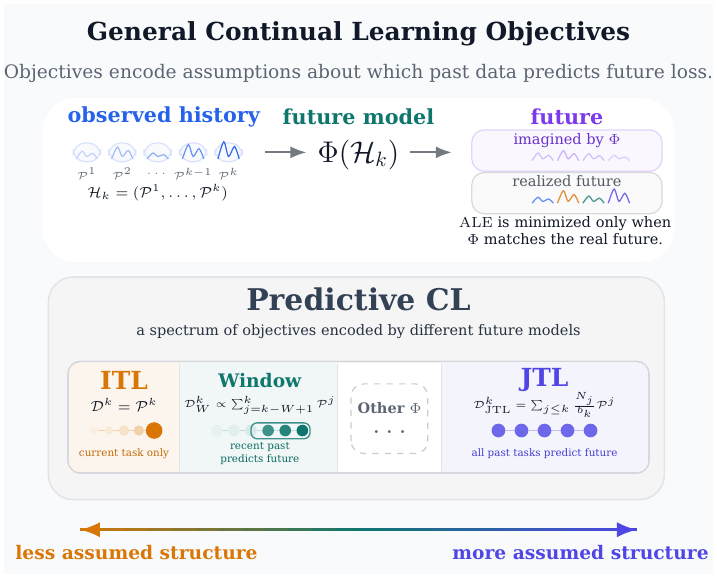}
    \end{minipage}
    \hfill
    \begin{minipage}[t]{0.39\linewidth}
        \vspace{0pt}
        \caption{\textbf{General Continual-learning objectives.} ITL and JTL are two limiting cases of a broader continuum. ITL makes the weakest structural assumption, optimizing only for the current task. JTL makes a strong predictable-structure assumption, using the accumulated past as evidence for future performance. Many existing continual-learning methods can be viewed as approximations to this JTL-like objective, because they introduce mechanisms for retaining or reconstructing past-task performance. Our generalization of CL treats the middle of the continuum as an explicit design space; we call algorithms that instantiate this generalized objective \emph{Predictive CL algorithms}.}
        \label{fig:general_cl_definition}
    \end{minipage}
    \vspace{-0.cm}
\end{figure}

To make this perspective concrete, we contrast the JTL formulation with the \emph{Independent-Task Learning (ITL)} objective, which optimizes only on the current task and retain no information from past tasks. 
This comparison highlights that neither objective is universally optimal. In rapidly changing environments, the ITL objective can achieve lower lifelong error by avoiding the implicit bias toward past distributions. 
Conversely, when past experience remains predictive of the future, the JTL objective benefits from this bias to accelerate learning and improve generalization. By analyzing these two extremes in both controlled quadratic models and deep neural networks, we characterize the conditions under which memory becomes a liability for learning. This analysis introduces three key concepts: \emph{Transfer Efficiency}, which quantifies this trade-off; \emph{Instability}, which measures the cost of bias inherited from previous tasks; and \emph{Transient Error}, which captures the cost of learning a new task from scratch.

Our framework connects several adjacent literatures---online learning, multi-task learning, and learning to learn---
revealing a shared theoretical foundation. It offers analytical tools to challenge and extend existing theories. 
In particular, the connection to the online learning literature allows us to interpret the JTL and ITL objectives as part of a family of algorithms based on the concept of \emph{predictable sequences} \citep{rakhlinOnlineLearningPredictable2013}. 
In light of this connection, we propose a \emph{generalization} of continual learning algorithms beyond the restrictive assumption of ergodicity. Rather than optimizing for perfect retention or complete adaptation, the objective becomes minimizing the expected future error under a model of future tasks, where the predictive model itself may be learned and updated online. We refer to algorithms that optimize this generalized objective as \emph{ Predictive Continual Learning (Predictive CL)}.
This formulation transforms the space between JTL and ITL from a gap between two extremes into a principled design space, enabling continual learning algorithms to adapt their retention strategy to the structure of the environment rather than to a fixed, a priori objective.
\paragraph{Contributions.} We summarize the main contributions of this paper as follows: 
\begin{enumerate}
    \item We recast continual learning as an online optimization problem centered on Average Lifelong Error (ALE), showing that the widely adopted Joint-Task Learning (JTL) objective is not a universally optimal target in non-stationary environments, but rather corresponds to a particular assumption about future task structure. 
    \vspace{-2mm}
    \item We formalize CL as the interaction between environment dynamics and learning dynamics, 
    revealing that the utility of memory depends jointly on task distributions, optimization dynamics, and model properties.
    \vspace{-2mm}
    \item We introduce three analytical quantities: \emph{Transfer Efficiency}, \emph{Instability}, and \emph{Transient Error} that decompose the benefit of transferring past knowledge. This decomposition yields a \emph{Critical Task Duration} under which retained knowledge transitions from accelerating learning to hindering adaptation.
    \vspace{-2mm}
    \item We derive closed-form expressions for these quantities in quadratic objectives, showing that the environment Growth Rate determines the asymptotic benefits of long term memory. We further demonstrate that the same mechanisms emerge empirically in deep linear models, continual image classification, and reinforcement learning benchmarks.
    \vspace{-2mm}
    \item We connect continual learning to the online learning literature, unifying JTL and ITL within a broader family of objectives. Building on this perspective, we propose \emph{Predictive CL}, a generalized framework in which algorithms optimize expected future performance under an explicit predictive model of future tasks. As a proof of concept, we analyze a Window algorithm that interpolates between ITL and JTL and outperforms both under controlled distributional drift.
\end{enumerate}

\section{Background \& Setup}
\label{sec:background}

\subsection{A General Objective for Continual Learning}
We begin by recalling the continual learning objective introduced by \citet{kumar2023continual}, and instantiate it in the supervised learning setting.\footnote{We adopt a slightly different notation  from the original formulation to better align with the continual learning literature.}  We review the main concepts here and provide an in-depth exposition in the supplementary material (\cref{app:intro}).

\paragraph{General setting.}
During its lifetime, an agent accumulates a \emph{history} $H_T = \{(a_0, o_1), \dots, (a_{T-1}, o_T)\}$ of action-observation pairs in $A \times O$. At each time step $n$ the agent samples an action $a_n \sim \pi_n(\cdot \mid H_n)$ from a \emph{policy} $\pi_n$ conditioned on the history. The environment then produces a new observation $o_{n+1} \sim \mathcal{P}(\cdot \mid H_n, a_n)$, and the learner uses the new experience $(a_n, o_{n+1})$ together with the pre-existing history $H_n$ to produce a new policy estimate $\pi_{n+1}$.

In this work we model the learning step as a \emph{stochastic process} guided by an \emph{algorithm} $\mathcal{A}$, which updates the policy according to $\pi_n \sim \mathcal{A}(\cdot | H_n)$. Therefore, at each time step there are two interacting stochastic processes (depicted in \cref{fig:setting}): \emph{the learning process}, by which $\pi_{n}$ and then $a_n$ are sampled, and \emph{the environment process}, by which $o_{n+1}$ is sampled. Together, these coupled processes determine the agent's lifetime trajectory.

\begin{figure}[htbp]
    \centering
    \begin{minipage}{0.4\textwidth}
        \centering
        \includegraphics[width=\linewidth]{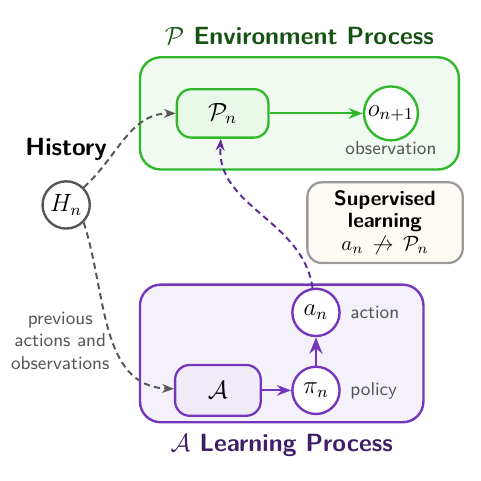}
    \end{minipage}
    \hfill
    \begin{minipage}{0.50\textwidth}
        \caption{\textbf{The environment and learning (or agent) stochastic processes}, whose interaction gives rise to the history of actions and observations $H_n = \{(a_0, o_1), \dots, (a_{n-1}, o_n)\}$. Dashed lines indicate conditioning, coloured solid lines indicate sampling. Samples of random variables are enclosed in circles, while stochastic functions in squares. The supervised-learning inset marks the action-to-environment conditioning edge omitted in supervised learning, where the environment process is independent of the action process---but the reverse is not true.\looseness=-1}
        \label{fig:setting}
    \end{minipage}
    \vspace{-1mm}
\end{figure}

The \textbf{supervised learning} problem is a special case of this general formulation in which \emph{the environment process is independent of the learning process}---though the converse does not hold, since the learner's policy still depends on past observations. Concretely, the new observation $o_{n+1}$ is independent of the action $a_n$, so $o_{n+1} \sim \mathcal{P}(\cdot \mid H_n)$.

\paragraph{Continual learning objective.}
Let $\ell(\pi, o)$ be a non-negative measure of error for a learner policy $\pi$ when given an observation $o$\footnote{Note that here we're bypassing the action sampling, which is incorporated in the definition of $\ell$} and denote by $\mathcal{P}_n$ the conditional distribution $\mathcal{P}(\cdot |H_n, a_n)$. Lower values of $\ell$ correspond to better performance, a convention we keep throughout the paper. 

\vspace{-1mm}
\citet{kumar2023continual} define the general continual learning objective as the \emph{minimization of the real-time error}:
\begin{align}
    \label{eq:SLO}
    &\inf_\mathcal{A}\,\, \underbrace{\frac{1}{T} \sum_{n=1}^T
    \mathbb{E}_{\pi_n\sim\mathcal{A}(\cdot\mid H_{n}), \,o_{n+1}\sim\mathcal{P}_n}\,
    \left[\ell(\pi_n,o_{n+1})\right]}_{\mathrm{ALE}(\mathcal{A})} \qquad\text{s.t. computational constraints}
\end{align}
The computational constraints are intentionally left generic in the original formulation (Eq.~2 of \citet{kumar2023continual}), as the precise formulation depends on the specific form of $\mathcal{A}$.

Two features of this formulation are central to what follows. First, the learner is evaluated on the next-step observation $o_{n+1}$: in stationary settings this corresponds to the conventional notion of test-error, but under non-stationarity the data distribution may change in successive time steps ($\mathcal{P}_n \neq \mathcal{P}_{n+1}$), causing previously acquired knowledge to lose predictive value. Second, the computational constraint on $\mathcal{A}$ induces the essential trade-off at the heart of CL. With unlimited computational resources, an agent could retain all past information  while remaining adaptive. Under realistic resource constraints, it may be better to forget past data once they cease to predict the future. As we show in the following sections, these trade-offs emerge even in simple quadratic learning problems. \looseness=-1
\vspace{-0pt}

\subsection{Analysis Setup and Formalism}

The general formulation above accommodates arbitrary forms of non-stationarity. Following the CL literature, we introduce a task-based abstraction, where the data-generating process (described by $\mathcal{P}_n$) is \emph{locally stationary}.
\vspace{-0.1cm}

\begin{definition}[Task-based data generating process.]
\label{def:task}
A given history $H_T$ is a \emph{task-based} sequence if the sampling distribution $\mathcal{P}_n$ is locally stationary. The timeline is partitioned into time intervals called \emph{tasks} $S_k = [a_k,b_k]$ such that at each time step $n \in S_k$ an observation $o_n$ is sampled from a task-specific distribution $\mathcal{P}^k$. \looseness=-1
\end{definition}
Unless otherwise stated, we assume the history is a task-based sequence with \emph{known task switch points}. We denote by $z(n)$ the task identity at time $n$ and by $N_k = |S_k|$ the duration of task $k$. 

With a slight abuse of notation we use $\ell(\theta, o)$ to indicate $\ell(\pi_\theta, o)$, where $\theta$ are the parameters of the policy $\pi$. We denote the expected loss for task $k$ over the model parameters $\theta$ as 
\[
\mathcal{L}^k(\theta) = \mathbb{E}_{o\sim \mathcal{P}^k}[\ell(\theta, o)]
\]
We refer to the corresponding oracle minimizer $\theta^k_\star \in \arg\min_\theta \mathcal{L}^k(\theta)$ as a \emph{task solution vector}.  Within this formalism, we define the \emph{Average Task Error} as the real-time error accumulated during a single task:
\[
    \mathrm{ALE}^k(\mathcal{A}) = \frac{1}{N_k} \sum_{n=a_k}^{b_k} \mathbb{E}_{\theta \sim \mathcal{A}(\cdot|H_n)}[\mathcal{L}^k(\theta)]
\]
It follows that the total lifelong error is a weighted sum of these task errors $\mathrm{ALE}(\mathcal{A}) = \sum_{k=1}^{z(T)} \bar{w}_k \cdot \mathrm{ALE}^k(\mathcal{A})$, where the \emph{weight of each task} $\bar{w}_k = N_k/T$ is simply its relative presence over the agent's lifetime. 

\emph{Literature note.} Defining $c^k = \mathcal{L}^k(\theta^k_\star)$ the optimal value of the loss for task $k$ for the chosen model class, we get the average \emph{regret} of the optimization algorithm:  $\mathrm{ALE}^k(\mathcal{A}) - c^k$. The notion of \emph{regret} is common in online learning \citep{cesa2006prediction, shalev2012online, zinkevich2003online, alquier2017regret}, and measures the cumulative loss of a learner along its trajectory \emph{with respect to a given comparator}.


\subsection{The algorithm as a Markov Process}

We treat the algorithm as a stochastic process. This is the most generic framing of an iterative learner: the rest of the theory in this paper instantiates it on SGD, but the definitions below cover any first-order stochastic-approximation method. More formally, we define $\mathcal{A}$ to be a discrete-time stochastic process $\{\theta_n\}_{n\ge 0}$ taking values in the parameter space, and we denote by $Q_n$ the resulting parameter distribution at time $n$. The evolution of the parameters is governed by a stochastic update rule $U$. In this work we consider algorithms where, at step $n$, the update rule depends on the current parameters $\theta_n$ and a data batch $\xi_n$ \emph{drawn from an algorithm-specific sampling distribution} $\mathcal{D}_n$:
\[
\theta_{n+1} = U(\theta_n, \xi_n),  \quad \text{where } \xi_n \sim \mathcal{D}_n.
\]
This formula covers SGD as our running example and, by augmenting $\theta_n$ to include auxiliary variables, momentum-based methods, Adam, and more broadly the entire family of \emph{stochastic approximation algorithms based on first-order Markov processes}.

Crucially, \emph{the sampling distribution $\mathcal{D}_n$ is distinct from the environmental distribution $\mathcal{P}_n$}. While $\mathcal{P}_n$ dictates the incoming data stream, $\mathcal{D}_n$ dictates the data used for optimization; for replay-based algorithms---such as the JTL algorithm presented here---$\mathcal{D}_n$ may include data from previous tasks. Moreover, we assume the sampling strategy is consistent within task boundaries, such that $\mathcal{D}_n = \mathcal{D}^k$ for all $n \in S_k$.

Importantly, the \textbf{computational cost} of such algorithms depends on the complexity of the update function $U$. For example, for SGD the complexity is proportional to the batch size used at each step, since that determines the number of forward and backward passes through the model. 

Throughout this paper we reason about $Q_n$ (the parameter distribution at step $n$) and its stationary limit $Q_\infty$; in practice, expectations under $Q_n$ are estimated from the empirical ensemble of training trajectories induced by multiple random seeds, and the deep-network experiments treat each trained model as a sample from $Q_n$ at the corresponding step.

\subsection{JTL and ITL processes}
\label{ss:JTL-ITL}

Within these boundaries, many different learning algorithms can be implemented through choices of $U$ or $\mathcal{D}_n$. Here we introduce two specific examples, that will be the paper's object of study.


\paragraph{The Joint-Task Learning algorithm ($\mathcal{A}_{\text{JTL}}$).} The JTL agent aims to optimize the average loss over all seen tasks. This is equivalent to experience replay with an infinitely large buffer---often used in continual learning as an upper bound on performance. To maintain time-homogeneous dynamics within a task, we model the experience replay as a fixed mixture approximation. Specifically, the sampling distribution within task $k$, $\mathcal{D}^k_{JTL}$, is a weighted mixture of the current and all
preceding task distributions: 
\[
\mathcal{D}^k_{JTL} = \sum_{j=1}^k w_j^k\cdot \mathcal{P}^j, \qquad \text{where }w_j^k = \frac{N_j}{b_k}\,.
\]

\paragraph{The Independent-Task Learning algorithm ($\mathcal{A}_{\text{ITL}}$).}
The ITL agent optimizes solely based on the current data stream. Consequently, the sampling distribution is identical to the current task distribution, 
\[
\mathcal{D}^k_{ITL} = \mathcal{P}^k\,.
\]
At the boundary between tasks, the ITL agent \emph{resets} the parameters to a fixed initialization distribution, effectively discarding prior knowledge to avoid negative transfer and loss of plasticity \citep{dohareLossPlasticityDeep2024a}. A related variant warm-starts each task from the previous parameters instead of resetting; the differences are minor for our analysis, and we discuss them in \cref{app:reset_itl}.

\paragraph{Equivalent computational costs.}
By construction, ITL and JTL share the same optimization compute per iteration: the cost scales linearly with the examples processed in a gradient update, $\text{compute}(\mathcal{A}) \in \Theta(B)$ for batch size $B$.\footnote{Hardware optimizations might violate this linear relationship; for the present discussion the linear scaling suffices.} This is the cost appearing in \cref{eq:SLO}. JTL may require additional memory or data-access machinery to implement replay, but those are outside the optimization budget studied here. Our experiments match this budget by design, using the same number of parameter updates and the same per-update batch budget for both agents.

We study JTL and ITL not as competing heuristics but as the two limiting cases of a single design axis: both are instances of the broader family of \emph{Predictive CL algorithms} we formalize in \cref{sec:generalizing} (\cref{def:predictive-cl}), each corresponding to a particular assumption about how the future relates to the past.

\section{Transfer Efficiency}
\label{sec:transfer-efficiency}

We wish to quantify when a JTL model is preferable to an ITL one. To this end, we define the concept of \emph{Transfer Efficiency}, which is central to our analysis.

\begin{definition}[Transfer Efficiency]
The \textbf{Transfer Efficiency} for task $k$ is the performance gain of a Joint-Task Learning (JTL) agent over an Independent-Task Learning (ITL) baseline:
\[
    \mathrm{TE}^k
    := \mathrm{ALE}^k(\mathcal{A}_{\text{ITL}}) - \mathrm{ALE}^k(\mathcal{A}_{\text{JTL}}).
\]
\end{definition}
A positive value $\mathrm{TE}^k > 0$ indicates \emph{positive transfer}, where the JTL agent's past experience improves its performance on task $k$ relative to the ITL agent; conversely, $\mathrm{TE}^k < 0$ indicates \emph{negative transfer}. Because $\mathrm{ALE}^k(\mathcal{A})$ represents the regret of $\mathcal{A}$ on task $k$ (up to a constant), the Transfer Efficiency can equivalently be read as the regret reduction obtained by JTL relative to ITL.

To understand the mechanisms of transfer, we split the error into two components: a \emph{stationary component}, the error at equilibrium, and a \emph{transient component}, the error accumulated while the algorithm is actively moving toward that equilibrium.\footnote{We use \emph{equilibrium}, \emph{stationary distribution}, and \emph{convergence point} interchangeably throughout the paper. These terms originate in different communities (statistical physics, Markov chains, and optimization respectively), but for the iterate sequence $\{\theta_n\}$ they all refer to the same object: the limiting distribution $Q_\infty$ to which $Q_n$ converges as the algorithm runs on a fixed task. We avoid \emph{critical point}, which in optimization denotes a zero-gradient parameter value rather than a distribution over parameters.} Equilibrium is only well-defined if the optimization process \emph{converges}, which we henceforth assume.

We denote by $Q_\infty^\mathcal{A}$ the stationary parameter distribution to which $Q_n$ converges under algorithm $\mathcal{A}$, given a sufficiently long task duration. It is dictated jointly by the task data and the agent's initialization at the task boundary. For the JTL and ITL algorithms we write $Q_\infty^{JTL}$ and $Q_\infty^{ITL}$ respectively. We also use $\mathcal{L}^k(Q) := \mathbb{E}_{\theta \sim Q}[\mathcal{L}^k(\theta)]$ to lift the expected loss to a distribution over parameters.

\begin{definition}[Instability]
    \label{def:instability}
    The \textbf{Instability} of a task $k$ ($\mathcal{I}^k$) is the performance gap between the JTL and ITL agent \emph{measured at convergence}
    \[
    \mathcal{I}^k := \mathcal{L}^k(Q_\infty^{JTL}) - \mathcal{L}^k(Q_\infty^{ITL}).
    \]
\end{definition}
\begin{definition}[Transient Error]
    \label{def:inferential-error}
    The \textbf{Transient Error} of a task $k$ and algorithm $\mathcal{A}$ ($\delta^k(\mathcal{A})$) is the average deviation from the stationary risk over the task duration $S_k$:
    \[
        \delta^k(\mathcal{A}) = \frac{1}{N_k}\sum_{n\in S_k} \left(\mathcal{L}^k(Q^\mathcal{A}_n) - \mathcal{L}^k(Q^\mathcal{A}_\infty)\right).
    \]
\end{definition}

Combining these definitions yields a handy decomposition of Transfer Efficiency, separating the ``quality of the final solution'' from the ``speed of learning'' (visualized in \cref{fig:transfer_efficiency_final}, appendix).

\begin{lemma}[TE Decomposition]
\label{lemma:TE-decomposition}
For any task $k$, assuming the algorithms converge, the Transfer Efficiency decomposes exactly as:
\begin{equation}
\label{eq:TE-decomp}
    \mathrm{TE}^k = \underbrace{-\mathcal{I}^k}_{\text{Stationary Gap}} + \underbrace{(\delta^k(\mathcal{A}_{\text{ITL}}) - \delta^k(\mathcal{A}_{\text{JTL}}))}_{\text{Learning Speed Gap}}.
\end{equation}
\end{lemma}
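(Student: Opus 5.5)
The decomposition is an add-and-subtract identity, so the plan is to rewrite each agent's Average Task Error around its own stationary risk. Convergence is assumed, so $Q^{\mathcal{A}}_\infty$ exists for $\mathcal{A}\in\{\mathcal{A}_{\text{ITL}},\mathcal{A}_{\text{JTL}}\}$. Since $\mathcal{L}^k(Q^{\mathcal{A}}_\infty)$ is a finite constant independent of $n$, we can add and subtract it inside the sum defining $\mathrm{ALE}^k(\mathcal{A})$. We identify $\mathbb{E}_{\theta\sim\mathcal{A}(\cdot\mid H_n)}[\mathcal{L}^k(\theta)]$ with $\mathcal{L}^k(Q^{\mathcal{A}}_n)$, which is the lifted-loss notation. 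This gives
\begin{align*}
\mathrm{ALE}^k(\mathcal{A}) &= \frac{1}{N_k}\sum_{n\in S_k}\mathcal{L}^k(Q^{\mathcal{A}}_n) \\
&= \mathcal{L}^k(Q^{\mathcal{A}}_\infty) + \frac{1}{N_k}\sum_{n\in S_k}\left(\mathcal{L}^k(Q^{\mathcal{A}}_n)-\mathcal{L}^k(Q^{\mathcal{A}}_\infty)\right) \\
&= \mathcal{L}^k(Q^{\mathcal{A}}_\infty) + \delta^k(\mathcal{A}),
\end{align*}
where the constant term is recovered because $|S_k| = N_k$, and the remainder is exactly \cref{def:inferential-error}.

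Next, substitute this expression for both agents into the definition of Transfer Efficiency:
\[
\mathrm{TE}^k = \left(\mathcal{L}^k(Q^{ITL}_\infty) - \mathcal{L}^k(Q^{JTL}_\infty)\right) + \left(\delta^k(\mathcal{A}_{\text{ITL}}) - \delta^k(\mathcal{A}_{\text{JTL}})\right).
\]
By \cref{def:instability}, the first bracket equals $-\mathcal{I}^k$, which gives \eqref{eq:TE-decomp}.

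The only point requiring care is well-definedness rather than any estimate. Each term must be finite, so that splitting the finite sum over $S_k$ is legitimate; this holds because convergence gives finite stationary losses and $\mathcal{L}^k(Q_n)$ is finite for each $n$. The time indexing $n\in S_k$ must also match in $\mathrm{ALE}^k$ and $\delta^k$. Since the identity is exact and purely algebraic, I expect no substantive obstacle.
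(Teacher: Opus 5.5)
Your proposal is correct and follows essentially the same route as the paper's proof. The paper also writes $\mathrm{ALE}^k(\mathcal{A}) = \mathcal{L}^k(Q_\infty^{\mathcal{A}}) + \delta^k(\mathcal{A})$ directly from the definitions, substitutes this into $\mathrm{TE}^k$ for both agents, and identifies the stationary gap as $-\mathcal{I}^k$.
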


\emph{Literature note.}
The Transient Error has been extensively studied in the stochastic optimization literature \citep{murataStatisticalStudyOnline1999}, whereas the Instability has been the object of study of the multi-task learning and transfer-learning literature \citep{baxterModelInductiveBias2000,caruana1997multitask}. These two fields have mostly developed independently, except for work in \emph{online Meta-Learning} \citep{deneviLearningLearnCommon2018,nazariDynamicRegretAnalysis2021} where both online performance and between-task transfer are considered, and where joint-task and independent-task baselines are routinely compared. To place our definitions within this rich literature, we provide a map in \cref{fig:unifying_concept_map}. We discuss these connections and their context in depth in \cref{sec:related-work-new}.
\vspace{-3mm}
\begin{figure*}[ht]
    \centering
    \resizebox{\linewidth}{!}{%
    \begin{tikzpicture}[
        >=stealth,
        corenode/.style={rectangle, draw=blue!50, fill=blue!5, thick, rounded corners=4pt, align=center, font=\sffamily\normalsize, inner sep=6pt},
        litnode/.style={rectangle, draw=black!50, fill=gray!5, thick, rounded corners=4pt, text width=4.5cm, align=center, font=\sffamily\normalsize, inner sep=6pt},
        edge-lbl/.style={fill=white, font=\sffamily\small, align=center, inner sep=2.5pt, draw=black!10, rounded corners=2pt},
        framework/.style={rectangle, draw=blue!40, fill=blue!2, dashed, rounded corners=6pt, inner sep=16pt}
    ]

    \node[corenode, text width=4.5cm] (ale) at (0, 0) {\textbf{Avg. Task Error (ALE$^k$)} $$ALE^k(\mathcal{A}) = \frac{1}{N_{k}}\sum_{n}\,\mathcal{L}^{k}_n(Q_n^\mathcal{A})$$};

    \node[corenode, text width=5.5cm] (te) at (0, -3.2) {\textbf{Transfer Efficiency (TE)} $$TE^k = ALE^k(\mathcal{A}_{\text{ITL}}) - ALE^k(\mathcal{A}_{\text{JTL}})$$};

    \node[corenode, text width=4cm] (transient) at (-2.5, -6.0) {\textbf{Transient Error ($\delta$)} $$\delta^k = \frac{1}{N_{k}}\sum_{n}(\mathcal{L}^k_{n} - \mathcal{L}^k_{\infty})$$};

    \node[corenode, text width=4cm] (instability) at (2.5, -6.0) {\textbf{Instability ($\mathcal{I}$)} $$\mathcal{I}^k = \mathcal{L}^k(Q_{\infty}^{JTL}) - \mathcal{L}^k(Q_{\infty}^{ITL})$$};

    \draw[->, thick, blue!60] (ale) -- (te);
    \draw[->, thick, blue!60] (te) -- (transient);
    \draw[->, thick, blue!60] (te) -- (instability);

    \begin{scope}[on background layer]
        \node[framework, fit=(ale) (transient) (instability), label={[font=\sffamily\bfseries\normalsize, text=blue!80!black, yshift=-1.5em]above:Our Framework}] (framework_box) {};
    \end{scope}

    \node[litnode] (ol) at (-7.4, -1.0) {\textbf{Online Learning} \\ \textit{Static Regret \citep{zinkevich2003online}} $$R_{N_k} = \sum_{n} \mathcal{L}^k_n - \min_{\theta} \sum_{n} \mathcal{L}^k(\theta)$$};

    \node[litnode] (so) at (-6.5, -10.0) {\textbf{Stochastic Optimization} \\ \textit{Convergence Bounds \citep{bach}} $$\mathbb{E}[\mathcal{L}^k_n - \mathcal{L}^k_*] \leq \text{Trans} + \text{Bias}$$};

    \node[litnode] (cl) at (6.5, -1.0) {\textbf{Continual Learning} \\ \textit{Forward Transfer (FWT) \citep{lopez2017gradient}} $$\text{FWT} =  \mathcal{L}^k(Q_{0}^{JTL}) - \mathcal{L}^k(Q_{0}^{ITL})$$};

    \node[litnode] (mtl) at (6.5, -10.0) {\textbf{Multi-Task Learning} \\ \textit{Task Interference \\ \citep{standley2020tasks}}
    $$\mathcal{L}^k(Q_{\infty}^{JTL}) - \mathcal{L}^k(Q_{\infty}^{ITL})$$};

    \draw[->, thick, darkgray, rounded corners] (ale.west) -- ++(-1.2,0) |- ([yshift=-0.5cm]ol.east) node[pos=0.25, edge-lbl] {Static Regret};

    \draw[->, thick, darkgray, rounded corners] (te.west) -| ([xshift=-0.5cm]ol.south) node[pos=0.3, edge-lbl] {Regret Difference};
    \draw[->, thick, darkgray, rounded corners] (te.east) -| ([xshift=-0.2cm]cl.south) node[pos=0.3, edge-lbl] {Online Forward Transfer};

    \draw[->, thick, darkgray, rounded corners] (transient.south) -- ++(0,-1.0) -| ([xshift=-1.0cm]so.north) node[pos=0.25, edge-lbl] {Transient Error};
    \draw[->, thick, darkgray, rounded corners] (transient.west) -| ([xshift=-1.5cm]ol.south) node[pos=0.25, edge-lbl] {Asymptotic Regret};

    \draw[->, thick, darkgray, rounded corners] (instability.south) -- ++(0,-1.0) -| ([xshift=-0.5cm]mtl.north) node[pos=0.4, edge-lbl] {Task Interference};
    \draw[->, thick, darkgray, rounded corners] (instability.east) -| ([xshift=1.5cm]cl.south) node[pos=0.25, edge-lbl] {Intransigence};

    \end{tikzpicture}%
    }
    \vspace{-2mm}
    \caption{\textbf{Concept map relating our framework to adjacent learning paradigms.} The central blue spine shows our decomposition: the Average Task Error ($\mathrm{ALE}^k$) is decomposed into the Transfer Efficiency ($\mathrm{TE}^k$), which in turn splits into the Instability ($\mathcal{I}^k$) and the Transient Error ($\delta^k$). Each labelled arrow names the formal equivalent of the originating concept in one of four adjacent literatures (Online Learning, Continual Learning, Stochastic Optimization, Multi-Task Learning). For visual compactness we use the shorthands $\mathcal{L}^k_{n} = \mathcal{L}^k(Q_n^\mathcal{A})$ and $\mathcal{L}^k_{\infty} = \mathcal{L}^k(Q_\infty^\mathcal{A})$. \vspace{-3mm}}
    \label{fig:unifying_concept_map}
\end{figure*}
\vspace{2mm}
\subsection{The Critical Task Duration}
\label{ssec:critical-task-duration}

The decomposition in \cref{lemma:TE-decomposition} yields a first general result, holding under only minimal assumptions: when the stationary JTL solution is worse than the stationary ITL solution, there is a threshold task duration beyond which the initial advantage of transfer is lost.

\begin{lemma}[The Critical Task Duration]
\label{lem:critical_TD}
Assume a sequence of $K$ tasks of equal duration $N$. Let $\Delta^k(\mathcal{A}) := N \cdot \delta^k(\mathcal{A})$ denote the \emph{cumulative} Transient Error of algorithm $\mathcal{A}$ on task $k$, let $\bar{\Delta}(\mathcal{A}) = \frac{1}{K}\sum_{k=1}^K \Delta^k(\mathcal{A})$ be its average across tasks, and let $\bar{\mathcal{I}}$ be the average Instability. If the cumulative Transient Error is sub-linear in $N$, i.e.\ $\Delta^k(\mathcal{A}) \in o(N)$, and $\bar{\mathcal{I}}>0$, then there exists a critical task horizon $N^\infty_{\max}$ beyond which transfer becomes detrimental ($\mathrm{TE} < 0$). The breakeven horizon is:
\begin{equation}
    N^\infty_{\max} = \frac{\bar{\Delta}(\mathcal{A}_{\text{ITL}}) - \bar{\Delta}(\mathcal{A}_{\text{JTL}})}{\bar{\mathcal{I}}}.
\end{equation}
\end{lemma}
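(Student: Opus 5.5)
The plan is to average the exact decomposition of \cref{lemma:TE-decomposition} across the $K$ tasks and then read off the sign of the result as a function of $N$. With equal durations $N_k = N$, all task weights are $\bar{w}_k = 1/K$, so the aggregate transfer efficiency is $\mathrm{TE} := \frac{1}{K}\sum_{k=1}^K \mathrm{TE}^k$. We have $\mathrm{ALE}(\mathcal{A}_{\text{ITL}}) - \mathrm{ALE}(\mathcal{A}_{\text{JTL}}) = \mathrm{TE}$, so this is the quantity whose sign we need. Applying \cref{eq:TE-decomp} task by task and substituting $\delta^k(\mathcal{A}) = \Delta^k(\mathcal{A})/N$ gives
\begin{equation*}
\mathrm{TE} = -\bar{\mathcal{I}} + \frac{1}{N}\Bigl(\bar{\Delta}(\mathcal{A}_{\text{ITL}}) - \bar{\Delta}(\mathcal{A}_{\text{JTL}})\Bigr).
\end{equation*}
Here $\bar{\mathcal{I}} = \frac{1}{K}\sum_k \mathcal{I}^k$. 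This step is purely algebraic and uses only linearity of the averages.

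The second step solves for the sign change. Write $G(N) := \bar{\Delta}(\mathcal{A}_{\text{ITL}}) - \bar{\Delta}(\mathcal{A}_{\text{JTL}})$. Since $N>0$, we have $\mathrm{TE}<0$ if and only if $G(N) < N\bar{\mathcal{I}}$. When $G$ does not depend on $N$, which is the regime where each transient error is summable, this condition is simply $N > G/\bar{\mathcal{I}} = N^\infty_{\max}$. That identifies the breakeven horizon, and the equality case $N = N^\infty_{\max}$ gives $\mathrm{TE}=0$. If $G \le 0$, the threshold is nonpositive and transfer is detrimental for every $N$, so the claim holds trivially.

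The main obstacle is that the cumulative transient errors $\Delta^k$ may themselves depend on $N$; for example, under SGD they can grow like $\log N$. In that case the formula must be read as the limiting or breakeven value, and existence has to come from the sublinearity assumption. Because $\Delta^k(\mathcal{A}) \in o(N)$ for both algorithms and there are finitely many tasks $K$, we get $G(N)/N \to 0$. Since $\bar{\mathcal{I}}>0$ is a fixed positive constant, there exists $N_0$ with $G(N)/N < \bar{\mathcal{I}}$ for all $N > N_0$, hence $\mathrm{TE}<0$ beyond $N_0$. The critical horizon is the smallest such $N$, and it is characterized by the fixed-point equation $N = G(N)/\bar{\mathcal{I}}$. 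This equation reduces to the stated closed form when $G(N)$ has stabilized, which is why the statement carries the superscript $\infty$.

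Two points need care. First, we should make explicit that $\bar{\mathcal{I}}$ does not depend on $N$: the stationary distributions $Q_\infty$ are defined by the task data and the boundary initialization. For JTL, though, the boundary initialization depends on where the previous task ended, so we will assume tasks are long enough that this initialization equals the previous stationary point. Second, we should note that convergence is needed for the decomposition in \cref{lemma:TE-decomposition} to be valid.
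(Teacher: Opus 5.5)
Your proposal is correct and follows essentially the same route as the paper: average the TE decomposition over the $K$ equal-length tasks, substitute $\delta^k = \Delta^k/N$, note that $\bar{\mathcal{I}}$ does not depend on $N$ while $o(N)$ sublinearity makes the transient gap vanish, and solve the breakeven equality. Reading the formula as the limiting solution of $N = G(N)/\bar{\mathcal{I}}$ when $\bar{\Delta}$ still depends on $N$ is a genuine refinement that the paper glosses over, whereas your extra assumption on JTL's boundary initialization is unnecessary, because the paper's task-wise convergence definition already makes the stationary law independent of initialization.
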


In other words, if the task duration exceeds $N^\infty_{\max}$, the ITL agent has enough time to overcome its poor initialization, eventually outperforming the JTL agent. The Critical Task Duration is inversely proportional to the average Instability in the agent's lifetime: the more `unpredictable' the world, the narrower the window for transfer. When $\bar{\mathcal{I}}\le 0$, the stationary JTL solution is not worse on average, and the decomposition predicts no finite horizon at which retention must become detrimental. The sub-linearity condition $\Delta^k(\mathcal{A}) \in o(N)$ requires only that the learning algorithm converges toward its stationary risk --- a mild assumption holding across standard optimization regimes, from Convex Optimization to Neural Networks (see \cref{lem:critical_TD_appd}).

\subsection{Empirical Validation of the Framework}
\label{ssec:empirical-validation}

\vspace{-0.15cm}
\begin{figure}[h]
    \vspace{-0.3cm}
    \centering  \includegraphics[width=0.88\linewidth]{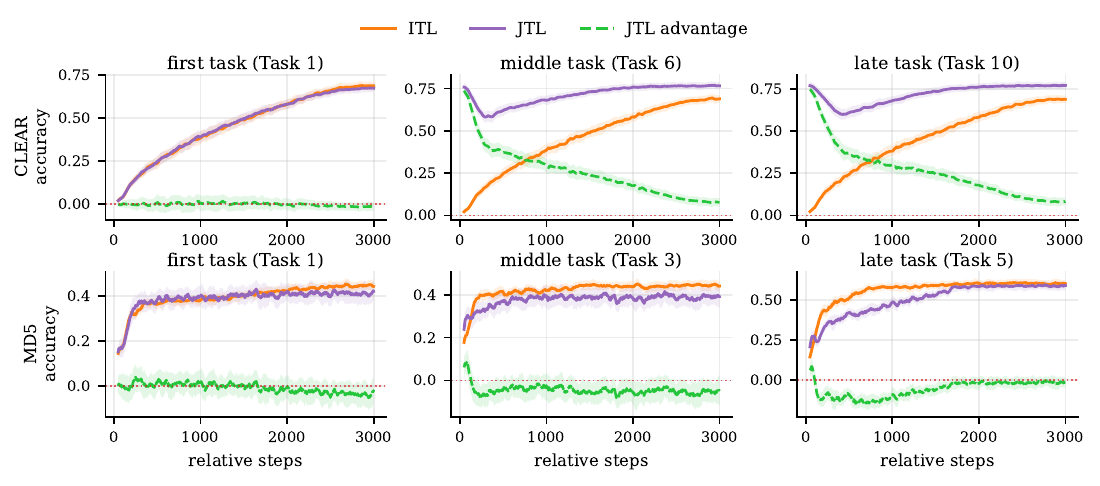}\\[-0.28cm]
    \includegraphics[width=0.88\linewidth]{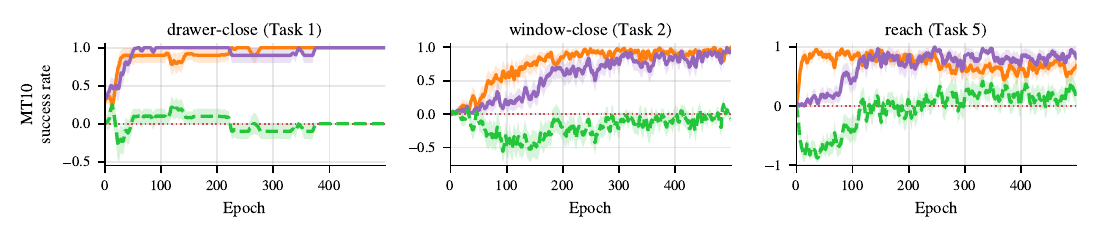}
\vspace{-0.2cm}
\caption{\textbf{Representative continual-learning trajectories for supervised benchmarks and MT10.} Top two rows: CLEAR and MD5 task trajectories, showing smooth shifts with positive JTL transfer versus sharp semantic shifts with negative transfer. Bottom row: selected MT10 per-task success-rate trajectories on Meta-World in an easy-to-hard ordering with batch size 5000 (drawer-close, window-close, reach). Both supervised benchmarks report test accuracy, while MT10 reports success rate. ITL is shown in \textcolor{orange}{orange}, JTL in \textcolor{violet}{violet}, and the \textcolor{green}{green} trace gives the JTL advantage. CLEAR shows a consistently positive advantage, while MD5 and MT10 show negative advantage after the initial transient, matching the Instability ordering.}
\label{fig:example_performance_traj}
\end{figure}

We use three benchmarks, chosen to span a range of Instabilities: \textbf{CLEAR} \citep{lin2021clear}, ten image classes evolving smoothly over 2004--2014; \textbf{MD5}, a sequence we construct from five unrelated public classification datasets with sharp semantic transitions (introduced in this paper); \textbf{MT10} \citep{yu2020meta}, a 10-task suite of Meta-World robotic-manipulation environments previously used to highlight continual-learning interference \citep{wolczyk2021continual}. We use the same task duration $N$ across tasks and tune hyperparameters for the lowest ALE per algorithm; full setup details are in \cref{Apdx: Exps}.

Throughout the paper, classification performance is reported as \emph{accuracy in $[0,1]$} and reinforcement-learning performance as \emph{success rate in $[0,1]$}. We summarize each agent by its \emph{Average Lifelong Accuracy} ($\texttt{ALA}$) or \emph{Average Lifelong Success Rate} ($\texttt{ALS}$)---monotone (higher-is-better) re-readings of the ALE---and we sign Transfer Efficiency so that $\mathrm{TE}>0$ always means JTL outperforms ITL.
 
Running ITL and JTL with matched compute on each, the resulting learning curves (\cref{fig:example_performance_traj}) and average lifelong performance (\cref{tab:results-wild}) already tell the qualitative story: where Instability\footnote{In the experiments, expectations under $Q_n$ and $Q_\infty$ are estimated empirically: we run multiple training seeds for each algorithm and treat the resulting ensemble of trained models as samples from the parameter distribution at step $n$. The expected loss $\mathcal{L}^k(Q_n)$ is then approximated by the average test loss across seeds, and the Instability $\mathcal{I}^k$ and Transient Error $\delta^k$ in the decomposition are estimated from this empirical ensemble after the algorithms have converged on each task.} is large (MD5, MT10), joint training confers no advantage and is at times actively worse; where Instability is small (CLEAR), it confers a large, compounding advantage. In short, the sign of Transfer Efficiency tracks the sign of $-\mathcal{I}$, exactly as \cref{eq:TE-decomp} predicts.


\begin{table}[h]
    \centering
    \setlength{\tabcolsep}{8pt}
    \renewcommand{\arraystretch}{1.2}
    \caption{Average lifelong accuracy ($\texttt{ALA}$, in $[0,1]$) / success rate ($\texttt{ALS}$) in the wild. Higher is better; $\mathrm{TE}>0$ means JTL outperforms ITL. Empirical instabilities per benchmark are reported in \cref{tab:instability}.}
    \label{tab:results-wild}
    \begin{tabular}{lcccc}
        \toprule
         & $N$ & $\mathbf{\texttt{ALA}_{ITL}}$ & $\mathbf{\texttt{ALA}_{JTL}}$ & $\mathbf{{\mathrm{TE}}}$ \\
        \midrule
        \textbf{MD5}   & 3000 & 0.470 {\scriptsize $\pm 0.002$}  & 0.430 {\scriptsize $\pm 0.005$}  & -0.040 {\scriptsize $\pm 0.005$} \\
        \addlinespace[0.1cm]
        \textbf{CLEAR} & 3000 & 0.465 {\scriptsize $\pm 0.0004$} & 0.681 {\scriptsize $\pm 0.0005$} & +0.216 {\scriptsize $\pm 0.002$} \\
        \midrule
        \addlinespace[0.2cm]
         &      & $\mathbf{\texttt{ALS}_{ITL}}$ & $\mathbf{\texttt{ALS}_{JTL}}$ & $\mathbf{{\mathrm{TE}}}$ \\
        \cmidrule(lr){3-4} \cmidrule(lr){5-5}
        \textbf{MT10}  & 500  & 0.47 {\scriptsize $\pm 0.47$}   & 0.28 {\scriptsize $\pm 0.45$}   & -0.19 {\scriptsize $\pm 0.02$}  \\
        \bottomrule
    \end{tabular}
\end{table}

Beyond this qualitative pattern, the decomposition in \cref{eq:TE-decomp} and the Critical Task Duration of \cref{lem:critical_TD} make two sharper, quantitative predictions about real continual-learning systems:

\begin{itemize}
    \item \textbf{(P1)} \emph{Instability shifts Transfer Efficiency by an amount independent of task duration; Transfer Efficiency then decays with task duration at a $1/N$ rate.} Direct prediction of \cref{eq:TE-decomp}: $\mathcal{I}^k$ enters as an additive constant in $N$, while the transient gap is averaged over $N$ steps.
    \item \textbf{(P2)} \emph{For sequences with $\bar{\mathcal{I}}>0$, increasing $N$ drives Transfer Efficiency from positive to negative; and once $N$ is large enough for both agents to converge, $\mathrm{TE}$ is dominated by $\mathcal{I}$.} Direct prediction of \cref{lem:critical_TD}: the transient gap shrinks with $N$, leaving the stationary Instability as the residual.
\end{itemize}

Next, we test these predictions directly, using controlled benchmarks that let us vary Instability and task duration in isolation.


\paragraph{(P1) Controlled Instability: Permuted- and Shuffled-CIFAR.}
We manipulate Instability by perturbing the input and target distribution in CIFAR-10, creating two separate benchmarks: \emph{Permuted-CIFAR} (PC-16/32) shuffles input pixels within a centred patch (preserving targets, transforming only the input distribution), and \emph{Shuffled-CIFAR} (SC-10/30/50) randomly re-assigns labels to a fraction of training samples (transforming only the target distribution). We expect higher Instability in Shuffled- than in Permuted-CIFAR, as label shuffling forces the same input to map to inconsistent labels across tasks. This is confirmed by our empirical measurements, collected in \cref{tab:instability}.

\begin{figure}[ht]
    \centering
    \includegraphics[width=0.25\linewidth]{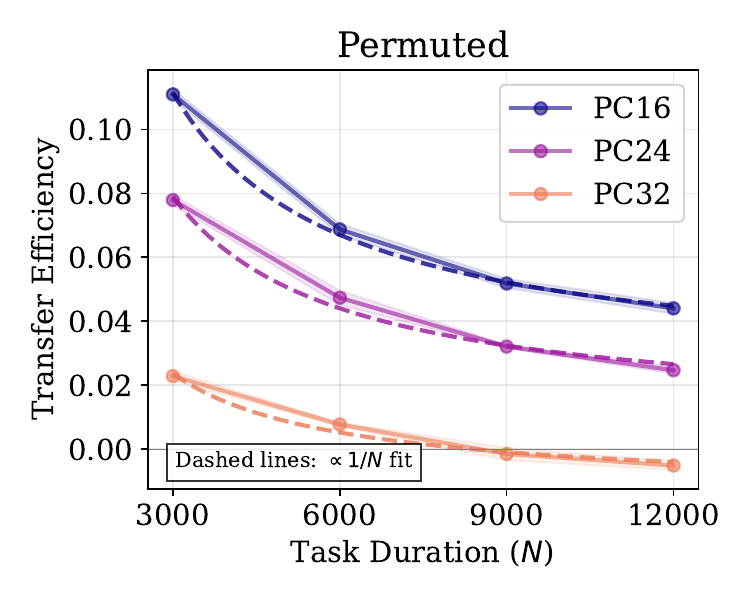}
    \includegraphics[width=0.25\linewidth]{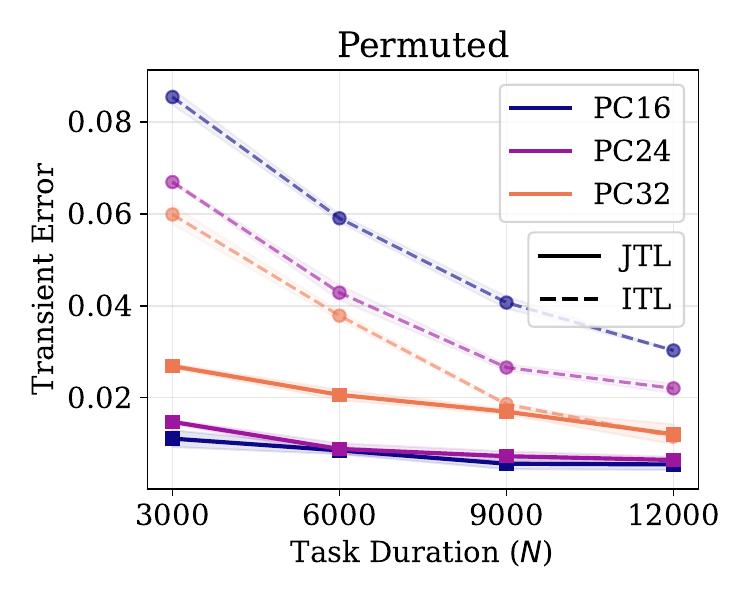}
    \includegraphics[width=0.3\linewidth]{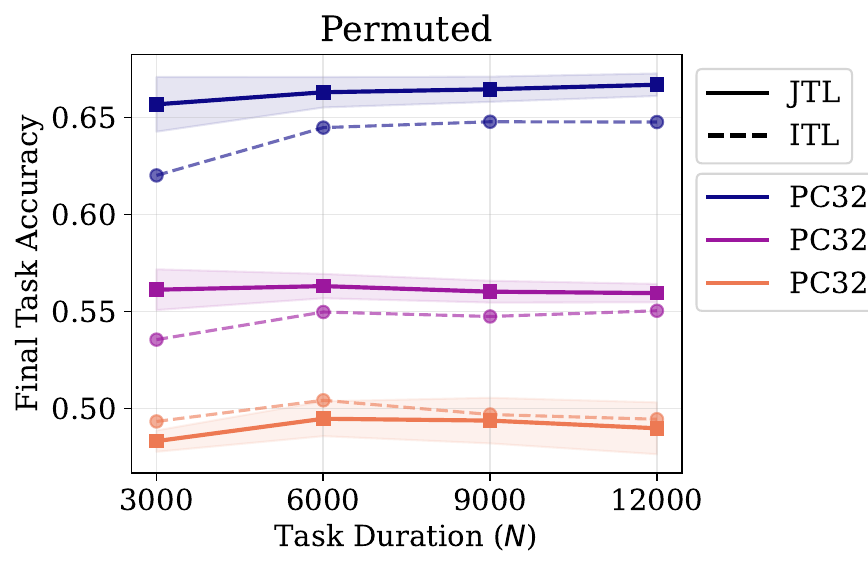}\\
    \includegraphics[width=0.25\linewidth]{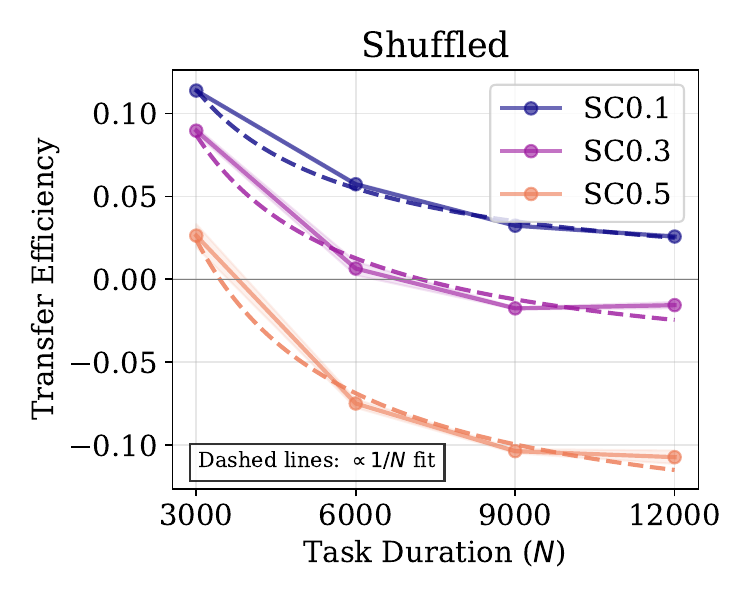}
    \includegraphics[width=0.25\linewidth]{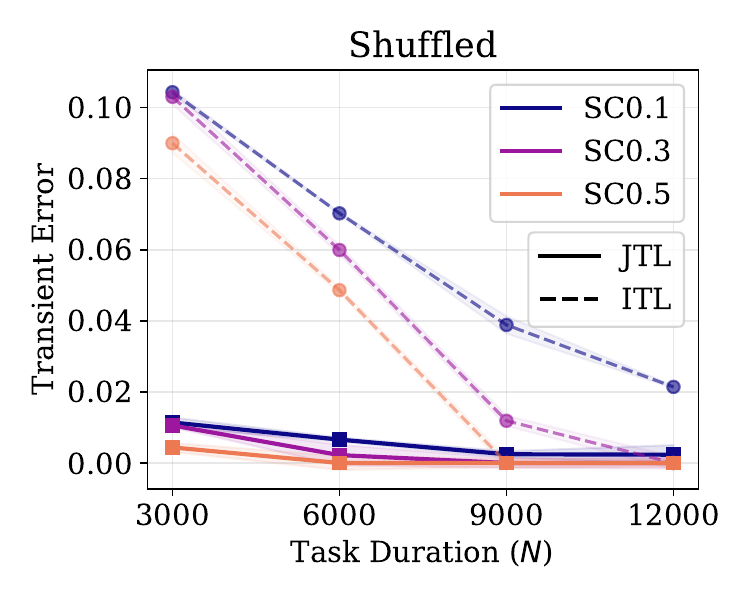}
    \includegraphics[width=0.3\linewidth]{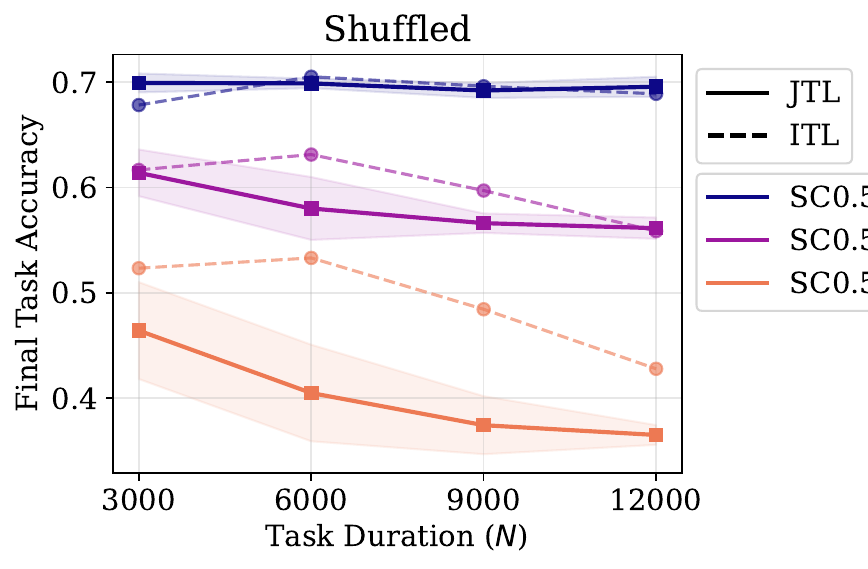}
    \caption{\textbf{Manipulating Instability on CIFAR-10.} Permuted-CIFAR (\emph{Top}) and Shuffled-CIFAR (\emph{Bottom}) across task durations $N$. \emph{Left:} Transfer Efficiency decays as $1/N$ (dashed fits); higher Instability translates the curves vertically downward. \emph{Middle:} JTL Transient (solid) is negligible relative to ITL Transient (dashed) at every $N$, matching the asymptotic prediction. \emph{Right:} Final task accuracy ceilings.  To improve interpretability, all quantities are measured in \emph{accuracy} (between 0 and 1).}
    \label{fig:input-distr-shift}
\end{figure}

\cref{fig:input-distr-shift} confirms each of the two halves of P1. First, increasing the shift magnitude (PC-16$\to$32, SC-10$\to$30$\to$50) translates the TE curves vertically downward, exactly as the additive $-\mathcal{I}^k$ term in \cref{eq:TE-decomp} predicts. Second, the empirical TE is well fit by a $1/N$ curve across all settings, indicating that the per-step transient gap $\mathcal{L}^k(Q_n) - \mathcal{L}^k(Q_\infty)$ vanishes super-linearly --- the regime in which \cref{lem:critical_TD} applies. The middle panels additionally confirm an asymptotic prediction we revisit in \cref{sec:quadratic-losses}: the JTL Transient is consistently several orders of magnitude smaller than the ITL Transient, so Transfer Efficiency is ultimately driven by Instability and the ITL Transient.

\paragraph{(P2) Crossing the Critical Task Duration in CLEAR.}
On CLEAR where Instability is small but positive \cref{lem:critical_TD} predicts a finite breakeven horizon. We sweep $N\in\{3000,6000,9000,12000\}$ training steps per task and re-measure each component of the decomposition.

\begin{table}[ht]
    \centering
    \caption{Increasing the task duration $N$ in CLEAR closes the gap in average lifelong performance (accuracy in $[0,1]$), as predicted by \cref{lem:critical_TD}. All quantities ($\texttt{ALA}$, $\mathrm{TE}$, $\delta$) are on the same $[0,1]$ scale. The Instability is approximately constant across $N$; only the ITL Transient $\delta_{\text{ITL}}$ moves with $N$.}
    \label{tab:clear_benchmark}
    \setlength{\tabcolsep}{10pt}
    \renewcommand{\arraystretch}{1.2}
    \begin{tabular}{lcccccc}
        \toprule
         & $N$ & $\texttt{ALA}_{ITL}$ & $\texttt{ALA}_{JTL}$ & ${\mathrm{TE}}$ & $\delta_{ITL}$ & $\delta_{JTL}$\\
        \midrule
        \textbf{CLEAR}
            & 3000  & 0.465 {\scriptsize $\pm 0.0004$} & 0.681 {\scriptsize $\pm 0.0005$} & 0.216 {\scriptsize $\pm 0.002$} &  0.227 {\scriptsize $\pm 0.0013$} & 0.079 {\scriptsize $\pm 0.0014$}\\
           & 6000  & 0.562 {\scriptsize $\pm 0.0001$} & 0.713 {\scriptsize $\pm 0.004$}  & 0.151 {\scriptsize $\pm 0.004$} &  0.195 {\scriptsize $\pm 0.0001$} & 0.063 {\scriptsize $\pm 0.0001$}\\
           & 9000  & 0.610 {\scriptsize $\pm 0.0004$} & 0.720 {\scriptsize $\pm 0.001$}  & 0.11 {\scriptsize $\pm 0.001$}   &  0.166 {\scriptsize $\pm 0.0009$} & 0.057 {\scriptsize $\pm 0.0010$}\\
           & 12000 & 0.641 {\scriptsize $\pm 0.0002$} & 0.732 {\scriptsize $\pm 0.0006$} & \textbf{0.10} {\scriptsize $\pm 0.001$} &  0.148 {\scriptsize $\pm 0.0004$} & 0.054 {\scriptsize $\pm 0.0008$}\\
        \bottomrule
    \end{tabular}
\end{table}

\cref{tab:clear_benchmark} confirms P2: Transfer Efficiency falls monotonically from $+0.216$ at $N=3000$ to $+0.10$ at $N=12000$, while the Instability gap (visible in the closing JTL$-$ITL ALE difference) is approximately unchanged. The driver is entirely the ITL Transient $\delta_{\text{ITL}}$, which shrinks as the ITL agent gets more time to converge on each task. Extrapolating the trend places the CLEAR breakeven horizon outside the budget we could run, consistent with CLEAR's small empirical Instability.


\section{Quadratic-Loss Analysis}
\label{sec:quadratic-losses}

This section makes the transfer mechanisms analytically transparent. We do two things. First, for quadratic losses we derive \emph{exact} closed-form expressions for the Instability and the Transient Error---and hence for Transfer Efficiency---that expose how each quantity depends on the design choices (step size $\eta$, batch size $B$, parametrization). Second, we pass to an \emph{asymptotic} (large-$k$) analysis that collapses these expressions into a single sufficient condition for positive transfer, governed by one environment-level descriptor: the rate at which the environment expands with new tasks. Different environments yield different balances between ITL and JTL, and this descriptor predicts which one wins.

The quadratic setting has been extensively studied in both the stochastic optimization and multi-task learning literatures, allowing us to apply existing results to our specific case. Even in such a simple setting, the interplay of learning dynamics and bias yields multiple layers of complexity, which offer powerful insights into the nature of transfer. The decomposition in \cref{lemma:TE-decomposition,lem:critical_TD} is general under task-wise convergence; the closed-form expressions below additionally use standard assumptions detailed in \cref{app:setup} (aligned positive-definite Hessians, small constant step sizes, locally stationary gradient-noise covariances, uniform observation-noise floors). These assumptions are not needed for the definitions themselves, but they make the transfer mechanisms analytically transparent.

\subsection{Setup}

For every task $j \in \{1, \dots, K\}$, the expected loss is
\begin{equation}
    \mathcal{L}^j(\theta) = \tfrac{1}{2}(\theta - \theta_\star^j)^\top H_j (\theta - \theta_\star^j) + c_j,
\end{equation}
with $H_j \succ 0$ (symmetric positive definite) and $c_j$ the irreducible loss (Bayes error) for task $j$. The Hessian $H_j$ is data-driven: in linear regression with inputs $x\sim\mathcal{P}^j_x$, it is simply $H_j = \mathbb{E}_{x\sim\mathcal{P}^j_x}[xx^\top]$. The re-centered form above is convenient for the analysis, but the gradient $\nabla\mathcal{L}^j(\theta) = H_j(\theta - \theta_\star^j) = H_j\theta - \mathbb{E}[xy]$ is computable from data without knowing $\theta_\star^j$; we use the $(\theta_\star^j, c_j)$ parametrization only to expose the dependence of dynamics on the task minimizer.

In this setting, the expected risk for task $k$, $\mathbb{E}_\theta[\mathcal{L}^k(\theta_n)]$, is completely determined by the first two moments of the parameter distribution $Q_n$: the mean $\mu_n$ and covariance $\Sigma_n$. Under the gradient-noise assumptions in \cref{app:setup}, the stationary covariance $\Sigma_\infty$ admits a clean decomposition into a transient part and a stationary noise part whose trace scales as $1/B$ in the batch size.

\subsection{Closed-form Instability and Transient Error}

To express the Instability and Transient Error in closed form, we introduce the \emph{relative curvature} matrix $\mathcal{R}^k = H_{\text{dyn},k}^{-1} H_k$: the curvature of the current task measured against the curvature of the optimization objective (which governs the parameter dynamics). For ITL, $\mathcal{R}^k = I$ is simply the identity. For JTL it generally is not, and the misalignment between dynamics and evaluation is precisely what makes transfer non-trivial.

Throughout this section we use the Mahalanobis-style notation $\|v\|_M^2 := v^\top M v$ for any positive-(semi)definite matrix $M$. A central quantity is what we call the \textbf{Task Diversity}: the additional variance in the optimization process due to the aggregation of multiple tasks. Recalling the task weights $w_j^k = |S_j|/b_k$ from \cref{ss:JTL-ITL},
\[
    \mathcal{V}_{div}^k = \sum_{j=1}^k w_j^k\cdot  \left\|\nabla\mathcal{L}^j(\theta_\star^{JTL,k})\right\|_{\mathcal{R}^k}^2,
\]
where $\theta_\star^{JTL,k}$ is the minimizer of the JTL objective at task $k$. The norm is taken in the relative-curvature geometry, so tasks whose gradients are large along directions the optimizer treats as informative contribute more.\looseness=-1\vspace{-0cm}

Within these assumptions, the Instability of task $k$ admits an exact decomposition into a bias term and a variance term proportional to the Task Diversity (full derivation in \cref{prop:exact_instability}):
\[
\mathcal{I}^k = \underbrace{\mathcal{L}^k(\theta_\star^{\text{JTL},k}) - \mathcal{L}^k(\theta_\star^{k})}_\text{JTL Bias} + \,\underbrace{\,\,\frac{\eta}{4\,B}\,\mathcal{V}_{div}^k\,\,}_\text{JTL Variance}.
\]
The two terms describe distinct cost sources of joint training. The \emph{JTL Bias} is the structural penalty: the JTL solution $\theta_\star^{\text{JTL},k}$ is a compromise across the task collection, so when evaluated on a single task it pays a fixed gap relative to that task's own optimum. The \emph{JTL Variance} is the dynamical penalty: aggregating tasks injects gradient noise proportional to the step size $\eta$ and inversely to the batch size $B$. This noise is amplifies only along directions that the relative-curvature geometry identifies as informative---namely, directions where task $k$'s curvature is large relative to the pooled objective, corresponding to features that are important for task $k$ but receive little support from the aggregated gradient of the other tasks.

The Transient Error admits an analogous decomposition (\cref{theo:transient_potential}), expressed in terms of the deviations $\tilde\mu_n := \mu_n - \mu_\infty$ and $\tilde\Sigma_n := \Sigma_n - \Sigma_\infty$ of the mean and covariance from their stationary values:
\[
\tau\,\delta^k = \frac{1}{N_k}\,\underbrace{\left(\|\tilde \mu_0\|^2_{\mathcal{R}^k} - \|\tilde \mu_{N_k}\|^2_{\mathcal{R}^k}\right)}_\text{Mean Potential Difference} + \frac{1}{N_k}\,\underbrace{\left(\text{Tr}(\mathcal{R}^k \tilde{\Sigma}_0) - \text{Tr}(\mathcal{R}^k \tilde{ \Sigma}_{N_k})\right)}_\text{Variance Potential Difference},
\]
where $\tau = 4\eta$ is the learning time scale. This expression gives a clean physical reading of the cost of learning a task: it is the drop \emph{in potential energy} between where the agent started the task and where it ended, measured on a quadratic potential surface warped by the relative curvature $\mathcal{R}^k$. The transient cost is therefore high precisely when initialization is far from the task's stationary distribution, and decays as the agent drains that potential.\looseness=-1\vspace{-0cm}

\subsection{Asymptotic Conditions for Positive Transfer}

The exact formulas above hold at every $k$ but are opaque. To expose how transfer depends on the environment, we now take an asymptotic (large-$k$) view, in which the entire ITL/JTL balance is governed by a single environment-level descriptor: the rate at which the environment \emph{expands} with new tasks. Different environments yield different descriptors, and hence different balances between ITL and JTL. We call this descriptor the \textbf{Growth Rate}.

\begin{definition}[Growth Rate]
\label{ass:growth-rate}
    Let $\psi: \mathbb{N} \to \mathbb{R}^+$ be a monotonically increasing function such that the expected squared distance of the $k$-th task from the origin scales as
    \[
        \mathbb{E}[\|\theta_\star^k\|^2] \in \Theta(\psi(k)).
    \]
\end{definition}

The Growth Rate quantifies how fast the environment is expanding. If $\psi(k) = 1$ the environment is \textbf{bounded}, meaning a finite diameter $M$ contains all task-solution vectors. If $\psi(k) = k^2$ the environment has a \textbf{linear drift}: tasks move with constant velocity $\partial_k\theta_\star^k = v$. Between these extremes, a \textbf{diffusive} environment ($\psi(k) = k$) corresponds to tasks drifting via a random walk. \cref{fig:growth_rate_final} visualizes these three cases.

\begin{figure}[htbp]
    \centering
    \begin{minipage}{0.55\textwidth}
        \centering
        \includegraphics[width=\linewidth]{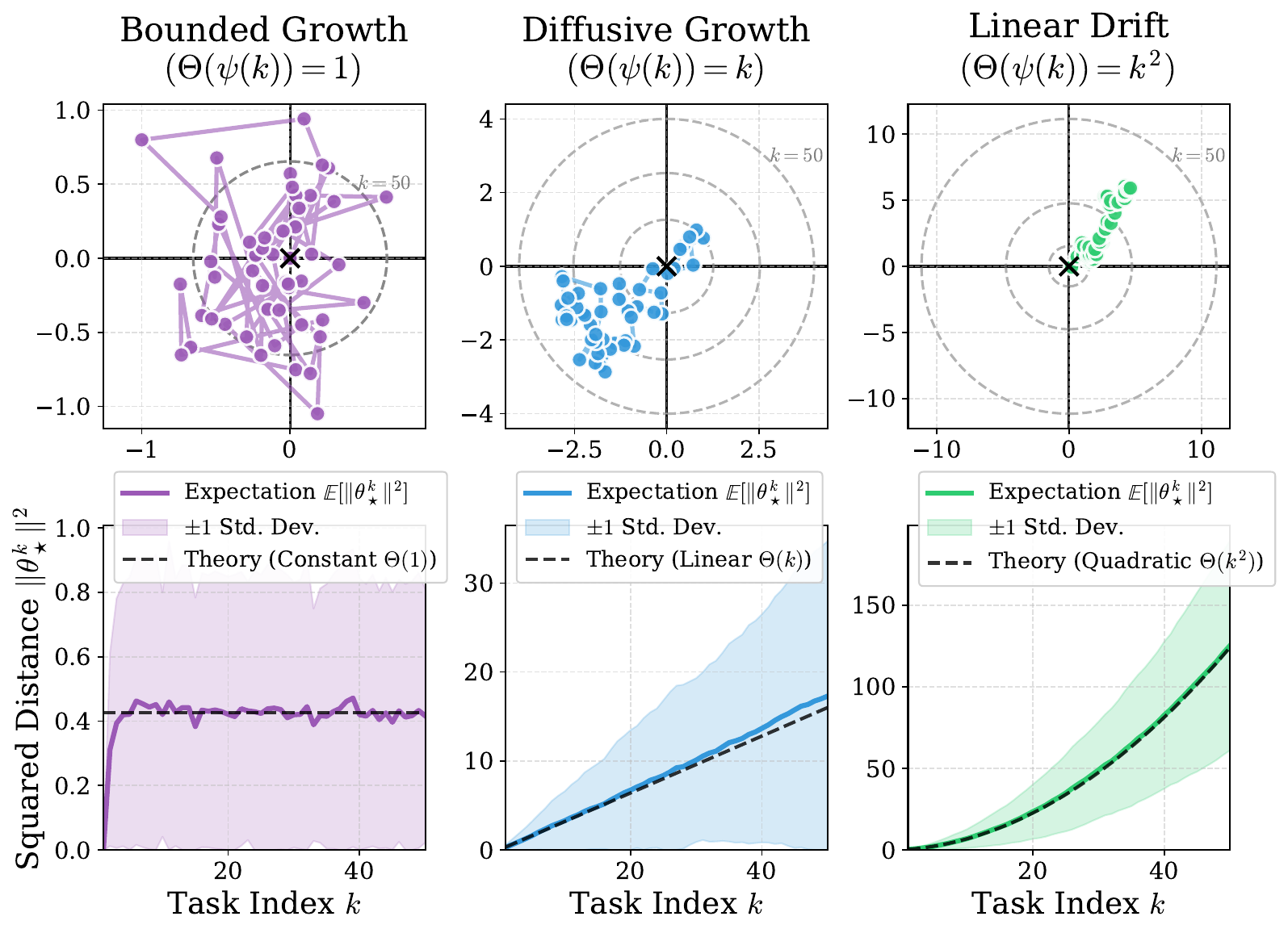}
    \end{minipage}
    \hspace{0.1cm}
    \begin{minipage}{0.4\textwidth}
    \caption{\textbf{Task Environment Growth Rates.}
    \emph{Top Row:} 2D spatial trajectories of the task optimal parameters $\theta_\star^k$ for a single random seed over $K=50$ tasks. Each panel uses its own spatial scale (note the differing axis ranges, roughly $\pm 1$, $\pm 4$, and $\pm 10$), so that the trajectory remains visible despite the very different expansion rates. The dashed gray circles represent the theoretical expected distance from the origin ($\sqrt{\mathbb{E}[\|\theta_\star^k\|^2]}$) at milestones $k \in \{5, 20, 50\}$, providing a common reference within each panel.
    \emph{Bottom Row:} The expected squared distance $\mathbb{E}[\|\theta_\star^k\|^2]$ as a function of the task index $k$. Solid lines: empirical expectation over 500 trajectories ($\pm 1$ standard deviation). Dashed black lines: theoretical growth rates. \vspace{-0.3cm}}
    \label{fig:growth_rate_final}
    \end{minipage}
    \vspace{-0.2cm}
\end{figure}

We can characterize the rate at which each component of the Transfer Efficiency scales, as a function of the Growth Rate. Putting everything together we write a sufficient asymptotic condition for positive transfer:

\begin{theorem}[Asymptotic behaviour of Transfer for Quadratic Objectives --- Informal]
\label{theo:asymptotic-transfer}
    In the setup described above, in an environment with Growth Rate $\Theta(\psi(k))$ we have
    \[
        \mathcal{I}^k \in \Theta(\psi(k)), \quad \delta^k_{ITL} \in \Theta\left(\tfrac{\psi(k)}{N_k}\right), \quad \delta^k_{JTL} \in \Theta\left(\tfrac{\psi(k)}{k^2} + \psi'(k)\right),
    \]
    where $\psi'(k)$ denotes the environment \emph{acceleration}. Consequently, for \emph{environments with polynomial growth rates} the expected Transfer Efficiency is positive for large $k$ if
    \[
        \mathcal{I}^k < C^k_{ITL}\,\frac{\mathbb{E}[\|\tilde{\mu}_0^{ITL,k}\|^2]}{N_k},
    \]
    where $C^k_{ITL}$ is a constant of the bound.
\end{theorem}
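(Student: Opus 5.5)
The plan is to work term by term from the exact closed forms already available (the Instability decomposition of \cref{prop:exact_instability} and the potential form of the Transient Error of \cref{theo:transient_potential}), compute how each scales with $k$ under the Growth Rate, and then feed the three rates into \cref{lemma:TE-decomposition}. Throughout I use the assumptions of \cref{app:setup}: aligned positive-definite Hessians with spectra bounded above and below uniformly in $k$, small constant $\eta$, and uniform noise floors. Under these, the relative curvature $\mathcal{R}^k$ has eigenvalues in a fixed compact interval of $(0,\infty)$, so every $\mathcal{R}^k$-norm is equivalent to the Euclidean norm up to constants independent of $k$.

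\emph{Instability.} The JTL minimizer is the curvature-weighted average $\theta_\star^{JTL,k}=(\sum_j w_j^kH_j)^{-1}\sum_j w_j^kH_j\theta_\star^j$. Hence the JTL Bias is $\tfrac12\|\theta_\star^{JTL,k}-\theta_\star^k\|_{H_k}^2$, which is comparable to $\|\sum_j w_j^k(\theta_\star^j-\theta_\star^k)\|^2$.
\begin{itemize}
\item The upper bound $O(\psi(k))$ follows from Jensen's inequality and $\mathbb{E}\|\theta_\star^j\|^2\lesssim\psi(j)\le\psi(k)$, using monotonicity of $\psi$.
\item The Variance term has the same form: $\mathcal{V}_{div}^k$ is a weighted sum of $\|H_j(\theta_\star^{JTL,k}-\theta_\star^j)\|^2_{\mathcal{R}^k}$, so it is also $O(\psi(k))$.
\item For the matching lower bound I would argue that the current task is not predicted by the average of the past. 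Concretely, I would take as part of the model that $\mathbb{E}\|\theta_\star^k-\bar\theta^{k}\|^2\gtrsim\psi(k)$, which holds in the bounded, random-walk and drift examples. This gives $\mathcal{I}^k\in\Theta(\psi(k))$.
\end{itemize}

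\emph{ITL transient.} ITL resets to a fixed initialization, so $\tilde\mu_0^{ITL,k}=\theta_0-\theta_\star^k$ and $\mathbb{E}\|\tilde\mu_0^{ITL,k}\|^2\in\Theta(\psi(k))$. The covariance potential is $O(1)$, because the noise floors are uniform and the trace of $\Sigma_\infty$ is $O(1/B)$. The end-of-task potential $\|\tilde\mu_{N_k}\|^2$ decays geometrically like $(1-\eta\lambda_{\min})^{2N_k}$ and is dominated by the initial one. \cref{theo:transient_potential} with $\mathcal{R}^k=I$ then gives $\delta^k_{ITL}=C^k_{ITL}\,\mathbb{E}\|\tilde\mu_0^{ITL,k}\|^2/N_k+O(1/N_k)\in\Theta(\psi(k)/N_k)$. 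This step also defines the constant $C^k_{ITL}$ that appears in the final condition.

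\emph{JTL transient.} JTL starts task $k$ near the previous stationary point $\theta_\star^{JTL,k-1}$. So $\tilde\mu_0^{JTL,k}\approx\theta_\star^{JTL,k-1}-\theta_\star^{JTL,k}$, which is the one-step movement of a running weighted average. With equal durations, $w_j^k=1/k$ and
\[
\theta_\star^{JTL,k}-\theta_\star^{JTL,k-1}=\tfrac1k\bigl(\theta_\star^k-\theta_\star^{JTL,k-1}\bigr)
\]
up to curvature reweighting. The increment is therefore a $1/k$ multiple of a deviation of size $\sqrt{\psi(k)}$, together with a correction coming from the systematic displacement of the newest task, which is controlled by how fast $\psi$ grows. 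Squaring gives a mean potential in $\Theta(\psi(k)/k^2+\psi'(k))$. The covariance potential difference, $\Sigma^{JTL}_\infty$ at $k-1$ versus $k$, changes by $O(\eta/B)$ times the increment of $\mathcal{V}_{div}$, which is absorbed into the same rate.

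This is the step I expect to be the main obstacle. Justifying the $\psi'(k)$ term needs a regularity assumption on how $\mathbb{E}\|\theta_\star^k-\theta_\star^{k-1}\|^2$ relates to $\psi(k)-\psi(k-1)$, i.e.\ a discrete derivative. I would impose this as part of the Growth Rate model and check it directly on the three canonical environments. The factor $1/N_k$ in $\delta^k_{JTL}$ can be dropped for the sufficient condition because it only helps, which is why it does not appear in the stated rate.

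\emph{Combining.} Substituting into \cref{lemma:TE-decomposition} gives $\mathrm{TE}^k=-\mathcal{I}^k+\delta^k_{ITL}-\delta^k_{JTL}$. For polynomial $\psi(k)=k^p$ we have $\psi(k)/k^2+\psi'(k)=O(k^{p-1})=o(\psi(k))$. The JTL transient is therefore asymptotically negligible compared with both other terms, consistent with the CIFAR observations. Positivity of expected TE for large $k$ then reduces to $\mathcal{I}^k<C^k_{ITL}\,\mathbb{E}\|\tilde\mu_0^{ITL,k}\|^2/N_k$.
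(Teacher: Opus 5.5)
Your skeleton matches the paper's proof of \cref{theo:positive_transfer}. It uses the exact Instability decomposition, the Transient Potential Law with $\mathcal{R}^k=I$ and a $\lambda_{\min}$-type dissipation constant defining $C^k_{ITL}$, and the TE decomposition with the JTL transient shown to be $o(\psi(k))$. Your explicit lower-bound assumption for the Instability is, if anything, more careful than the paper's. The gaps are both in the JTL transient. First, you set $\tilde\mu_0^{JTL,k}\approx\theta_\star^{JTL,k-1}-\theta_\star^{JTL,k}$, i.e.\ you assume task $k-1$ ended at its stationary point. With finite $N$ it did not, and short tasks are exactly the regime the conclusion is about. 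The paper instead writes $\tilde\mu_0^{JTL,k}=\Delta\mu^k+r_\mu^{k-1}$, with $\|r_\mu^{k-1}\|\le\rho_\mu\|\tilde\mu_0^{JTL,k-1}\|$ and $\rho_\mu=(1-\eta\lambda_{\min})^{N}<1$ (\cref{lemma:contraction}). Minkowski then gives $e_k\le d_k+\rho_\mu e_{k-1}$ for $e_k:=(\mathbb{E}\|\tilde\mu_0^{JTL,k}\|^2)^{1/2}$, and \cref{lemma:recursion_dominance} yields $e_k\lesssim d_k/(1-\rho_\mu)$. The covariance is handled the same way with $\rho_\Sigma$. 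This step carries real weight. Without contraction the residuals could add up to $\sum_{j\le k}d_j$, which in the diffusive case ($d_j\asymp j^{-1/2}$) is of order $\sqrt{k}$. That is a squared error of order $k=\psi(k)$, and the JTL transient would no longer be negligible.

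Second, you misplace the source of $\psi'(k)$, and the regularity assumption you plan to impose fails the very check you propose. The identity $\theta_\star^{JTL,k}-\theta_\star^{JTL,k-1}=\tfrac1k\,\bar H_k^{-1}H_k\,(\theta_\star^k-\theta_\star^{JTL,k-1})$ is exact for equal durations (\cref{lemma:mean_drift}). The newest task's displacement is already inside $\theta_\star^k-\theta_\star^{JTL,k-1}$, so there is no extra correction, and the mean potential is only $O(\psi(k)/k^2)$. The relation $\mathbb{E}\|\theta_\star^k-\theta_\star^{k-1}\|^2\asymp\psi(k)-\psi(k-1)$ holds only in the diffusive case:
\begin{itemize}
\item Under linear drift the left side is $\|v\|^2=\Theta(1)$ while the right side is $\Theta(k)$. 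With a shared Hessian, $\theta_\star^{JTL,k}$ moves by exactly $v/2$ per task, which is of order $\psi(k)/k^2=\Theta(1)$, not $k$.
\item For bounded i.i.d.\ tasks the left side is $\Theta(1)$ while the right side vanishes.
\end{itemize}
In the paper, $\psi'(k)$ comes from the covariance. $\Sigma_\infty^{JTL,k}$ solves a Lyapunov equation driven by $\mathcal{V}_{div}^k\in\Theta(\psi(k))$, so the stationary covariance shifts by order $\psi(k)-\psi(k-1)$ between consecutive tasks (\cref{lemma:cov_drift}). For example, $\mathcal{V}_{div}^k\asymp k^2$ under linear drift, with increments $\asymp k$. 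After the contraction argument this gives $\mathbb{E}|\mathrm{Tr}(\tilde\Sigma_0^{JTL,k})|\in\Theta(\psi'(k))$. You wrote this very sentence but declared it ``absorbed''; it is in fact the sole source of the $\psi'(k)$ term. The regularity you actually need is on the increments of $\mathbb{E}[\mathcal{V}_{div}^k]$, not on consecutive task distances. With these two repairs your combining step goes through as in the paper.
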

An extended version of this result and its proof can be found in \cref{sec:mt_asymptotics}.

The three rates have a clear interpretation. The Instability inherits the full $\psi(k)$ scaling: a faster-expanding environment makes the JTL stationary compromise progressively worse on each new task. The ITL Transient is the full $\psi(k)$ paid back through $N_k$ steps: when each task is long, the comparative cost of restarting learning from scratch is low. The JTL Transient, by contrast, is dominated by the much smaller terms $\psi(k)/k^2$ and $\psi'(k)$: as $k$ grows, the JTL agent only has to track how the environment is \emph{moving}, not where it currently \emph{is}, which is comparatively cheap.

For polynomial growth, Transfer Efficiency is therefore asymptotically dominated by two competing terms: the Instability $\mathcal{I}^k$ (the JTL stationary-bias penalty) and the ITL Transient $\delta^k_{ITL}$ (the ITL reset penalty); the JTL Transient is lower-order in the regimes we analyze. The sufficient condition in \cref{theo:asymptotic-transfer} reads exactly this trade-off: transfer is positive when the JTL stationary penalty is paid back within the budget the ITL agent needs to warm up. From this we rediscover the conventional wisdom that joint-task training is most valuable in the ``\emph{few-shot}'' regime --- where $N_k$ is small and ITL has no time to recover from a cold start --- and ceases to help when the environment is so unpredictable that the bias cost dominates.

\subsection{The Continuous Drift Model}
\label{ss:continuous-drift}

To make the asymptotic story concrete, we instantiate the framework on a \textbf{diffusive regime}: a Wiener-walk model where optimal parameters evolve as $\theta_\star^k = \theta_\star^{k-1} + \Delta_k$ with $\Delta_k \sim \mathcal{N}(0, \Sigma_\Delta)$, and drift magnitude $\sigma_\Delta^2 := \text{Tr}(\Sigma_\Delta)$ controlling both how far the process may travel and how fast the relevance of past data decays. This regime captures real-world continuous distribution shifts such as the slow evolution of user preferences or weather variation in autonomous navigation. The full setup (Gaussian inputs with covariance $\Sigma_x$, label noise $\sigma_\xi^2$) is in \cref{sec:drift_analysis}.

Applying the exact Instability and Transient Error formulas to this setup yields a \textbf{non-asymptotic sufficient condition for positive transfer}: a single inequality balancing an \emph{ITL Transient cost} (which grows linearly in $k$ but decays as $1/N$), a \emph{JTL Instability cost} (which grows linearly in $k$ with no compensating $1/N$ factor), and a smaller \emph{JTL Transient cost} (which decays in both $k$ and $N$). The three terms mirror \cref{theo:asymptotic-transfer}; the full expression is given in \cref{eq:master_inequality_app}.

\begin{figure}[ht]
    \centering
    \definecolor{pilltaskgreen}{RGB}{194,243,222}
    \begin{tikzpicture}
        \node[anchor=south west,inner sep=0] (sshallow) at (0,0)
            {\includegraphics[width=0.9\linewidth]{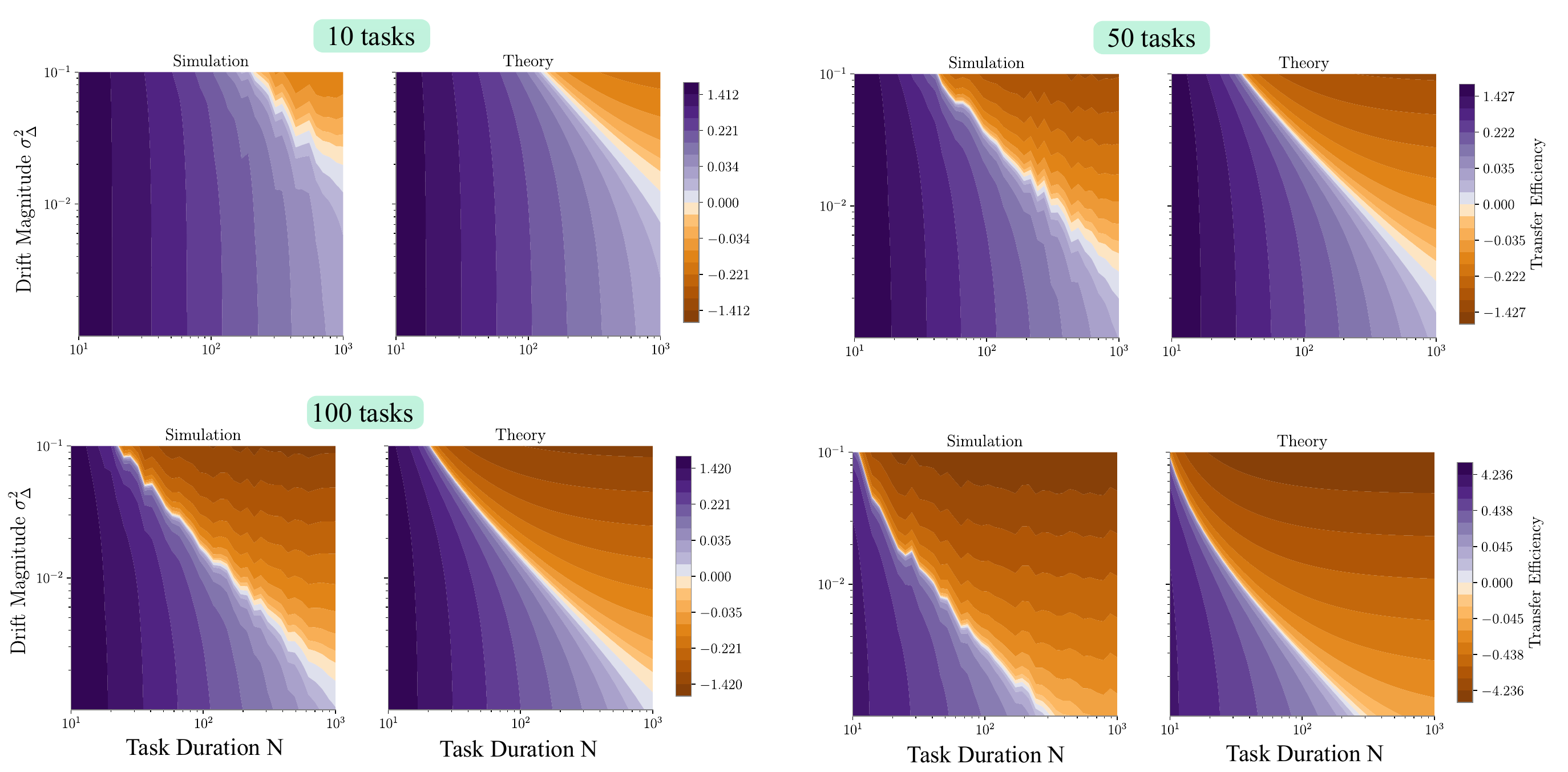}};
        \begin{scope}[x={(sshallow.south east)},y={(sshallow.north west)}]
            \node[anchor=center, rounded corners=2pt, fill=pilltaskgreen, inner sep=0,
                  minimum width=0.0665\linewidth, minimum height=0.0188\linewidth]
                  at (0.734,0.466) {\resizebox{!}{0.0106\linewidth}{500 tasks}};
        \end{scope}
    \end{tikzpicture}
    \caption{\textbf{Transfer Efficiency in the Continuous Drift Model.} Simulated vs.\ theoretical TE across task durations ($N$, x-axis) and drift magnitudes ($\sigma_\Delta^2$, y-axis), at $k\in\{10,50,100,500\}$. \textcolor{violet}{Violet}: positive transfer; \textcolor{orange}{orange}: negative transfer. As $k$ grows, the positive-transfer region shrinks, tracking the theoretical boundary from \cref{eq:master_inequality_app}.}
    \label{fig:sim-shallow}
\end{figure}

Simulations of this model (\cref{fig:sim-shallow}; details in \cref{apsec:simulations}) confirm the boundary at finite $k$ and show the positive-transfer region shrinking as $k$ grows, since the JTL Transient is lower-order than the linear-in-$k$ terms. In the limit $k\to\infty$, the condition reduces to a clean closed form for the Critical Task Duration:
\begin{equation}
\label{eq:critical-task-dur}
N_{\max}^\infty = \frac{3\gamma_{\min}\,\sigma^2_\Delta}{2\eta\,\text{Tr}(\Sigma_x \Sigma_\Delta)}.
\end{equation}
This asymptotic boundary is set by the geometry of the \emph{environment} together with the \emph{optimizer}: it depends on the input--drift coupling $\text{Tr}(\Sigma_x \Sigma_\Delta)$, the loss curvature $\gamma_{\min}$, and the learning rate $\eta$ (smaller $\eta$ favours biased initializations such as JTL), while being invariant to the overall drift magnitude $\sigma^2_\Delta$ and independent of the batch size $B$. The full asymptotic landscape and a trajectory-level view of drift eroding the warm-start advantage are in \cref{fig:asymptotic-simulation,fig:MT_Drift_Analysis}.

\subsection{The Optimization Bias Matters}
\label{ss:continuous-drift-beyond}

The closed-form analysis above takes the function class as given. A complementary question is how much of the transfer profile is set by the function class versus the \emph{optimization bias} of the algorithm that fits it. We isolate this by extending the Continuous Drift Model to a deep-linear teacher-student setup (following \citealt{saxe}; details in \cref{apsec:simulations}) and comparing two parametrizations of the same linear function class: NTP (\emph{lazy regime}, \citealt{Jacot18NTK}) and $\mu$P (\emph{rich regime}).

\begin{figure}[h]
    \centering
    \includegraphics[width=0.8\linewidth]{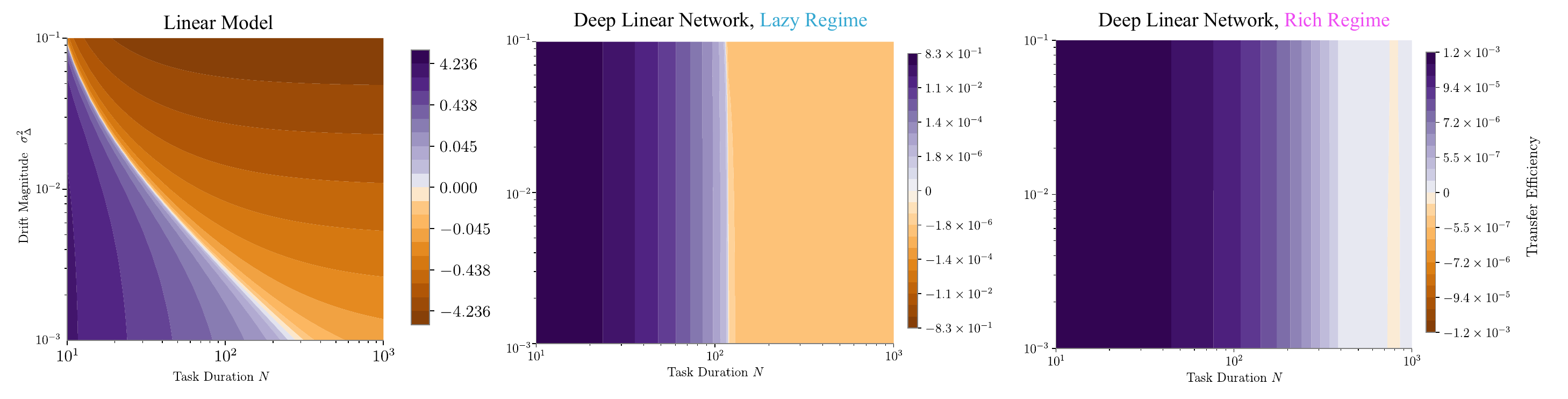}
    \caption{\textbf{Optimization bias shifts the Transfer Efficiency front, at fixed function class.} TE in the Continuous Drift Model versus $N$ and $\sigma_\Delta^2$. \emph{Left:} convex Linear Model. \emph{Middle:} Deep Linear Network, Lazy Regime (NTP). \emph{Right:} Deep Linear Network, Rich Regime ($\mu$P). The function class is identical across the three panels; the parametrization is not.}
    \label{fig:optimization_bias}
\end{figure}

\cref{fig:optimization_bias} shows that the three TE fronts differ markedly even though the function class is the same; per-width fronts are in \cref{fig:sim-deep-lazy,fig:sim-deep-rich}. Examining the constituents (\cref{fig:insta-transient-scaling}) reveals two observations consistent with \cref{theo:asymptotic-transfer}: \emph{Instability is several orders of magnitude higher under NTP}, yielding substantially lower JTL performance; and \emph{the JTL Transient is several orders of magnitude below the ITL Transient and Instability at every width}. Two implications: first, the framework continues to hold: Transfer Efficiency is governed by $-\mathcal{I}$ plus a vanishing transient gap, but the value of $\mathcal{I}$ is determined by implementation-level choices, not by the task sequence alone. Second, this highlights the limit of any purely \emph{statistical} account of transfer (cf.\ the early Learning-to-Learn literature): the design of adaptive algorithms must jointly consider the stochastic processes defined by the data and by the optimizer.

\section{Generalizing CL beyond the JTL/ITL dichotomy}
\label{sec:generalizing}
\subsection{From implicit to explicit future models}
\label{ssec:predictive-cl}

We have shown, empirically and theoretically, that neither the JTL nor the ITL algorithm is universally optimal: each minimizes the Average Lifelong Error (\cref{eq:SLO}) only under specific environmental regimes. This raises the natural question: \emph{is there a single optimal objective for continual learning, one that provably minimizes the ALE?}

Foundational results in online convex optimization \citep{besbes2015non} established that, when the environment is unpredictable (i.e.\ adversarial), \emph{no first-order stochastic method} can outperform online gradient descent \citep{zinkevich2003online}. However, \citet{rakhlinOnlineLearningPredictable2013} tell us that much more can be hoped for when the future is somewhat \emph{predictable}. If $\mathcal{P}_{n+1}$ were known, we could choose $\mathcal{D}_n$ to match it and minimize the next-step loss directly; $\mathcal{P}_{n+1}$ is not known in general, but one can always \emph{approximate the future} based on some assumptions.

This lets us reinterpret the sampling choice (the choice of $\mathcal{D}_n$), and more generally the choice of objective, as an implicit assumption about the future. For JTL, the assumption is that \emph{the environment is ergodic and the future will resemble the past} --- the classical i.i.d.\ premise underlying \emph{empirical risk minimization}. For ITL, the assumption is the opposite: that no structure can be reliably extrapolated, in which case online gradient descent on the current task is minimax-optimal \citep{zinkevich2003online,besbes2015non}.

Algorithms that maximize average accuracy or minimize forgetting can be viewed as implicit (or, for replay-based methods, explicit) approximations of the JTL objective. Thus the proposed re-reading opens a new way to analyze and to \emph{build} CL algorithms. More importantly, the same framing suggests a constructive path: rather than fix an objective implicitly, one can make the future model an explicit design choice. We call the resulting family \emph{Predictive CL algorithms}.

\begin{definition}[Predictive CL Algorithm]
\label{def:predictive-cl}
    Let $\mathcal{H}_{<k}$ denote the observable history up to task $k$, including past data, environments, and learning trajectories. A prediction mapping $\Phi$ maps this history to a predicted target distribution for the current task, $\tilde{\mathcal{P}}^k = \Phi(\mathcal{H}_{<k})$. A \emph{Predictive CL algorithm} optimizes its parameters by minimizing the expected loss under the predicted distribution $\tilde{\mathcal{P}}^k$:
    \begin{align}
    \label{eq:GCLA}
    &\inf_\mathcal{A}\,\, \frac{1}{N_k} \sum_{n=a_k}^{b_k}
    \mathbb{E}_{\theta_n\sim\mathcal{A}(\cdot\mid H_{n}) \,}\mathbb{E}_{o\sim\tilde{\mathcal{P}}^k}\,
    \left[\ell(\theta_n,o)\right]\qquad\text{s.t. computational constraints}
\end{align}
\end{definition}

\cref{def:predictive-cl} generalizes existing CL objectives by making the future model explicit. JTL is recovered by choosing $\Phi(\mathcal{H}_{<k}) = \sum_{j\le k} w_j^k\,\mathcal{P}^j$ (the empirical past is the prediction); ITL is recovered by $\Phi(\mathcal{H}_{<k}) = \mathcal{P}^k$ (the current task is the prediction). The Window algorithm introduced below sits between these limits. Crucially, $\Phi$ need not be fixed: it can itself be a learned model of the future.

\subsection{Proof of Concept: the Window Predictive CL Algorithm}

\begin{figure}[ht]
    \centering
    \includegraphics[width=0.24\linewidth]{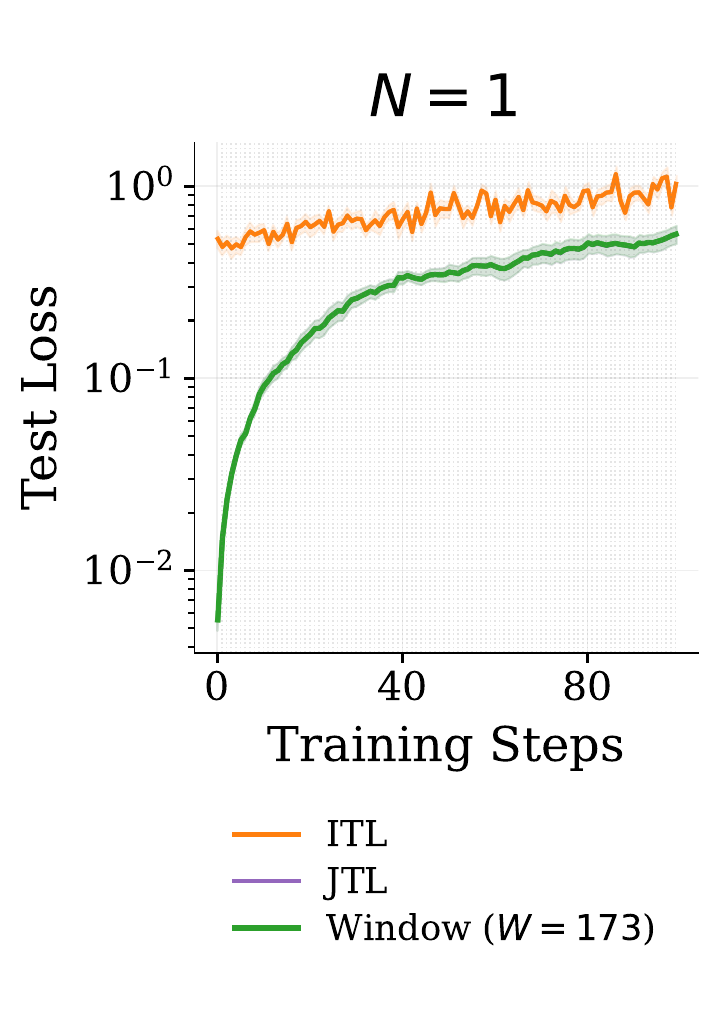}
    \includegraphics[width=0.24\linewidth]{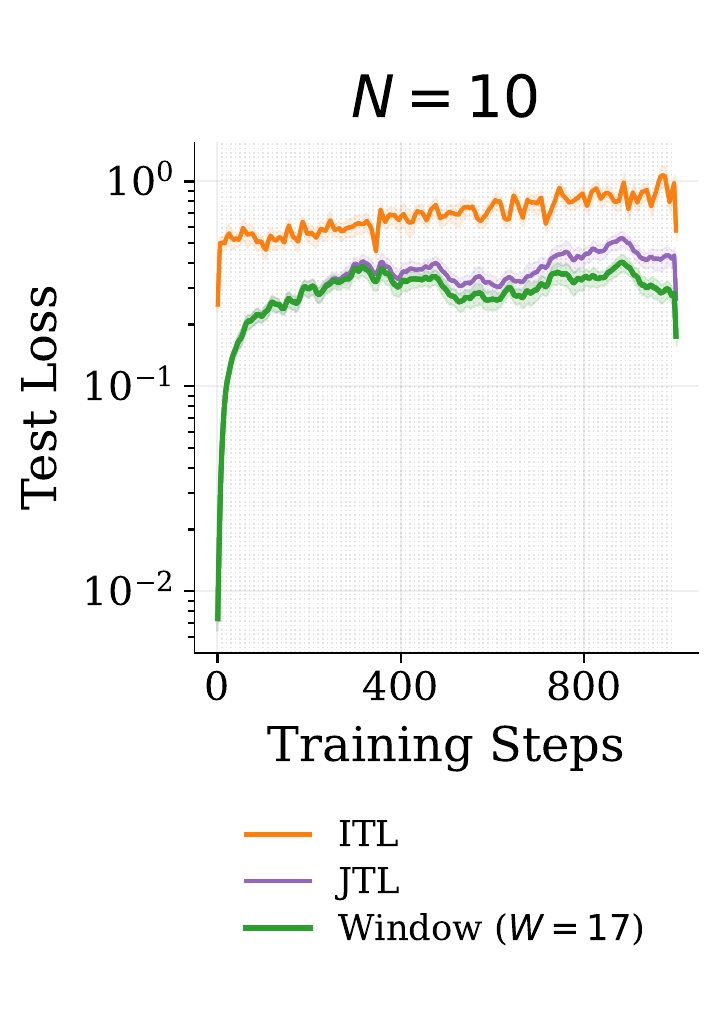}
    \includegraphics[width=0.24\linewidth]{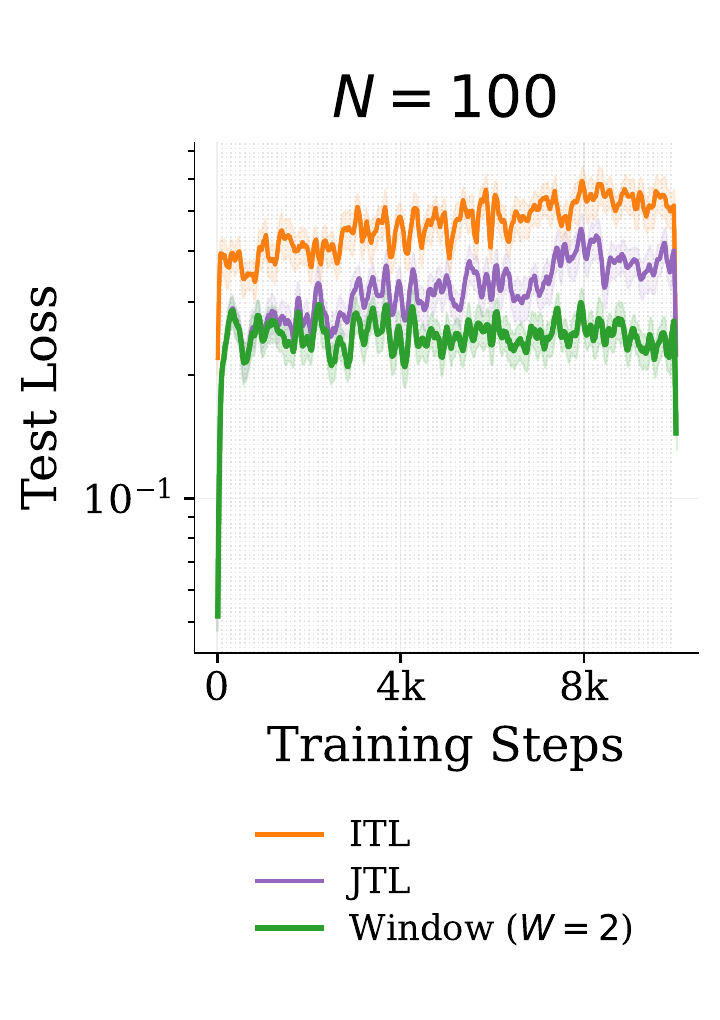}
    \includegraphics[width=0.24\linewidth]{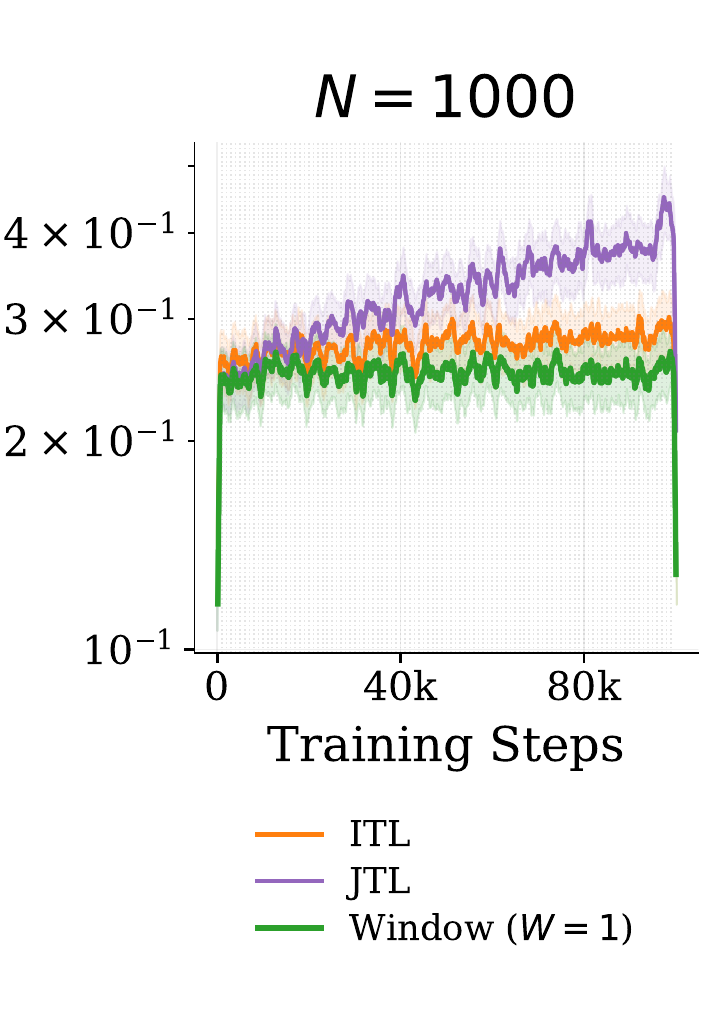}
    \caption{\textbf{Real-Time Performance of the Window Algorithm.} Test loss trajectories for the ITL (\textcolor{orange}{orange}), JTL (\textcolor{violet}{violet}), and Window (\textcolor{ForestGreen}{green}) agents across varying task durations $N$. The curves are smoothed by averaging over $N$ steps. By setting the window size to the theoretical optimum $W^\star$ derived in \cref{eq:optimal-window}, the Window agent consistently matches or outperforms both baselines. Notably, the optimal memory horizon is inversely proportional to task duration: shorter tasks ($N=1$, left) call for a large memory window ($W^\star=173$, effectively recovering JTL), whereas longer tasks ($N=1000$, right) favor a nearly memoryless approach ($W^\star=1$, equivalent to ITL with warm starts). The intermediate panels follow the same trend ($W^\star=17$ at $N=10$ and $W^\star=2$ at $N=100$).} \vspace{-0.2cm}
    \label{fig:sim-W-ITL-JTL}
\end{figure}

Consider again the quadratic losses case of \cref{sec:quadratic-losses}.  We now instantiate,  as an example, a Predictive CL algorithm which we call the \emph{Window} (W) algorithm. The Window algorithm predicts the future using a finite recent history: it is defined by a hyper-parameter $W$, which we call \emph{window size}. For each task $k$ the SGD sampling distribution $\mathcal{D}^k_W$ is a mix of the past $W$ tasks\looseness=-1 
\[
\mathcal{D}^k_W = \sum_{j = k-W+1}^k w_j^k \,\mathcal{P}^j.\] 
Like the JTL agent, the Window agent parameters are not reset at the beginning of a new task. As $W$ grows the window agent converges to the JTL agent - which can be seen as an \emph{infinite window} agent-, whereas for $W = 1$ it coincides with an \emph{ITL agent with warm starts}.

In contrast to the JTL agent, whose bias is determined by the global growth of the task solutions relative to the origin, the Window agent’s bias depends on the local variations of the environment over the horizon $W$. In order to capture this bias asymptotically we need to introduce a new measure, akin to the Growth Rate, which we call the \textbf{Mean Squared Displacement}.  

\begin{definition}[Mean Squared Displacement (MSD)]
\label{def:msd}
    The Mean Squared Displacement $m: \mathbb{N} \times \mathbb{N} \to \mathbb{R}^+$ characterizes the expected squared Euclidean distance between two task solutions separated by a lag $W$ at time $k$:
    \begin{equation}
        \mathbb{E}[\|\theta_\star^k - \theta_\star^{k-W}\|^2] \in \Theta(m(k, W)).
    \end{equation}
\end{definition}
Based on the MSD measure, we can compute the asymptotic Transfer Efficiency of the Window Algorithm compared to ITL training (as by \cref{ss:continuous-drift}), and we can solve for the window size $W$ which maximises the Transfer Efficiency. 
\begin{lemma}[Asymptotic Optimality of Window Size]
    \label{lemma:optimal_window_main}
    Let $m(k,W)$ be the Mean Squared Displacement of tasks within a window $W$. The optimal window size $W^*$ maximizing transfer depends on the environment's non-stationarity regime. In particular consider three examples:
    \begin{enumerate}
        \item \emph{Bounded Regime ($\psi(k)=1$):} If $m(k,W) \in \Theta(1)$,  transfer increases monotonically with $W$. The asymptotically optimal strategy is the JTL agent ($W^* \to \infty$).
        \item \emph{Linear Drift Regime ($\psi(k)=k^2$):} If $m(k,W) \in \Theta(W^2)$, transfer decreases monotonically with $W$. The asymptotically optimal strategy is the ITL agent ($W^* = 1$).
        \item \emph{Diffusive Regime ($\psi(k)=k$):} If $m(k,W) \in \Theta(W)$, transfer is strictly convex in $W$. The asymptotically optimal strategy is a Window agent with finite memory ($1 < W^* < \infty$).
    \end{enumerate}
\end{lemma}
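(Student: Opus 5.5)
The plan is to mirror the asymptotic analysis behind \cref{theo:asymptotic-transfer}, with the Window agent in place of JTL. We then write the Transfer Efficiency of the Window agent relative to ITL through \cref{lemma:TE-decomposition}:
\[
\mathrm{TE}^k(W) = -\mathcal{I}^k(W) + \delta^k_{ITL} - \delta^k_{W}(W).
\]
Only the Window terms depend on $W$, so the task reduces to finding their asymptotic scaling in $W$.

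\textbf{Step 1 (Window Instability).} Use the exact bias--variance form of the Instability (\cref{prop:exact_instability}) with the Window mixture $\mathcal{D}^k_W$. The Window minimizer is a curvature-weighted average of $\theta_\star^{k-W+1},\dots,\theta_\star^k$. Under the aligned-Hessian assumptions of \cref{app:setup}, its bias on task $k$ is controlled above and below by $\frac{1}{W}\sum_{i<W}\mathbb{E}\|\theta_\star^k-\theta_\star^{k-i}\|^2$. By \cref{def:msd} this is $\Theta(\bar m(k,W))$, where $\bar m$ is the lag-average of $m$, which gives $\Theta(m(k,W))$ for polynomial $m$. The Task Diversity term is bounded the same way, since the gradients $H_j(\theta_\star^{W,k}-\theta_\star^j)$ have squared norms of the same order. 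Hence $\mathcal{I}^k(W)\in\Theta(m(k,W))$, up to an $\eta/B$ noise floor that does not depend on $W$.

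\textbf{Step 2 (Window Transient).} Apply the potential formula of \cref{theo:transient_potential}. The Window agent warm-starts at the stationary point of the previous window, so $\tilde\mu_0$ is the shift of the windowed minimizer between consecutive tasks. That shift equals $\frac{1}{W}\bigl(\theta_\star^k-\theta_\star^{k-W}\bigr)$ plus curvature corrections, so $\delta^k_W \in \Theta\!\left(m(k,W)/(W^2 N_k)\right)$. This is the analogue of the $\psi(k)/k^2$ term in \cref{theo:asymptotic-transfer}, with $k$ replaced by $W$. We therefore obtain
\[
\mathrm{TE}^k(W)\approx \delta^k_{ITL} - a\,m(k,W) - b\,\frac{m(k,W)}{W^2N_k}
\]
for positive constants $a,b$.

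\textbf{Step 3 (case analysis).} We then treat the three regimes in turn.
\begin{enumerate}
\item \emph{Bounded regime.} Here the Instability term is $\Theta(1)$ but saturates in $W$, while the transient decays as $1/W^2$. Transfer increases in $W$ and is maximized as $W\to\infty$.
\item \emph{Linear drift.} Here $a W^2$ dominates and the transient term is $\Theta(1/N_k)$, which is constant in $W$. Transfer decreases in $W$, so $W^*=1$.
\item \emph{Diffusive regime.} Here $-\mathrm{TE}$ reduces, up to constants, to $aW + b/(WN_k)$, which is strictly convex on $W>0$. Its minimizer $W^*=\sqrt{b/(aN_k)}$ is finite. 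It exceeds $1$ when $N_k$ is small relative to $b/a$, which is consistent with \cref{fig:sim-W-ITL-JTL}, where $W^*\propto N^{-1/2}$ roughly.
\end{enumerate}

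\textbf{Main obstacle.} I expect Step 2 to be the hardest part: the $W$-dependence of the warm-start displacement must hold as a two-sided bound. This needs control of the cross terms between non-commuting curvatures, and of the variance potential when the window composition changes at each boundary. My first attempt would assume simultaneously diagonalizable $H_j$, as in \cref{app:setup}, and work coordinatewise. In each coordinate the windowed minimizer is an explicit weighted average, so its increment telescopes to the endpoint difference over the window.
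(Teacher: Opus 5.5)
\paragraph{Comparison with the paper.} Your route is the paper's own. The appendix proof also writes the Window-vs-ITL condition with $\mathbb{E}[\mathcal{I}^W]\in\Theta(m)$ and $\mathbb{E}\|\tilde\mu_0^{W,k}\|^2\in\Theta(m/W^2)$. It then minimizes $J(W)\asymp c_1Nm+c_2m/W^2+c_3\Delta m$, which is $N$ times your $a\,m+b\,m/(W^2N_k)$. The paper drops the $\Delta m$ term under stationary increments, so omitting it costs you nothing. Your linear-drift and diffusive cases are fine. Your caveat that $W^*>1$ requires $N_k$ small relative to $b/a$ is correct and sharper than the statement's unconditional $1<W^*$. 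Note also that what you show to be strictly convex is $-\mathrm{TE}$ (the risk), not transfer itself.

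\paragraph{The gap: the bounded case.} This case does not follow, and the paper's proof breaks at the same sentence (``the diversity cost remains constant''). A $\Theta(1)$ bound says nothing about how $\mathcal{I}^k(W)$ varies with $W$. ``Saturates in $W$'' means it \emph{increases} toward its limit with a deficit of order $1/W$, and that dominates a transient of order $1/W^2$. You had the right object in Step 1, the lag-average $\bar m(k,W)=\frac1W\sum_{i<W}m(k,i)$, and lost it when you collapsed it to $m$. Because task $k$ sits in its own window with weight $1/W$, in this regime $\bar m=\frac{W-1}{W}m$.

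\emph{Explicit counterexample.} Take i.i.d.\ tasks with common Hessian $H$ and $\mathrm{Cov}(\theta_\star^j)=\Sigma_\theta$, the paper's canonical bounded example. Both the bias and the task-diversity term carry the factor $\frac{W-1}{W}$, so $\mathbb{E}[\mathcal{I}^k(W)]=\frac{W-1}{W}A$ with $A=\tfrac12\mathrm{Tr}(H\Sigma_\theta)+\tfrac{\eta}{4B}\mathrm{Tr}(H^2\Sigma_\theta)$. Your Step 2 gives $\mathbb{E}[\delta^k_W]\approx C/W^2$ with $C=\Theta(1/(\eta N))$, and the covariance potential has zero mean by stationarity. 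Hence $-\mathrm{TE}(W)=\mathrm{const}+A(1-\tfrac1W)+C/W^2$. Its derivative $A/W^2-2C/W^3$ is positive for $W>2C/A$. So the optimum is the finite $W^*=\max(1,2C/A)$; for $H=I$ it equals $1$ once $\eta N\gtrsim 2$. Moreover, every $W>C/A$ strictly beats the JTL limit $A$.

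\emph{A quicker check.} $W=1$ is warm-start ITL, whose stationary law is ITL's, so $\mathcal{I}^k(1)=0$; your claim $\mathcal{I}^k(W)\in\Theta(m)$ already fails at $W=1$. Comparing $W=1$ with $W\to\infty$ is then exactly the comparison behind \cref{lem:critical_TD}: since $\bar{\mathcal{I}}>0$ here, warm-start ITL beats JTL beyond the critical duration.

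So case 1 cannot be obtained from $\Theta$-level scalings, and for i.i.d.\ tasks it is false at every fixed $N$. You would need an extra hypothesis such as $N\to0$, under which $2C/A\to\infty$.
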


This result formally establishes that \textbf{the best objective for real-time performance depends on the environment and the model} ($m(k,W)$ is model-dependent). The bounded and linear-drift cases recover the JTL and ITL extremes respectively; the most interesting case is the diffusive regime, where the optimal strategy is neither and depends on further environmental parameters.

For the Continuous Drift Model of \cref{ss:continuous-drift}, we derive a non-asymptotic closed form for $W^\star$ (\cref{eq:optimal-window}; full derivation in \cref{sec:optimal_window}). Crucially, $W^\star$ depends on the \emph{learning rate} $\eta$---not just on environment parameters---confirming that the optimal objective for fast adaptation is jointly determined by the environment and the learning algorithm.

We run simulations of the Window agent in the Continuous Drift Environment (details in \cref{apsec:simulations}). In \cref{fig:sim-W-ITL-JTL} we compare the loss of the ITL, JTL and Window agents with optimal window size: the Window agent consistently outperforms both baselines across all tested task durations. The optimal window \emph{shrinks} as $N$ grows, recovering ITL-with-warm-starts for large $N$ and JTL for sufficiently small $N$. The full transfer-efficiency landscape across window sizes and drift magnitudes (\cref{fig:sim-W-TE} in the appendix) confirms these theoretical predictions.

\paragraph{Beyond the Window: richer future models.} The Window algorithm is a deliberately simple instance of \cref{def:predictive-cl}: it encodes the prior that recent tasks are the most informative about the next one, summarized by a single scalar $W$. The same framework accommodates far richer prediction mappings $\Phi$---for instance, a \emph{generative model} that produces samples resembling future tasks, or a learned \emph{sufficient summary} of the history tailored to next-task prediction. In both cases the design question becomes \emph{which statistics of the past predict the future}, and the bias-variance trade-off our framework formalizes provides the criterion under which competing $\Phi$ can be compared. We leave the exploration of such mappings to future work.

\section{Interdisciplinary Literature Review}
\label{sec:related-work-new}

One of the aims of this work is to draw connections between the continual learning problem and the adjacent literatures that have developed conceptual structures and theoretical results bearing on its core questions. Our contribution is two-fold: providing CL with new analytical machinery, and offering existing fields fresh problems where their results can be tested and applied.

\paragraph{Online Learning and Stochastic Optimization.}
Our analysis sits at the intersection of Online Convex Optimization (OCO) and Stochastic Optimization \citep{cesa2006prediction,hazan2016introduction,orabona2019modern}. The Average Lifelong Error metric is a finite-horizon analogue of the OCO \emph{static regret} \citep{zinkevich2003online}, and across multiple tasks the average ALE corresponds to a notion of \emph{dynamic regret} \citep{herbster1998tracking,besbes2015non}. \citet{besbes2015non} establish a minimax dynamic regret bound of $O(T^{2/3} V_T^{1/3})$ that grows with the comparator's path variation $V_T$, formalizing how non-stationarity inflates regret with time. The role played there by the path variation $V_T$ is played in our task-indexed setting by the \emph{Growth Rate} $\psi(k)$: both quantify how non-stationarity accumulates as the environment expands. Our Critical Task Duration $N_{\max}^\infty$ then converts this descriptor into a horizon on the task-duration axis: it characterizes the point beyond which the environment's Growth Rate $\psi(k)$ makes accumulating past observations actively harmful relative to learning each task afresh.

Our transient analysis of the JTL and ITL agents recovers and extends classical Lyapunov-type results on constant step-size SGD \citep{murataStatisticalStudyOnline1999,bach,mandt2017stochastic}; the Transient Potential Law (Theorem~\ref{theo:transient_potential}) makes the geometric role of the relative-curvature matrix $\mathcal{R}^k$ in those bounds explicit.

A central conceptual import from this literature is the framework of \emph{predictable sequences} \citep{rakhlinOnlineLearningPredictable2013}: an online learner can improve over the worst-case minimax dynamic regret of \citet{zinkevich2003online,besbes2015non} when given a useful hypothesis about future losses. Our generalization of CL beyond JTL (\cref{sec:generalizing}) is precisely this idea translated to non-stationary task sequences. From this perspective, JTL (and similarly most of the continual learning algorithms it embodies) is not a new algorithm but a particular instantiation of predictable-sequence learning under an ergodicity hypothesis.

\paragraph{Multi-Task Learning (MTL).}
The comparison of JTL and ITL aligns with a foundational question of Multi-Task Learning: does jointly training over multiple tasks benefit performance relative to learning each independently? JTL, as we use it, is best understood as an \emph{online, cumulative} version of MTL: the task pool grows over time, and joint training is performed online over the accumulated history rather than offline on a fixed pool. The classical MTL literature operates in a different regime, typically a fixed task pool sampled i.i.d.\ and evaluated post-hoc; the underlying tension between sharing and interference is, however, identical.

Statistical foundations show that shared representations reduce sample complexity when the task family is self-contained \citep{baxterModelInductiveBias2000,maurerbenefit,tripuraneni2020theory,du2020few}, but abstract away the optimization process; a separate line considers optimization-time interference and gradient conflict \citep{standley2020tasks,sener2018multi,yu2020gradient,liu2023nonstationary}. Neither line jointly tracks environment and optimization processes as they co-evolve, nor formalizes the resulting trade-off through an ALE-like metric.

\paragraph{Learning To Learn and Online Meta-Learning.}
Our framework has its closest ties to Learning-to-Learn and Online Meta-Learning \citep{Kuzborskij,deneviLearningLearnCommon2018}. The \emph{Online-Within-Online} setting \citep{alquier2017regret,denevi2019learning,khodak2019adaptive,nazariDynamicRegretAnalysis2021} considers regret accumulated across a sequence of tasks adapted via inner-loop optimization --- formally analogous to our ALE. \citet{denevi2019learning,khodak2019adaptive} prove that the regret-optimal running prior is the centroid of past task minimizers; under our task-alignment assumption (\cref{ass:alignment}), this centroid coincides exactly with the JTL minimizer $\theta_\star^{JTL,k}$. The two algorithm families are therefore not merely similar in spirit but provably equivalent in their asymptotic target. The substantive difference lies in what they consume: meta-learning algorithms typically use past gradients or update statistics, whereas JTL uses the source data directly via replay, with consequences for variance and robustness beyond convex regimes. 

A direct corollary of the MTL/MAML equivalence in over-parametrized networks \citep{wang2021bridging} is that our Instability bound for JTL also bounds the \emph{asymptotic} interference cost of MAML; the transient trajectories of the two families, which are the focus of much of our analysis, remain distinct. Our Transfer Efficiency also reconciles the warm-start puzzle of \citet{ash2020warmstart}: warm-starting helps when Instability is small relative to the ITL Transient, and hurts when this inequality reverses.

\paragraph{Continual Learning.}
We finally situate our contribution within continual learning. The defining aim of modern CL is to mitigate \emph{catastrophic forgetting} \citep{parisi2019continual,hadsell2020embracing,delange2022survey}, pursued via mechanisms as diverse as regularization, replay, parameter isolation, expert mixtures, and prompt-based adaptation. Although these methods differ substantially in implementation, they implicitly share a common target: maximize average accuracy across all tasks seen so far. In our terminology, each is an approximation of the JTL objective. The variety of CL methods reflects different practical strategies for approximating this common objective, not different objectives in themselves.

The closest theoretical neighbor of our work is the line of analysis on linear-model CL by \citet{evron2022catastrophic,evron2024joint}. Their analyses bound \emph{forgetting} for sequential gradient descent on quadratic losses; we share the quadratic regime and the gradient-descent algorithm, but ask a different question --- bounding \emph{real-time error} (ALE) rather than retrospective forgetting --- and obtain the Critical Task Duration as the central object of study. Our analysis is most naturally interpreted under the \emph{domain-incremental} regime, where task identity shifts the input/output distribution while the label space is preserved; this matches both our experimental setting (CLEAR, MD5, Permuted/Shuffled CIFAR-10) and our quadratic-drift theory.

A complementary line of work, recently crystallized as the \emph{Big World Hypothesis} \citep{javed2024bigworld}, holds that the world is too large for any bounded learner to model exhaustively---so CL is fundamentally a problem of \emph{tracking} rather than memorization. This view has technical roots in the early work of \citet{sutton2007tracking}, and is empirically supported by recent work on the \emph{loss of plasticity} \citep{dohareLossPlasticityDeep2024a,dohare2021continual}. The ALE metric is itself a tracking objective, and our framework can be read as the optimization-theoretic instantiation of this hypothesis.

Within Online Continual Learning \citep{cai2021online,lopez2017gradient,buzzega2020dark}, our Transfer Efficiency decomposition turns the empirical observation of \citet{cai2021online}---that learning efficacy and information retention conflict---into a quantitative trade-off mediated by the Critical Task Duration. \citet{kumar2023continual} provide the information-theoretic ALE framing under which we operate; our contribution complements theirs by characterizing which implementable algorithms approach the information-theoretic ceiling and under what conditions. The framework also clarifies the relationship to non-stationary RL theory \citep{lecarpentier2019non,lecarpentier2021lipschitz,khetarpal2022towards}, where similar adaptation--stability trade-offs appear in the context of value-function tracking.

\section{Discussion and Conclusion}
\label{sec:discussion}

The Continual Learning (CL) literature has long been framed around mitigating catastrophic forgetting. This perspective implicitly assumes that the ideal lifelong learner is the Joint-Task Learning (JTL) solution---a model trained jointly on all data observed so far. In this work, we challenge this assumption, proposing a more general formulation of continual learning that shifts the focus from knowledge retention to real-time adaptation.

Our starting point is the Average Lifelong Error (ALE). \citet{kumar2023continual} formalized the trade-offs underlying ALE through an information-theoretic lens, establishing the performance ceiling that a compute-bounded learner can hope to approach. Our framework \emph{complements} this view by characterizing the \emph{optimization-time} realization of the same trade-off: which implementable algorithms can approach this ceiling, and under what conditions. We show that real-time performance is governed by the interaction of two stochastic processes---one describing the environment, and the other the learner---and that implementation-level choices (the optimizer, the model, its parametrization, e.g.\ NTK versus $\mu$P) directly shape the balance between positive and negative transfer.

We compared the JTL agent (which assumes an ergodic and stable past) with the Independent-Task Learning (ITL) agent (which assumes a fully volatile environment) through the lens of \emph{Transfer Efficiency} ($\mathrm{TE}$), a quantity that decomposes into two largely independent components: \emph{Instability} (the stationary bias from optimizing over conflicting past tasks) and \emph{Transient Error} (the cost of learning from a naive initialization). Under mild convergence conditions and whenever the average Instability is positive, this decomposition yields a \emph{Critical Task Duration} ($N_{\max}^\infty$): a threshold beyond which the warm-start advantage of JTL is overtaken by the ITL agent's ability to fit the current environment.

In the quadratic regime, this decomposition admits closed-form expressions that make the tension between stationary bias and the cost of learning fully transparent. Two takeaways carry over to general non-convex settings: (1) the environment's \emph{Growth Rate}---the speed at which it expands---is the decisive parameter for the balance between positive and negative transfer; and (2) once the number of tasks is sufficiently large, transfer is primarily determined by the relative weight of the Instability and the ITL Transient, with the JTL Transient acting as a lower-order correction. We empirically verified that these mechanisms persist on continual image-classification benchmarks (CLEAR and MD5) and the reinforcement-learning benchmark MT10. \looseness=-1\vspace{-0cm}

Recognizing the JTL algorithm as a specific instance of \emph{predictive} online learning---which rests on the assumption of predictable sequences---suggests a natural generalization: a broader family of \emph{Predictive CL algorithms} that optimize \emph{explicit, environment-specific predictions of the future}. As a proof of concept, we introduced the Window Algorithm, which adapts its memory horizon based on the environment's Mean Squared Displacement (MSD), a local, interval-based analogue of the Growth Rate. In controlled drift settings, this simple approach consistently outperforms both JTL and ITL. More broadly, because the prediction mapping itself can be learned, future algorithms need not commit to a fixed objective a priori, but could instead infer the most appropriate predictive target directly from experience. We believe this represents a promising new direction for continual learning, introducing prediction as an additional axis for designing adaptive learning algorithms.

The broader contribution of this work, however, is conceptual. The framework we develop---\emph{Transfer Efficiency, Instability,} and \emph{Transient Error}---together with the Predictive-CL perspective, helps unify continual learning with closely related areas such as online learning, stochastic optimization, multi-task learning, and meta-learning. Beyond introducing new metrics, our framework gives the continual learning community a common language and a unified analytical toolkit for studying real-time adaptation under non-stationary environments.

\paragraph{Limitations.} Our conclusions depend on the choice of evaluation objective. ALE captures a natural notion of real-time adaptation under computational constraints, but alternative desiderata  such as robustness, coverage, fairness, or worst-case performance may favor different learning strategies and retention mechanisms. Accordingly, we do not claim that ALE is the universally correct objective for continual learning. Rather, our goal is to characterize the behavior and predictions that follow from this particular formulation. Under alternative objectives, the relative performance of JTL, ITL, and other Predictive CL algorithms may differ. The proposed Predictive CL framework remains applicable as a general decomposition and analysis tool, but quantities such as the Critical Task Duration and Transfer Efficiency should be interpreted as properties of the ALE objective rather than universal characteristics of continual learning.

\subsection*{Acknowledgements}
This research was enabled in part by support provided by (Calcul Québec) (https://www.calculquebec.ca/en/) and the Digital Research Alliance of Canada (https://alliancecan.ca/en). The authors acknowledge the material support of NVIDIA in the form of computational resources. GL would like to acknowledge the support of the AI Center PhD fellowship.
Additionally, MS was supported by  Fonds de recherche du Québec – Nature et Technologies (FRQNT) \#2023-2024-B2X-336394. CV is funded by the Deutsche Forschungsgemeinschaft (DFG) under both the project 468806714 of the Emmy Noether Programme and under Germany’s Excellence Strategy – EXC number 2064/1 – Project number 390727645. CV also thanks the international Max Planck Research School for Intelligent Systems (IMPRS-IS). We acknowledge support by CIFAR (Canada AI Chair; Learning in Machine and Brains Fellowship). We are grateful to David Abel for his generous and insightful feedback on earlier results of this work, which have made this paper better in many ways.



\bibliography{main}

@article{khetarpal2022towards,
  title={Towards continual reinforcement learning: A review and perspectives},
  author={Khetarpal, Khimya and Riemer, Matthew and Rish, Irina and Precup, Doina},
  journal={Journal of Artificial Intelligence Research},
  volume={75},
  pages={1401--1476},
  year={2022}
}

@article{caruana1997multitask,
  title={Multitask learning},
  author={Caruana, Rich},
  journal={Machine learning},
  volume={28},
  pages={41--75},
  year={1997},
  publisher={Springer}
}

@article{yu2020gradient,
  title={Gradient surgery for multi-task learning},
  author={Yu, Tianhe and Kumar, Saurabh and Gupta, Abhishek and Levine, Sergey and Hausman, Karol and Finn, Chelsea},
  journal={Advances in Neural Information Processing Systems},
  volume={33},
  pages={5824--5836},
  year={2020}
}

@article{lecarpentier2019non,
  title={Non-stationary Markov decision processes, a worst-case approach using model-based reinforcement learning},
  author={Lecarpentier, Erwan and Rachelson, Emmanuel},
  journal={Advances in neural information processing systems},
  volume={32},
  year={2019}
}

@inproceedings{evron2024joint,
  title={The Joint Effect of Task Similarity and Overparameterization on Catastrophic Forgetting--An Analytical Model},
  author={Goldfarb, Daniel and Evron, Itay and Weinberger, Nir and Soudry, Daniel and Hand, Paul},
  booktitle={International Conference on Learning Representations (ICLR)},
  year={2024}
}

@article{wolczyk2021continual,
  title={Continual world: A robotic benchmark for continual reinforcement learning},
  author={Wo{\l}czyk, Maciej and Zaj{\k{a}}c, Micha{\l} and Pascanu, Razvan and Kuci{\'n}ski, {\L}ukasz and Mi{\l}o{\'s}, Piotr},
  journal={Advances in Neural Information Processing Systems},
  volume={34},
  pages={28496--28510},
  year={2021}
}

@phdthesis{Ring94,
author = {Ring, Mark Bishop},
title = {Continual learning in reinforcement environments},
year = {1994},
school = {University of Texas at Austin},
address = {USA},
}

@incollection{Thrun95,
  title={Lifelong robot learning},
  author={Thrun, Sebastian and Mitchell, Tom M},
  booktitle={The biology and technology of intelligent autonomous agents},
  pages={165--196},
  year={1995},
  publisher={Springer},
  url={http://citeseerx.ist.psu.edu/viewdoc/download?doi=10.1.1.71.3723&rep=rep1&type=pdf}
}

@inproceedings{Jacot18NTK,
 author = {Jacot, Arthur and Gabriel, Franck and Hongler, Cl{\'e}ment},
 booktitle = {Advances in Neural Information Processing Systems},
 editor = {S. Bengio and H. Wallach and H. Larochelle and K. Grauman and N. Cesa-Bianchi and R. Garnett},
 pages = {},
 publisher = {Curran Associates, Inc.},
 title = {Neural Tangent Kernel: Convergence and Generalization in Neural Networks},
 url = {https://proceedings.neurips.cc/paper_files/paper/2018/file/5a4be1fa34e62bb8a6ec6b91d2462f5a-Paper.pdf},
 volume = {31},
 year = {2018}
}

@inproceedings{yang2021feature,
  title = {Tensor Programs IV: Feature Learning in Infinite-Width Neural Networks},
  author = {Yang, Greg and Hu, Edward J.},
  booktitle = {International Conference on Machine Learning},
  pages = {11727--11737},
  year = {2021},
  organization = {PMLR},
  url = {https://arxiv.org/abs/2011.14522}
}

@inproceedings{yang2021tensorprogramsv,
  title = {Tensor Programs V: Tuning Large Neural Networks via Zero-Shot Hyperparameter Transfer},
  author = {Yang, Greg and Hu, Edward J. and Babuschkin, Igor and Sidor, Szymon and Liu, Xiaodong and Farhi, David and Ryder, Nick and Pachocki, Jakub and Chen, Weizhu and Gao, Jianfeng},
  booktitle = {Advances in Neural Information Processing Systems},
  volume = {34},
  pages = {17084--17097},
  year = {2021},
  url = {https://proceedings.neurips.cc/paper/2021/hash/8df7c2e3c3c3be098ef7b382bd2c37ba-Abstract.html}
}

@article{parisi2019continual,
  title={Continual lifelong learning with neural networks: A review},
  author={Parisi, German I and Kemker, Ronald and Part, Jose L and Kanan, Christopher and Wermter, Stefan},
  journal={Neural networks},
  volume={113},
  pages={54--71},
  year={2019},
  publisher={Elsevier}
}

@article{hazan2016introduction,
  title={Introduction to online convex optimization},
  author={Hazan, Elad},
  journal={Foundations and Trends{\textregistered} in Optimization},
  volume={2},
  number={3-4},
  pages={157--325},
  year={2016},
  publisher={Now Publishers, Inc.}
}

@inproceedings{standley2020tasks,
  title={Which tasks should be learned together in multi-task learning?},
  author={Standley, Trevor and Zamir, Amir and Chen, Dawn and Guibas, Leonidas and Malik, Jitendra and Savarese, Silvio},
  booktitle={International Conference on Machine Learning},
  pages={9120--9132},
  year={2020},
  organization={PMLR}
}

@article{sener2018multi,
  title={Multi-task learning as multi-objective optimization},
  author={Sener, Ozan and Koltun, Vladlen},
  journal={Advances in neural information processing systems},
  volume={31},
  year={2018}
}

@article{kumar2023continual,
  title={Continual Learning as Computationally Constrained Reinforcement Learning},
  author={Kumar, Saurabh and Marklund, Henrik and Rao, Ashish and Zhu, Yifan and Jeon, Hong Jun and Liu, Yueyang and Van Roy, Benjamin},
  journal={Foundations and Trends in Machine Learning},
  volume={18},
  number={5},
  pages={913--1053},
  year={2025},
  publisher={Now Publishers},
  doi={10.1561/2200000116}
}

@inproceedings{zinkevich2003online,
  title={Online convex programming and generalized infinitesimal gradient ascent},
  author={Zinkevich, Martin},
  booktitle={Proceedings of the 20th international conference on machine learning (icml-03)},
  pages={928--936},
  year={2003}
}

@article{orabona2019modern,
  title={A modern introduction to online learning},
  author={Orabona, Francesco},
  journal={arXiv preprint arXiv:1912.13213},
  year={2019}
}

@article{mundt2023wholistic,
  title={A wholistic view of continual learning with deep neural networks: Forgotten lessons and the bridge to active and open world learning},
  author={Mundt, Martin and Hong, Yongwon and Pliushch, Iuliia and Ramesh, Visvanathan},
  journal={Neural Networks},
  volume={160},
  pages={306--336},
  year={2023},
  publisher={Elsevier}
}

@InProceedings{schwarz18a,
  title = 	 {Progress \& Compress: A scalable framework for continual learning},
  author =       {Schwarz, Jonathan and Czarnecki, Wojciech and Luketina, Jelena and Grabska-Barwi{\'n}ska, Agnieszka and Teh, Yee Whye and Pascanu, Razvan and Hadsell, Raia},
  booktitle = 	 {Proceedings of the 35th International Conference on Machine Learning},
  pages = 	 {4528--4537},
  year = 	 {2018},
  editor = 	 {Dy, Jennifer and Krause, Andreas},
  volume = 	 {80},
  series = 	 {Proceedings of Machine Learning Research},
  month = 	 {10--15 Jul},
  publisher =    {PMLR},
}

@inproceedings{liu2023nonstationary,
  title={Nonstationary bandit learning via predictive sampling},
  author={Liu, Yueyang and Van Roy, Benjamin and Xu, Kuang},
  booktitle={International Conference on Artificial Intelligence and Statistics},
  pages={6215--6244},
  year={2023},
  organization={PMLR}
}

@article{besbes2015non,
  title={Non-stationary stochastic optimization},
  author={Besbes, Omar and Gur, Yonatan and Zeevi, Assaf},
  journal={Operations research},
  volume={63},
  number={5},
  pages={1227--1244},
  year={2015},
  publisher={INFORMS}
}

@article{dohare2021continual,
  title={Continual backprop: Stochastic gradient descent with persistent randomness},
  author={Dohare, Shibhansh and Sutton, Richard S and Mahmood, A Rupam},
  journal={arXiv preprint arXiv:2108.06325},
  year={2021}
}

@inproceedings{lecarpentier2021lipschitz,
  title={Lipschitz lifelong reinforcement learning},
  author={Lecarpentier, Erwan and Abel, David and Asadi, Kavosh and Jinnai, Yuu and Rachelson, Emmanuel and Littman, Michael L},
  booktitle={Proceedings of the AAAI Conference on Artificial Intelligence},
  volume={35},
  pages={8270--8278},
  year={2021}
}

@inproceedings{evron2022catastrophic,
  title={How catastrophic can catastrophic forgetting be in linear regression?},
  author={Evron, Itay and Moroshko, Edward and Ward, Rachel and Srebro, Nathan and Soudry, Daniel},
  booktitle={Conference on Learning Theory},
  pages={4028--4079},
  year={2022},
  organization={PMLR}
}

@article{buzzega2020dark,
  title={Dark experience for general continual learning: a strong, simple baseline},
  author={Buzzega, Pietro and Boschini, Matteo and Porrello, Angelo and Abati, Davide and Calderara, Simone},
  journal={Advances in neural information processing systems},
  volume={33},
  pages={15920--15930},
  year={2020}
}

@inproceedings{cai2021online,
  title={Online continual learning with natural distribution shifts: An empirical study with visual data},
  author={Cai, Zhipeng and Sener, Ozan and Koltun, Vladlen},
  booktitle={Proceedings of the IEEE/CVF international conference on computer vision},
  pages={8281--8290},
  year={2021}
}

@article{lopez2017gradient,
  title={Gradient episodic memory for continual learning},
  author={Lopez-Paz, David and Ranzato, Marc'Aurelio},
  journal={Advances in neural information processing systems},
  volume={30},
  year={2017}
}

@article{delange2022survey,
  title={A continual learning survey: Defying forgetting in classification tasks},
  author={De Lange, Matthias and Aljundi, Rahaf and Masana, Marc and Parisot, Sarah and Jia, Xu and Leonardis, Ale{\v{s}} and Slabaugh, Gregory and Tuytelaars, Tinne},
  journal={IEEE Transactions on Pattern Analysis and Machine Intelligence},
  volume={44},
  number={7},
  pages={3366--3385},
  year={2022},
  publisher={IEEE},
  doi={10.1109/TPAMI.2021.3057446}
}

@book{cesa2006prediction,
  title={Prediction, learning, and games},
  author={Cesa-Bianchi, Nicol{\`o} and Lugosi, G{\'a}bor},
  year={2006},
  publisher={Cambridge university press}
}

@inproceedings{herbster1998tracking,
  title={Tracking the best regressor},
  author={Herbster, Mark and Warmuth, Manfred K},
  booktitle={Proceedings of the eleventh annual conference on Computational learning theory},
  pages={24--31},
  year={1998}
}

@article{hadsell2020embracing,
  title={Embracing Change: Continual Learning in Deep Neural Networks},
  author={Hadsell, Raia and Rao, Dushyant and Rusu, Andrei A and Pascanu, Razvan},
  journal={Trends in Cognitive Sciences},
  volume={24},
  number={12},
  pages={1028--1040},
  year={2020},
  publisher={Elsevier},
  doi={10.1016/j.tics.2020.09.004},
  url={https://doi.org/10.1016/j.tics.2020.09.004}
}

@inproceedings{yu2020meta,
  title={Meta-World: A Benchmark and Evaluation for Multi-Task and Meta Reinforcement Learning},
  author={Yu, Tianhe and Quillen, Deirdre and He, Zhanpeng and Julian, Ryan and Hausman, Karol and Finn, Chelsea and Levine, Sergey},
  booktitle={Proceedings of the Conference on Robot Learning (CoRL)},
  pages={1094--1100},
  year={2020},
  volume={100},
  series={Proceedings of Machine Learning Research},
  organization={PMLR},
  url={https://proceedings.mlr.press/v100/yu20a.html}
}

@inproceedings{lin2021clear,
  title={The CLEAR Benchmark: Continual LEArning on Real-World Imagery},
  author={Lin, Zhiqiu and Shi, Jia and Pathak, Deepak and Ramanan, Deva},
  booktitle={Proceedings of the Neural Information Processing Systems Track on Datasets and Benchmarks 1 (NeurIPS Datasets and Benchmarks 2021)},
  year={2021},
}

@inproceedings{krause2013,
  title     = {3D Object Representations for Fine-Grained Categorization},
  author    = {Jonathan Krause and Michael Stark and Jia Deng and Li Fei-Fei},
  booktitle = {Proceedings of the IEEE International Conference on Computer Vision Workshops (ICCVW)},
  year      = {2013},
  address   = {Sydney, Australia},
  month     = {Dec. 8}
}

@article{MajiRKBV13,
  author       = {Subhransu Maji and
                  Esa Rahtu and
                  Juho Kannala and
                  Matthew B. Blaschko and
                  Andrea Vedaldi},
  title        = {Fine-Grained Visual Classification of Aircraft},
  journal      = {CoRR},
  volume       = {abs/1306.5151},
  year         = {2013},
  url          = {http://arxiv.org/abs/1306.5151},
  eprinttype   = {arXiv},
  eprint       = {1306.5151},
  bibsource    = {dblp computer science bibliography, https://dblp.org}
}

@InProceedings{cimpoi14describing,
Author    = {M. Cimpoi and S. Maji and I. Kokkinos and S. Mohamed and A. Vedaldi},
Title     = {Describing Textures in the Wild},
Booktitle = {Proceedings of the {IEEE} Conf. on Computer Vision and Pattern Recognition ({CVPR})},
Year      = {2014}}

@inproceedings{bossard14,
  title = {Food-101 -- Mining Discriminative Components with Random Forests},
  author = {Bossard, Lukas and Guillaumin, Matthieu and Van Gool, Luc},
  booktitle = {European Conference on Computer Vision},
  year = {2014}
}

@inproceedings{parkhi2012cats,
  title={Cats and Dogs},
  author={Parkhi, Omkar M. and Vedaldi, Andrea and Zisserman, Andrew and Jawahar, C. V.},
  booktitle={IEEE Conference on Computer Vision and Pattern Recognition},
  pages={3498--3505},
  year={2012},
  organization={IEEE}
}

@misc{krizhevsky2009learning,
  title={Learning Multiple Layers of Features from Tiny Images},
  author={Krizhevsky, Alex and Hinton, Geoffrey},
  year={2009},
  howpublished={\url{https://www.cs.toronto.edu/~kriz/learning-features-2009-TR.pdf}},
  institution={University of Toronto}
}

@article{baxterModelInductiveBias2000,
  title = {A {{Model}} of {{Inductive Bias Learning}}},
  author = {Baxter, Jonathan},
  date = {2000-03-01},
  journal = {Journal of Artificial Intelligence Research},
  shortjournal = {jair},
  volume = {12},
  pages = {149--198},
  issn = {1076-9757},
  doi = {10.1613/jair.731},
  year = 2000,
  url = {https://jair.org/index.php/jair/article/view/10253},
  urldate = {2026-01-20}
}

@incollection{murataStatisticalStudyOnline1999,
  title = {A {{Statistical Study}} of {{On-line Learning}}},
  booktitle = {On-{{Line Learning}} in {{Neural Networks}}},
  author = {Murata, Noboru},
  editor = {Saad, David},
  date = {1999-01-28},
  edition = {1},
  pages = {63--92},
  publisher = {Cambridge University Press},
  year = 1999,
  doi = {10.1017/CBO9780511569920.005},
  url = {https://www.cambridge.org/core/product/identifier/CBO9780511569920A011/type/book_part},
  urldate = {2026-01-23},
  isbn = {978-0-521-65263-6 978-0-521-11791-3 978-0-511-56992-0},
  langid = {english}
}

@inproceedings{deneviLearningLearnCommon2018,
  title = {Learning {{To Learn Around A Common Mean}}},
  booktitle = {Advances in {{Neural Information Processing Systems}}},
  author = {Denevi, Giulia and Ciliberto, Carlo and Stamos, Dimitris and Pontil, Massimiliano},
  year = 2018,
  volume = {31},
  publisher = {Curran Associates, Inc.},
  urldate = {2026-02-17}
}

@misc{nazariDynamicRegretAnalysis2021,
  title = {Dynamic {{Regret Analysis}} for {{Online Meta-Learning}}},
  author = {Nazari, Parvin and Khorram, Esmaile},
  year = 2021,
  month = sep,
  number = {arXiv:2109.14375},
  eprint = {2109.14375},
  primaryclass = {cs},
  publisher = {arXiv},
  doi = {10.48550/arXiv.2109.14375},
  urldate = {2026-02-17},
  archiveprefix = {arXiv}
}

@inproceedings{rakhlinOnlineLearningPredictable2013,
  title = {Online {{Learning}} with {{Predictable Sequences}}},
  booktitle = {Proceedings of the 26th {{Annual Conference}} on {{Learning Theory}}},
  author = {Rakhlin, Alexander and Sridharan, Karthik},
  year = 2013,
  date = {2013-06-13},
  pages = {993--1019},
  publisher = {PMLR},
  issn = {1938-7228},
  url = {https://proceedings.mlr.press/v30/Rakhlin13.html},
  urldate = {2026-02-22},
  eventtitle = {Conference on {{Learning Theory}}},
  langid = {english}
}

@article{dohareLossPlasticityDeep2024a,
  title = {Loss of Plasticity in Deep Continual Learning},
  author = {Dohare, Shibhansh and {Hernandez-Garcia}, J. Fernando and Lan, Qingfeng and Rahman, Parash and Mahmood, A. Rupam and Sutton, Richard S.},
  year = 2024,
  month = aug,
  journal = {Nature},
  volume = {632},
  number = {8026},
  pages = {768--774},
  publisher = {Nature Publishing Group},
  issn = {1476-4687},
  doi = {10.1038/s41586-024-07711-7},
  urldate = {2025-11-28},
  copyright = {2024 The Author(s)},
  langid = {english}
}

@article{shalev2012online,
  title={Online learning and online convex optimization},
  author={Shalev-Shwartz, Shai},
  journal={Foundations and Trends{\textregistered} in Machine Learning},
  volume={4},
  number={2},
  pages={107--194},
  year={2012},
  publisher={Now Publishers, Inc.}
}

@inproceedings{alquier2017regret,
  title={Regret bounds for lifelong learning},
  author={Alquier, Pierre and Mai, The Tien and Pontil, Massimiliano},
  booktitle={International Conference on Artificial Intelligence and Statistics},
  pages={261--269},
  year={2017},
  organization={PMLR}
}

@article{mandt2017stochastic,
  title   = {Stochastic Gradient Descent as Approximate {B}ayesian Inference},
  author  = {Mandt, Stephan and Hoffman, Matthew D. and Blei, David M.},
  journal = {Journal of Machine Learning Research},
  volume  = {18},
  number  = {134},
  pages   = {1--35},
  year    = {2017}
}

@inproceedings{ash2020warmstart,
  title     = {On Warm-Starting Neural Network Training},
  author    = {Ash, Jordan T. and Adams, Ryan P.},
  booktitle = {Advances in Neural Information Processing Systems (NeurIPS)},
  volume    = {33},
  pages     = {3884--3894},
  year      = {2020}
}

@inproceedings{bach,
  title     = {Non-strongly-convex smooth stochastic approximation with convergence rate {$O(1/n)$}},
  author    = {Bach, Francis and Moulines, Eric},
  booktitle = {Advances in Neural Information Processing Systems (NeurIPS)},
  volume    = {26},
  year      = {2013}
}

@article{maurerbenefit,
  title   = {The Benefit of Multitask Representation Learning},
  author  = {Maurer, Andreas and Pontil, Massimiliano and Romera-Paredes, Bernardino},
  journal = {Journal of Machine Learning Research},
  volume  = {17},
  number  = {81},
  pages   = {1--32},
  year    = {2016}
}

@inproceedings{tripuraneni2020theory,
  title     = {On the Theory of Transfer Learning: The Importance of Task Diversity},
  author    = {Tripuraneni, Nilesh and Jordan, Michael I. and Jin, Chi},
  booktitle = {Advances in Neural Information Processing Systems (NeurIPS)},
  volume    = {33},
  pages     = {7852--7862},
  year      = {2020}
}

@inproceedings{du2020few,
  title     = {Few-Shot Learning via Learning the Representation, Provably},
  author    = {Du, Simon S. and Hu, Wei and Kakade, Sham M. and Lee, Jason D. and Lei, Qi},
  booktitle = {International Conference on Learning Representations (ICLR)},
  year      = {2021}
}

@article{Kuzborskij,
  title   = {Fast rates by transferring from auxiliary hypotheses},
  author  = {Kuzborskij, Ilja and Orabona, Francesco},
  journal = {Machine Learning},
  volume  = {106},
  number  = {2},
  pages   = {171--195},
  year    = {2017}
}

@inproceedings{denevi2019learning,
  title     = {Online-Within-Online Meta-Learning},
  author    = {Denevi, Giulia and Stamos, Dimitris and Ciliberto, Carlo and Pontil, Massimiliano},
  booktitle = {Advances in Neural Information Processing Systems (NeurIPS)},
  volume    = {32},
  year      = {2019}
}

@inproceedings{khodak2019adaptive,
  title     = {Adaptive Gradient-Based Meta-Learning Methods},
  author    = {Khodak, Mikhail and Balcan, Maria-Florina F. and Talwalkar, Ameet S.},
  booktitle = {Advances in Neural Information Processing Systems (NeurIPS)},
  volume    = {32},
  year      = {2019}
}

@inproceedings{wang2021bridging,
  title     = {Bridging Multi-Task Learning and Meta-Learning: Towards Efficient Training and Effective Adaptation},
  author    = {Wang, Haoxiang and Zhao, Han and Li, Bo},
  booktitle = {Proceedings of the 38th International Conference on Machine Learning (ICML)},
  volume    = {139},
  pages     = {10991--11002},
  year      = {2021},
  organization = {PMLR}
}

@inproceedings{saxe,
  title     = {Exact solutions to the nonlinear dynamics of learning in deep linear neural networks},
  author    = {Saxe, Andrew M. and McClelland, James L. and Ganguli, Surya},
  booktitle = {International Conference on Learning Representations (ICLR)},
  year      = {2014}
}

@inproceedings{javed2024bigworld,
  title     = {The Big World Hypothesis and its Ramifications for Artificial Intelligence},
  author    = {Javed, Khurram and Sutton, Richard S.},
  booktitle = {Finding the Frame Workshop, Reinforcement Learning Conference (RLC)},
  year      = {2024},
  url       = {https://openreview.net/forum?id=Sv7DazuCn8}
}

@inproceedings{sutton2007tracking,
  title     = {On the role of tracking in stationary environments},
  author    = {Sutton, Richard S. and Koop, Anna and Silver, David},
  booktitle = {Proceedings of the 24th International Conference on Machine Learning (ICML)},
  pages     = {871--878},
  year      = {2007},
  publisher = {ACM},
  doi       = {10.1145/1273496.1273606}
}

@inproceedings{haarnoja2018soft,
  title     = {Soft Actor-Critic: Off-Policy Maximum Entropy Deep Reinforcement Learning with a Stochastic Actor},
  author    = {Haarnoja, Tuomas and Zhou, Aurick and Abbeel, Pieter and Levine, Sergey},
  booktitle = {Proceedings of the 35th International Conference on Machine Learning (ICML)},
  pages     = {1861--1870},
  year      = {2018},
  publisher = {PMLR},
  volume    = {80},
}

@article{haarnoja2018softapps,
  title   = {Soft Actor-Critic Algorithms and Applications},
  author  = {Haarnoja, Tuomas and Zhou, Aurick and Hartikainen, Kristian and Tucker, George and Ha, Sehoon and Tan, Jie and Kumar, Vikash and Zhu, Henry and Gupta, Abhishek and Abbeel, Pieter and Levine, Sergey},
  journal = {arXiv preprint arXiv:1812.05905},
  year    = {2018},
}

@book{villani2009optimal,
  title={Optimal Transport: Old and New},
  author={Villani, C{\'e}dric},
  volume={338},
  series={Grundlehren der mathematischen Wissenschaften},
  year={2009},
  publisher={Springer},
  address={Berlin},
  doi={10.1007/978-3-540-71050-9}
}

@article{bubeck2015convex,
  title={Convex Optimization: Algorithms and Complexity},
  author={Bubeck, S{\'e}bastien},
  journal={Foundations and Trends in Machine Learning},
  volume={8},
  number={3--4},
  pages={231--357},
  year={2015},
  doi={10.1561/2200000050}
}

@inproceedings{karimi2016linear,
  title={Linear Convergence of Gradient and Proximal-Gradient Methods Under the {Polyak}--{{\L}ojasiewicz} Condition},
  author={Karimi, Hamed and Nutini, Julie and Schmidt, Mark},
  booktitle={Joint European Conference on Machine Learning and Knowledge Discovery in Databases (ECML-PKDD)},
  pages={795--811},
  year={2016},
  organization={Springer}
}
\bibliographystyle{tmlr}

\newpage
\appendix
\clearpage
\addcontentsline{toc}{section}{Appendix} 
\part{Appendix} 
\parttoc 

\clearpage

\section{Theoretical proofs}
\label{Apdx:Proofs}

\centerline{\bf Notation}
\begin{table}[htbp]
    \centering
    \renewcommand{\arraystretch}{1.2}
    \caption{Summary of Notation Used in the Theoretical Analysis}
    \label{tab:notation_summary}
    \begin{tabular}{@{}lp{11cm}@{}}
        \toprule
        \textbf{Notation} & \textbf{Description} \\
        \midrule
        \multicolumn{2}{@{}l}{\textbf{Indices \& Setup}} \\
        \addlinespace
        $k$, $j$ & Task indices, where $k$ typically denotes the current active task. \\
        $W$ & Window size (number of past tasks considered by the Window agent). \\
        $N_k$ & Number of optimization steps (duration) of task $k$. \\
        $\mathcal{D}^k_W$ & Sampling distribution for the Window agent at task $k$. \\
        $\rho_j^k$ & Sampling weight of task $j$ at time $k$. \\
        $B$ & Mini-batch size. \\
        $\eta$ & Learning rate. \\
        \midrule
        \multicolumn{2}{@{}l}{\textbf{Parameters \& Objectives}} \\
        \addlinespace
        $\theta$ & Model parameters. \\
        $\theta_\star^k$ & True optimal parameters for task $k$. \\
        $\theta_\star^{W,k}$ & Optimal parameters (centroid) for the Window agent at task $k$. \\
        $\mathcal{L}^k(\theta)$ & Expected loss function for task $k$. \\
        $H_k$ & Hessian matrix of the loss for task $k$. \\
        $\bar{H}_k$ & Average Hessian over the active window $\mathcal{W}_k$. \\
        $\Sigma_x$ & Input feature covariance matrix. \\
        \midrule
        \multicolumn{2}{@{}l}{\textbf{Non-Stationarity \& Drift}} \\
        \addlinespace
        $m(k, W)$ & Mean Squared Displacement (MSD) of task optima separated by lag $W$. \\
        $\Delta_k$ & Random walk increment for task optima ($\theta_\star^k - \theta_\star^{k-1}$). \\
        $\Sigma_\Delta$ & Covariance matrix of the task drift/increments. \\
        $\Delta \mu^{W,k}$ & Geometric mean drift (shift in the JTL agent's centroid from step $k-1$ to $k$). \\
        $\Delta \Sigma^{W,k}$ & Geometric covariance shift (change in stationary covariance). \\
        $\psi(k)$ & Global Growth Rate of the task optima relative to the origin. \\
        \midrule
        \multicolumn{2}{@{}l}{\textbf{Error \& Risk Components}} \\
        \addlinespace
        $\mathcal{I}^W$ & Expected Instability cost (approximation bias + stationary variance). \\
        $\delta^W$ & Transient Error cost (due to initialization error at the start of a task). \\
        $\tilde{\mu}_0$ & Initialization error vector (centroid tracking lag) at the start of a task. \\
        $\Sigma_\infty$ & Stationary covariance matrix of the agent. \\
        $v_W$ & Irreducible centroid lag vector ($\theta_\star^{W, k} - \theta_\star^k$). \\
        $\kappa$ & Condition number bound for the loss landscape. \\
        $\mathcal{R_B}$ & Asymptotic bias cost coefficient in the optimal window derivation. \\
        $\mathcal{R_T}$ & Asymptotic transient cost coefficient in the optimal window derivation. \\
        \bottomrule
    \end{tabular}
\end{table}
\newpage

\subsection{Introduction to the Framework of \texorpdfstring{\cite{kumar2023continual}}{Kumar et al., 2025}}
\label{app:intro}

\paragraph{The Environment.}
We define an \emph{environment} as $\mathcal{E} = (A, O, \mathcal{P})$, where $A$ and $O$ denote the action and observation spaces, and $\mathcal{P}(\cdot \mid H_t, a_t)$ is the distribution of the next observation $o_{t+1}$ given the latest action $a_t$ and the history $H_t = \{(a_0,o_1),\dots,(a_{t-1},o_t)\}$. 
While $\mathcal{E}$ itself is stationary, the effective observation distribution $\mathcal{P}_t(o) = \mathcal{P}(o \mid H_t, a_t)$ is non-stationary due to its dependence on $H_t$. Given a reward function $r$, the \emph{response} of the environment to an action-observation pair is the reward $r(a_t,o_{t+1})$. 

\paragraph{Agent and Learning Algorithm.}
The agent acts according to a stochastic policy $\pi_t$ sampled from a stochastic process, which we denote as the \emph{learning algorithm} $\mathcal{A}(\cdot \mid H_t)$\footnote{We assume that $\mathcal{A}$ has access to the reward function or the history of collected rewards. In supervised learning, this is equivalent to assuming access to the loss function.}. Each history $H_t$ thus arises sequentially by sampling $\pi_t \sim \mathcal{A}(\cdot \mid H_t)$, $a_t \sim \pi_t(\cdot \mid H_t)$, and finally $o_{t+1} \sim \mathcal{P}(\cdot \mid H_t, a_t)$.

\paragraph{Continual Learning Objective.}
Let $\ell(\pi, o)$ be a non-negative measure of error for a learner policy $\pi$ evaluated on an observation $o$\footnote{Note that we bypass explicit action sampling here, as it is incorporated into the definition of $\ell$.}. We denote by $\mathcal{P}_n$ the conditional distribution $\mathcal{P}(\cdot \mid H_n, a_n)$. \citet{kumar2023continual} define the continual learning objective as the minimization of the real-time error, or \emph{Average Lifelong Error} (ALE): 
\begin{equation}
    \label{eq:SLO-app}
    \inf_\mathcal{A} \,\, \mathrm{ALE}(\mathcal{A}) \qquad \text{s.t. computational constraints},
\end{equation}
where the ALE is defined as:
\begin{equation}
    \mathrm{ALE}(\mathcal{A}) := \frac{1}{T} \sum_{n=1}^T \mathbb{E}_{\pi_n\sim\mathcal{A}(\cdot\mid H_{n}), \,o_{n+1}\sim\mathcal{P}_n} \left[\ell(\pi_n,o_{n+1})\right].
\end{equation}
\emph{Note:} \citet{kumar2023continual} consider an infinite horizon. While asymptotic averaging is common in reinforcement learning, it can obscure maladaptive behaviors that cause failures over a finite lifespan. In this work, we follow the online learning tradition and focus on the \textbf{finite-horizon} version of the original objective.  

\paragraph{Supervised Specialization.}
In supervised settings, observations are pairs $(x,y) \in X \times Y$, and actions correspond to predictions $\hat{y}=f(x)$. The learning algorithm is conditioned solely on the observation sequence $o_{1:t}$. For consistency with \citet{kumar2023continual} and to maintain general notation, we denote the conditioning on the entire history as $H_t$, despite its independence from the agent's actions in the supervised case. Because $\mathcal{P}_n$ does not depend on the prediction $\hat{y}$, the continual learning objective simplifies to:
\begin{equation}
    \mathrm{ALE}(\mathcal{A}) = \frac{1}{T}\sum_{n=1}^T \mathbb{E}_{f\sim\mathcal{A}(\cdot\mid H_n)} \, \mathbb{E}_{(x,y)\sim\mathcal{P}_n} \, [\ell(f,(x,y))].
\end{equation}

\subsection{Analysis Setup and Formalism}
\label{app:setup}

We consider a sequence of supervised learning tasks where an agent interacts with a piecewise stationary environment, assuming task boundaries are known. We specialize the general lifelong learning framework to the case of stochastic gradient-based optimization.

\paragraph{Task-Based Formulation.}
We assume the data-generating process is \emph{locally stationary}, defined formally as follows:
\begin{definition*}[Task-Based Data Generating Process]
\label{def:task-app}
A history $H_T$ is a \emph{task-based} sequence if the sampling distribution $\mathcal{P}_n$ is locally stationary. The timeline is partitioned into intervals called \emph{tasks} $S_k = [a_k, b_k]$. At each time step $n \in S_k$, an observation $o_n$ is sampled from a fixed, task-specific distribution $\mathcal{P}^k$. 
\end{definition*}

\paragraph{Batched Observations.}
For generality, we assume each observation $o_t$ is a \emph{batch} of $B \ge 1$ samples:
\[
    o_t = \{(x_b, y_b)\}_{b=1}^B.
\]
Accordingly, the full history is $H_t = \{o_1, \dots, o_t\}$. The model $f$ operates at the \emph{sample level}, producing predictions $f(x_b)$ for each input in the batch. The \emph{batch loss} is defined as the average of the per-sample losses.

\paragraph{The Data-Generating Process.}
Unless stated otherwise, we assume a task-based sequence with \emph{known task switch points}. We denote the task identity at time $n$ by $z(n)$ and the duration of task $k$ by $N_k = |S_k|$. At each step $n \in S_k$, a batch $o_n$ is sampled from $\mathcal{P}^{k}$. The expected loss for task $k$ over the model parameters $\theta$ is:
\begin{equation}
    \mathcal{L}^k(\theta) := \mathbb{E}_{(x,y) \sim \mathcal{P}^k}[\ell(\theta, (x,y))].
\end{equation}
The corresponding oracle minimizer is $\theta^{k}_\star = \arg\min_\theta \mathcal{L}^k(\theta)$.

\paragraph{Measures of Error and Performance.}
Within this formalism, we define the \textbf{Average Task Error} ($\mathrm{ALE}^k$) as the task-specific view of the lifelong ALE:
\begin{equation}
    \mathrm{ALE}^k(\mathcal{A}) := \frac{1}{N_k}\sum_{n=a_k}^{b_k} \mathbb{E}_{\theta \sim Q_n} [\mathcal{L}^{k}(\theta)].
\end{equation}
It follows that $\mathrm{ALE}(\mathcal{A}) = \sum_{k=1}^{K} \bar{w}_k \cdot \mathrm{ALE}^k(\mathcal{A})$ with task weights $\bar{w}_k = N_k/T$. 

The \textbf{Transfer Efficiency (TE)} for task $k$ quantifies the performance gain of a Joint-Task Learning (JTL) agent over an Independent-Task Learning (ITL) baseline:
\begin{equation}
    \mathrm{TE}^k := \mathrm{ALE}^k(\mathcal{A}_{\text{ITL}}) - \mathrm{ALE}^k(\mathcal{A}_{\text{JTL}}).
\end{equation}

\subsubsection{The Algorithm as a Markov Process}
We view the learning algorithm $\mathcal{A}$ as a discrete-time stochastic process $\{\theta_n\}_{n \ge 0}$ taking values in the parameter space $\Theta \subseteq \mathbb{R}^d$. Parameter evolution is governed by a stochastic update rule $U$. At step $n$, the update depends on the current parameters $\theta_n$ and a data batch $\xi_n$ drawn from an \emph{algorithm-specific} sampling distribution $\mathcal{D}_n$:
\begin{equation}
    \theta_{n+1} = U(\theta_n, \xi_n), \quad \text{where } \xi_n \sim \mathcal{D}_n.
\end{equation}
Crucially, the optimization distribution $\mathcal{D}_n$ is distinct from the environmental distribution $\mathcal{P}_n$. While $\mathcal{P}_n$ dictates the incoming data stream, $\mathcal{D}_n$ determines the data used for the update. For replay-based algorithms (such as our JTL agent), $\mathcal{D}_n$ may incorporate data from previous tasks.

Assuming the sampling strategy is consistent within task boundaries ($\mathcal{D}_n = \mathcal{D}^k$ for all $n \in S_k$), the process induces a time-homogeneous \textbf{Markov Transition Operator} $\mathcal{T}^k$. For any bounded measurable test function $\phi: \Theta \to \mathbb{R}$, the operator is defined as:
\begin{equation}
    (\mathcal{T}^k \phi)(\theta) := \mathbb{E}_{\xi \sim \mathcal{D}^k}[\phi(U(\theta, \xi))].
\end{equation}
The sequence of parameter distributions $\{Q_n\}_{n \in S_k}$ evolves according to this linear operator, $Q_{n+1} = Q_n \mathcal{T}^k$ (following the Markov chain convention where measures act as row vectors).

\paragraph{Convergence.}
A central aspect of our analysis is characterizing the long-term behavior of the learning algorithm. An algorithm is said to converge if its transition operator $\mathcal{T}^k$ admits a stationary distribution $Q_\infty$; if run indefinitely, the process will converge to this distribution. This property allows us to rigorously decouple the effect of starting conditions (the transient phase) from asymptotic performance (the stationary phase). 

\paragraph{Stochastic Gradient Descent Instantiations.}
We analyze two canonical instances of SGD with a constant step-size $\eta$. Both employ the standard gradient step $\theta_{n+1} = \theta_n - \eta \nabla \ell(\theta_n, \xi_n)$, differing only in their sampling distributions $\mathcal{D}^k$ and task-boundary initialization strategies.

\paragraph{ITL SGD ($\mathcal{A}_{\text{ITL}}$).}
The ITL agent optimizes solely on the current data stream. Its sampling distribution matches the current task distribution ($\mathcal{D}^k_{\text{ITL}} = \mathcal{P}^k$). At task boundaries, it resets its parameters to a fixed initialization distribution $Q_{\text{init}}$, discarding prior knowledge to prevent negative transfer.

\paragraph{JTL SGD ($\mathcal{A}_{\text{JTL}}$).}
The JTL agent minimizes the average loss across all seen tasks. To maintain time-homogeneous dynamics within a task, we model experience replay as a \textbf{fixed mixture approximation}. The sampling distribution $\mathcal{D}^k_{\text{JTL}}$ is a weighted mixture of the current and preceding task distributions:
\begin{equation}
    \mathcal{D}^k_{\text{JTL}} := \sum_{j=1}^k w^k_j\, \mathcal{P}^j, \quad \text{where } w^k_j = \frac{N_j}{\sum_{i=1}^k N_i}.
\end{equation}
Unlike the ITL agent, the JTL agent inherits parameters from the previous task ($\theta_{a_k} = \theta_{b_{k-1}}$), enabling forward transfer.

\subsubsection{Why use reset ITL as the reference?}
\label{app:reset_itl}

The ITL baseline used throughout the main comparison is intentionally reset at task boundaries. This choice isolates the value of the JTL objective as a \emph{prediction of the future}: JTL assumes that the accumulated past remains predictive, whereas reset ITL assumes no such structure and starts each task from the same prior. The comparison therefore measures the benefit and cost of injecting historical bias into the optimization process.

A warm-start current-task learner is also natural: it would set $\mathcal{D}^k=\mathcal{P}^k$ as ITL does, but initialize task $k$ from the previous endpoint rather than from $Q_{\mathrm{init}}$. In our decomposition, this variant has the same stationary distribution as ITL whenever the current-task optimizer is task-wise convergent, because both optimize only $\mathcal{P}^k$. Thus its stationary gap against JTL is still the same instability term $\mathcal{I}^k$. What changes is only the transient term:
\[
    \mathrm{TE}^k_{\mathrm{warm}} = -\mathcal{I}^k + \delta^k(\mathcal{A}_{\mathrm{warm}}) - \delta^k(\mathcal{A}_{\mathrm{JTL}}).
\]
Consequently, warm starts can shift the critical duration by reducing the current-task transient cost, but they do not remove the central trade-off: retaining past information helps only when the transient advantage it provides exceeds the stationary bias induced by optimizing against outdated tasks. The Window agent in \cref{sec:generalizing} includes this case as $W=1$, making it a useful bridge between reset ITL, warm-start adaptation, and full JTL.

\subsection{Decomposition: Instability and Transient Error}

To rigorously analyze the source of the performance gap between ITL and JTL agents, we introduce a theoretical decomposition based on the asymptotic behavior of the learning process. 

\begin{definition}[Task-wise Convergence]
\label{def:convergence}
An algorithm $\mathcal{A}$ is \textbf{Task-wise Convergent} if, for any task $k$, the associated Markov operator $\mathcal{T}^k$ admits a stationary probability measure $Q_\infty^k$. Formally, if the task duration were infinite, the distribution of iterates $Q^k_n$ would converge to $Q_\infty^k$ regardless of initialization:
\begin{equation}
    \lim_{n \to \infty} W_2(Q^k_{n}, Q_\infty^k) = 0,
\end{equation}
where $W_2$ is the Wasserstein-2 distance \citep{villani2009optimal}, standard in the convergence analysis of stochastic optimization algorithms. 
\end{definition}

If convergence holds, the Average Task Error ($\mathrm{ALE}^k$) can be split into two components: the \emph{stationary component} (error at equilibrium) and the \emph{transient component} (error accumulated while moving toward equilibrium).

\begin{definition}[Instability $\mathcal{I}^k$]
Let $\mathcal{A}_{\text{ITL}}$ and $\mathcal{A}_{\text{JTL}}$ be task-wise convergent algorithms with stationary distributions $Q_\infty^{\text{ITL}}$ and $Q_\infty^{\text{JTL}}$. The \textbf{Instability} measures the performance gap in the asymptotic limit:
\begin{equation}
    \mathcal{I}^k := \mathcal{L}^k(Q_\infty^{\text{JTL}}) - \mathcal{L}^k(Q_\infty^{\text{ITL}}).
\end{equation}
\end{definition}

\begin{definition}[Transient Error $\delta^k$]
\label{def:transient_error}
For a task-wise convergent algorithm $\mathcal{A}$, the \textbf{Transient Error} captures the excess loss accumulated during the mixing phase. It is defined as the average deviation from the stationary risk over the task duration $S_k$:
\begin{equation}
    \delta^k(\mathcal{A}) := \frac{1}{N_k} \sum_{n \in S_k} \left( \mathcal{L}^k(Q_n) - \mathcal{L}^k(Q_\infty^{\mathcal{A}}) \right).
\end{equation}
\end{definition}

This yields a rigorous decomposition of Transfer Efficiency, allowing us to treat the ``quality of the final solution'' and the ``speed of learning'' as distinct analytical objects.

\begin{lemma}[Exact TE Decomposition]
For any task $k$, assuming the algorithms are task-wise convergent, the Transfer Efficiency decomposes exactly as:
\begin{equation}
    \mathrm{TE}^k = \underbrace{-\mathcal{I}^k}_{\text{Stationary Gap}} + \underbrace{\left(\delta^k(\mathcal{A}_{\text{ITL}}) - \delta^k(\mathcal{A}_{\text{JTL}})\right)}_{\text{Learning Speed Gap}}.
\end{equation}
\end{lemma}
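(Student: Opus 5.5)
The decomposition is an add-and-subtract identity, so the plan is to expand both Average Task Errors around their respective stationary risks and regroup. Fix a task $k$. By the definition of $\mathrm{ALE}^k$ and the lifted loss $\mathcal{L}^k(Q) = \mathbb{E}_{\theta\sim Q}[\mathcal{L}^k(\theta)]$, write for either algorithm $\mathcal{A}\in\{\mathcal{A}_{\text{ITL}},\mathcal{A}_{\text{JTL}}\}$
\[
\mathrm{ALE}^k(\mathcal{A}) = \frac{1}{N_k}\sum_{n\in S_k}\mathcal{L}^k(Q_n^{\mathcal{A}}).
\]
Task-wise convergence (\cref{def:convergence}) guarantees that the stationary measure $Q_\infty^{\mathcal{A}}$ exists. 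Hence the scalar $\mathcal{L}^k(Q_\infty^{\mathcal{A}})$ is a well-defined constant that we may add and subtract inside each summand.

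Doing so gives
\[
\mathrm{ALE}^k(\mathcal{A}) = \mathcal{L}^k(Q_\infty^{\mathcal{A}}) + \frac{1}{N_k}\sum_{n\in S_k}\bigl(\mathcal{L}^k(Q_n^{\mathcal{A}})-\mathcal{L}^k(Q_\infty^{\mathcal{A}})\bigr) = \mathcal{L}^k(Q_\infty^{\mathcal{A}}) + \delta^k(\mathcal{A}),
\]
using that $|S_k| = N_k$ and \cref{def:transient_error}. This is the only substantive step: every Average Task Error splits into its stationary risk plus its transient error.

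We then take the difference for ITL minus JTL, as in the definition of $\mathrm{TE}^k$:
\[
\mathrm{TE}^k = \bigl(\mathcal{L}^k(Q_\infty^{\text{ITL}})-\mathcal{L}^k(Q_\infty^{\text{JTL}})\bigr) + \bigl(\delta^k(\mathcal{A}_{\text{ITL}})-\delta^k(\mathcal{A}_{\text{JTL}})\bigr).
\]
The first bracket is exactly $-\mathcal{I}^k$ by the definition of Instability, which yields \cref{eq:TE-decomp}.

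There is no real obstacle; the one point needing care is well-definedness. Each $\mathcal{L}^k(Q_\infty^{\mathcal{A}})$ must be finite so that the add-and-subtract is legitimate and not an $\infty-\infty$ manipulation. I would note that this follows if $\mathcal{L}^k$ is integrable under the stationary laws, for example with finite second moments of $Q_\infty$ in the quadratic setting. Alternatively, one can simply take finiteness of the stationary risks as part of the convergence assumption. The $W_2$-convergence itself is not needed for the identity, which holds for any finite $N_k$.
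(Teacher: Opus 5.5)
Your proposal is correct and matches the paper's proof: write $\mathrm{ALE}^k(\mathcal{A}) = \mathcal{L}^k(Q_\infty^{\mathcal{A}}) + \delta^k(\mathcal{A})$ for each agent, then take the ITL-minus-JTL difference to recover $-\mathcal{I}^k$ plus the transient gap. Your remarks that the stationary risks must be finite, and that the $W_2$-convergence itself is not needed for the finite-$N_k$ identity, are sensible points that the paper leaves implicit.
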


\begin{proof}
By definition, $\mathrm{ALE}^k(\mathcal{A}) = \mathcal{L}^k(Q_\infty^{\mathcal{A}}) + \delta^k(\mathcal{A})$. Substituting this into $\mathrm{TE}^k = \mathrm{ALE}^k(\mathcal{A}_{\text{ITL}}) - \mathrm{ALE}^k(\mathcal{A}_{\text{JTL}})$ and rearranging terms yields the result.
\end{proof}

\vspace{0.3cm}

When tasks have equal duration $N$, we can state a general condition for when transfer is beneficial. To do so, we distinguish the average (per-step) Transient Error $\delta^k$ from the \emph{cumulative} Transient Error, defined as $\Delta^k(\mathcal{A}) := N \cdot \delta^k(\mathcal{A}) = \sum_{n \in S_k} (\mathcal{L}^k(Q_n) - \mathcal{L}^k(Q_\infty^{\mathcal{A}}))$.

\begin{lemma}[The Critical Task Duration]
\label{lem:critical_TD_appd}
Assume a sequence of $K$ tasks of equal duration $N$. Let $\bar{\Delta}(\mathcal{A}) = \frac{1}{K}\sum_{k=1}^K \Delta^k(\mathcal{A})$ denote the expected cumulative Transient Error, and $\bar{\mathcal{I}} = \frac{1}{K}\sum_{k=1}^K \mathcal{I}^k$ the average instability. Assuming the learning process converges such that $\bar{\Delta}(\mathcal{A}) \in o(N)$, and assuming $\bar{\mathcal{I}}>0$, there exists a critical task horizon $N^\infty_{\max}$ beyond which transfer becomes detrimental ($\text{TE} < 0$). The breakeven horizon is given by:
\begin{equation}
    N^\infty_{\max} = \frac{\bar{\Delta}(\mathcal{A}_{\text{ITL}}) - \bar{\Delta}(\mathcal{A}_{\text{JTL}})}{\bar{\mathcal{I}}}.
\end{equation}
\end{lemma}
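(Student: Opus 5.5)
The plan is to average the exact decomposition of the TE Decomposition lemma over tasks and then read off the sign change as $N$ varies. Define the lifelong Transfer Efficiency as the task-average $\mathrm{TE} := \frac{1}{K}\sum_{k=1}^K \mathrm{TE}^k$. With equal durations this is exactly the ALE gap, because $\bar w_k = 1/K$ for every $k$. Applying the lemma to each task and using $\delta^k(\mathcal{A}) = \Delta^k(\mathcal{A})/N$ gives
\begin{equation*}
\mathrm{TE} \;=\; -\bar{\mathcal{I}} \;+\; \frac{\bar{\Delta}(\mathcal{A}_{\text{ITL}}) - \bar{\Delta}(\mathcal{A}_{\text{JTL}})}{N}.
\end{equation*}
This identity is exact and needs only task-wise convergence (\cref{def:convergence}), so that $Q^{\mathcal{A}}_\infty$ and hence $\mathcal{I}^k$ and $\delta^k$ are well defined.

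Next, set $\mathrm{TE}=0$ and solve for $N$. This gives $N = (\bar{\Delta}(\mathcal{A}_{\text{ITL}}) - \bar{\Delta}(\mathcal{A}_{\text{JTL}}))/\bar{\mathcal{I}}$, which is the stated $N^\infty_{\max}$. Since $\bar{\mathcal{I}}>0$, we have $\mathrm{TE}<0$ exactly when $N\,\bar{\mathcal{I}} > \bar{\Delta}(\mathcal{A}_{\text{ITL}}) - \bar{\Delta}(\mathcal{A}_{\text{JTL}})$.

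The substantive step is existence: the threshold should not recede as $N$ grows. The issue is that $\bar{\Delta}$, $\bar{\mathcal{I}}$ and $Q_\infty^{\text{JTL}}$ all depend on $N$ in general. $\bar\Delta$ depends on $N$ through the truncation of the sum over $S_k$. $\bar{\mathcal{I}}$ and $Q_\infty^{\text{JTL}}$ depend on $N$ through the JTL initialization inherited from the previous task. The mixture weights $w_j^k$, by contrast, are independent of $N$ when all durations are equal.

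To handle this, I use the sub-linearity hypothesis. Since $\bar{\Delta}(\mathcal{A})\in o(N)$ for both algorithms, the second term of the identity tends to $0$ as $N\to\infty$. I will additionally require $\bar{\mathcal{I}}$ to stay bounded away from zero, either uniformly in $N$ or via $\liminf_N \bar{\mathcal{I}} >0$. With that in hand, $\mathrm{TE}\to -\bar{\mathcal{I}}<0$, so there is a finite $N_0$ such that $\mathrm{TE}<0$ for all $N\ge N_0$. This gives existence of the critical horizon.

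To identify that horizon with the closed form, I interpret $\bar{\Delta}$ in the limit. When the cumulative transient converges to a finite value (for example, with geometric mixing, $\sum_n (\mathcal{L}^k(Q_n)-\mathcal{L}^k(Q_\infty))$ converges), $N^\infty_{\max}$ is a fixed number. In that case $\mathrm{TE}<0$ holds exactly for $N>N^\infty_{\max}$; this explains the superscript $\infty$. If $\bar\Delta$ is unbounded but $o(N)$, the formula is instead read as an implicit equation whose right-hand side grows strictly slower than $N$, and a crossing still exists.

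The main obstacle is this $N$-dependence. I will state explicitly that $\bar{\mathcal{I}}$ is taken at its stationary (large-$N$) value. The existence argument then rests only on $\Delta\in o(N)$ together with $\liminf \bar{\mathcal{I}}>0$.
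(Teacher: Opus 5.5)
Your proposal is correct and is essentially the paper's proof: average the decomposition of \cref{lemma:TE-decomposition} over equal-length tasks, substitute $\delta^k=\Delta^k/N$, let the $o(N)$ transient gap vanish against $\bar{\mathcal{I}}>0$, and solve the breakeven equality. Two small corrections: your extra hypothesis $\liminf_N\bar{\mathcal{I}}>0$ is redundant, because task-wise convergence (\cref{def:convergence}) makes $Q_\infty^{\text{JTL},k}$ independent of the initialization, so with the $N$-independent weights $w_j^k=1/k$ the Instability does not depend on $N$ (the inherited initialization affects only $\bar{\Delta}(\mathcal{A}_{\text{JTL}})$, which the $o(N)$ assumption already covers); and when the cumulative transients converge, the claim that $\mathrm{TE}<0$ for $N>N^\infty_{\max}$ holds only asymptotically, not exactly, because at finite $N$ the sign is set by the partial sums $\bar{\Delta}(N)$ rather than their limits.
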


\begin{proof}
    For equal-duration tasks, the average Transfer Efficiency is positive if:
    \[
        \frac{1}{K}\sum_{k=1}^{K} \left(\delta^k(\mathcal{A}_{\text{ITL}}) - \delta^k(\mathcal{A}_{\text{JTL}})\right) >  \frac{1}{K}\sum_{k=1}^{K} \mathcal{I}^k.
    \]
    By definition, the instability $\mathcal{I}^k$ relies on the stationary distribution and is independent of $N$. Substituting $\delta^k(\mathcal{A}) = \Delta^k(\mathcal{A})/N$, the condition for positive transfer becomes: 
    \[
        \frac{\bar{\Delta}(\mathcal{A}_{\text{ITL}}) - \bar{\Delta}(\mathcal{A}_{\text{JTL}})}{N} >  \bar{\mathcal{I}}.
    \]
    If $\bar{\mathcal{I}}>0$ and the cumulative Transient Error is sub-linear ($\bar{\Delta}(\mathcal{A}) \in o(N)$), the left-hand transient advantage vanishes with $N$, while the stationary gap remains. Solving the breakeven equality yields the critical threshold $N_{\max}^\infty$. If $\bar{\mathcal{I}}\le 0$, the stationary component does not make JTL worse on average, so this argument does not imply an eventual detrimental-transfer horizon.
\end{proof}

\paragraph{On the Sub-Linear Transient Error Assumption.} 
The condition that the cumulative Transient Error grows sub-linearly ($\Delta^k(\mathcal{A}) \in o(N)$) simply requires that the algorithm successfully converges toward its stationary risk, meaning the instantaneous error $\mathcal{L}^k(Q_n) - \mathcal{L}^k(Q_\infty)$ vanishes as $n \to \infty$. This is a mild assumption across standard optimization regimes:
\begin{itemize}
    \item \textbf{Convex Optimization (Linear Models):} In stochastic strongly convex optimization, sub-optimality decays at $\mathcal{O}(1/n)$. Integrating over $N$ steps yields a cumulative error bounded by $\mathcal{O}(\log N) \in o(N)$. Even in strictly convex or non-smooth cases with a rate of $\mathcal{O}(1/\sqrt{n})$, the cumulative error is $\mathcal{O}(\sqrt{N}) \in o(N)$ \citep{bubeck2015convex}.
    \item \textbf{Non-Convex Optimization (Neural Networks):} While deep networks optimize non-convex landscapes, they practically converge to local minima. Theoretically, under structural assumptions like the Polyak-{\L}ojasiewicz (PL) condition \citep{karimi2016linear} or the Neural Tangent Kernel regime \citep{Jacot18NTK}, SGD can achieve linear convergence rates, decaying exponentially as $\mathcal{O}(e^{-\lambda n})$. In such regimes, the cumulative Transient Error is bounded by a constant $\mathcal{O}(1)$, trivially satisfying the $o(N)$ condition.
\end{itemize}

\subsection{Exact Analysis in the Quadratic Regime}
\label{sec:quadratic_analysis}

We now specialize our analysis to the regime of quadratic loss functions. This setting allows for closed-form derivations of the learning dynamics, providing explicit expressions for both the asymptotic stability (Instability) and the convergence speed (Transient Error).

\begin{assumption}[Quadratic Tasks]
\label{ass:quadratic}
    For every task $j \in \{1, \dots, K\}$, the loss function is defined as:
    \begin{equation}
        \mathcal{L}^j(\theta) = \frac{1}{2}(\theta - \theta_\star^j)^\top H_j (\theta - \theta_\star^j) + c_j.
    \end{equation}
    We assume the Hessian $H_j \succ 0$ is symmetric positive definite. The scalar $c_j$ represents the irreducible loss (or Bayes error) for task $j$.
\end{assumption}

Under this assumption, the expected risk for task $k$, $\mathbb{E}_\theta[\mathcal{L}^k(\theta_n)]$, is completely determined by the first two moments of the parameter distribution: the mean $\mu_n = \mathbb{E}[\theta_n]$ and the covariance $\Sigma_n = \text{Cov}(\theta_n)$:
\begin{equation}
\label{eq:risk_decomp}
    \mathbb{E}[\mathcal{L}^k(\theta_n)] = \underbrace{\frac{1}{2} \| \mu_n - \theta_\star^k \|_{H_k}^2}_{\text{Bias Contribution}} + \underbrace{\frac{1}{2} \text{Tr}(H_k \Sigma_n)}_{\text{Variance Contribution}} + c_k.
\end{equation}

\subsubsection{SGD Dynamics on Quadratic Landscapes}

Before analyzing the JTL/ITL trade-offs, we recall two fundamental results regarding the behavior of constant step-size SGD on quadratic objectives.

\begin{lemma}[Convergence of the Mean]
\label{lemma:mean_convergence}
    Consider the update $\theta_{n+1} = \theta_n - \eta \hat{g}_n$ where $\mathbb{E}[\hat{g}_n \mid \theta_n] = H(\theta_n - \theta_\star)$. The expected parameter $\mu_n$ converges linearly to the optimum:
    \begin{equation}
        \mu_{n} - \theta_\star = (I - \eta H)^n (\mu_0 - \theta_\star).
    \end{equation}
    Convergence is guaranteed if the step-size satisfies $\eta < 2/\lambda_{\max}(H)$, where $H$ is the Hessian matrix of the objective being optimized. 
\end{lemma}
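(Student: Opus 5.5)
We take the expectation of the SGD update conditioned on $\theta_n$ and then use the tower property. From $\theta_{n+1} = \theta_n - \eta \hat g_n$ and the unbiasedness assumption $\mathbb{E}[\hat g_n \mid \theta_n] = H(\theta_n - \theta_\star)$, we obtain
\[
\mathbb{E}[\theta_{n+1} - \theta_\star \mid \theta_n] = (I - \eta H)(\theta_n - \theta_\star).
\]
The map $\theta \mapsto (I-\eta H)(\theta-\theta_\star)$ is affine and $H$ is deterministic. Taking total expectations therefore gives the deterministic linear recursion $\mu_{n+1} - \theta_\star = (I-\eta H)(\mu_n - \theta_\star)$. Unrolling it by induction on $n$ yields $\mu_n - \theta_\star = (I-\eta H)^n(\mu_0 - \theta_\star)$.

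Linearity is essential here. Because the conditional mean of the gradient is affine in $\theta_n$, the gradient noise and the covariance $\Sigma_n$ do not enter the mean recursion at all. This is exactly what the quadratic assumption (\cref{ass:quadratic}) buys us.

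For convergence, we use that $H$ is symmetric positive definite, so it diagonalizes as $H = U\Lambda U^\top$ with eigenvalues $0<\lambda_i\le\lambda_{\max}(H)$. Then $(I-\eta H)^n = U\,\mathrm{diag}((1-\eta\lambda_i)^n)\,U^\top$. In the eigenbasis, each coordinate of $\mu_n-\theta_\star$ contracts by the factor $|1-\eta\lambda_i|$ per step. The condition $0<\eta<2/\lambda_{\max}(H)$ gives $-1<1-\eta\lambda_i<1$ for every $i$, since $\lambda_i>0$ gives the upper bound and $\eta\lambda_i<2$ gives the lower bound. Hence the spectral radius satisfies $\rho = \max_i|1-\eta\lambda_i| < 1$.

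This yields the linear rate $\|\mu_n-\theta_\star\|_2 \le \rho^n\|\mu_0-\theta_\star\|_2$. Since $H$ is symmetric, the operator norm of $(I-\eta H)^n$ equals $\rho^n$, so no condition-number factor appears.

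The only step requiring care is justifying the interchange of expectation, that is, that $\mu_n$ is finite. I would assume $\mathbb{E}\|\theta_0\|<\infty$ and that $\hat g_n$ is integrable. Integrability of $\theta_{n+1}$ then follows inductively from the update rule, and everything else is routine linear algebra.
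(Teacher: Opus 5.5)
Your proof is correct and follows the paper's argument: take expectations of the update (you justify this more carefully via the tower property and an integrability assumption) to obtain $\mu_{n+1}-\theta_\star=(I-\eta H)(\mu_n-\theta_\star)$, then unroll. Your eigendecomposition argument, showing that $0<\eta<2/\lambda_{\max}(H)$ with $H\succ 0$ gives $\max_i|1-\eta\lambda_i|<1$, supplies the step-size convergence claim that the paper states without proof.
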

\begin{proof}
    Taking expectations of the update rule yields $\mu_{n+1} = \mu_n - \eta H(\mu_n - \theta_\star)$. Subtracting $\theta_\star$ from both sides gives the recurrence $\mu_{n+1} - \theta_\star = (I - \eta H)(\mu_n - \theta_\star)$, which unrolls to the stated result.
\end{proof}

\begin{lemma}[Parameter Covariance]
\label{lemma:PCV}
    Consider a learning process with constant step-size $\eta$. The parameter covariance $\Sigma_n$ evolves according to the linear recursion:
    \[
        \Sigma_{n+1} = (I - \eta H)\Sigma_n(I - \eta H)^\top + \eta^2 C_n,
    \]
    where $C_n = \text{Cov}(\hat{g}_n)$ is the covariance of the batch gradient. For small step-sizes, the stationary covariance matrix $\Sigma_\infty$ satisfies the continuous-time Lyapunov equation: 
    \[
        H \Sigma_\infty + \Sigma_\infty H = \eta C_\infty.
    \]
\end{lemma}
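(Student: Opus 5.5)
Writing $A := I - \eta H$, I will derive the second-moment recursion directly from the update rule, then pass to the stationary limit and linearize in $\eta$.

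\emph{Step 1: the recursion.} I decompose the stochastic gradient as $\hat g_n = H(\theta_n - \theta_\star) + \epsilon_n$, where $\mathbb{E}[\epsilon_n \mid \theta_n] = 0$. This decomposition is exactly the hypothesis used in \cref{lemma:mean_convergence}. Subtracting the mean recursion $\mu_{n+1} = \mu_n - \eta H(\mu_n - \theta_\star)$ from the update, the centred iterate $e_n := \theta_n - \mu_n$ satisfies
\[
e_{n+1} = A\, e_n - \eta\, \epsilon_n .
\]
I then take the outer product and expectations. The cross terms $\mathbb{E}[A e_n \epsilon_n^\top]$ vanish by the tower property, because $e_n$ is $\theta_n$-measurable and $\epsilon_n$ has zero conditional mean. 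This gives
\[
\Sigma_{n+1} = A \Sigma_n A^\top + \eta^2 C_n ,
\]
where $C_n := \mathbb{E}[\epsilon_n\epsilon_n^\top]$ is the (averaged) gradient-noise covariance. I would note that under the paper's locally stationary noise assumption, $C_n$ is taken as the covariance of the batch gradient, which scales as $1/B$. Since $H$ is symmetric, $A^\top = A$.

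\emph{Step 2: stationarity.} I assume $\eta < 2/\lambda_{\max}(H)$, so that the spectral radius of $A$ is below one. In that case the map $\Sigma \mapsto A\Sigma A$ is a contraction on symmetric matrices. If $C_n \to C_\infty$, the recursion then has a unique fixed point $\Sigma_\infty$ solving the discrete Lyapunov equation
\[
\Sigma_\infty = A\Sigma_\infty A + \eta^2 C_\infty ,
\]
and $\Sigma_n \to \Sigma_\infty$. I can write the fixed point explicitly as $\Sigma_\infty = \eta^2\sum_{t\ge 0} A^t C_\infty A^t$.

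\emph{Step 3: the small-step limit.} Expanding $A\Sigma A = \Sigma - \eta(H\Sigma + \Sigma H) + \eta^2 H\Sigma H$ and substituting into the discrete Lyapunov equation gives
\[
\eta (H\Sigma_\infty + \Sigma_\infty H) = \eta^2 C_\infty + \eta^2 H\Sigma_\infty H .
\]
Dividing by $\eta$ yields $H\Sigma_\infty + \Sigma_\infty H = \eta C_\infty + \eta H\Sigma_\infty H$.

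The main obstacle is justifying that the last term is negligible. The argument is as follows:
\begin{itemize}
\item From the series representation, $\|\Sigma_\infty\| \le \eta^2\|C_\infty\|/(1-\|A\|^2)$.
\item For small $\eta$, $1-\|A\|^2 \approx 2\eta\lambda_{\min}(H)$, so $\|\Sigma_\infty\| = O(\eta)$.
\item Hence $\eta H\Sigma_\infty H = O(\eta^2)$, which is lower order than $\eta C_\infty = O(\eta)$.
\end{itemize}
This yields the continuous-time Lyapunov equation $H\Sigma_\infty + \Sigma_\infty H = \eta C_\infty$ to leading order in $\eta$, which is the precise meaning of ``for small step sizes.'' Uniqueness of its solution follows from $H \succ 0$: the spectra of $H$ and $-H$ are disjoint, so the Sylvester operator $\Sigma \mapsto H\Sigma + \Sigma H$ is invertible.

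A subtlety to flag is that if $C_n$ depends on $\theta_n$ (e.g.\ multiplicative noise in regression), then $C_\infty$ must be read as the noise covariance evaluated under $Q_\infty$. This is consistent with the locally stationary gradient-noise assumption in \cref{app:setup}.
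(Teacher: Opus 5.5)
Your proof is correct and follows the same route as the paper's: center the iterate, split the gradient into its mean $H(\theta_n-\theta_\star)$ plus zero-mean noise, drop the cross terms, impose stationarity, and neglect the $\eta^2 H\Sigma_\infty H$ term. Your additions tighten steps that the paper only asserts: you use conditional-mean-zero noise rather than independence, you give a contraction argument for the existence of and convergence to $\Sigma_\infty$, and your bound $\|\Sigma_\infty\| = O(\eta)$ justifies discarding the higher-order term.
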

\begin{proof}
    The SGD update is $\theta_{n+1} = \theta_n - \eta \bar{g}_n$. Let $\mu_n = \mathbb{E}[\theta_n]$ be the mean parameter and $\tilde{\theta}_n = \theta_n - \mu_n$ be the centered iterate. The mean evolves as $\mu_{n+1} = \mu_n - \eta \mathbb{E}[\bar{g}_n]$. For quadratic losses, $\mathbb{E}[\bar{g}_n] = H(\mu_n - \theta_\star)$. 
    The evolution of the fluctuation is:
    \[
        \tilde{\theta}_{n+1} = \theta_{n+1} - \mu_{n+1} = \tilde{\theta}_n - \eta(\bar{g}_n - \mathbb{E}[\bar{g}_n]).
    \]
    Expanding $\bar{g}_n$ around the mean $\mu_n$:
    \[
        \bar{g}_n \approx H(\theta_n - \theta_\star) + \xi_n = H\tilde{\theta}_n + H(\mu_n - \theta_\star) + \xi_n,
    \]
    where $\xi_n$ is the zero-mean batch noise with covariance $C_n$. Substituting this into the fluctuation update:
    \[
        \tilde{\theta}_{n+1} = (I - \eta H)\tilde{\theta}_n - \eta \xi_n.
    \]
    The covariance $\Sigma_{n+1} = \mathbb{E}[\tilde{\theta}_{n+1}\tilde{\theta}_{n+1}^\top]$ is then:
    \[
        \Sigma_{n+1} = (I - \eta H)\mathbb{E}[\tilde{\theta}_n \tilde{\theta}_n^\top](I - \eta H)^\top + \eta^2 \mathbb{E}[\xi_n \xi_n^\top],
    \]
    where the cross-terms vanish because $\xi_n$ is independent of $\tilde{\theta}_n$. Substituting $C_n = \mathbb{E}[\xi_n \xi_n^\top]$ yields the first result. 

    Solving for stationarity ($\Sigma_{n+1} = \Sigma_n = \Sigma_\infty$) gives:
    \[
        \Sigma_\infty = (I - \eta H)\Sigma_\infty(I - \eta H)^\top + \eta^2 C_\infty.
    \]
    Expanding and ignoring the higher-order $\eta^2 H \Sigma_\infty H^\top$ term (first-order continuous-time approximation) yields the Lyapunov equation $\eta H\Sigma_\infty + \eta \Sigma_\infty H = \eta^2 C_\infty$.
\end{proof}

\begin{lemma}[ITL Gradient Noise and Parameter Covariance]
\label{lemma:GNC}
    Consider a learning process with batch size $B \ge 1$ and constant step-size $\eta$. If the target noise is i.i.d. sampled from a zero-mean Gaussian $\epsilon \sim \mathcal{N}(0, \sigma^2)$, the gradient noise covariance for a single task is locally constant along the optimization trajectory, taking the form: 
    \[
        C_n \approx C_\infty = \frac{\sigma^2}{B} H.
    \]
    Consequently, by \cref{lemma:PCV}, for small step-sizes, the stationary parameter covariance matrix $\Sigma_\infty$ is isotropic: 
    \[
        \Sigma_\infty = \frac{\eta\sigma^2}{2B} I.
    \]
\end{lemma}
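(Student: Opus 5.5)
The plan is to compute the gradient-noise covariance directly from the linear-regression model underlying the quadratic task, and then feed it into the Lyapunov equation of \cref{lemma:PCV}. Write each sample as $y = x^\top\theta_\star + \epsilon$ with $\epsilon\sim\mathcal{N}(0,\sigma^2)$ independent of $x$, so that the per-sample squared-loss gradient is $g(\theta;x,y) = x x^\top(\theta-\theta_\star) - x\epsilon$. Its mean is $H(\theta-\theta_\star)$ with $H=\mathbb{E}[xx^\top]$. The noise therefore splits into a \emph{target-noise} part $-x\epsilon$ and a \emph{multiplicative} part $(xx^\top - H)(\theta-\theta_\star)$.

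\textbf{Single-sample covariance.} Since $\epsilon$ is zero-mean and independent of $x$, the cross term between the two parts vanishes. The target-noise part contributes $\mathbb{E}[\epsilon^2 xx^\top]=\sigma^2 H$. The multiplicative part contributes a term quadratic in $\theta-\theta_\star$, namely $\mathbb{E}[(xx^\top-H)(\theta-\theta_\star)(\theta-\theta_\star)^\top(xx^\top-H)]$.

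\textbf{Batch averaging.} Averaging over $B$ i.i.d.\ samples divides the single-sample covariance by $B$. This gives
\[
C_n = \frac{\sigma^2}{B}H + \frac{1}{B}\,\mathbb{E}\big[(xx^\top-H)(\theta_n-\theta_\star)(\theta_n-\theta_\star)^\top(xx^\top-H)\big].
\]

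\textbf{Main obstacle: justifying $C_n\approx C_\infty$.} The second term depends on the iterate, so the claim ``locally constant along the trajectory'' is really an approximation. I would argue it in two steps. First, by \cref{lemma:mean_convergence} the mean error $\mu_n-\theta_\star$ decays geometrically, so the mean-displacement contribution to the second term dies out. Second, near stationarity the second moment of $\theta_n-\theta_\star$ is $\Sigma_\infty=O(\eta)$, so that term is $O(\eta/B)$. It is therefore dominated by $\sigma^2 H/B$ in the small-step-size regime already assumed for the Lyapunov approximation. The result is $C_n\approx C_\infty=\frac{\sigma^2}{B}H$. I would state this explicitly as a leading-order approximation in $\eta$, consistent with the first-order approximation used in \cref{lemma:PCV}.

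\textbf{Solving the Lyapunov equation.} Substituting $C_\infty$ into \cref{lemma:PCV} gives
\[
H\Sigma_\infty+\Sigma_\infty H=\frac{\eta\sigma^2}{B}H .
\]
The candidate $\Sigma_\infty=\frac{\eta\sigma^2}{2B}I$ satisfies this, since the left side becomes $2\cdot\frac{\eta\sigma^2}{2B}H$. Because $H\succ0$, the operator $X\mapsto HX+XH$ is invertible: its eigenvalues are $\lambda_i+\lambda_j>0$. So the solution is unique, which gives the isotropic stationary covariance.
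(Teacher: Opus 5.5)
Your proposal is correct and follows the paper's proof essentially step for step. Both split the per-sample gradient covariance into the target-noise term $\sigma^2 H$ plus a state-dependent term, divide by $B$, drop the state-dependent part, and solve $H\Sigma_\infty+\Sigma_\infty H=\frac{\eta\sigma^2}{B}H$. Your two additions tighten the paper's argument without changing it: you show the state-dependent term is higher order in $\eta$ at stationarity (since $\theta_n$ keeps fluctuating rather than converging to $\theta_\star$), and you get uniqueness from the invertibility of $X\mapsto HX+XH$, in place of the paper's bare assertion that $H$ and $\Sigma_\infty$ ``must commute''.
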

\begin{proof}
    Consider the quadratic loss $\ell(\theta, (x,y)) = \frac{1}{2}(y - \theta^\top x)^2$ where $y = \theta_\star^\top x + \epsilon$. The gradient for a single sample is $\hat{g} = -x(x^\top \Delta\theta + \epsilon)$, where $\Delta\theta = \theta_\star - \theta$. 
    The batch gradient is the average $\bar{g}_n = \frac{1}{B} \sum_{b=1}^B \hat{g}_{n,b}$. Since samples within a batch are i.i.d., the covariance of the batch gradient scales as:
    \[
        \mathrm{Cov}(\bar{g}_n) = \frac{1}{B} \mathrm{Cov}(\hat{g}_n) = \frac{1}{B} \left( \mathbb{E}[\hat{g}_n \hat{g}_n^\top \mid \theta_n] - \mathbb{E}[\hat{g}_n]\mathbb{E}[\hat{g}_n]^\top \right).
    \]
    Expanding the single-sample second moment as an outer product:
    \begin{align*}
        \mathbb{E}[\hat{g}_n \hat{g}_n^\top \mid \theta_n] &= \mathbb{E}\left[(xx^\top \Delta\theta + x\epsilon)(xx^\top \Delta\theta + x\epsilon)^\top\right] \\
        &= \mathbb{E}[xx^\top ( \Delta\theta \Delta\theta^\top) xx^\top ] + \mathbb{E}[x\epsilon^2 x^\top].
    \end{align*}
    Using $\mathbb{E}[\epsilon]=0$ and $\mathbb{E}[\epsilon^2]=\sigma^2$, the covariance becomes:
    \[
        \mathrm{Cov}(\hat{g}_n) = \sigma^2 H + \underbrace{\mathbb{E}[xx^\top ( \Delta\theta \Delta\theta^\top) xx^\top ] - H(\Delta\theta\Delta\theta^\top) H}_{\text{State-dependent noise}}.
    \]
    The state-dependent noise decays toward zero as the agent converges ($\Delta \theta \to 0$). Assuming this state-dependent noise is negligible compared to the irreducible noise floor at convergence, $\mathrm{Cov}(\hat{g}_n) \approx \sigma^2 H$, the batch covariance is $\frac{\sigma^2}{B}H$.

    Plugging this into the Lyapunov equation from \cref{lemma:PCV}:
    \[
        H\Sigma_\infty + \Sigma_\infty H = \frac{\eta\sigma^2}{B}H.
    \]
    Since $H$ and $\Sigma_\infty$ must commute, we obtain $\Sigma_\infty = \frac{\eta\sigma^2}{2B} I$.
\end{proof}

\begin{lemma}[JTL Gradient Noise Covariance]
\label{lemma:MT_GNC}
    Consider a Joint-Task Learning (JTL) agent optimizing a mixture of tasks with weights $\{w_j^k\}_{j=1}^k$. Let $H_j$ be the Hessian of task $j$ and $\theta_\star^j$ its optimum. The stationary gradient noise covariance is:
    \[
        C^{\text{JTL}}_\infty = \frac{1}{B} \left( \mathbb{E}_{j} [ \sigma_j^2 H_j] + \mathbb{E}_j \left[ \nabla \mathcal{L}^j(\theta_\star^{\text{JTL},k}) \nabla \mathcal{L}^j(\theta_\star^{\text{JTL},k})^\top \right] \right).
    \]
\end{lemma}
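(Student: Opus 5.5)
We will follow the template of the proof of \cref{lemma:GNC} and add one ingredient: the task index $j$ becomes an extra source of randomness. A JTL mini-batch consists of $B$ i.i.d.\ samples. Each sample is produced by first drawing a task label $j$ with probability $w_j^k$ and then drawing $(x,y)\sim\mathcal{P}^j$ with $y=\theta_\star^{j\top}x+\epsilon_j$ and $\mathbb{E}[\epsilon_j^2]=\sigma_j^2$. Because the samples are i.i.d., the batch-gradient covariance is $\frac1B$ times the single-sample covariance, exactly as in \cref{lemma:GNC}. It therefore suffices to compute $\mathrm{Cov}(\hat g)$ for one sample drawn from the mixture $\mathcal{D}^k_{\text{JTL}}$, evaluated at the stationary point $\theta=\theta_\star^{\text{JTL},k}$.

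\textbf{Step 1: law of total covariance over the task label.} We write
\[
\mathrm{Cov}(\hat g)=\mathbb{E}_j\big[\mathrm{Cov}(\hat g\mid j)\big]+\mathrm{Cov}_j\big(\mathbb{E}[\hat g\mid j]\big).
\]
Conditional on $j$, the mean gradient is $\mathbb{E}[\hat g\mid j]=\nabla\mathcal{L}^j(\theta)=H_j(\theta-\theta_\star^j)$. The within-task term $\mathrm{Cov}(\hat g\mid j)$ is exactly the single-task object computed in \cref{lemma:GNC}, with $\Delta\theta=\theta_\star^j-\theta$. It equals $\sigma_j^2H_j$ plus the state-dependent fourth-moment term $\mathbb{E}[xx^\top\Delta\theta\Delta\theta^\top xx^\top]-H_j\Delta\theta\Delta\theta^\top H_j$.

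\textbf{Step 2: the between-task term.} The mean gradient over the mixture vanishes at the JTL optimum, that is, $\mathbb{E}_j[\nabla\mathcal{L}^j(\theta_\star^{\text{JTL},k})]=0$ by first-order optimality of $\sum_j w_j^k\mathcal{L}^j$. Hence the between-task covariance reduces to the second moment $\mathbb{E}_j[\nabla\mathcal{L}^j(\theta_\star^{\text{JTL},k})\nabla\mathcal{L}^j(\theta_\star^{\text{JTL},k})^\top]$. This is the task-diversity term. Adding the within-task noise floor $\mathbb{E}_j[\sigma_j^2H_j]$ and dividing by $B$ gives the stated formula.

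\textbf{Main obstacle: the residual fourth-moment term.} In the ITL case this term was dropped because $\Delta\theta\to0$ at convergence. Here that argument fails, since at the JTL optimum $\Delta\theta_j=\theta_\star^j-\theta_\star^{\text{JTL},k}\neq0$. We will therefore drop it by the same approximation convention invoked throughout this section, namely the locally stationary gradient-noise assumption of \cref{app:setup}. We will state this explicitly as the approximation under which the lemma holds.

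For Gaussian inputs, Isserlis' theorem gives
\[
\mathbb{E}[xx^\top A\,xx^\top]-HAH=HAH+\mathrm{Tr}(HA)H
\]
for $A=\Delta\theta_j\Delta\theta_j^\top$. The residual is thus of the same order as the retained diversity term. It could optionally be absorbed by an effective rescaling of the diversity and noise-floor terms. To keep the statement in the paper's form, we will instead treat it as part of the neglected state-dependent fluctuation.
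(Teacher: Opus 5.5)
Your proposal is correct and follows the paper's proof essentially step for step. You use the law of total covariance over the task label, take the within-task floor $\sigma_j^2H_j$ from \cref{lemma:GNC}, and reduce the between-task covariance to the second moment $\mathbb{E}_j[g_jg_j^\top]$ by first-order optimality of the mixture objective. Your explicit treatment of the fourth-moment residual is more careful than the paper, which cites \cref{lemma:GNC} for $\mathrm{Cov}(\hat g\mid j)=\sigma_j^2H_j$ even though that lemma's justification ($\Delta\theta\to 0$) fails at $\theta_\star^{\text{JTL},k}$. For Gaussian inputs the dropped piece $\mathbb{E}_j\bigl[g_jg_j^\top+\|\Delta\theta_j\|_{H_j}^2H_j\bigr]$ is of the same order as the retained terms, so the stated formula holds only as the modeling approximation you describe.
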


\begin{proof}
    Recall that the mixture optimum is defined as:
    \begin{equation}
        \theta_\star^{\text{JTL},k} = \left(\sum_{j=1}^k w_j^k H_j\right)^{-1} \left(\sum_{j=1}^k w_j^k H_j \theta_\star^j\right).
    \end{equation}
    At the mixture optimum $\theta_\star^{\text{JTL},k}$, the expected gradient of a sampled task $j$ is $g_j = H_j (\theta_\star^{\text{JTL},k} - \theta_\star^j)$. By definition of the optimum, the weighted sum of these gradients is zero: $\mathbb{E}_j[g_j] = \sum w_j^k g_j = 0$. Using the Law of Total Variance for the batch gradient noise $C^{\text{JTL}}_n$:
    \begin{equation}
        C^{\text{JTL}}_n = \frac{1}{B} \left( \mathbb{E}_{j} [ \mathrm{Cov}_{\xi^j}(\hat{g}^j_n) ] + \mathrm{Cov}_j ( \mathbb{E}[\hat{g}^j_n] ) \right).
    \end{equation}
    By \cref{lemma:GNC}, for each task, $\mathrm{Cov}(\hat{g}_j \mid j) = \sigma_j^2 H_j$. Because the mean expected gradient is zero, the variance of the conditional expectations is simply the second moment:
    \begin{equation}
        \mathrm{Cov}_j ( g_j ) = \mathbb{E}_j[g_j g_j^\top] = \sum_{j=1}^k w_j^k \left[ H_j (\theta_\star^{\text{JTL},k} - \theta_\star^j)(\theta_\star^{\text{JTL},k} - \theta_\star^j)^\top H_j \right].
    \end{equation}
    Thus, noting that $g_j = \nabla \mathcal{L}^j(\theta_\star^{\text{JTL},k})$, the total noise covariance at convergence is:
    \[
        C^{\text{JTL}}_\infty = \frac{1}{B} \left( \mathbb{E}_{j} [ \sigma_j^2 H_j] + \mathbb{E}_j \left[ \nabla \mathcal{L}^j(\theta_\star^{\text{JTL},k}) \nabla \mathcal{L}^j(\theta_\star^{\text{JTL},k})^\top \right] \right).
    \]
\end{proof}

\subsubsection{JTL Geometry}

For JTL, deriving the instability and transient cost requires careful handling of two distinct geometries: the curvature driving the optimization dynamics (the mixture Hessian, $H_{\text{dyn}}$) and the curvature determining the evaluation cost (the target task Hessian, $H_{\text{eval}}$).

To obtain a tractable closed-form solution that preserves geometric intuition, we rely on a task alignment assumption.

\begin{assumption}[Task Alignment]
\label{ass:alignment}
    We assume the Hessians of all tasks are simultaneously diagonalizable. This implies that while tasks may vary in difficulty (eigenvalues), their principal axes of curvature (eigenvectors) are aligned. Consequently, the driving and evaluation Hessians commute:
    \begin{equation}
        H_{\text{dyn}} H_{\text{eval}} = H_{\text{eval}} H_{\text{dyn}}.
    \end{equation}
\end{assumption}

To interpret the cost geometrically, we introduce the \textbf{Relative Curvature Matrix}, denoted by $\mathcal{R}$. This matrix measures how \emph{steep} the evaluation landscape is relative to the driving landscape:
\begin{equation}
    \mathcal{R} := H_{\text{dyn}}^{-1} H_{\text{eval}}.
\end{equation}
The eigenvalues of $\mathcal{R}$ provide a scaling factor for the cost. If $\mathcal{R} \prec I$, the agent benefits from a pre-conditioning effect, effectively training on a steeper surrogate than the test task.

We also require the following assumption, standard in the analysis of constant step-size SGD \citep{mandt2017stochastic, bach}.

\begin{assumption}[Bounded Curvature]
\label{ass:bounded_curvature}
We assume the Hessian matrices for all tasks have eigenvalues bounded strictly away from zero and infinity. There exist constants $0 < \lambda_{\min} \le \lambda_{\max}$ such that for any task $k$:
\[
    \lambda_{\min} I \preceq H^k \preceq \lambda_{\max} I.
\]
Consequently, the mixture Hessian $H_{\text{dyn}} = \sum_{j=1}^k w_j^k H^j$ satisfies the same spectral bounds.
\end{assumption}

\subsubsection{Exact Instability}

We now quantify the \textbf{Instability} $\mathcal{I}^k$, defined as the difference in asymptotic performance between the Joint-Task Learning (JTL) and Independent-Task Learning (ITL) agents on task $k$.

\begin{theorem}[Exact Instability Decomposition]
\label{prop:exact_instability}
    Let the JTL agent optimize a mixture of tasks with weights $\{w_j^k\}_{j\le k}$, converging in mean to the mixture optimum $\theta_\star^{\text{JTL},k}$. Assuming the observation noise is uniform across tasks ($\sigma_j^2 \approx \sigma^2$) and tasks are aligned (\cref{ass:alignment}), the Instability gap on task $k$ decomposes exactly into:
    \begin{equation}
        \mathcal{I}^k \approx \underbrace{\mathcal{L}^k(\theta_\star^{\text{JTL},k}) - \mathcal{L}^k(\theta_\star^{k})}_{\text{Approximation Bias}} + \frac{\eta}{4B}\underbrace{ \sum_{j=1}^k w_j^k \, \|\nabla \mathcal{L}^j(\theta_\star^{\text{JTL},k})\|^2_{\mathcal{R}^k}}_{\text{Task Diversity}}.
    \end{equation}
\end{theorem}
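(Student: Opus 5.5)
The plan is to evaluate the stationary risk $\mathcal{L}^k(Q_\infty)$ of each agent through the moment decomposition \cref{eq:risk_decomp}, and then subtract. In the quadratic regime this risk depends only on the stationary mean and covariance. So the task reduces to identifying $(\mu_\infty,\Sigma_\infty)$ for ITL and JTL and computing $\tfrac12\|\mu_\infty-\theta_\star^k\|_{H_k}^2+\tfrac12\mathrm{Tr}(H_k\Sigma_\infty)$ in each case. The ``$\approx$'' in the statement is inherited from the small-step Lyapunov approximation of \cref{lemma:PCV} and the noise approximations of \cref{lemma:GNC,lemma:MT_GNC}. Beyond those, I expect the argument to be exact.

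\emph{ITL side.} By \cref{lemma:mean_convergence} applied with $H=H_k$, the stationary mean is $\theta_\star^k$, so the bias contribution vanishes. By \cref{lemma:GNC}, $\Sigma_\infty^{ITL}=\frac{\eta\sigma^2}{2B}I$, hence
\[
\mathcal{L}^k(Q_\infty^{ITL})=c_k+\frac{\eta\sigma^2}{4B}\mathrm{Tr}(H_k).
\]

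\emph{JTL side.} \cref{lemma:mean_convergence} with $H=H_{\text{dyn}}=\sum_j w_j^kH_j$ gives stationary mean $\theta_\star^{\text{JTL},k}$. The bias contribution is therefore exactly $\mathcal{L}^k(\theta_\star^{\text{JTL},k})-c_k$, which produces the Approximation Bias term once $c_k=\mathcal{L}^k(\theta_\star^k)$ is subtracted. For the variance, $\Sigma_\infty^{JTL}$ solves $H_{\text{dyn}}\Sigma+\Sigma H_{\text{dyn}}=\eta C_\infty^{JTL}$, with $C_\infty^{JTL}$ given by \cref{lemma:MT_GNC}.

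The main obstacle is computing $\mathrm{Tr}(H_k\Sigma_\infty^{JTL})$. The difficulty is that the diversity part $\sum_j w_j^k g_jg_j^\top$ of $C_\infty^{JTL}$ need not commute with $H_{\text{dyn}}$, so $\Sigma$ cannot simply be written as $\frac{\eta}{2}H_{\text{dyn}}^{-1}C$. I would avoid solving for $\Sigma$ altogether. By \cref{ass:alignment}, $\mathcal{R}^k=H_{\text{dyn}}^{-1}H_k$ is symmetric and commutes with $H_{\text{dyn}}$, and $H_k=\mathcal{R}^kH_{\text{dyn}}=H_{\text{dyn}}\mathcal{R}^k$. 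Using cyclicity of the trace,
\[
\mathrm{Tr}(H_k\Sigma)=\tfrac12\mathrm{Tr}\big(\mathcal{R}^k(H_{\text{dyn}}\Sigma+\Sigma H_{\text{dyn}})\big)=\tfrac{\eta}{2}\mathrm{Tr}(\mathcal{R}^kC_\infty^{JTL}).
\]
This holds without any commutativity assumption on $C$.

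Substituting \cref{lemma:MT_GNC} with $\sigma_j^2\approx\sigma^2$ splits $\mathrm{Tr}(\mathcal{R}^kC_\infty^{JTL})$ into two pieces.
\begin{itemize}
\item The observation-noise piece is $\frac{\sigma^2}{B}\mathrm{Tr}(\mathcal{R}^kH_{\text{dyn}})=\frac{\sigma^2}{B}\mathrm{Tr}(H_k)$. This exactly cancels the ITL variance $\frac{\eta\sigma^2}{4B}\mathrm{Tr}(H_k)$.
\item The diversity piece is $\frac1B\sum_j w_j^k g_j^\top\mathcal{R}^kg_j$ with $g_j=\nabla\mathcal{L}^j(\theta_\star^{\text{JTL},k})$. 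Multiplied by $\frac12\cdot\frac{\eta}{2}$, it gives $\frac{\eta}{4B}\sum_j w_j^k\|\nabla\mathcal{L}^j(\theta_\star^{\text{JTL},k})\|^2_{\mathcal{R}^k}$.
\end{itemize}
Subtracting the two stationary risks then yields the claimed decomposition of $\mathcal{I}^k$.
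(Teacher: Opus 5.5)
Your proposal is correct and follows essentially the same route as the paper's proof. You split each stationary risk into bias and variance via the moment decomposition, and you use the Lyapunov trace identity $\tfrac12\mathrm{Tr}(H_k\Sigma_\infty^{\text{JTL},k})=\tfrac{\eta}{4}\mathrm{Tr}(\mathcal{R}^kC_\infty^{\text{JTL},k})$ under task alignment. You then substitute the JTL noise covariance and cancel its observation-noise floor against the ITL variance, exactly as the paper does. The only difference is that you derive the trace identity explicitly from cyclicity of the trace and the commutation of $\mathcal{R}^k$ with $H_{\text{dyn}}$, whereas the paper simply invokes it.
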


\begin{proof}
    The instability $\mathcal{I}^k$ is the difference between the asymptotic risks of the JTL and ITL agents. Using the decomposition in \eqref{eq:risk_decomp}, we analyze the bias and variance components separately.
    
    First, consider the \textbf{bias}. The ITL agent is unbiased for task $k$, as its mean converges exactly to $\theta_\star^k$. In contrast, the JTL agent converges in mean to the stationary point of the mixture loss:
    \[
        \theta_\star^{\text{JTL},k} = \left(\sum_{j=1}^k w_j^k H_j\right)^{-1} \left(\sum_{j=1}^k w_j^k H_j \theta_\star^j\right).
    \]
    This results in a permanent bias penalty of $\frac{1}{2} \| \theta_\star^{\text{JTL},k} - \theta_\star^k \|_{H_k}^2 = \mathcal{L}^k(\theta_\star^{\text{JTL},k}) - \mathcal{L}^k(\theta_\star^{k})$.

    Second, consider the \textbf{variance}. The variance contribution to the expected loss of task $k$ is proportional to the trace of the Hessian and the stationary parameter covariance $\Sigma_\infty$, i.e., $\frac{1}{2} \text{Tr}(H_k \Sigma_\infty)$. By \cref{lemma:GNC}, for the ITL agent this is simply $\frac{\eta \sigma^2}{4B} \text{Tr}(H_k)$. 
    
    For the JTL agent, the stationary covariance $\Sigma^{\text{JTL},k}_\infty$ satisfies the Lyapunov equation:
    \begin{equation}
        H_{\text{dyn}} \Sigma^{\text{JTL},k}_\infty + \Sigma^{\text{JTL},k}_\infty H_{\text{dyn}} = \eta C^{\text{JTL},k}_\infty.
    \end{equation}
    Although the gradient covariance term $C^{\text{JTL},k}_\infty$ does not necessarily share an eigenbasis with $H_{\text{dyn}}$, $H_k$ and $H_{\text{dyn}}$ commute under \cref{ass:alignment}. This allows us to apply the trace identity for Lyapunov equations ($\text{Tr}(M \Sigma) = \frac{1}{2} \text{Tr}(M H^{-1} \eta C)$) to calculate the loss contribution directly:
    \begin{equation}
         \frac{1}{2} \text{Tr}(H_k \Sigma^{\text{JTL},k}_\infty) = \frac{\eta}{4} \text{Tr}\left( H_k H_{\text{dyn}}^{-1} C^{\text{JTL},k}_\infty \right) = \frac{\eta}{4} \text{Tr}\left( \mathcal{R}^k C^{\text{JTL},k}_\infty \right).
    \end{equation}
    Notice that by definition $\mathcal{R}^k = H_{\text{dyn}}^{-1} H_k$. Because the task Hessians commute, $\mathcal{R}^k$ is symmetric and $H_{\text{dyn}}^{-1} H_k = H_k H_{\text{dyn}}^{-1} = \mathcal{R}^k$.
    Substituting the JTL gradient noise covariance from \cref{lemma:MT_GNC}:
    \begin{equation}
        \frac{\eta}{4B} \text{Tr}\left( \mathcal{R}^k \mathbb{E}_j [\sigma_j^2 H_j] \right) + \frac{\eta}{4B} \text{Tr}\left( \mathcal{R}^k \mathbb{E}_j [g_j g_j^\top] \right),
    \end{equation}
    where $g_j = \nabla \mathcal{L}^j(\theta_\star^{\text{JTL},k})$.
    Under the assumption of uniform observation noise ($\sigma_j^2 \approx \sigma^2$), the first term yields $\frac{\eta\sigma^2}{4B} \text{Tr}(\mathcal{R}^k H_{\text{dyn}}) = \frac{\eta\sigma^2}{4B} \text{Tr}(H_k)$, which exactly cancels the ITL variance floor in the instability gap. 
    
    The remaining term is the \textbf{Task Diversity} penalty. Using the linearity of expectation and the trace property $\text{Tr}(A u u^\top) = u^\top A u$:
    \begin{equation}
        \frac{\eta}{4B} \mathbb{E}_j \left[ \text{Tr}(\mathcal{R}^k g_j g_j^\top) \right] = \frac{\eta}{4B} \sum_{j=1}^k w_j^k g_j^\top \mathcal{R}^k g_j = \frac{\eta}{4B} \sum_{j=1}^k w_j^k \|\nabla \mathcal{L}^j(\theta_\star^{\text{JTL},k})\|^2_{\mathcal{R}^k},
    \end{equation}
    completing the decomposition.
\end{proof}

\subsubsection{Exact Transient Error}
\label{sec:transient_exact}

We now derive the explicit form of the Transient Error $\delta_N$. This metric captures the cumulative cost of learning as the agent travels from its initialization to its state at step $N$. 

\vspace{0.5cm}

\begin{theorem}[The Transient Potential Law]
\label{theo:transient_potential}
    Consider an SGD agent characterized by its deviation from stationarity: the mean displacement $\tilde{\mu}_n = \mu_n - \mu_\infty$ and the covariance deviation $\tilde{\Sigma}_n = \Sigma_n - \Sigma_\infty$. Let $\delta_N$ be the transient cost accumulated over a horizon $N$.
    Under \cref{ass:alignment}, and for a sufficiently small step-size $\eta$, $\delta_N$ is proportional to the drop in the \emph{weighted potential energy} of the system from start to finish:
    \begin{equation}
        \delta_N \approx \frac{1}{N\tau} \left( \mathcal{E}(\tilde{\mu}_0, \tilde{\Sigma}_0) - \mathcal{E}(\tilde{\mu}_N, \tilde{\Sigma}_N) \right),
    \end{equation}
    where $\tau = 4\eta$ is the learning timescale.
    The potential energy function $\mathcal{E}$ is defined by the Relative Curvature $\mathcal{R}$ as:
    \begin{equation}
        \mathcal{E}(\tilde{\mu}, \tilde{\Sigma}) = \underbrace{\tilde{\mu}^\top \mathcal{R} \tilde{\mu}}_{\text{Mean Potential}} + \underbrace{\text{Tr}(\tilde{\Sigma} \mathcal{R})}_{\text{Covariance Potential}}.
    \end{equation}
\end{theorem}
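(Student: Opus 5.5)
The plan is a discrete Lyapunov (telescoping) argument on the first two moments. By \eqref{eq:risk_decomp}, the per-step excess risk over stationarity is
\[
\mathcal{L}^k(Q_n)-\mathcal{L}^k(Q_\infty) = \tfrac12\|\mu_n-\theta_\star^k\|_{H}^2-\tfrac12\|\mu_\infty-\theta_\star^k\|_{H}^2+\tfrac12\mathrm{Tr}(H\tilde\Sigma_n),
\]
with $H=H_{\text{eval}}$. Writing $\mu_n-\theta_\star^k=\tilde\mu_n+(\mu_\infty-\theta_\star^k)$, the excess splits into the quadratic part $\tfrac12\tilde\mu_n^\top H\tilde\mu_n$, a cross term $\tilde\mu_n^\top H(\mu_\infty-\theta_\star^k)$, and the covariance part $\tfrac12\mathrm{Tr}(H\tilde\Sigma_n)$. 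The cross term vanishes for ITL; for JTL it is the bias-coupled drift. I would either show it is lower order for small $\eta$ or absorb it by shifting the reference point, and I would note this explicitly as an approximation.

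For the mean, \cref{lemma:mean_convergence} gives $\tilde\mu_{n+1}=(I-\eta H_{\text{dyn}})\tilde\mu_n$. Using \cref{ass:alignment}, $\mathcal{R}=H_{\text{dyn}}^{-1}H$ is symmetric and commutes with $H_{\text{dyn}}$. Then
\[
\tilde\mu_n^\top\mathcal{R}\tilde\mu_n-\tilde\mu_{n+1}^\top\mathcal{R}\tilde\mu_{n+1}=2\eta\,\tilde\mu_n^\top \mathcal{R}H_{\text{dyn}}\tilde\mu_n+O(\eta^2)=2\eta\,\tilde\mu_n^\top H\tilde\mu_n+O(\eta^2).
\]
So the mean-potential drop per step equals $4\eta$ times the mean excess risk $\tfrac12\tilde\mu_n^\top H\tilde\mu_n$. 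This is where $\tau=4\eta$ comes from.

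For the covariance, subtracting the stationary relation from \cref{lemma:PCV} (with $C_n\approx C_\infty$, as in \cref{lemma:GNC,lemma:MT_GNC}) gives
\[
\tilde\Sigma_{n+1}=(I-\eta H_{\text{dyn}})\tilde\Sigma_n(I-\eta H_{\text{dyn}}).
\]
Hence $\mathrm{Tr}(\mathcal{R}\tilde\Sigma_n)-\mathrm{Tr}(\mathcal{R}\tilde\Sigma_{n+1})=\eta\,\mathrm{Tr}(\mathcal{R}(H_{\text{dyn}}\tilde\Sigma_n+\tilde\Sigma_nH_{\text{dyn}}))+O(\eta^2)$. Cyclicity of the trace and commutation reduce this to $2\eta\,\mathrm{Tr}(H\tilde\Sigma_n)=4\eta\cdot\tfrac12\mathrm{Tr}(H\tilde\Sigma_n)$, the same factor $\tau$.

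Summing the two per-step identities over $n=0,\dots,N-1$ telescopes to $\tau\sum_n(\text{excess risk})\approx\mathcal{E}(\tilde\mu_0,\tilde\Sigma_0)-\mathcal{E}(\tilde\mu_N,\tilde\Sigma_N)$. Dividing by $N\tau$ gives the claim, with the indexing of $S_k$ matched to $n=0,\dots,N-1$.

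The main obstacle is controlling the neglected terms. The $O(\eta^2)$ remainders, for example $\eta^2\tilde\mu^\top H_{\text{dyn}}\mathcal{R}H_{\text{dyn}}\tilde\mu$, must be shown to be a relative $O(\eta\lambda_{\max})$ correction to the main term; \cref{ass:bounded_curvature} bounds the spectrum and makes this work. The state-dependent gradient noise, which makes $C_n\neq C_\infty$ early in the task, must also be handled. I would first treat it as the same first-order small-$\eta$ approximation already used in \cref{lemma:GNC}, and only then address the JTL cross term.
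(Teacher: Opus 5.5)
Your argument is the paper's argument in telescoped form. The paper propagates the full second moment $\Delta_n=\tilde\mu_n\tilde\mu_n^\top+\tilde\Sigma_n$ about $\mu_\infty$ as $\Delta_n\approx(I-\eta H_{\text{dyn}})^n\Delta_0(I-\eta H_{\text{dyn}})^n$, then sums $\tfrac12\mathrm{Tr}(H_{\text{eval}}\Delta_n)$ as a matrix geometric series with $I-(I-\eta H_{\text{dyn}})^2\approx 2\eta H_{\text{dyn}}$. That is exactly your per-step identity $\mathcal{R}-(I-\eta H_{\text{dyn}})\mathcal{R}(I-\eta H_{\text{dyn}})=2\eta H_{\text{eval}}-\eta^2 H_{\text{eval}}H_{\text{dyn}}$ summed, with the same dropped $\eta^2$ term and the same neglect of $C_n-C_\infty$; treating mean and covariance separately is cosmetic.

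The one place your plan would not go through as written is the JTL cross term. The paper never treats it: it simply takes the step-$n$ excess risk to be $\tfrac12\mathrm{Tr}(H_{\text{eval}}\Delta_n)$, which is correct only when $\mu_\infty=\theta_\star^k$, i.e.\ for ITL.

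Your first option, showing the cross term is lower order in $\eta$, fails. Since $\tilde\mu_n=\eta^{-1}H_{\text{dyn}}^{-1}(\tilde\mu_n-\tilde\mu_{n+1})$ holds exactly, setting $v:=\mu_\infty-\theta_\star^k$ gives
\[
\sum_{n=0}^{N-1}\tilde\mu_n^\top H_{\text{eval}}\,v=\frac{1}{\eta}\,(\tilde\mu_0-\tilde\mu_N)^\top\mathcal{R}\,v .
\]
This has the same $1/\eta$ scaling as the mean-potential drop. Its relative size is set by $\|v\|/\|\tilde\mu_0\|$ (the JTL bias compared to the initial displacement), not by $\eta$.

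Your second option does work. Completing the square turns the mean potential into $\|\tilde\mu+2v\|_{\mathcal{R}}^2=\|\mu-(2\theta_\star^k-\mu_\infty)\|_{\mathcal{R}}^2$, up to an additive constant. That potential is centred at the reflection of $\mu_\infty$ through $\theta_\star^k$, not at $\mu_\infty$, so you would prove a corrected law rather than the stated one. To recover the statement as written for JTL, you must add an explicit assumption that this linear term is negligible (e.g.\ $\|\mu_\infty-\theta_\star^k\|\ll\|\tilde\mu_0\|$). That is what the paper's proof implicitly assumes.
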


\paragraph{Geometric Interpretation.}
This result provides a powerful physical intuition for the cost of learning: it is simply the difference between where the agent started and where it ended, measured on a quadratic potential surface warped by $\mathcal{R}$. This curvature takes different forms depending on the optimization strategy:
\begin{itemize}
    \item \textbf{ITL:} Here, $H_{\text{dyn}} = H_{\text{eval}}$, meaning $\mathcal{R} = I$. The potential is the standard Euclidean squared distance, and the transient cost is strictly proportional to the physical distance traveled.
    \item \textbf{JTL:} Here, $\mathcal{R} \neq I$. The potential surface is stretched according to the task mixture. Directions where the mixture is steeper than the target task (eigenvalues of $\mathcal{R} < 1$) correspond to discounted costs. Essentially, training on a diverse mixture acts as an implicit pre-conditioner, allowing the agent to "fall" into the target task's basin more rapidly along those axes.
\end{itemize}

\begin{proof}
    The proof proceeds in four steps: establishing the dynamics of the second moment, summing the excess risk, simplifying the resulting geometric series, and relating the sum back to the state at step $N$. First, recall that:
    \begin{equation}
        \delta_N := \frac{1}{N}\sum_{n=0}^{N-1} \mathbb{E}\left[ \mathcal{L}(\theta_n) - \mathcal{L}(\theta_\star) \right].
    \end{equation}

    \textbf{Step 1.} Let $\tilde{\theta}_n = \theta_n - \mu_\infty$ be the error vector relative to the stationary mean. The dynamics are governed by the driving Hessian $H_{\text{dyn}}$. The SGD update implies that the second moment matrix $S_n = \mathbb{E}[\tilde{\theta}_n \tilde{\theta}_n^\top]$ evolves according to the Lyapunov recursion (\cref{lemma:PCV}):
    \begin{equation}
        S_{n+1} = (I - \eta H_{\text{dyn}}) S_n (I - \eta H_{\text{dyn}}) + \eta^2 C_n,
    \end{equation}
    where $C_n$ is the gradient noise covariance. 
    The stationary covariance $\Sigma_\infty$ is the fixed point of this equation. To isolate the transient component, we consider the deviation $\Delta_n = S_n - \Sigma_\infty$. Let $C_\infty$ denote the gradient noise covariance at stationarity. Then:
    \[
        \Delta_{n+1} = (I - \eta H_{\text{dyn}}) \Delta_n (I - \eta H_{\text{dyn}}) + \eta^2 (C_n - C_\infty).
    \]
    
    For the ITL agent, $C_n - C_\infty \approx 0$ throughout training. For the JTL agent, $C_n - C_\infty \approx \frac{1}{B}\mathbb{E}_j [H_j \bar{S}_n H_j]$, where $\bar{S}_n = (\mu_n - \theta_\star^{\text{JTL},k})(\mu_n -\theta_\star^{\text{JTL},k})^\top$. 
    Because $\bar{S}_n$ is $\mathcal{O}(1)$ at initialization and decays to zero, the source term $\eta^2 (C_n - C_\infty)$ is of order $\mathcal{O}(\eta^2)$, whereas the homogeneous part of the deviation $\Delta_n$ is $\mathcal{O}(1)$. Therefore, in the small step-size limit, the transient trajectory is dominated by the geometric decay:
    \begin{equation}
        \Delta_n \approx (I - \eta H_{\text{dyn}})^n \Delta_0 (I - \eta H_{\text{dyn}})^n.
    \end{equation}

    \textbf{Step 2.} Because the risk is determined by the evaluation Hessian $H_{\text{eval}}$, the expected excess risk at step $n$ is $\frac{1}{2}\text{Tr}(H_{\text{eval}} \Delta_n)$. Summing over the horizon yields $\delta_N$:
    \begin{equation}
        \delta_N = \frac{1}{2N} \sum_{n=0}^{N-1} \text{Tr}(H_{\text{eval}} \Delta_n).
    \end{equation}
    Substituting the propagator form and utilizing the commutativity of $H_{\text{eval}}$ and $H_{\text{dyn}}$ (\cref{ass:alignment}):
    \begin{equation}
        \delta_N \approx \frac{1}{2N} \text{Tr}\left( \Delta_0 H_{\text{eval}} \sum_{n=0}^{N-1} (I - \eta H_{\text{dyn}})^{2n} \right).
    \end{equation}
    
    \textbf{Step 3.} The matrix geometric series $\sum_{n=0}^{N-1} A^n$ with $A = (I - \eta H_{\text{dyn}})^2$ is $(I - A)^{-1}(I - A^N)$. Expanding for small $\eta$:
    \begin{equation}
        I - (I - \eta H_{\text{dyn}})^2 = 2\eta H_{\text{dyn}} - \eta^2 H_{\text{dyn}}^2 \approx 2\eta H_{\text{dyn}}.
    \end{equation}
    Thus, $(I - A)^{-1} \approx (2\eta H_{\text{dyn}})^{-1}$. Introducing the Relative Curvature $\mathcal{R} = H_{\text{dyn}}^{-1} H_{\text{eval}}$:
    \begin{equation}
        \delta_N \approx \frac{1}{4N\eta} \text{Tr}\left( \Delta_0 \mathcal{R} \left[ I - (I - \eta H_{\text{dyn}})^{2N} \right] \right).
    \end{equation}

    \textbf{Step 4.} Distributing the trace and using the commutativity of $\mathcal{R}$ with the propagator:
    \begin{equation}
        \text{Tr}\left( \Delta_0 \mathcal{R} (I - \eta H_{\text{dyn}})^{2N} \right) = \text{Tr}\left( \mathcal{R} (I - \eta H_{\text{dyn}})^N \Delta_0 (I - \eta H_{\text{dyn}})^N \right) = \text{Tr}(\Delta_N \mathcal{R}).
    \end{equation}
    This yields the potential energy conservation law:
    \begin{equation}
        \delta_N \approx \frac{1}{4N\eta} \left( \text{Tr}(\Delta_0 \mathcal{R}) - \text{Tr}(\Delta_N \mathcal{R}) \right).
    \end{equation}
    Finally, decomposing the deviation into mean and covariance components, $\Delta_n = \tilde{\mu}_n \tilde{\mu}_n^\top + \tilde{\Sigma}_n$, and using $\text{Tr}(x x^\top \mathcal{R}) = \|x\|^2_{\mathcal{R}}$, we obtain:
    \begin{equation}
        \delta_N \approx \frac{1}{4N\eta} \left[ \left( \|\tilde{\mu}_0\|^2_{\mathcal{R}} + \text{Tr}(\tilde{\Sigma}_0 \mathcal{R}) \right) - \left( \|\tilde{\mu}_N\|^2_{\mathcal{R}} + \text{Tr}(\tilde{\Sigma}_N \mathcal{R}) \right) \right].
    \end{equation}
\end{proof}

\subsection{Asymptotic Dynamics of the First and Second Order Moments}
\label{sec:mt_asymptotics}

In \cref{sec:quadratic_analysis}, we derived closed-form expressions for the instability and transient costs by taking expectations over the stochasticity of the agent's parameter trajectory. To analyze the asymptotic stability of the Joint-Task Learning (JTL) and Independent-Task Learning (ITL) agents as the number of tasks grows, we must now account for the randomness in the environment itself. Therefore, in this section, unless otherwise specified, the expectation operator $\mathbb{E}[\cdot]$ is taken over the \emph{task generation process} (i.e., the distribution of the task sequence).

To formalize how "concentrated" or "spread out" the tasks are, we introduce the notion of a \emph{Growth Rate} $\psi(k)$, which characterizes the asymptotic spread of the task optima as a function of the number of tasks $k$.

\begin{definition}[Growth Rate]
\label{ass:growth-rate-app}
    Let $\psi: \mathbb{N} \to \mathbb{R}^+$ be a monotonically increasing function such that the expected squared distance of the $k$-th task optimum from the origin scales as:
    \begin{equation}
        \mathbb{E}[\|\theta_\star^k\|^2] \in \Theta(\psi(k)).
    \end{equation}
\end{definition}

In the analysis that follows, we consider three distinct regimes of environment growth: \emph{Bounded} ($\psi(k) = 1$), \emph{Diffusive} ($\psi(k) = k$), and \emph{Linear} ($\psi(k) = k^2$). 

\subsubsection{Geometric Drift}

As the agent incorporates new tasks, the stationary targets of the mixture shift. We define the \emph{Geometric Drift} of the first and second-order moments as the step-wise difference in the asymptotic JTL targets: 
\begin{align}
    \Delta \mu^k = \theta_\star^{\text{JTL}, k-1} - \theta_\star^{\text{JTL}, k}, \qquad 
    \Delta \Sigma^k = \Sigma_\infty^{\text{JTL},k-1} - \Sigma_\infty^{\text{JTL},k}. 
\end{align}
We proceed to characterize the asymptotic behavior of this drift depending on the growth rate function. \cref{tab:drift_scaling} summarizes the convergence properties for each regime.

\begin{table}[h]
    \centering
    \renewcommand{\arraystretch}{1.2}
    \caption{Asymptotic Scaling of Geometric Drift for Different Environment Growth Rates.}
    \label{tab:drift_scaling}
    \begin{tabular}{@{}lccccc@{}}
        \toprule
         & & \multicolumn{2}{c}{\textbf{Mean Drift} $\mathbb{E}[\|\Delta \mu^k\|^2]$} & \multicolumn{2}{c}{\textbf{Covariance Drift} $\mathbb{E}[|\mathrm{Tr}(\Delta \Sigma^k)|]$} \\
        \cmidrule(lr){3-4} \cmidrule(l){5-6}
        \textbf{Regime} & \textbf{Growth} $\psi(k)$ & Scaling & Vanishing? & Scaling & Vanishing? \\
        \midrule
        Bounded & $\Theta(1)$ & $\mathcal{O}(k^{-2})$ & \checkmark & $\mathcal{O}(k^{-1})$ & \checkmark \\
        Diffusive & $\Theta(k)$ & $\mathcal{O}(k^{-1})$ & \checkmark & $\mathcal{O}(1)$ & $\times$ \\
        Linear & $\Theta(k^2)$ & $\mathcal{O}(1)$ & $\times$ & $\mathcal{O}(k)$ & $\times$ \\
        \bottomrule
    \end{tabular}
\end{table}

\begin{lemma}[Geometric Mean Drift]
\label{lemma:mean_drift}
If all tasks have finite training durations, the expected squared magnitude of the geometric mean shift scales as:
\begin{equation}
    \mathbb{E}[\|\Delta{\mu}^k\|^2] \in \mathcal{O}\left( \frac{\psi(k)}{k^2} \right).
\end{equation}
\end{lemma}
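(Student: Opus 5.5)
The plan is to derive a one-step recursion for the JTL mixture optimum and show that each new task moves the centroid by an amount damped by a factor of order $1/k$. We work under \cref{ass:alignment,ass:bounded_curvature}. We read "finite training durations" as durations bounded above and below, $0<N_{\min}\le N_j\le N_{\max}<\infty$, so that $b_k=\sum_{i\le k}N_i\in\Theta(k)$.

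\textbf{Step 1: recursion for the centroid.} Define the unnormalised matrix $A_k:=\sum_{j\le k}N_jH_j$. From the closed form of $\theta_\star^{\text{JTL},k}$ in the proof of \cref{lemma:MT_GNC}, we have $A_k\theta_\star^{\text{JTL},k}=\sum_{j\le k}N_jH_j\theta_\star^j$. Using $A_k=A_{k-1}+N_kH_k$, this becomes $A_k\theta_\star^{\text{JTL},k}=A_{k-1}\theta_\star^{\text{JTL},k-1}+N_kH_k\theta_\star^k$. Rearranging gives the exact identity
\[
\Delta\mu^k=\theta_\star^{\text{JTL},k-1}-\theta_\star^{\text{JTL},k}=A_k^{-1}N_kH_k\bigl(\theta_\star^{\text{JTL},k-1}-\theta_\star^k\bigr).
\]

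\textbf{Step 2: the gain is $O(1/k)$.} By \cref{ass:bounded_curvature}, $A_k\succeq b_k\lambda_{\min}I$ and $\|H_k\|\le\lambda_{\max}$. Hence
\[
\|A_k^{-1}N_kH_k\|\le \kappa\,\frac{N_{\max}}{k\,N_{\min}}=O(1/k),\qquad \kappa=\lambda_{\max}/\lambda_{\min}.
\]
Squaring and using $\|a-b\|^2\le 2\|a\|^2+2\|b\|^2$, we obtain
\[
\mathbb{E}\|\Delta\mu^k\|^2\le \frac{C}{k^2}\Bigl(\mathbb{E}\|\theta_\star^k\|^2+\mathbb{E}\|\theta_\star^{\text{JTL},k-1}\|^2\Bigr).
\]
By \cref{ass:growth-rate-app}, the first term in the bracket is $O(\psi(k))$.

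\textbf{Step 3: the centroid inherits the growth rate.} It remains to show $\mathbb{E}\|\theta_\star^{\text{JTL},k-1}\|^2=O(\psi(k))$, and I expect this to be the only delicate point. The centroid is a \emph{matrix}-weighted average of the task optima, so ordinary convexity does not apply directly. I would first bound the norm by the triangle inequality:
\[
\|\theta_\star^{\text{JTL},k-1}\|\le\|A_{k-1}^{-1}\|\sum_{j<k}N_j\|H_j\|\|\theta_\star^j\|\le\kappa\sum_{j<k}w_j^{k-1}\|\theta_\star^j\|.
\]
The right-hand side is $\kappa$ times a genuine convex combination with scalar weights. Jensen's inequality then gives $\mathbb{E}\|\theta_\star^{\text{JTL},k-1}\|^2\le\kappa^2\sum_jw_j^{k-1}\mathbb{E}\|\theta_\star^j\|^2$. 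Each term satisfies $\mathbb{E}\|\theta_\star^j\|^2\le C\psi(j)\le C\psi(k)$, since $\psi$ is monotone increasing. Combining this with Step 2 yields $\mathbb{E}\|\Delta\mu^k\|^2\in O(\psi(k)/k^2)$.

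If one wants the $\Theta(1)$ upper constants of $\mathbb{E}\|\theta_\star^j\|^2\in\Theta(\psi(j))$ to hold only for large $j$, the finitely many early tasks contribute an additive $O(1)\le O(\psi(k))$. This does not change the rate.
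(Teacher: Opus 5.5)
Your proof is correct and follows essentially the same route as the paper: the same one-step recursion $\Delta\mu^k=-\alpha_k\bar{H}_k^{-1}H_k(\theta_\star^k-\theta_\star^{\text{JTL},k-1})$ (your $A_k=b_k\bar{H}_k$ is just its unnormalised form), with the gain bounded by $\kappa\,\alpha_k=O(1/k)$. The only difference is the last step: the paper simply asserts that $\mathbb{E}\|\theta_\star^k-\theta_\star^{\text{JTL},k-1}\|^2$ scales like $\psi(k)$, whereas your triangle-inequality-plus-Jensen argument proves it, using only \cref{ass:bounded_curvature} and the monotonicity of $\psi$ and not the alignment assumption.
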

\emph{Implication:} The mean transient vanishes asymptotically as long as the environment grows strictly slower than a linear trajectory ($\psi(k) \ll k^2$).

\begin{proof}
    Let $N_j$ denote the number of training steps for task $j$, and $b_k = \sum_{j=1}^k N_j$ be the total training duration up to task $k$. The effective weight of task $j$ in the mixture at stage $k$ is $w_j^{k} = N_j/b_k$. 
    Let $\alpha_{k} = w_{k}^{k} := N_{k}/b_{k}$ be the relative contribution of the newest task. If task durations are finite and bounded, $\alpha_{k} \approx 1/k$ asymptotically.

    Recall the mixture optimum $\theta_\star^{\text{JTL},k} = (\bar{H}_k)^{-1}c_k$, where $c_k = \sum_{j=1}^k w_j^k H_j \theta_\star^j$.
    The recursive updates for the cumulative Hessian $\bar{H}_k$ and target vector $c_k$ are:
    \begin{align}
        \bar{H}_{k} &= (1 - \alpha_{k}) \bar{H}_{k-1} + \alpha_{k} H_{k}, \\
        c_{k} &= (1 - \alpha_{k}) c_{k-1} + \alpha_{k} H_{k} \theta_\star^{k}.
    \end{align}
    Substituting $c_{k-1} = \bar{H}_{k-1} \theta_\star^{\text{JTL}, k-1}$ into the second equation:
    \begin{equation}
        \bar{H}_k \theta_\star^{\text{JTL},k} = (1 - \alpha_{k}) \bar{H}_{k-1} \theta_\star^{\text{JTL}, k-1} + \alpha_{k} H_{k} \theta_\star^{k}.
    \end{equation}
    From the Hessian update, we write $(1 - \alpha_{k}) \bar{H}_{k-1} = \bar{H}_k - \alpha_{k} H_{k}$. Substituting this back yields:
    \begin{equation}
        \bar{H}_k \theta_\star^{\text{JTL},k} = (\bar{H}_k - \alpha_{k} H_{k}) \theta_\star^{\text{JTL}, k-1} + \alpha_{k} H_{k} \theta_\star^{k}.
    \end{equation}
    Multiplying by $\bar{H}_k^{-1}$ (which exists and commutes with $H_k$ by \cref{ass:alignment}):
    \begin{equation}
        \theta_\star^{\text{JTL},k} = \theta_\star^{\text{JTL}, k-1} + \alpha_{k} \bar{H}_k^{-1} H_{k} (\theta_\star^{k} - \theta_\star^{\text{JTL}, k-1}).
    \end{equation}
    The geometric drift is therefore:
    \begin{equation}
        \Delta \mu^k = \theta_\star^{\text{JTL}, k-1} - \theta_\star^{\text{JTL},k} = - \alpha_{k} (\bar{H}_k^{-1} H_{k}) (\theta_\star^{k} - \theta_\star^{\text{JTL}, k-1}).
    \end{equation}
    Taking the squared norm and the expectation over the task sequence:
    \begin{equation}
        \mathbb{E}[\|\Delta \mu^k\|^2] = \alpha_{k}^2 \, \mathbb{E}\left[ \| (\bar{H}_k^{-1} H_{k}) (\theta_\star^{k} - \theta_\star^{\text{JTL}, k-1}) \|^2 \right].
    \end{equation}
    Under \cref{ass:alignment}, $\bar{H}_k^{-1} H_k$ is a diagonal matrix of eigenvalue ratios. Let $\lambda_{\max}$ and $\lambda_{\min}$ bound the eigenvalues of all tasks. The spectral norm $\|\bar{H}_k^{-1} H_k\|_2$ is bounded by a constant $\kappa \approx \lambda_{\max}/\lambda_{\min}$. Thus:
    \begin{equation}
        \mathbb{E}[\|\Delta \mu^k\|^2] \le \alpha_{k}^2 \kappa^2 \mathbb{E}\left[ \| \theta_\star^{k} - \theta_\star^{\text{JTL}, k-1} \|^2 \right].
    \end{equation}
    The difference $\theta_\star^{k} - \theta_\star^{\text{JTL}, k-1}$ represents the deviation of the new task from the historical centroid. In expectation, the squared norm of this difference scales with the maximum variance of the task distribution, which is $\Theta(\psi(k))$. Because $\alpha_k \in \Theta(1/k)$, we conclude:
    \begin{equation}
        \mathbb{E}[\|\Delta{\mu}^k\|^2] \in \mathcal{O}\left( \frac{\psi(k)}{k^2} \right).
    \end{equation}
\end{proof}

\begin{lemma}[Geometric Covariance Drift]
\label{lemma:cov_drift}
The expected magnitude of the geometric covariance shift scales with the finite difference (or derivative) of the growth function:
\begin{equation}
    \mathbb{E}[|\mathrm{Tr}(\Delta \Sigma^k)|] \in \mathcal{O}(\psi(k) - \psi(k-1)) \approx \mathcal{O}(\psi'(k)).
\end{equation}
\end{lemma}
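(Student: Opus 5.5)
The plan is to work with the trace of the JTL stationary covariance, which the Lyapunov equation of \cref{lemma:PCV} gives in closed form. Under \cref{ass:alignment} and the trace identity used in the proof of \cref{prop:exact_instability} (with $M=I$), we have
\[
\mathrm{Tr}(\Sigma_\infty^{\text{JTL},k}) = \tfrac{\eta}{2}\mathrm{Tr}\big(\bar H_k^{-1} C_\infty^{\text{JTL},k}\big).
\]
By \cref{lemma:MT_GNC} this splits into two pieces:
\[
\mathrm{Tr}(\Sigma_\infty^{\text{JTL},k}) = \tfrac{\eta\sigma^2}{2B}\mathrm{Tr}\big(\bar H_k^{-1}\textstyle\sum_j w_j^k H_j\big) + \tfrac{\eta}{2B}\sum_j w_j^k \|g_j^k\|^2_{\bar H_k^{-1}},
\]
where $g_j^k = H_j(\theta_\star^{\text{JTL},k}-\theta_\star^j)$. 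The first (noise-floor) piece equals $\frac{\eta\sigma^2 d}{2B}$ for every $k$. It therefore cancels exactly in $\Delta\Sigma^k$. The covariance drift reduces to the difference of the Task Diversity terms
\[
V_k := \sum_{j\le k} w_j^k\,\|g_j^k\|^2_{\bar H_k^{-1}}, \qquad \mathrm{Tr}(\Delta\Sigma^k) = \tfrac{\eta}{2B}(V_{k-1}-V_k).
\]

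Next I rewrite $V_k$ as a weighted spread of the task optima around the centroid. By \cref{ass:bounded_curvature}, each term is sandwiched between $\lambda_{\min}^2/\lambda_{\max}$ and $\lambda_{\max}^2/\lambda_{\min}$ times $\|\theta_\star^j-\theta_\star^{\text{JTL},k}\|^2$. This is the step that needs most care. I will obtain the difference bound directly, using the recursive structure from the proof of \cref{lemma:mean_drift}, rather than bounding the two terms separately (that would only yield $\mathcal{O}(\psi(k))$). Write $w_j^k=(1-\alpha_k)w_j^{k-1}$ for $j<k$ and $\bar H_k=(1-\alpha_k)\bar H_{k-1}+\alpha_k H_k$. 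Then $V_k-V_{k-1}$ decomposes into three contributions:
\begin{itemize}
\item the new-task term $\alpha_k\|g_k^k\|^2_{\bar H_k^{-1}}$;
\item the reweighting term $-\alpha_k V_{k-1}$;
\item the perturbation of old terms from moving the centroid by $\Delta\mu^k$ and the metric by $\bar H_k^{-1}-\bar H_{k-1}^{-1}=\mathcal{O}(\alpha_k)$.
\end{itemize}
The first two together are $\alpha_k(\|g_k^k\|^2 - V_{k-1})$. In expectation this is $\frac1k\big(\Theta(\psi(k))-\text{average spread}\big)$.

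The main obstacle is showing this last difference is $\mathcal{O}(k\,(\psi(k)-\psi(k-1)))$ and not merely $\mathcal{O}(\psi(k))$. I would use a telescoping/Cesàro argument. The expected spread $\mathbb{E}[V_{k-1}]$ is an average over $j<k$ of quantities of order $\psi(j)$ plus centroid terms. For a monotone regularly varying $\psi$ (the polynomial regimes of \cref{tab:drift_scaling}), the gap between the newest value $\psi(k)$ and the running average $\frac1k\sum_{j<k}\psi(j)$ is $\Theta(k\psi'(k))$. Multiplied by $\alpha_k\approx 1/k$, this gives $\mathcal{O}(\psi'(k))$. If a fully general $\psi$ resists this, I will state the bound for the three regimes and verify each explicitly:
\begin{itemize}
\item bounded: the gap is $\mathcal{O}(1)$, giving $\mathcal{O}(1/k)$;
\item diffusive: the gap is $\Theta(k)$, giving $\mathcal{O}(1)$;
\item linear: the gap is $\Theta(k^2)$, giving $\mathcal{O}(k)$.
\end{itemize}
These match \cref{tab:drift_scaling}.

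Finally, I bound the perturbation contribution. Moving the centroid changes each old term by at most
\[
2\,|\langle \theta_\star^j-\theta_\star^{\text{JTL},k-1},\Delta\mu^k\rangle|\,\kappa + \|\Delta\mu^k\|^2\kappa.
\]
By Cauchy–Schwarz together with \cref{lemma:mean_drift}, its expectation is $\mathcal{O}\big(\sqrt{\psi(k)}\sqrt{\psi(k)/k^2}\big)=\mathcal{O}(\psi(k)/k)$. The metric change contributes $\mathcal{O}(\alpha_k)V_{k-1}=\mathcal{O}(\psi(k)/k)$. For polynomial $\psi$, $\psi(k)/k=\mathcal{O}(\psi'(k))$, so every contribution is $\mathcal{O}(\psi(k)-\psi(k-1))$. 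Taking absolute values and expectations then gives the claim.
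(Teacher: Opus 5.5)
Your proof is correct and reaches the same three regime rates as the paper, but by a genuinely different route. The paper's proof is a scaling argument. It notes that $\Sigma_\infty^k$ is proportional to $C_\infty^k$, and that $C_\infty^k$ is dominated by the Task Diversity with $\mathbb{E}[\mathcal{V}_{\text{div}}^k]\in\Theta(\psi(k))$. It then differences this asymptotic to get $\psi(k)-\psi(k-1)$ and reads off the regimes. That differencing step is asserted rather than derived. As you observe, $\Theta(\psi)$ bounds on consecutive terms only give $\mathcal{O}(\psi(k))$ for their difference. The step also silently replaces $|\mathbb{E}[\cdot]|$ by $\mathbb{E}[|\cdot|]$, and this matters: for i.i.d.\ bounded tasks the signed drift is $\mathcal{O}(1/k^2)$, while its expected magnitude is generically of order $1/k$.

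You do two things differently:
\begin{itemize}
\item You make the noise-floor cancellation exact via the trace identity. This uses the uniform-noise assumption of \cref{prop:exact_instability}; without it the floor is a weighted average that moves by $\mathcal{O}(\alpha_k)$ per task, which is harmless.
\item You extract the decay from the one-step recursion $w_j^k=(1-\alpha_k)w_j^{k-1}$. The newest task's weight supplies an explicit factor $\alpha_k\approx 1/k$, and \cref{lemma:mean_drift} plus Cauchy--Schwarz control the centroid and metric motion at $\mathcal{O}(\psi(k)/k)$.
\end{itemize}
The paper's version buys brevity and intuition. Yours exposes the mechanism and actually controls the absolute value, giving $\mathbb{E}[|\mathrm{Tr}(\Delta\Sigma^k)|]=\mathcal{O}(\psi(k)/k)$. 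This matches the finite-difference rate exactly for $\psi(k)=k^p$ with $p>0$, so your sentence ``$\psi(k)/k=\mathcal{O}(\psi'(k))$ for polynomial $\psi$'' must exclude $p=0$. In the bounded case it gives the $\mathcal{O}(1/k)$ entry of \cref{tab:drift_scaling}. There the literal bound $\mathcal{O}(\psi(k)-\psi(k-1))$ degenerates, and neither proof delivers more than $\mathcal{O}(1/k)$.

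One simplification: drop the Ces\`aro ``gap'' step, which is both unnecessary and not justified as written. Since the claim concerns $\mathbb{E}[|\cdot|]$, bound the new-task and reweighting contributions by $\alpha_k\bigl(\mathbb{E}\|g_k^k\|^2_{\bar H_k^{-1}}+\mathbb{E}[V_{k-1}]\bigr)=\mathcal{O}(\psi(k)/k)$. For $p>0$ the orders $\mathcal{O}(\psi(k))$ and $\mathcal{O}\bigl(k(\psi(k)-\psi(k-1))\bigr)$ coincide, so the ``main obstacle'' you single out is not one. The gap argument itself only controls a signed expectation. It also identifies $\mathbb{E}\|g_k^k\|^2$ with $\psi(k)$ and $\mathbb{E}[V_{k-1}]$ with a running average of $\psi(j)$. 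The Growth Rate hypothesis does not license that identification: it holds only up to constants and concerns distances to the origin, not to the centroid.
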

\emph{Implication:} The covariance transient vanishes only if the environment grows sub-linearly.

\begin{proof}
    By \cref{lemma:PCV}, the stationary covariance $\Sigma_\infty^k$ scales linearly with the effective noise covariance $C_\infty^k$: 
    \[
        \bar{H}_k \Sigma^k_\infty + \Sigma^k_\infty \bar{H}_k = \frac{\eta}{B} C^k_\infty.
    \]
    For a JTL objective, the noise covariance is dominated by the task diversity term (\cref{lemma:MT_GNC}): 
    \[
        \mathbb{E}[\mathcal{V}_{\text{div}}^k] = \sum_{j=1}^k w_j^k \,\,\mathbb{E}[\|\nabla\mathcal{L}^j(\theta_\star^{\text{JTL},k})\|^2] = \sum_{j=1}^k w_j^k\,\,\mathbb{E}[\|\theta_\star^j - \theta_\star^{\text{JTL},k}\|_{H_j^2}^2].
    \]
    Because the Hessian spectrum is bounded (\cref{ass:bounded_curvature}), the expected task diversity scales directly with the spatial variance of the tasks: $\mathbb{E}[\mathcal{V}_{\text{div}}^k] \in \Theta(\psi(k))$.
    
    The covariance drift is the difference $\Delta \Sigma^k = \Sigma_\infty^{k-1} - \Sigma_\infty^{k}$. Because $\Sigma_\infty^k$ is strictly proportional to $C_\infty^k$, the drift grows asymptotically with the derivative of the growth function:  
    \[
        \mathbb{E}[|\mathrm{Tr}(\Delta \Sigma^k)|] \propto \psi(k) - \psi(k-1).
    \]
    Note that the sign of $\mathrm{Tr}(\Delta \Sigma^k)$ might be negative, indicating the variance \emph{grows} due to the addition of a new task. However, to analyze its asymptotic magnitude, we take the absolute value. 
    
    For bounded growth ($\psi(k) = 1$), the variance converges, so the difference is $\mathcal{O}(1/k)$. For diffusive sequences ($\psi(k) = k$), the variance grows linearly, making the difference constant ($\mathcal{O}(1)$). For linear growth ($\psi(k) = k^2$), the variance grows quadratically, so the difference grows linearly ($\mathcal{O}(k)$).
\end{proof}

\begin{lemma}[Expected Asymptotic Instability]
    \label{lemma:asymptotic_instability}
    The expected Instability cost $\mathbb{E}[\mathcal{I}^k]$, which comprises the approximation bias and the stationary gradient noise penalty, scales linearly with the environment Growth Rate:
    \begin{equation}
        \mathbb{E}[\mathcal{I}^k] \in \Theta(\psi(k)).
    \end{equation}
\end{lemma}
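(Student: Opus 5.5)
The plan is to start from the exact decomposition of \cref{prop:exact_instability}, which writes $\mathcal{I}^k$ as the approximation bias $\tfrac12\|\theta_\star^{\text{JTL},k}-\theta_\star^k\|_{H_k}^2$ plus the task-diversity penalty $\tfrac{\eta}{4B}\sum_j w_j^k\|\nabla\mathcal{L}^j(\theta_\star^{\text{JTL},k})\|^2_{\mathcal{R}^k}$. Taking expectations over the task-generation process, I would bound each term above and below by $\Theta(\psi(k))$ separately. Both terms are nonnegative, so the lower bound only needs one of them to be $\Omega(\psi(k))$.

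\textbf{Step 1 (norm equivalence).} By \cref{ass:bounded_curvature,ass:alignment}, $H_k$, $H_j^2$ and $\mathcal{R}^k$ all have spectra in fixed intervals $[c,C]$ with $c>0$ depending only on $\lambda_{\min},\lambda_{\max}$. This reduces both terms to Euclidean quantities up to constants:
\[
\text{bias}\asymp\|\theta_\star^{\text{JTL},k}-\theta_\star^k\|^2,\qquad
\text{diversity}\asymp\sum_j w_j^k\|\theta_\star^j-\theta_\star^{\text{JTL},k}\|^2 .
\]
Here I use $\nabla\mathcal{L}^j(\theta_\star^{\text{JTL},k})=H_j(\theta_\star^{\text{JTL},k}-\theta_\star^j)$.

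\textbf{Step 2 (upper bound).} I would write $\theta_\star^{\text{JTL},k}=\bar H_k^{-1}\sum_j w_j^kH_j\theta_\star^j$, a matrix-weighted average of past optima whose weights $\bar H_k^{-1}w_j^kH_j$ are bounded in operator norm by $\kappa w_j^k$. Jensen/Cauchy–Schwarz then gives
\[
\mathbb{E}\|\theta_\star^{\text{JTL},k}\|^2\lesssim\sum_j w_j^k\,\mathbb{E}\|\theta_\star^j\|^2\lesssim\psi(k),
\]
using monotonicity of $\psi$. Hence $\mathbb{E}\|\theta_\star^{\text{JTL},k}-\theta_\star^k\|^2\le 2(\cdots)\in\mathcal{O}(\psi(k))$, and the same argument bounds every $\|\theta_\star^j-\theta_\star^{\text{JTL},k}\|^2$, giving diversity $\in\mathcal{O}(\psi(k))$.

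\textbf{Step 3 (lower bound).} This is the main obstacle, because $\mathbb{E}\|\theta_\star^k\|^2\in\Theta(\psi(k))$ alone does not prevent the centroid from coinciding with $\theta_\star^k$. One example is a deterministic constant sequence far from the origin, which is presumably excluded only implicitly. The route I would try first is the diversity term. It is a weighted spread of the optima around the centroid, so it is at least $c\sum_j w_j^k\|\theta_\star^j-\bar\theta\|^2$, where $\bar\theta$ is the best Euclidean center; with equal durations this equals the empirical variance of $\{\theta_\star^j\}_{j\le k}$. I would then show this variance is $\Omega(\psi(k))$ for the three regimes of interest, and state it as an explicit nondegeneracy hypothesis otherwise:
\begin{itemize}
\item in the diffusive regime, the expected empirical variance of a random walk is $\asymp k\,\mathrm{Tr}\Sigma_\Delta$;
\item under linear drift, the points $\theta_\star^j=\theta_0+jv$ have variance $\asymp k^2\|v\|^2$;
\item in the bounded regime, i.i.d.-like nondegenerate tasks give variance $\Theta(1)$.
\end{itemize}
In each case the identification of $\mathbb{E}[\|\theta_\star^k\|^2]$ with the spread requires interpreting $\psi$ as measuring dispersion relative to the environment's origin, i.e.\ taking $\theta_\star^1$ or the mean as the origin. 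I would make this interpretation explicit, in the same spirit as the proof of \cref{lemma:mean_drift}, which already identifies $\mathbb{E}\|\theta_\star^k-\theta_\star^{\text{JTL},k-1}\|^2$ with $\Theta(\psi(k))$.

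The bias term supplies a second lower bound that does not depend on $\eta/B$. In the diffusive and linear cases, $\theta_\star^k$ sits at the endpoint of the trajectory and is therefore at distance $\Omega(\sqrt{\psi(k)})$ from the centroid. Combining the two bounds gives $\mathbb{E}[\mathcal{I}^k]\in\Theta(\psi(k))$.
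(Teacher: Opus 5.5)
Your proof has the same skeleton as the paper's. You take expectations in the decomposition of \cref{prop:exact_instability}, then use bounded, aligned curvature to reduce the approximation bias and the task-diversity penalty to the Euclidean spread of the optima about the JTL centroid. The difference is that you justify both halves of the $\Theta$, whereas the paper only asserts them. For the bias, the paper says it ``scales exactly with the variance of the task distribution, which is defined as $\Theta(\psi(k))$''. For the variance term it defers to \cref{lemma:cov_drift}, which makes the same identification. Both steps conflate dispersion about the centroid with distance from the origin, and only the latter is what the Growth Rate definition controls.

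Your bound $\|\theta_\star^{\text{JTL},k}\|^2\le\kappa^2\sum_j w_j^k\|\theta_\star^j\|^2$ gives the $\mathcal{O}(\psi(k))$ half from the definition and the monotonicity of $\psi$ alone. Your diagnosis of the $\Omega(\psi(k))$ half is also correct: the constant sequence gives $\mathcal{I}^k=0$ with $\psi\equiv 1$. Unbounded growth does not rescue the claim either. Take equal durations and a shared Hessian, and set $\theta_\star^j=\sqrt{1+\log j}\,u$ for a fixed unit vector $u$. Then $\psi(k)=1+\log k$, yet both terms are $\mathcal{O}(1/\log k)$. This follows from $\bigl(\sqrt{1+\log k}-\sqrt{1+\log j}\bigr)^2\le \log^2(k/j)/(1+\log k)$ together with $\frac1k\sum_{j\le k}\log^2(k/j)\le 2$.

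So what you establish is the defensible form of the lemma: $\mathcal{O}(\psi(k))$ unconditionally, and $\Theta(\psi(k))$ under an explicit dispersion hypothesis. You verify that hypothesis in the bounded, diffusive and linear-drift regimes, consistent with the explicit $\Theta(k)$ computation in \cref{eq:drift_instability}, and these are the regimes the paper actually analyzes.
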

\emph{Implication:} In expectation over the task sequence, the JTL agent suffers from unbounded performance degradation if the environment grows indefinitely (i.e., Diffusive or Linear growth), even if it converges perfectly to its stationary distribution.

\begin{proof}
    Taking the expectation over the task sequence of the exact Instability cost derived in \cref{prop:exact_instability} yields:
    \[
        \mathbb{E}[\mathcal{I}^k] \approx \underbrace{\mathbb{E}\left[\mathcal{L}^k(\theta_\star^{\text{JTL},k}) - \mathcal{L}^k(\theta_\star^{k})\right]}_{\text{Bias}} + \underbrace{\frac{\eta}{4B}\sum_{j=1}^k w_j^k \, \mathbb{E}\left[\|\nabla \mathcal{L}^j(\theta_\star^{\text{JTL},k})\|_{\mathcal{R}^k}^2\right]}_{\text{Variance}}.
    \]
    
    \textbf{Bias Term.} 
    For a quadratic loss, the bias term simplifies to the expected weighted squared distance between the target task optimum and the JTL centroid:
    \[
        \mathbb{E}\left[\mathcal{L}^k(\theta_\star^{\text{JTL},k}) - \mathcal{L}^k(\theta_\star^{k})\right] = \frac{1}{2} \mathbb{E}\left[\|\theta_\star^{\text{JTL},k} - \theta_\star^k\|_{H_k}^2\right].
    \]
    Because $\theta_\star^{\text{JTL},k}$ is a convex combination of tasks $1 \dots k$, its expected squared distance to the new task $\theta_\star^k$ scales exactly with the variance of the task distribution, which is defined as $\Theta(\psi(k))$.

    \textbf{Variance Term.}
    The expected gradient noise contribution depends on task diversity. Evaluating the gradients at the centroid yields $\nabla \mathcal{L}^j(\theta_\star^{\text{JTL},k}) = H_j (\theta_\star^{\text{JTL},k} - \theta_\star^j)$.
    Substituting this into the variance term:
    \[
        \text{Variance} \propto \sum_{j=1}^k w_j^k\,\, \mathbb{E}\left[\|H_j (\theta_\star^{\text{JTL},k} - \theta_\star^j)\|_{\mathcal{R}^k}^2\right].
    \]
    Assuming bounded Hessians (\cref{ass:bounded_curvature}), this term is proportional to the expected variance of the task optima around their mean. As established in \cref{lemma:cov_drift}, this spatial variance scales exactly with the environment growth rate $\Theta(\psi(k))$.

    Since both the expected bias and expected variance scale with $\psi(k)$, the total expected instability scales as $\Theta(\psi(k))$.
\end{proof}

\subsubsection{Residual Errors}

The transient cost of learning task $k$ is governed by the displacement of the agent's moments relative to the \emph{new} stationary target. 
Let $\mu_{N_{k-1}}^{k-1}$ and $\Sigma_{N_{k-1}}^{k-1}$ denote the agent's actual mean and covariance at the end of task $k-1$ (where the expectations inside $\mu$ and $\Sigma$ are strictly over the parameter trajectory). The initialization errors for task $k$ are defined as:
\begin{align}
    \tilde{\mu}_0^{\text{JTL}, k} &:= \mu_{N_{k-1}}^{k-1} - \theta_\star^{\text{JTL}, k}, \qquad \tilde{\Sigma}_0^{\text{JTL}, k} := \Sigma_{N_{k-1}}^{k-1} - \Sigma_\infty^{\text{JTL}, k}.
\end{align}
We decompose these errors into the Geometric Shift defined earlier, and a \textbf{Residual Error} stemming from the incomplete convergence of the previous task:
\begin{align}
    \tilde{\mu}_0^{\text{JTL}, k} &= \Delta \mu^k + \underbrace{(\mu_{N_{k-1}}^{k-1} - \theta_\star^{\text{JTL}, k-1})}_{\text{Residual } r^{k-1}_\mu}, \qquad 
    \tilde{\Sigma}_0^{\text{JTL}, k} = \Delta \Sigma^k + \underbrace{(\Sigma_{N_{k-1}}^{k-1} - \Sigma_\infty^{\text{JTL}, k-1})}_{\text{Residual } r^{k-1}_\Sigma}.
\end{align}
The residual is effectively a \emph{contracted} version of the previous task's initialization error. Since SGD on a quadratic objective is a contraction mapping for appropriate step sizes, let $\rho_\mu^k$ and $\rho_\Sigma^k$ denote the contraction rates for the mean and covariance errors during task $k$. Then:
\[
    \|r^{k-1}_\mu\| \le \rho_\mu^{k-1} \cdot \|\tilde{\mu}_0^{\text{JTL}, k-1}\|, \qquad
    \|r^{k-1}_\Sigma\| \le \rho_\Sigma^{k-1} \cdot \|\tilde{\Sigma}_0^{\text{JTL}, k-1}\|.
\]
The following lemma establishes that these contraction factors are strictly less than 1.

\begin{lemma}[Contraction of Residuals]
    \label{lemma:contraction}
    For a quadratic task with Hessian $H$ and step-size $\eta < 2/\lambda_{\max}(H)$, the parameter distribution errors contract at each step. After $N_k$ steps:
    \begin{equation}
        \rho_\mu^k = (1 - \eta \lambda_{\min}(H))^{N_k} < 1, \qquad \rho_\Sigma^k = (1 - \eta \lambda_{\min}(H))^{2 N_k} < 1.
    \end{equation}
\end{lemma}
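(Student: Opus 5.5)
Within task $k$ the dynamics are time-homogeneous, so the plan is to reuse the closed-form propagators from \cref{lemma:mean_convergence} and \cref{lemma:PCV} and bound their spectral norms. Let $H$ be the Hessian driving the dynamics ($H_k$ for ITL, $H_{\text{dyn}}$ for JTL). Under \cref{ass:bounded_curvature}, both satisfy $\lambda_{\min} I \preceq H \preceq \lambda_{\max} I$.

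\textbf{Mean.} Subtracting the fixed point from the mean recursion gives $\mu_n - \mu_\infty = (I-\eta H)^n(\mu_0-\mu_\infty)$. Since $H$ is symmetric, $I-\eta H$ is symmetric with eigenvalues $1-\eta\lambda_i$, so its spectral norm is $\max_i |1-\eta\lambda_i|$. This gives
\[
\|r_\mu\| \le \max_i|1-\eta\lambda_i|^{N_k}\,\|\tilde\mu_0\|.
\]
If $\eta \le 1/\lambda_{\max}$, every factor $1-\eta\lambda_i$ is nonnegative. The maximum is then attained at $\lambda_{\min}$, which yields exactly $\rho_\mu^k=(1-\eta\lambda_{\min})^{N_k}$, and this is $<1$ because $\lambda_{\min}>0$.

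\textbf{Covariance.} Subtract the stationary covariance from the recursion of \cref{lemma:PCV}. Following Step 1 of the proof of \cref{theo:transient_potential}, drop the $\mathcal{O}(\eta^2)$ source term $C_n - C_\infty$, which vanishes exactly for ITL. The deviation then obeys $\tilde\Sigma_{n+1}=(I-\eta H)\tilde\Sigma_n(I-\eta H)$, hence $\tilde\Sigma_{N}=(I-\eta H)^{N}\tilde\Sigma_0(I-\eta H)^{N}$. Submultiplicativity of the operator norm (equally of the Frobenius norm, since $\|ABC\|_F\le\|A\|_2\|B\|_F\|C\|_2$) gives $\|r_\Sigma\|\le \|I-\eta H\|_2^{2N_k}\|\tilde\Sigma_0\|$. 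This is the squared rate, $\rho_\Sigma^k=(1-\eta\lambda_{\min})^{2N_k}$.

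\textbf{Main obstacle: the step-size range.} For $1/\lambda_{\max} < \eta < 2/\lambda_{\max}$, the largest factor may be $|1-\eta\lambda_{\max}| = \eta\lambda_{\max}-1$ rather than $1-\eta\lambda_{\min}$. The contraction factor should then really be $\rho=\max(|1-\eta\lambda_{\min}|,|1-\eta\lambda_{\max}|)$. This quantity is still $<1$ exactly when $0<\eta<2/\lambda_{\max}$, so strict contraction holds on the whole stated range, and the statement's expression is the correct one in the small step-size regime used throughout the paper. I would either state the result with $\rho$ or restrict to $\eta\le 1/\lambda_{\max}$.

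A secondary point is that the covariance bound relies on neglecting the JTL noise-deviation source term. I would remark that keeping it only adds an $\mathcal{O}(\eta^2)$ forcing that decays with $\|\tilde\mu_n\|^2$, which was just shown to contract.
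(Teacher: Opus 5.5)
Your proposal is correct and follows essentially the same route as the paper: you unroll the homogeneous propagators, $(I-\eta H)^{N_k}\tilde\mu_0$ for the mean and $(I-\eta H)^{N_k}\tilde\Sigma_0(I-\eta H)^{N_k}$ for the covariance, and bound them by $\|I-\eta H\|_2^{N_k}$ and its square. The paper settles your step-size caveat the same way, by explicitly assuming a non-oscillatory step $\eta<1/\lambda_{\max}(H)$ so that the slowest mode $\lambda_{\min}(H)$ sets the rate, and it likewise uses the homogeneous covariance recursion without the $C_n-C_\infty$ forcing term.
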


\begin{proof}
    The dynamics of the mean error vector $\tilde{\mu}_n = \mu_n - \theta_\star$ under SGD with a quadratic objective follow the recursion $\tilde{\mu}_{n+1} = (I - \eta H)\tilde{\mu}_n$. Unrolling this for $N_k$ steps yields:
    \begin{equation}
        \tilde{\mu}_{N_k} = (I - \eta H)^{N_k} \tilde{\mu}_0.
    \end{equation}
    Taking the Euclidean norm, we have $\|\tilde{\mu}_{N_k}\| \le \|(I - \eta H)^{N_k}\|_2 \|\tilde{\mu}_0\|$. The spectral norm of the propagator is determined by the eigenvalues of $H$. For $\eta < 2/\lambda_{\max}(H)$, the eigenvalues of $(I - \eta H)$ lie strictly within $(-1, 1)$. Assuming a non-oscillatory step size ($\eta < 1/\lambda_{\max}(H)$), the contraction is bottlenecked by the slowest decaying mode, corresponding to $\lambda_{\min}(H)$. Thus, the residual mean error is bounded by:
    \begin{equation}
        \|r_\mu^k\| \le (1 - \eta \lambda_{\min}(H))^{N_k} \|\tilde{\mu}_0^{\text{JTL}, k}\|.
    \end{equation}

    Similarly, the covariance deviation $\tilde{\Sigma}_n = \Sigma_n - \Sigma_\infty$ evolves as $\tilde{\Sigma}_{n+1} = (I - \eta H) \tilde{\Sigma}_n (I - \eta H)$. After $N_k$ steps:
    \begin{equation}
        \tilde{\Sigma}_{N_k} = (I - \eta H)^{N_k} \tilde{\Sigma}_0 (I - \eta H)^{N_k}.
    \end{equation}
    Taking the spectral norm (or the trace) leads to a contraction factor governed by the square of the propagator's eigenvalues:
    \begin{equation}
        \|r_\Sigma^k\| \le \|(I - \eta H)^{N_k}\|_2^2 \|\tilde{\Sigma}_0\| = (1 - \eta \lambda_{\min}(H))^{2 N_k} \|\tilde{\Sigma}_0\|.
    \end{equation}
    This confirms that both first and second-moment errors contract exponentially with the task duration $N_k$.
\end{proof}

\vspace{0.5cm}
The decomposition into geometric shift and residuals leads to a recursive definition of the initialization error $\tilde{\mu}_0^k$. The following analytical lemma bounds this recursion. 

\begin{lemma}[Asymptotic Dominance of Stable Recursions]
\label{lemma:recursion_dominance}
Let $\{x_k\}_{k \ge 0}$ be a sequence of non-negative real numbers satisfying the linear recurrence inequality:
\begin{equation}
    x_k \le \rho x_{k-1} + u_k, \quad \text{with } 0 \le \rho < 1.
\end{equation}
If the input sequence $u_k$ scales asymptotically as a polynomial $u_k \in \Theta(k^p)$ for any $p \in \mathbb{R}$, then the sequence $x_k$ is asymptotically dominated by the input:
\begin{equation}
    x_k \in \Theta(k^p).
\end{equation}
More precisely, $\limsup_{k \to \infty} \frac{x_k}{u_k} \le \frac{1}{1-\rho}$.
\end{lemma}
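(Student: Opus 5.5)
The plan is to unroll the recurrence and compare the resulting discounted sum with $u_k$ term by term. Iterating $x_k \le \rho x_{k-1} + u_k$ from $k$ down to $0$, and using $x_j\ge 0$ and $\rho\ge0$ so that the inequalities propagate, gives
\[
x_k \;\le\; \rho^k x_0 + \sum_{i=1}^{k} \rho^{k-i} u_i .
\]
Dividing by $u_k$, I will show separately that $\rho^k x_0/u_k \to 0$ and that $\limsup_k \frac{1}{u_k}\sum_{i\le k}\rho^{k-i}u_i \le \frac{1}{1-\rho}$. The first claim is immediate, because $u_k \ge c\,k^p$ for large $k$ and $\rho^k k^{-p}\to 0$ for every real $p$: exponential decay beats any polynomial.

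For the sum, I fix a cutoff $M$ and split the indices into a \emph{head} $k-M\le i\le k$ and a \emph{tail} $i<k-M$.

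\textbf{Tail.} I use only the two-sided bound $c\,i^p\le u_i\le C\,i^p$ for $i\ge i_0$; the finitely many $i<i_0$ contribute $O(\rho^{k})$, which is negligible by the argument above. If $p\ge 0$, then $i^p\le k^p$, so the tail is at most $(C/c)\,\rho^{M}/(1-\rho)$ relative to $u_k$. If $p<0$, I bound $i^p/k^p=(k/i)^{|p|}$. Indices with $i\ge k/2$ give at most $2^{|p|}\rho^M/(1-\rho)$. Indices with $i<k/2$ give at most $k^{|p|}\,k\,\rho^{k/2}\to0$. In both cases the tail ratio is $O(\rho^M)$ uniformly in large $k$, and so vanishes as $M\to\infty$.

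\textbf{Head.} Here I need $u_{k-m}/u_k\to 1$ for each fixed $m$. Given this, the head ratio tends to $\sum_{m=0}^{M}\rho^m\le \frac1{1-\rho}$. Letting $M\to\infty$ then yields $\limsup_k x_k/u_k\le\frac1{1-\rho}$, and hence $x_k\in O(k^p)$.

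The main obstacle is that $\Theta(k^p)$ alone does not force $u_{k-1}/u_k\to1$. I would therefore interpret the hypothesis as $u_k\sim a\,k^p$ (regular polynomial growth), which is the situation in all applications in the paper. Without it, the same split still gives $\limsup_k x_k/u_k\le \frac{C}{c}\cdot\frac{\max(1,2^{|p|})}{1-\rho}$, which already suffices for the $O(k^p)$ conclusion.

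Two further caveats concern the stated $\Theta$. First, the lower bound $x_k\in\Omega(k^p)$ does not follow from an upper-bound recurrence; for example $x_k\equiv0$ satisfies it. I would state the lemma as $O(k^p)$, and add $\Omega(k^p)$ only under the additional hypothesis $x_k\ge u_k$ (or equality in the recursion), in which case it is immediate. Second, the uniformity of the tail estimate for $p<0$ is the step that needs the most care.
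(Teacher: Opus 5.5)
Your proof is correct and follows essentially the same route as the paper: unroll the recursion, divide by $u_k$, use $u_{k-m}/u_k\to 1$ for each fixed lag, and interchange limit and sum. The paper does the interchange by invoking dominated convergence for series without exhibiting a dominating sequence, and your head/tail split with cutoff $M$ is a hand-made version of that step that also makes the uniformity for $p<0$ explicit; the paper likewise silently assumes $u_k\approx Ck^p$ rather than mere $\Theta(k^p)$ and only obtains the upper bound, so your caveat that the $\Omega(k^p)$ half does not follow (e.g.\ $x_k\equiv 0$) is well taken.
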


\begin{proof}
    Unrolling the recurrence yields the summation (variation of constants) formula:
    \begin{equation}
        x_k \le \rho^k x_0 + \sum_{j=0}^{k-1} \rho^j u_{k-j}.
    \end{equation}
    The term $\rho^k x_0$ decays exponentially and is negligible relative to the polynomial $u_k$. We focus on the convolution sum. Because $u_k \in \Theta(k^p)$, there exists a constant $C$ such that $u_k \approx C k^p$ for large $k$. We examine the ratio of the sum to the input $u_k$:
    \begin{equation}
        \frac{\sum_{j=0}^{k-1} \rho^j u_{k-j}}{u_k} = \sum_{j=0}^{k-1} \rho^j \left( \frac{u_{k-j}}{u_k} \right).
    \end{equation}
    For any fixed $j$, as $k \to \infty$, the ratio of the polynomial terms $(k-j)^p / k^p$ approaches $1$. Because $\rho < 1$, the weights $\rho^j$ decay exponentially, allowing us to apply the Dominated Convergence Theorem for series and bring the limit inside the sum:
    \begin{equation}
        \lim_{k \to \infty} \sum_{j=0}^{k-1} \rho^j \left( \frac{u_{k-j}}{u_k} \right) = \sum_{j=0}^{\infty} \rho^j \cdot 1 = \frac{1}{1-\rho}.
    \end{equation}
    Thus, $x_k$ behaves asymptotically like $\frac{1}{1-\rho} u_k$, preserving the polynomial order $\Theta(k^p)$.
\end{proof}

\vspace{0.5cm}

\subsubsection{Asymptotic Transfer}
\label{sec:exact_transfer}

\begin{theorem}[Asymptotic Condition for Positive Transfer Efficiency]
    \label{theo:positive_transfer}
    The Expected Transfer Efficiency over the task sequence is positive ($\mathbb{E}[\mathrm{TE}^k] > 0$) if the transient cost avoided by the JTL agent's proximity to the target centroid dominates the expected instability penalties introduced by task conflict and bias. Specifically, for large $k$ and \emph{polynomial growth rate} $\psi(k)$, a sufficient condition is:
    \[
        \tau N \cdot \mathbb{E}[\mathcal{I}^k] < C^k_{\text{ITL}} \cdot \mathbb{E}[\|\tilde \mu_0^{\text{ITL}}\|^2]
    \]
\end{theorem}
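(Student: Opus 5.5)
The plan starts from the exact decomposition of \cref{lemma:TE-decomposition}, taken in expectation over the task sequence:
\[
\mathbb{E}[\mathrm{TE}^k] = -\mathbb{E}[\mathcal{I}^k] + \mathbb{E}[\delta^k(\mathcal{A}_{\text{ITL}})] - \mathbb{E}[\delta^k(\mathcal{A}_{\text{JTL}})].
\]
Each transient term is then rewritten with the Transient Potential Law (\cref{theo:transient_potential}), multiplying through by $\tau N$.

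\textbf{ITL transient.} Here $\mathcal{R}=I$ and the agent resets to $Q_{\text{init}}$. This gives
\[
\tau N\,\delta^k_{\text{ITL}} = \|\tilde\mu_0^{\text{ITL}}\|^2 - \|\tilde\mu_N^{\text{ITL}}\|^2 + \mathrm{Tr}(\tilde\Sigma_0^{\text{ITL}}) - \mathrm{Tr}(\tilde\Sigma_N^{\text{ITL}}).
\]
By \cref{lemma:contraction}, $\|\tilde\mu_N\|^2 \le \rho_\mu^2\|\tilde\mu_0\|^2$ with $\rho_\mu<1$. The covariance deviation from a fixed init is bounded independently of $k$. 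Since $\tilde\mu_0^{\text{ITL}} = \mu_{\text{init}}-\theta_\star^k$ has second moment $\Theta(\psi(k))$, the covariance terms are lower order for growing $\psi$. I therefore get $\tau N\,\mathbb{E}[\delta^k_{\text{ITL}}] \ge C^k_{\text{ITL}}\,\mathbb{E}[\|\tilde\mu_0^{\text{ITL}}\|^2]$ for large $k$, for some $C^k_{\text{ITL}}$ (e.g.\ $(1-\rho_\mu^2)$ minus lower-order corrections). This fixes the constant in the statement.

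\textbf{JTL transient.} I will show this term is lower order, so that $\tau N\,\mathbb{E}[\delta^k_{\text{JTL}}] = o(\mathbb{E}[\|\tilde\mu_0^{\text{ITL}}\|^2])$ or at least is absorbable into $C^k_{\text{ITL}}$.
\begin{itemize}
\item By the potential law with $\|\mathcal{R}\|\le\kappa$ (\cref{ass:bounded_curvature}), $\tau N\,\delta^k_{\text{JTL}} \le \kappa\,(\|\tilde\mu_0^{\text{JTL},k}\|^2 + |\mathrm{Tr}\,\tilde\Sigma_0^{\text{JTL},k}|)$.
\item Split the initialization error into geometric shift plus residual: $\tilde\mu_0^{\text{JTL},k}=\Delta\mu^k + r_\mu^{k-1}$ and $\tilde\Sigma_0^{\text{JTL},k}=\Delta\Sigma^k + r_\Sigma^{k-1}$.
\item Use \cref{lemma:contraction} to get the recursion $x_k \le \rho x_{k-1} + u_k$, with $x_k$ the expected squared mean error (after Young's inequality, with $\rho$ replaced by $(1+\epsilon)\rho^2<1$) and $u_k$ the drift bound.
\item Apply \cref{lemma:recursion_dominance}. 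Together with \cref{lemma:mean_drift} and \cref{lemma:cov_drift}, this yields $\mathbb{E}[\tau N\delta^k_{\text{JTL}}] \in \mathcal{O}(\psi(k)/k^2 + \psi'(k))$.
\end{itemize}

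\textbf{Conclusion.} For polynomial $\psi$, $\psi(k)/k^2+\psi'(k) = o(\psi(k))$, and $\psi(k)$ is the order of $\mathbb{E}[\|\tilde\mu_0^{\text{ITL}}\|^2]$. So for large $k$ the JTL transient can be absorbed by slightly shrinking $C^k_{\text{ITL}}$. The condition $\tau N\,\mathbb{E}[\mathcal{I}^k] < C^k_{\text{ITL}}\mathbb{E}[\|\tilde\mu_0^{\text{ITL}}\|^2]$ then forces $\mathbb{E}[\mathrm{TE}^k]>0$. \cref{lemma:asymptotic_instability} confirms that both sides are $\Theta(\psi(k))$, so the condition is non-vacuous and is a genuine comparison of constants versus $N$.

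\textbf{Main obstacle.} The hardest step is the JTL transient, specifically the covariance part when $\psi(k)=k^2$. There $\psi'(k)=\Theta(k)$, which is not small relative to $\psi(k)=k^2$ only by a factor of $k$. The $\mathcal{O}$-bounds of \cref{lemma:cov_drift} must be handled with absolute values of a trace that can change sign, and \cref{lemma:recursion_dominance} must be applied to a sequence that mixes mean and covariance errors with different contraction rates. I would first treat the mean and covariance recursions separately, each bounded through \cref{lemma:recursion_dominance}, and accept that the resulting $C^k_{\text{ITL}}$ depends on $k$ through vanishing ratios.
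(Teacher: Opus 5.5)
Your proposal is correct and follows essentially the same route as the paper. Both start from the TE decomposition and apply the Transient Potential Law. Both lower-bound the ITL transient spectrally; your constant $1-\rho_\mu^2$ is exactly the paper's $\gamma^k_{\min}=\lambda_{\min}(\xi_N(H_k))$. Both upper-bound the JTL transient by splitting the initialization error into geometric drift plus a contracted residual, then use \cref{lemma:recursion_dominance}, \cref{lemma:mean_drift} and \cref{lemma:cov_drift} to show it is $\mathcal{O}(\psi(k)/k^2+\psi'(k))=o(\psi(k))$.

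The differences are minor. You run the recursion on squared errors with Young's inequality, where the paper uses Minkowski on RMS errors. You also explicitly absorb the vanishing JTL transient into a slightly shrunken $C^k_{\text{ITL}}$. That is more careful than the paper, which simply drops this subtracted right-hand-side term as negligible; strictly speaking, dropping it weakens the condition rather than keeping it sufficient.
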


\begin{proof}
    The proof proceeds by decomposing the expected Transfer Efficiency and analyzing the asymptotic behavior of the transient costs.
    
    Recall the definition of Transfer Efficiency: 
    \[
        \text{TE}^k = (\delta^k_{\text{ITL}} - \delta^k_{\text{JTL}}) - \mathcal{I}^k.
    \]
    The condition for expected positive transfer over tasks is $\mathbb{E}[\delta^k_{\text{ITL}}] - \mathbb{E}[\delta^k_{\text{JTL}}] > \mathbb{E}[\mathcal{I}^k]$. 
    
    We use the Transient Potential Law (\cref{theo:transient_potential}). Define the \emph{Spectral Dissipation Matrix} as $\xi_N(H) := I - (I - \eta H)^{2N}$. This matrix captures the fraction of the initial error energy dissipated after $N$ steps.
    Using Task Alignment (\cref{ass:alignment}), the transient costs simplify to:
    \begin{align}
        &\delta^k_{\text{ITL}} \approx \frac{1}{\tau N} \text{Tr}\left( \xi_N(H_k)(\tilde \mu^{\text{ITL}}_0{\tilde \mu^{\text{ITL}}_0}^\top + \tilde{\Sigma}_0^{\text{ITL}}) \right), \\ 
        &\delta^k_{\text{JTL}} \approx \frac{1}{\tau N} \text{Tr}\left( \xi_N(\bar{H}_k)\mathcal{R}^k\,(\tilde \mu^{\text{JTL},k}_0{\tilde \mu^{\text{JTL},k}_0}^\top + \tilde{\Sigma}_0^{\text{JTL},k}) \right).
    \end{align}
    Let $\gamma^k_{\min} := \lambda_{\min}(\xi_N(H_k))$ and $\gamma^k_{\max} := \lambda_{\max}(\xi_N(\bar{H}_k))$. We lower bound the ITL cost and upper bound the JTL cost: 
    \begin{align}
        &\delta^k_{\text{ITL}} \ge \frac{\gamma^k_{\min}}{\tau N} \left( \|\tilde \mu_0^{\text{ITL}}\|^2 + \text{Tr}(\tilde{\Sigma}_0^{\text{ITL}}) \right), \\ 
        &\delta^k_{\text{JTL}} \le \lambda_{\max}(\mathcal{R}^k) \cdot \frac{\gamma^k_{\max}}{\tau N}  \left( \|\tilde \mu_0^{\text{JTL},k}\|^2 + \text{Tr}(\tilde{\Sigma}_0^{\text{JTL},k}) \right).
    \end{align}
    Denote the bounding constants by $C^k_{\text{ITL}} = \gamma^k_{\min}$ and $C^k_{\text{JTL}}=\lambda_{\max}(\mathcal{R}^k)\,\gamma^k_{\max}$. Taking the expectation over the task sequence on both sides, the condition for positive transfer translates to: 
    \[
        \tau N \cdot \mathbb{E}[\mathcal{I}^k] < C^k_{\text{ITL}} \left( \mathbb{E}[\|\tilde \mu_0^{\text{ITL}}\|^2] + \mathbb{E}[\text{Tr}(\tilde{\Sigma}_0^{\text{ITL}})] \right) - C^k_{\text{JTL}} \left( \mathbb{E}[\|\tilde \mu_0^{\text{JTL},k}\|^2] + \mathbb{E}[|\text{Tr}(\tilde{\Sigma}_0^{\text{JTL},k})|] \right).
    \]
    We now analyze the asymptotic scaling of each expectation with respect to $k$: 
    \begin{itemize}
        \item $\mathbb{E}[\mathcal{I}^k]$: By \cref{lemma:asymptotic_instability}, expected instability grows as $\Theta(\psi(k))$. 
        \item $\mathbb{E}[\|\tilde \mu_0^{\text{ITL}}\|^2]$: Because the ITL agent resets to a fixed origin prior, its error is the raw distance to the new task optimum. By \cref{ass:growth-rate-app}, this scales as $\Theta(\psi(k))$.
        \item $\mathbb{E}[\text{Tr}(\tilde{\Sigma}_0^{\text{ITL}})]$: The ITL covariance resets to a fixed prior variance, remaining constant $\Theta(1)$ relative to the task growth.
        \item $\mathbb{E}[\|\tilde \mu_0^{\text{JTL},k}\|^2]$: We analyze the root-mean-square error $e_k := \sqrt{\mathbb{E}[\|\tilde \mu_0^{\text{JTL},k}\|^2]}$. By Minkowski's inequality applied to the decomposition $\tilde{\mu}_0 = \Delta \mu + r_\mu$, we obtain the recursive bound:
        \[
            e_k \le \sqrt{\mathbb{E}[\|\Delta \mu^k\|^2]} + \sqrt{\mathbb{E}[\|r_\mu^{k-1}\|^2]} \le d_k + \rho_\mu e_{k-1},
        \]
        where $d_k := \sqrt{\mathbb{E}[\|\Delta \mu^k\|^2]}$ is the geometric drift.
        From \cref{lemma:mean_drift}, the squared expected drift scales as $\Theta(\psi(k)/k^2)$, meaning $d_k \in \Theta(\sqrt{\psi(k)}/k)$. 
        Applying \cref{lemma:recursion_dominance}, the error sequence $e_k$ is dominated by $d_k$:
        \[
             \mathbb{E}[\|\tilde \mu_0^{\text{JTL},k}\|^2] = e_k^2 \in \Theta(d_k^2) = \Theta\left(\frac{\psi(k)}{k^2}\right).
        \]
        \item $\mathbb{E}[|\text{Tr}(\tilde{\Sigma}_0^{\text{JTL},k})|]$: Similarly, by \cref{lemma:cov_drift} and \cref{lemma:recursion_dominance}, the covariance initialization error is dominated by the covariance drift $\Theta(\psi'(k))$.
    \end{itemize}

    Substituting these scalings back into the inequality for sufficiently large $k$:
    \[
    \tau N \cdot \Theta(\psi(k)) < C^k_{\text{ITL}} \left( \Theta(\psi(k)) + \Theta(1) \right) - C^k_{\text{JTL}} \left( \Theta\left(\frac{\psi(k)}{k^2}\right) + \Theta(\psi'(k)) \right).
    \]
    The JTL transient term (the negative contribution) becomes asymptotically negligible compared to the ITL transient term (the positive contribution) because $\frac{\psi(k)}{k^2} \ll \psi(k)$ and $\psi'(k) \ll \psi(k)$ for any polynomial growth.
    
    Therefore, the asymptotic condition simplifies to a direct competition between the ITL transient energy and the JTL instability:
    \[
         \tau N \cdot \mathbb{E}[\mathcal{I}^k] < C^k_{\text{ITL}} \cdot \mathbb{E}[\|\tilde \mu_0^{\text{ITL}}\|^2].
    \]
    Rearranging for $N$, positive expected transfer occurs only if the task duration is sufficiently short relative to the instability:
    \[
        N < \frac{C^k_{\text{ITL}}}{\tau} \frac{\mathbb{E}[\|\tilde \mu_0^{\text{ITL}}\|^2]}{\mathbb{E}[\mathcal{I}^k]}.
    \]
    This concludes the proof: expected transfer is positive strictly in the "transient regime", where the benefit of initializing near the task centroid (avoiding re-learning) outweighs the cost of the asymptotic bias and task conflict.
\end{proof}

\subsection{(Non-asymptotic) Continuous Drift Analysis in the Quadratic Model}
\label{sec:drift_analysis}

We now instantiate the general Transfer Efficiency framework for the specific case of drifting linear tasks. This allows us to derive precise, non-asymptotic conditions for the stability-plasticity phase transition observed in our experiments. 

\subsubsection{Setup and Definitions}

We assume the environment consists of a sequence of tasks $k=1, \dots, K$ governed by a random walk process. The data inputs $x \in \mathbb{R}^d$ are drawn from a fixed distribution with covariance matrix $\Sigma_x \succ 0$. The target for task $k$ is given by $y = (\theta_\star^k)^\top x + \xi$, with label noise $\xi \sim \mathcal{N}(0, \sigma_{\xi}^2)$. The expected loss function for task $k$ (over the data distribution) is:
\begin{equation}
    \mathcal{L}^k(\theta) = \frac{1}{2} \| \theta - \theta_\star^k \|_{\Sigma_x}^2 + \frac{1}{2}\sigma_{\xi}^2,
\end{equation}
using the Mahalanobis-style norm $\|v\|_M^2 := v^\top M v$ introduced in the main text (\cref{sec:quadratic-losses}). This expression is the instantiation of the quadratic loss of \cref{sec:quadratic-losses} with $H_j = \Sigma_x$ (the same data Hessian shared across tasks) and the irreducible loss $c_j = \tfrac{1}{2}\sigma_\xi^2$. The optimal parameters evolve according to a Wiener process across tasks:
\begin{equation}
    \theta_\star^k = \theta_\star^{k-1} + \Delta_k, \quad \Delta_k \sim \mathcal{N}(0, \Sigma_\Delta),
\end{equation}
where $\Sigma_\Delta$ is the drift covariance matrix. Both the Independent-Task Learning (ITL) and Joint-Task Learning (JTL) agents optimize these objectives using SGD with a constant step-size $\eta$. For simplicity, we assume all tasks have equal duration $N$. 

As we will demonstrate in the derivations below, this type of non-stationarity exhibits a \emph{Diffusive} growth rate ($\psi(k) = k$), consistent with the regimes defined in \cref{tab:drift_scaling}. Unless otherwise noted, expectations $\mathbb{E}[\cdot]$ in this section are taken over the random walk process defining the task sequence.

\subsubsection{Derivation of Expected Instability}

The JTL agent optimizes the uniform mixture of all observed tasks, converging to the cumulative centroid $\theta_\star^{\text{JTL},k} = \frac{1}{k}\sum_{j=1}^k \theta_\star^j$. We apply the Exact Instability Decomposition from \cref{prop:exact_instability} to task $k$. Since the Hessian $H_k = \Sigma_x$ is constant across tasks, the instability for a given realization decomposes into:
\begin{equation}
    \mathcal{I}^k \approx \frac{1}{2} \| \theta_\star^{\text{JTL},k} - \theta_\star^k \|_{\Sigma_x}^2 + \frac{\eta}{4kB} \sum_{j=1}^k \| \theta_\star^{\text{JTL},k} - \theta_\star^j \|_{\Sigma_x^2}^2.
\end{equation}

We first express the lag vector $v_k := \theta_\star^{\text{JTL},k} - \theta_\star^k$ in terms of the sequence of drift increments $\Delta_m$. Using the backward recursion $\theta_\star^j = \theta_\star^k - \sum_{m=j+1}^k \Delta_m$, we substitute into the centroid definition:
\begin{equation}
    v_k = \left( \frac{1}{k} \sum_{j=1}^k \theta_\star^j \right) - \theta_\star^k 
    = \frac{1}{k} \sum_{j=1}^k \left( \theta_\star^k - \sum_{m=j+1}^k \Delta_m \right) - \theta_\star^k 
    = - \frac{1}{k} \sum_{j=1}^k \sum_{m=j+1}^k \Delta_m.
\end{equation}
Rearranging the double summation to group terms by the drift increment $\Delta_m$ (noting that a specific $\Delta_m$ appears whenever $j < m$, meaning it appears $m-1$ times):
\begin{equation}
    v_k = - \frac{1}{k} \sum_{m=2}^k (m-1) \Delta_m.
\end{equation}

The second term in the instability equation involves the sum of squared deviations from the centroid (the task scatter). This can be rewritten using pairwise squared differences:
\begin{equation}
    \sum_{j=1}^k (\theta_\star^{\text{JTL},k} - \theta_\star^j)(\theta_\star^{\text{JTL},k} - \theta_\star^j)^\top = \frac{1}{2k} \sum_{i=1}^k \sum_{j=1}^k (\theta_\star^i - \theta_\star^j)(\theta_\star^i - \theta_\star^j)^\top.
\end{equation}
The difference between any two optima is simply the sum of the intermediate drift steps: $\theta_\star^i - \theta_\star^j = \sum_{m=j+1}^i \Delta_m$ (for $i>j$).

We now compute the \textbf{expected instability} $\mathbb{E}[\mathcal{I}^k]$ over the distribution of random walks. The covariance of the lag vector $v_k$ is the weighted sum of the increment covariances:
\begin{equation}
    \mathbb{E}[v_k v_k^\top] = \frac{1}{k^2} \sum_{m=2}^k (m-1)^2 \Sigma_\Delta = \frac{1}{k^2} \left( \frac{(k-1)k(2k-1)}{6} \right) \Sigma_\Delta.
\end{equation}
Using the continuous approximation $\sum_{p=1}^{k} p^2 \approx k^3/3$ for large $k$:
\begin{equation}
    \mathbb{E}\left[ \frac{1}{2} \| v_k \|_{\Sigma_x}^2 \right] \approx \frac{1}{2} \text{Tr}\left( \Sigma_x \cdot \frac{k}{3} \Sigma_\Delta \right) = \frac{k}{6} \text{Tr}(\Sigma_x \Sigma_\Delta).
\end{equation}

For the task scatter, since $\mathbb{E}[(\theta_\star^i - \theta_\star^j)(\theta_\star^i - \theta_\star^j)^\top] = |i-j|\Sigma_\Delta$, the expected sum of pairwise distances is:
\begin{equation}
    \mathbb{E}[\text{Scatter}] = \frac{1}{2k} \sum_{i,j} |i-j| \Sigma_\Delta \approx \frac{1}{2k} \left( \frac{k^3}{3} \right) \Sigma_\Delta = \frac{k^2}{6} \Sigma_\Delta.
\end{equation}
Substituting this into the variance trace term:
\begin{equation}
    \mathbb{E}\left[ \frac{\eta}{4kB} \sum_{j=1}^k \| \theta_\star^{\text{JTL},k} - \theta_\star^j \|_{\Sigma_x^2}^2 \right] = \frac{\eta}{4kB} \text{Tr}\left( \Sigma_x^2 \cdot \frac{k^2}{6} \Sigma_\Delta \right) = \frac{\eta k}{24B} \text{Tr}(\Sigma_x^2 \Sigma_\Delta).
\end{equation}

Combining the expected bias and variance components, the \textbf{final expected instability} is:
\begin{equation}
    \label{eq:drift_instability}
    \mathbb{E}[\mathcal{I}^k] \approx \frac{k}{6} \text{Tr}(\Sigma_x \Sigma_\Delta) + \frac{\eta k}{24B} \text{Tr}(\Sigma_x^2 \Sigma_\Delta).
\end{equation}

\subsubsection{Derivation of Expected Transient Error}
\label{sss:TB-CD}

The Transient Error $\delta^k$ is proportional to the energy of the initial parameter error relative to the current task's stationary distribution. For a specific realization, this is approximated by:
\begin{equation}
    \delta^k \approx \frac{1}{N\tau} \text{Tr}\left( \xi_N(\Sigma_x) \left( \tilde{\mu}_0 \tilde{\mu}_0^\top + \tilde{\Sigma}_0 \right) \right).
\end{equation}

\paragraph{ITL Agent.}
The ITL agent resets to a fixed prior $\theta_{0} \sim Q_0$ with covariance $\Sigma_0^{\text{ITL}}$. The initial error vector is $\tilde{\mu}_0^{\text{ITL}} = \theta_{0}-\theta_\star^k$. The expected energy of this error depends entirely on how far the random walk has drifted from the origin:
\begin{equation}
    \mathbb{E}[\| \theta_{0} -\theta_\star^k \|^2] = \mathbb{E}[\| \theta_{0}\|^2] + \text{Tr}\left( \sum_{j=1}^k \mathbb{E}[\Delta_j \Delta_j^\top] \right) = \text{Tr}\left(\Sigma_0^{\text{ITL}}\right) + k \text{Tr}(\Sigma_\Delta).
\end{equation}
Let $\gamma_{\min} := \lambda_{\min}(\xi_N(\Sigma_x))$. Because $\tilde{\Sigma}_0^{\text{ITL}} = \Sigma_0^{\text{ITL}} - \Sigma_\infty^{\text{ITL}}$, the expected ITL Transient Error is bounded by:
\begin{align}
    \mathbb{E}[\delta^k_{\text{ITL}}] &\ge \frac{\gamma_{\min}}{\tau N} \left( \mathbb{E}[\|\tilde{\mu}_0^{\text{ITL}}\|^2] + \text{Tr}(\tilde{\Sigma}_0^{\text{ITL}} )\right) \nonumber \\
    &= \frac{\gamma_{\min}}{\tau N} \left( k \text{Tr}(\Sigma_\Delta) + 2\text{Tr}\left(\Sigma_0^{\text{ITL}}\right) - \text{Tr}({\Sigma}^{\text{ITL}}_\infty )\right).
\end{align}
By \cref{lemma:GNC}, $\text{Tr}({\Sigma}^{\text{ITL}}_\infty ) = \frac{\eta \sigma_\xi^2}{2B}$. The trace of the prior $\text{Tr}\left(\Sigma_0^{\text{ITL}}\right)$ is a constant, which we denote by $\sigma_0^2$. 

\paragraph{JTL Agent.}
The JTL agent initializes at the previous task's converged centroid $\theta_\star^{\text{JTL},k-1}$. The expected initialization error decomposes into the geometric drift and the residual contraction:
\[
    \tilde{\mu}_0^{\text{JTL},k} = \Delta\mu_k + r_{\mu}^{k-1}.
\]
First, we relate the geometric drift $\Delta \mu_k$ to the lag vector $v_k$. Recall $\theta_\star^{\text{JTL},k} = \frac{1}{k}\theta_\star^k + \frac{k-1}{k}\theta_\star^{\text{JTL},k-1}$. Rearranging for the increment:
\[
    \Delta \mu_k = \theta_\star^{\text{JTL},k} - \theta_\star^{\text{JTL},k-1} = \theta_\star^{\text{JTL},k} - \frac{k}{k-1}\left(\theta_\star^{\text{JTL},k} - \frac{1}{k}\theta_\star^k\right) = \frac{1}{k-1}(\theta_\star^k - \theta_\star^{\text{JTL},k}) = -\frac{1}{k-1} v_k.
\]
The expected squared norm of the geometric drift is therefore:
\begin{equation}
    \mathbb{E}[\|\Delta \mu_k\|^2] = \frac{1}{(k-1)^2} \text{Tr}(\mathbb{E}[v_k v_k^\top]) = \frac{2k-1}{6k(k-1)} \text{Tr}(\Sigma_\Delta).
\end{equation}
Let $d_k := \sqrt{\mathbb{E}[\|\Delta \mu_k\|^2]}$ be the RMS drift intensity. For $k \ge 2$, define the RMS initialization error $e_k := \sqrt{\mathbb{E}[\|\tilde{\mu}_0^{\text{JTL},k}\|^2]}$. By Minkowski's inequality, it satisfies the stable recursion $e_k \le d_k + \rho_\mu e_{k-1}$. 
Because the drift $d_k$ changes slowly compared to the exponential decay rate $\rho_\mu$ (the quasi-static approximation), the error tracks the input scaled by the effective time constant $\frac{1}{1-\rho_\mu}$. For $k \gg 1$, we approximate $d_k \approx \sqrt{\frac{1}{3k}\text{Tr}(\Sigma_\Delta)}$. This yields the explicit expected transient energy curve:
\begin{equation}
    \mathbb{E}[\|\tilde{\mu}_0^{\text{JTL},k}\|^2] \approx \left( \frac{1}{1-\rho_\mu} \right)^2 \frac{\text{Tr}(\Sigma_\Delta)}{3k}.
\end{equation}
This confirms that the JTL expected initialization error vanishes as $1/k$, but its magnitude is inflated by the conditioning of the optimization problem via the $(1-\rho_\mu)^{-2}$ factor.

Simultaneously, the JTL agent's covariance tracks the equilibrium covariance of the centroid. Because the gradient noise increases as tasks spread out, this target $\Sigma_\infty^{\text{JTL},k}$ grows over time. The noise covariance $C^{\text{JTL},k}$ at the centroid is dominated by the task scatter (\cref{lemma:MT_GNC}):
\[
    C^{\text{JTL},k} = \frac{1}{B}\sigma_\xi^2 H + \frac{1}{kB} \sum_{j=1}^k H \mathbb{E}[(\theta_\star^{\text{JTL},k} - \theta_\star^j)(\theta_\star^{\text{JTL},k} - \theta_\star^j)^\top] H.
\]
Using the scatter result from the instability derivation, the summation approximates to $\frac{k^2}{6}\Sigma_\Delta$. Thus:
\[
    C^{\text{JTL},k} \approx \frac{1}{B}\sigma_\xi^2 H + \frac{k}{6B} H \Sigma_\Delta H.
\]
The equilibrium covariance satisfies the Lyapunov equation $H\Sigma_\infty^{\text{JTL},k} + \Sigma_\infty^{\text{JTL},k} H = \eta C^{\text{JTL},k}$. Taking the trace (and exploiting the cyclic property $\text{Tr}(H \Sigma_\Delta H) = \text{Tr}(H^2 \Sigma_\Delta)$):
\[
    2 \text{Tr}(H \Sigma_\infty^{\text{JTL},k}) = \frac{\eta}{B}\sigma_\xi^2\text{Tr}(H) + \frac{\eta k}{6B} \text{Tr}(H^2 \Sigma_\Delta).
\]
The only quantity dependent on $k$ is the second term. The \emph{covariance drift} is the difference between consecutive targets:
\[
    \text{Tr}\left( H(\Sigma_\infty^{\text{JTL},k-1} - \Sigma_\infty^{\text{JTL},k}) \right) = -\frac{\eta}{12B} \text{Tr}(H^2 \Sigma_\Delta).
\]
This drift is constant with respect to $k$. The initialization error $\tilde{\Sigma}_0^{\text{JTL},k}$ tracks this drift. By the contraction property (\cref{lemma:contraction}) with rate $\rho_\Sigma < 1$:
\[
    \text{Tr}(H \tilde{\Sigma}_0^{\text{JTL},k}) \le -\text{Tr}(H (\Sigma_\infty^{\text{JTL},k-1} - \Sigma_\infty^{\text{JTL},k})) + \rho_{\Sigma} \text{Tr}(H \tilde{\Sigma}_0^{\text{JTL},{k-1}}).
\]
Because the target variance is strictly increasing, the agent is constantly "catching up," meaning $\text{Tr}(H \tilde{\Sigma}_0^{\text{JTL},{k-1}})$ is consistently negative and acts to further pull the trace down. Thus, we can safely bound: 
\[
     \text{Tr}(H \tilde{\Sigma}_0^{\text{JTL},k}) \le -\frac{\eta}{12B} \text{Tr}(H^2 \Sigma_\Delta).
\]
Given the boundedness of $H$ (\cref{ass:bounded_curvature}), we approximate $\text{Tr}(\tilde{\Sigma}_0^{\text{JTL},k}) \le  - \frac{\eta}{12B} \text{Tr}(H \Sigma_\Delta)$. 

\subsubsection{Non-asymptotic Condition for Positive Transfer}

We investigate the conditions under which JTL outperforms ITL. We define the \emph{Positive Transfer Regime} as the set of environment parameters for which the total \emph{expected} error of the JTL agent is strictly lower than that of the ITL agent:
\begin{equation}
    \mathbb{E}[\mathcal{I}^k_{\text{JTL}}] + \mathbb{E}[\delta^k_{\text{JTL}}] < \mathbb{E}[\delta^k_{\text{ITL}}].
\end{equation}
Denote by $\sigma^2_\Delta = \text{Tr}(\Sigma_\Delta)$ the \emph{drift magnitude} parameter. Let $c_\mu = \frac{1}{1-\rho_\mu}$ represent the effective time constant. Substituting the derived expected bounds, we obtain the master inequality for positive transfer in continuous drift settings:
\begin{equation}
    \label{eq:master_inequality_app}
    \underbrace{\frac{\gamma_{\min}\,(k\sigma^2_\Delta + 2\sigma_0^2 - \eta\sigma_\xi^2/2B)}{\tau N}}_{\text{ITL Transient Cost}} > 
    \underbrace{\frac{k}{6} \text{Tr}(\Sigma_x \Sigma_\Delta) + \frac{\eta k}{24B} \text{Tr}(\Sigma_x^2 \Sigma_\Delta)}_{\text{JTL Instability Cost}} + 
    \underbrace{\frac{\gamma_{\max}}{\tau N} \left(c_\mu^2 \frac{\sigma^2_\Delta}{3k} - \frac{\eta}{12B} \text{Tr}(\Sigma_x \Sigma_\Delta)\right)}_{\text{JTL Transient Cost}}.
\end{equation}

Notice that this inequality precisely recovers the more general asymptotic result from \cref{theo:positive_transfer}: as $k \to \infty$, both the ITL Transient cost and the JTL Instability cost grow linearly in $k$, while the JTL Transient cost is lower order in $k$. Therefore, these two linear terms eventually dominate the balance between positive and negative transfer in the limit of infinite tasks.

\begin{figure}[htbp]
    \centering
    \begin{minipage}{0.28\textwidth}
        \centering
        \includegraphics[width=\linewidth]{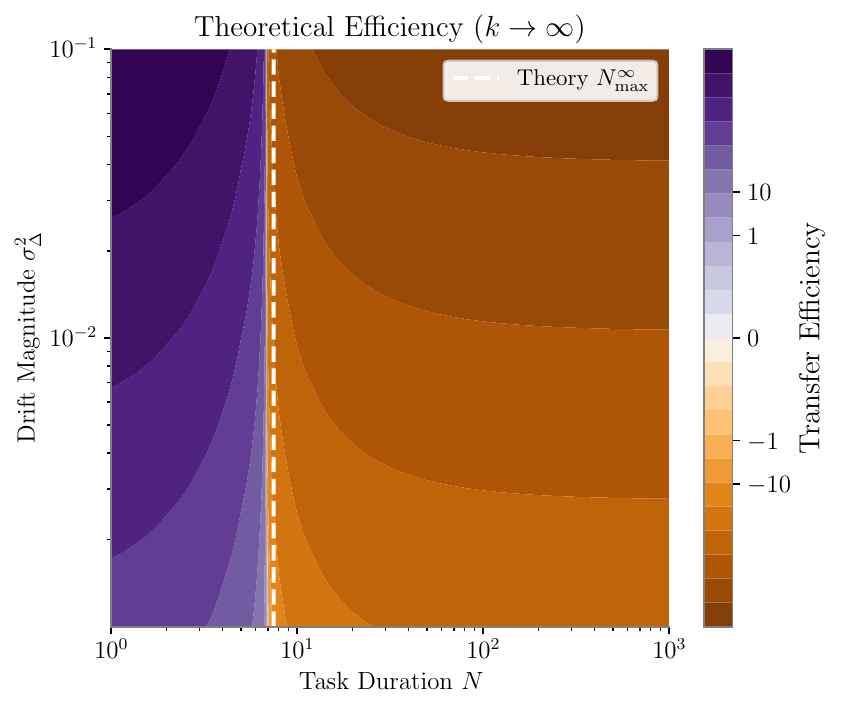}
    \end{minipage}
    \hspace{0.1cm}
    \begin{minipage}{0.6\textwidth}
    \caption{\textbf{Asymptotic Transfer Efficiency ($k \to \infty$).} The theoretical landscape of Transfer Efficiency in the limit of infinite tasks. The white dashed line marks the exact Critical Task Duration $N_{\max}^\infty$ derived in \cref{eq:critical-task-dur}. In this asymptotic regime, the boundary between positive and negative transfer becomes a vertical line, showing that the critical task horizon depends on spatial trace properties and learning rate, but is invariant to the drift magnitude $\sigma_\Delta^2$.}
    \label{fig:asymptotic-simulation}
    \end{minipage}
    \vspace{-0.2cm}
\end{figure}

\subsection{Window Algorithm as an Instance of Predictive CL}
\label{sec:optimal_window}

We introduce the Window algorithm as a concrete instance of our generalized CL framework: a Predictive CL algorithm that bridges both the ITL and JTL approaches through the definition of a \emph{window of tasks}. At any task $k$, the \emph{Window agent} predicts the relevant future distribution with the recent $W$ tasks and optimizes the average loss over that window, where we call $W$ the \emph{window size}. Specifically, the sampling distribution $\mathcal{D}^k_W$ is: 
\[
    \mathcal{D}^k_W = \sum_{j = k-W+1}^k \rho_j^k \,\mathcal{P}^j \qquad \rho_j^k = \frac{|N_j|}{\sum_{i\in [k-W,k]} N_i}.
\]
Like the JTL agent, the Window agent's parameters are not reset at the beginning of a new task. Clearly, as $W$ grows, the Window agent converges to the JTL agent—which can be viewed as an \emph{infinite window} agent—whereas setting $W = 1$ recovers a variant of the ITL agent with \emph{warm starts}. 

\subsubsection{Asymptotic Analysis in the Quadratic Loss Setting}

In contrast to the JTL agent, whose stability is determined by the global growth of the task solutions relative to the origin, the Window agent's stability depends entirely on the local variations of the environment over the horizon $W$. To characterize this, we utilize the \emph{Mean Squared Displacement} (MSD) measure.

\begin{definition}[Mean Squared Displacement]
\label{def:msd-appx}
    Let $m: \mathbb{N} \times \mathbb{N} \to \mathbb{R}^+$ be a function characterizing the expected squared Euclidean distance between two task solutions separated by a lag $W$ at time $k$:
    \begin{equation}
        \mathbb{E}[\|\theta_\star^k - \theta_\star^{k-W}\|^2] \in \Theta(m(k, W)).
    \end{equation}
\end{definition}

We analyze the three canonical regimes under the assumption of stationary increments. The Mean Squared Displacement scales as $m(W) \in \Theta(1)$ for Bounded, $\Theta(W)$ for Diffusive, and $\Theta(W^2)$ for Linear environments.

\begin{remark}[Relationship between MSD and Growth Rate]
    The Mean Squared Displacement is universally upper-bounded by the global growth rate: $m(k, W) \in \mathcal{O}(\psi(k))$. This follows from the triangle inequality relative to the origin. However, for the standard regimes of interest (Diffusive and Linear), the process exhibits \emph{stationary increments}. In these cases, $m(k, W)$ depends only on the lag $W$ and is independent of the absolute time $k$. 
\end{remark}

\subsubsection{Window Geometric Drift}
The geometric drift for the Window agent is defined by the update to the sliding centroid and the shift in the stationary covariance matrix:
\begin{align}
    \Delta \mu^{W,k} &= (\theta_\star^{W, k-1} - \theta_\star^{W, k}), \\
    \Delta \Sigma^{W,k} &= (\Sigma_\infty^{W,k-1} - \Sigma_\infty^{W,k}). 
\end{align}
Table \ref{tab:window_drift_scaling} summarizes the asymptotic scaling of these drift terms. Unlike the JTL agent, where the drift is suppressed by the infinitely growing history $k$, the Window agent's drift is governed by the ratio of the Mean Squared Displacement to the constant window inertia $W$.

\begin{table}[h]
    \centering
    \renewcommand{\arraystretch}{1.2}
    \caption{Scaling of Window Agent Geometric Drift with Window Size $W$.}
    \label{tab:window_drift_scaling}
    \begin{tabular}{@{}lccccc@{}}
        \toprule
         & & \multicolumn{2}{c}{\textbf{Mean Drift} $\mathbb{E}[\|\Delta \mu^{W,k}\|^2]$} & \multicolumn{2}{c}{\textbf{Covariance Drift} $\mathbb{E}[\mathrm{Tr}(\Delta \Sigma^{W,k})]$} \\
        \cmidrule(lr){3-4} \cmidrule(l){5-6}
        \textbf{Regime} & \textbf{MSD} $m(W)$ & Scaling & Vanishing? & Scaling & Vanishing? \\
        \midrule
        Bounded & $\Theta(1)$ & $\mathcal{O}(W^{-2})$ & \checkmark & $\mathcal{O}(W^{-1})$ & \checkmark \\
        Diffusive & $\Theta(W)$ & $\mathcal{O}(W^{-1})$ & \checkmark & $\mathcal{O}(1)$ & $\times$ \\
        Linear & $\Theta(W^2)$ & $\mathcal{O}(1)$ & $\times$ & $\mathcal{O}(W)$ & $\times$ \\
        \bottomrule
    \end{tabular}
\end{table}

We now present a quadratic loss analysis of the Transfer Efficiency of the Window agent compared to the ITL agent. The analysis follows the same steps as the JTL derivation, re-using several previous results. 

As shown previously, the transient cost is dominated by the geometric drift of the agent's optimum: $\Delta \mu^{W, k} = \theta_\star^{W, k} - \theta_\star^{W, k-1}$. At step $k$, the minimum of the Window agent objective is: 
\[
    \theta_\star^{W,k} = \left(\sum_{j=k-W}^k \rho_j^k H_j\right)^{-1} \left(\sum_{j=k-W}^k \rho_j^k H_j \theta_\star^j\right).
\]

\begin{lemma}[Window Geometric Drift]
\label{lemma:window_drift}
    For a Window agent of size $W$, the squared magnitude of the geometric mean shift scales with the growth rate of the window interval:
    \begin{equation}
        \mathbb{E}[\|\Delta \mu^{W,k}\|^2] \in \Theta\left( \frac{m(k, W)}{W^2} \right).
    \end{equation}
\end{lemma}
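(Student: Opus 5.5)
I will mirror the proof of \cref{lemma:mean_drift}, replacing the growing prefix average by a sliding window, so that the slide adds one task and removes one. Write $\bar H^W_k := \sum_{j\in\mathcal{W}_k}\rho_j^k H_j$ and $c^W_k := \sum_{j\in\mathcal{W}_k}\rho_j^k H_j\theta_\star^j$, so that $\theta_\star^{W,k} = (\bar H^W_k)^{-1}c^W_k$. For equal (or bounded) task durations, $\rho_j^k \approx 1/W$. Passing from window $\mathcal{W}_{k-1}$ to $\mathcal{W}_k$ then gives the exact updates
\[
\bar H^W_k = \bar H^W_{k-1} + \tfrac{1}{W}(H_k - H_{k-W}), \qquad
c^W_k = c^W_{k-1} + \tfrac{1}{W}(H_k\theta_\star^k - H_{k-W}\theta_\star^{k-W}).
\]
I will substitute $c^W_{k-1} = \bar H^W_{k-1}\theta_\star^{W,k-1}$ and multiply by $(\bar H^W_k)^{-1}$, using \cref{ass:alignment} to commute the factors. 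This yields the recursion
\[
\Delta\mu^{W,k} = -\tfrac{1}{W}(\bar H^W_k)^{-1}\Big[H_k(\theta_\star^k-\theta_\star^{W,k-1}) - H_{k-W}(\theta_\star^{k-W}-\theta_\star^{W,k-1})\Big].
\]
This is the analogue of the JTL identity, with $\alpha_k = 1/W$ in place of $1/k$ and an additional ``exit'' term.

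For the upper bound, I will use \cref{ass:bounded_curvature} to bound $\|(\bar H^W_k)^{-1}H_j\|_2\le\kappa$. Then $(a+b)^2\le 2a^2+2b^2$ gives
\[
\mathbb{E}\|\Delta\mu^{W,k}\|^2 \le \tfrac{2\kappa^2}{W^2}\Big(\mathbb{E}\|\theta_\star^k-\theta_\star^{W,k-1}\|^2 + \mathbb{E}\|\theta_\star^{k-W}-\theta_\star^{W,k-1}\|^2\Big).
\]
Both distances are between an endpoint of the window and a convex combination of points inside it. By Jensen, each is bounded by $\max_{i,j\in[k-W,k]}\mathbb{E}\|\theta_\star^i-\theta_\star^j\|^2$. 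Under stationary increments and a monotone MSD, this maximum is $\mathcal{O}(m(k,W))$, which gives the $\mathcal{O}(m(k,W)/W^2)$ direction.

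The lower bound is the main obstacle, since the two bracketed terms could in principle cancel. My first approach is the equal-Hessian case $H_j\equiv H$. There the bracket reduces exactly to $H(\theta_\star^k-\theta_\star^{k-W})$, because the $\theta_\star^{W,k-1}$ terms cancel. It follows that
\[
\mathbb{E}\|\Delta\mu^{W,k}\|^2 \ge \tfrac{\lambda_{\min}^2}{\lambda_{\max}^2W^2}\,\mathbb{E}\|\theta_\star^k-\theta_\star^{k-W}\|^2 = \Theta\big(m(k,W)/W^2\big),
\]
directly from \cref{def:msd}. For general aligned Hessians, I will write $H_j = H + E_j$ and treat the $E_j$ contribution as a perturbation. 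Its size is controlled by the eigenvalue spread, and absorbing it requires a mild assumption that the Hessians are close enough that $\kappa$ does not destroy the leading term. Alternatively, I can state the $\Theta$ under the same approximation the paper already uses in the continuous drift model. Finally, I will check the bounded, diffusive and linear regimes against \cref{tab:window_drift_scaling}.
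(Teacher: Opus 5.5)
Your skeleton matches the paper's. You use the same sliding-window updates of $\bar H_k$ and the weighted target, the same exact identity for $\Delta\mu^{W,k}$, then a $\kappa$-bound and an appeal to the MSD. The difference is in how you bound that identity, and your version is the more careful one.

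The paper pulls a single factor $\kappa$ out of $\bar H_k^{-1}H_k(\theta_\star^k-\theta_\star^{W,k-1})+\bar H_k^{-1}H_{k-W}(\theta_\star^{W,k-1}-\theta_\star^{k-W})$ and telescopes the two vectors to $\theta_\star^k-\theta_\star^{k-W}$. That treats the two operators as one, so it is rigorous only when $H_k=H_{k-W}$. The paper then asserts $\Theta$ with no lower-bound argument.

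Your equal-Hessian computation is exactly the case where the telescoping is legitimate. There you get the identity $\mathbb{E}\|\Delta\mu^{W,k}\|^2=\mathbb{E}\|\theta_\star^k-\theta_\star^{W,k}\|^2/W^2$ with $\theta_\star^{W,k}$ replaced by $\theta_\star^{k-W}$, i.e.\ $\mathbb{E}\|\theta_\star^k-\theta_\star^{k-W}\|^2/W^2$, with constant $1$ since $\bar H_k^{-1}H=I$, not $\lambda_{\min}^2/\lambda_{\max}^2$. That gives both directions at once.

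Your split upper bound stays valid for distinct aligned Hessians, but it costs you the monotone-MSD assumption, and that cost is real. If $H_k\neq H_{k-W}$, the drift is nonzero even when $\theta_\star^k=\theta_\star^{k-W}$, so the lag-$W$ displacement alone cannot control it. A small fix: with unequal Hessians the centroid is a convex combination only coordinatewise in the common eigenbasis, so your Jensen step picks up an extra, harmless factor $\kappa$.

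For the general aligned case you can drop the closeness/perturbation assumption. Write $\theta_\star^j-\theta_\star^{k-W}=\sum_{m=k-W+1}^{j}\Delta_m$ and $A_j=(W\bar H_{k-1})^{-1}H_j$. The bracket then becomes
\[
H_k(\theta_\star^k-\theta_\star^{W,k-1})-H_{k-W}(\theta_\star^{k-W}-\theta_\star^{W,k-1})=\sum_{m=k-W+1}^{k}C_m\Delta_m,\qquad C_m=(I-S_m)H_k+S_mH_{k-W},\quad S_m=\sum_{j=m}^{k-1}A_j.
\]
The $S_m$ are diagonal in the common eigenbasis with $0\preceq S_m\preceq I$, so every $C_m$ satisfies $\lambda_{\min}I\preceq C_m\preceq\lambda_{\max}I$.

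With deterministic Hessians and independent zero-mean increments (the diffusive regime), the cross terms vanish. You then get $\kappa^{-2}m(k,W)/W^2\le\mathbb{E}\|\Delta\mu^{W,k}\|^2\le\kappa^{2}m(k,W)/W^2$ directly, with no monotonicity needed. For constant-velocity drift, $\sum_m C_m\succeq W\lambda_{\min}I$ gives the same bounds. This is the right generalization of the paper's telescoping.

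Some structural assumption of this kind is unavoidable, though. For suitably correlated increments and $H_k\neq H_{k-W}$, the two bracketed terms can cancel exactly while $\theta_\star^k\neq\theta_\star^{k-W}$. So the lower half of the $\Theta$ does not follow from alignment, bounded curvature and the MSD definition alone, in your argument or in the paper's.
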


\begin{proof}
    Let $\bar{H}_k = \frac{1}{W} \sum_{j \in \mathcal{W}_k} H_j$ be the average Hessian. The optimum satisfies $\bar{H}_k \theta_\star^{W,k} = \frac{1}{W} \sum_{j \in \mathcal{W}_k} H_j \theta_\star^j$.
    
    Comparing steps $k$ and $k-1$, the window $\mathcal{W}_k$ is obtained from $\mathcal{W}_{k-1}$ by removing task $k-W$ and adding task $k$. The average Hessian updates as:
    \[
        \bar{H}_k = \bar{H}_{k-1} + \frac{1}{W}(H_k - H_{k-W}).
    \]
    The weighted target vector $v_k = \bar{H}_k \theta_\star^{W,k}$ updates as:
    \[
        v_k = v_{k-1} + \frac{1}{W}(H_k \theta_\star^k - H_{k-W} \theta_\star^{k-W}).
    \]
    Substituting $v_{k-1} = \bar{H}_{k-1} \theta_\star^{W, k-1}$ and solving for the difference:
    \[
        \bar{H}_k \theta_\star^{W,k} = \left( \bar{H}_k - \frac{1}{W}(H_k - H_{k-W}) \right) \theta_\star^{W, k-1} + \frac{1}{W}(H_k \theta_\star^k - H_{k-W} \theta_\star^{k-W}).
    \]
    Rearranging terms to isolate the drift $\Delta \mu^{W,k} = \theta_\star^{W,k} - \theta_\star^{W, k-1}$:
    \[
        \bar{H}_k (\theta_\star^{W,k} - \theta_\star^{W, k-1}) = \frac{1}{W} \left[ H_k (\theta_\star^k - \theta_\star^{W, k-1}) - H_{k-W} (\theta_\star^{k-W} - \theta_\star^{W, k-1}) \right].
    \]
    Multiplying by $\bar{H}_k^{-1}$:
    \[
        \Delta \mu^{W,k} = \frac{1}{W} \bar{H}_k^{-1} \left[ H_k (\theta_\star^k - \theta_\star^{W, k-1}) + H_{k-W} (\theta_\star^{W, k-1} - \theta_\star^{k-W}) \right].
    \]
    This expression represents the pull of the new task $k$ and the "release" of the old task $k-W$ on the centroid. Taking the squared norm expectation and applying the bound on the condition number $\kappa$ (\cref{ass:bounded_curvature}):
    \[
        \mathbb{E}[\|\Delta \mu^{W,k}\|^2] \le \frac{\kappa^2}{W^2} \mathbb{E}\left[ \left\| (\theta_\star^k - \theta_\star^{W, k-1}) + (\theta_\star^{W, k-1} - \theta_\star^{k-W}) \right\|^2 \right].
    \]
    The term inside the norm is dominated by the displacement between the incoming and outgoing tasks, $\|\theta_\star^k - \theta_\star^{k-W}\|$. By the definition of the Mean Squared Displacement (\cref{def:msd-appx}), the expected squared distance between two tasks separated by $W$ steps scales as $\Theta(m(k, W))$. Thus:
    \[
        \mathbb{E}[\|\Delta \mu^{W,k}\|^2] \in \Theta\left( \frac{m(k, W)}{W^2} \right).
    \]
\end{proof}

\begin{lemma}[Window Covariance Drift]
\label{lemma:window_cov_drift}
    The geometric covariance shift for the Window agent scales asymptotically with the finite difference of the Mean Squared Displacement:
    \begin{equation}
        |\mathbb{E}[\mathrm{Tr}(\Delta \Sigma^{W,k})]| \in  \Theta(\Delta m(k,W)).
    \end{equation}
\end{lemma}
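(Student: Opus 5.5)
The proof will mirror \cref{lemma:cov_drift}, replacing the cumulative JTL mixture by the sliding window $\mathcal{W}_k=\{k-W+1,\dots,k\}$. I will express the stationary covariance of the Window agent through the Lyapunov equation of \cref{lemma:PCV}, with driving Hessian $\bar H_k$ and noise covariance given by the analogue of \cref{lemma:MT_GNC} with weights $\rho_j^k$:
\[
C^{W,k}_\infty=\frac{1}{B}\Big(\mathbb{E}_{j\in\mathcal{W}_k}[\sigma_j^2H_j]+\mathbb{E}_{j\in\mathcal{W}_k}\big[\nabla\mathcal{L}^j(\theta_\star^{W,k})\nabla\mathcal{L}^j(\theta_\star^{W,k})^\top\big]\Big).
\]
Under uniform noise and \cref{ass:alignment}, the first term contributes a part of $\mathrm{Tr}(\Sigma^{W,k}_\infty)$ that depends only on the window-averaged Hessians. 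By \cref{ass:bounded_curvature}, its change from $k-1$ to $k$ is a bounded $O(1/W)$ fluctuation that does not scale with the environment. The geometric content therefore sits entirely in the task-diversity term $\mathcal{V}^{W,k}_{div}=\frac1W\sum_{j\in\mathcal{W}_k}\|H_j(\theta_\star^j-\theta_\star^{W,k})\|^2$.

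The second step is to show that $\mathbb{E}[\mathcal{V}^{W,k}_{div}]\in\Theta(m(k,W))$, or more precisely that it is sandwiched, up to constants $\lambda_{\min}^2$ and $\lambda_{\max}^2$, by the window scatter $\frac{1}{W}\sum_{j}\|\theta_\star^j-\theta_\star^{W,k}\|^2$. I will rewrite this scatter via pairwise differences, as in the continuous-drift derivation:
\[
\frac{1}{2W^2}\sum_{i,j\in\mathcal{W}_k}\|\theta_\star^i-\theta_\star^j\|^2 .
\]
In expectation this is an average of MSD values at lags $|i-j|\le W$, hence of order $m(k,W)$. Using the trace identity for the Lyapunov equation with bounded $\bar H_k^{-1}$, I then get $\mathbb{E}[\mathrm{Tr}(\Sigma^{W,k}_\infty)]=\mathrm{const}+\Theta\big(\tfrac{\eta}{B}\,M(k,W)\big)$, where $M(k,W):=\frac{1}{2W^2}\sum_{i,j}\mathbb{E}\|\theta_\star^i-\theta_\star^j\|^2$.

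The last step differences consecutive windows. Since $\Delta\Sigma^{W,k}=\Sigma^{W,k-1}_\infty-\Sigma^{W,k}_\infty$ and the map from scatter to trace is linear up to bounded constants, I obtain $|\mathbb{E}[\mathrm{Tr}(\Delta\Sigma^{W,k})]|\asymp |M(k-1,W)-M(k,W)|$. This is what the statement denotes $\Delta m(k,W)$: the finite difference in $k$ of the window's displacement functional.

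Checking it against \cref{tab:window_drift_scaling} requires care, because with stationary increments $M$ is independent of $k$ and the difference vanishes. The table's entries ($O(W^{-1})$, $O(1)$, $O(W)$) instead match the finite difference in the lag, $m(W)-m(W-1)$, which is the increment contributed by swapping the outgoing task $k-W$ for the incoming task $k$. I will make this explicit by writing the change in the pairwise sum as the new pairs $(k,j)$ minus the removed pairs $(k-W,j)$. In expectation, their contribution is $\frac{1}{W^2}\sum_{\ell<W}\big[m(\ell)-m(W-\ell)\big]$-type terms, which are of order the lag derivative of $m$.

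I expect this identification step to be the main obstacle. It requires two things: a signed bookkeeping of added and removed pairs (cancellations occur, so only an upper bound may be fully rigorous, with $\Theta$ holding case by case in the three regimes), and a justification for replacing the exact-$k$ expectations by the MSD as a function of lag. My plan is to carry out this bookkeeping explicitly for $m(W)=W^p$, $p\in\{0,1,2\}$, and verify the three entries directly.
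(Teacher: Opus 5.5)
Your first three steps are the paper's proof, written more carefully. The paper also passes from $\Sigma^{W,k}_\infty$ to the window diversity $\mathcal{V}^{W,k}_{div}$ and asserts that each window's expected diversity is $\Theta(m(\cdot,W))$. It then simply \emph{defines} $\Delta m(k,W)$ as the difference of the two, which is your reading as a $k$-difference. Two steps of your plan, however, do not go through.

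\textbf{The differencing step.} You pass to $|\mathbb{E}[\mathrm{Tr}(\Delta\Sigma^{W,k})]|\asymp|M(k-1,W)-M(k,W)|$, but sandwiching each window's diversity between $\lambda_{\min}^2$ and $\lambda_{\max}^2$ multiples of its scatter gives no two-sided control of the \emph{difference}. What does hold is the exact identity $2\,\mathrm{Tr}(\Sigma^{W,k}_\infty)=\eta\,\mathrm{Tr}(\bar H_k^{-1}C^{W,k}_\infty)$: left-multiply the Lyapunov equation by $\bar H_k^{-1}$ and use cyclicity. Under this identity:
\begin{itemize}
\item the noise-floor part equals $\eta\sigma^2 d/(2B)$ for every $k$, so it cancels exactly rather than up to $O(1/W)$;
\item the diversity part equals $\frac{\eta}{2B}\sum_{j}\rho_j^k\|\theta_\star^j-\theta_\star^{W,k}\|^2_{H_j\bar H_k^{-1}H_j}$, whose weights change with $k$.
\end{itemize}
Only when these weights are frozen, e.g.\ $H_j\equiv H$ as in the continuous-drift model, is the expected trace shift an exact fixed multiple of the $k$-difference of the expected ($H$-weighted) scatter. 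Otherwise you get an upper bound only.

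\textbf{The reconciliation with \cref{tab:window_drift_scaling}.} This is the decisive problem. The added-minus-removed pair sum you propose to compute is not a new quantity: it \emph{is} $M(k,W)-M(k-1,W)$ written pair by pair, so it inherits the cancellation you already noticed. The added pairs $(k,j)$ and the removed pairs $(k-W,j)$, $j\in\{k-W+1,\dots,k-1\}$, have the same lag multiset $\{1,\dots,W-1\}$. Under stationary increments the expected signed change is therefore exactly zero; indeed your expression $\sum_{\ell=1}^{W-1}[m(\ell)-m(W-\ell)]$ vanishes identically under $\ell\mapsto W-\ell$. The explicit check for $m(W)=W^p$ will return $0$ for $p=0,1,2$, not $O(W^{-1}),O(1),O(W)$.

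The reason is structural. For JTL the averaging set grows, so the $k$-difference of the expected scatter is genuinely $\psi(k)-\psi(k-1)$. A fixed-width window only slides, so no lag derivative can appear. This zero is also what the paper uses downstream: $\Delta m(W)=0$ in the bounded and diffusive cases of \cref{lemma:optimal_window}, and zero expected covariance drift for the continuous-drift Window agent. Under the statement's reading, the lemma is essentially the identity above.

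If you want the table's non-vanishing rates, you must change the quantity to $\mathbb{E}[|\mathrm{Tr}(\Delta\Sigma^{W,k})|]$, as in \cref{lemma:cov_drift}. In the frozen-weight case, bounding the added and removed pairs separately gives $O(m(k,W)/W)$, which reproduces $W^{-1},1,W$. These are only upper bounds, though:
\begin{itemize}
\item they are not a lag difference: for i.i.d.\ tasks $m$ is constant and $m(W)-m(W-1)=0$, not $W^{-1}$;
\item they are not $\Theta$: deterministic linear drift with $H_j\equiv H$ keeps the window scatter constant in $k$.
\end{itemize}
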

\emph{Implication:} Unlike the JTL agent, the Window agent achieves a stable second moment (constant uncertainty size) because the task diversity within the active window does not grow infinitely with time $k$.

\begin{proof}
    Following the logic of \cref{lemma:cov_drift}, the stationary covariance $\Sigma_\infty$ scales with the task diversity $\mathcal{V}_{\text{div}}$. For the Window agent, this diversity is measured over the set $\mathcal{W}_k$:
    \[
        \mathcal{V}_{\text{div}}^{W,k} = \frac{1}{W} \sum_{j \in \mathcal{W}_k} \|\nabla \mathcal{L}^j(\theta_\star^{W,k})\|^2 = \frac{1}{W} \sum_{j \in \mathcal{W}_k} \|\theta_\star^j - \theta_\star^{W,k}\|_{H_j^2}^2.
    \]
    This term represents the spatial variance of $W$ consecutive tasks. The covariance shift is: 
    \begin{align}
        \mathbb{E}[\mathrm{Tr}(\Delta \Sigma^{W,k})] 
        &= \mathbb{E}[\mathrm{Tr}(C^{W,k-1}_\infty - C^{W,k}_\infty)]\\
        &= \frac{1}{W}\left(\sum_{j \in \mathcal{W}_{k-1}} \mathbb{E}[\|\nabla \mathcal{L}^j(\theta_\star^{W,k-1})\|^2] - \sum_{j \in \mathcal{W}_k} \mathbb{E}[\|\nabla \mathcal{L}^j(\theta_\star^{W,k})\|^2]\right).
    \end{align}
    
    Recognizing that $\mathbb{E}[\|\nabla \mathcal{L}^j(\theta_\star^{W,k-1})\|^2] = \mathbb{E}[\|\theta_\star^j - \theta_\star^{W,k-1}\|_{H_j^2}^2]$, which is the expected norm of the average difference among the task optima in $\mathcal{W}_{k-1}$, we know by \cref{def:msd-appx} that this scales as $\Theta(m(k-1,W))$. Similarly, $\mathbb{E}[\|\nabla \mathcal{L}^j(\theta_\star^{W,k})\|^2] \in \Theta(m(k,W))$. Thus, asymptotically, the difference between these two scales as $\Theta(m(k-1,W)) - \Theta(m(k,W)) =: \Delta m(k,W)$.

    Overall, taking the absolute value of the trace, we have:
    \[
        |\mathbb{E}[\mathrm{Tr}(\Delta \Sigma^{W,k})]| \in \Theta(\Delta m(k,W)).
    \] 
\end{proof}
    
\begin{lemma}[Asymptotic Instability of the Window Agent]
    \label{lemma:asymptotic_instability_window}
    The expected Instability cost $\mathcal{I}^W$ for the Window agent, which comprises the approximation bias and the stationary gradient noise, scales linearly with the Mean Squared Displacement:
    \begin{equation}
        \mathbb{E}[\mathcal{I}^W] \in \Theta(m(k,W)).
    \end{equation}
\end{lemma}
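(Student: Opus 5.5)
The plan mirrors the proof of \cref{lemma:asymptotic_instability}, replacing the JTL mixture by the window mixture $\mathcal{D}^k_W$ and the global Growth Rate $\psi(k)$ by the Mean Squared Displacement $m(k,W)$.

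\textbf{Step 1 (decomposition).} Apply \cref{prop:exact_instability} verbatim, with weights $\rho_j^k$ supported on $\mathcal{W}_k$ and minimizer $\theta_\star^{W,k}$. Its derivation only used alignment (\cref{ass:alignment}), uniform noise, \cref{lemma:MT_GNC} and the Lyapunov trace identity, none of which depend on which tasks enter the mixture. This gives
\[
\mathcal{I}^W \approx \tfrac12\|\theta_\star^{W,k}-\theta_\star^k\|_{H_k}^2+\frac{\eta}{4B}\sum_{j\in\mathcal{W}_k}\rho_j^k\|H_j(\theta_\star^{W,k}-\theta_\star^j)\|^2_{\mathcal{R}^k}.
\]
By \cref{ass:bounded_curvature}, $\mathcal{R}^k$, $H_j$ and $\bar H_k^{-1}$ have spectra in $[\kappa^{-1},\kappa]$. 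Hence, up to constants, both terms are sandwiched between Euclidean quantities: the bias is $\Theta(\|\theta_\star^{W,k}-\theta_\star^k\|^2)$ and the variance is $\Theta\big(\sum_j\rho_j^k\|\theta_\star^{W,k}-\theta_\star^j\|^2\big)$.

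\textbf{Step 2 (bias).} Write $\theta_\star^{W,k}-\theta_\star^k=\bar H_k^{-1}\sum_{j\in\mathcal{W}_k}\rho_j^kH_j(\theta_\star^j-\theta_\star^k)$. This is a matrix-weighted average of displacements of lag at most $W$.
\begin{itemize}
\item The upper bound follows from Jensen: $\mathbb{E}\|\cdot\|^2\le\kappa^2\sum_j\rho_j^k\,\mathbb{E}\|\theta_\star^j-\theta_\star^k\|^2\le\kappa^2\max_{w\le W}m(k,w)$. Since $m(k,\cdot)$ is nondecreasing in the lag in all three regimes ($1$, $W$, $W^2$), this is $O(m(k,W))$.
\item For the lower bound, I would use stationary increments, as in the Remark, to compute the second moment explicitly, as was done for the Wiener walk when deriving \eqref{eq:drift_instability}. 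For example, in the diffusive case $\mathbb{E}\|v\|^2\approx\tfrac{W}{3}\mathrm{Tr}(\Sigma_\Delta)$, and in the linear-drift case the lag is $\approx\tfrac{W}{2}v$, giving $\Theta(W^2)$. In general, a constant fraction of the window (e.g.\ $j\le k-W/2$) has lag $\ge W/2$, which gives $\Omega(m(k,W/2))=\Omega(m(k,W))$ for polynomial $m$.
\end{itemize}

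\textbf{Step 3 (variance).} Use the pairwise identity already used in the drift analysis: the scatter around the (weighted) centroid equals half the weighted average of pairwise squared differences $\|\theta_\star^i-\theta_\star^j\|^2$ for $i,j\in\mathcal{W}_k$, up to curvature constants. Each pairwise difference has lag $<W$, so the scatter is $O(m(k,W))$. Since a constant fraction of pairs have lag $\ge W/2$, it is also $\Omega(m(k,W))$. Multiplying by $\eta/(4B)$ keeps the order, and adding Steps 2 and 3 gives $\mathbb{E}[\mathcal{I}^W]\in\Theta(m(k,W))$.

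\textbf{Main obstacle.} The main obstacle is the lower bounds with non-identical Hessians. A matrix-weighted centroid could in principle partially cancel displacements; the Euclidean average of positive terms does not have this problem, but the matrix weights do. I would handle this by working eigen-direction by eigen-direction, which \cref{ass:alignment} permits. In each eigendirection the centroid is a scalar convex combination with weights bounded within a factor $\kappa^2$ of $\rho_j^k$, so the scalar variance and bias arguments go through with constants depending only on $\kappa$.
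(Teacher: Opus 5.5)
Your proposal takes the same route as the paper. You apply the Exact Instability Decomposition with the window weights, use bounded curvature to reduce the bias and the task-diversity term to Euclidean quantities, and show that each scales as $m(k,W)$. The paper's proof only asserts these two scalings; for the variance it defers to the covariance-drift lemma, which itself only asserts it. Your Jensen upper bound, the pairwise-difference identity for the scatter, and the count of pairs with lag at least $W/2$ actually supply the missing estimates. The cost is that you must state explicitly what the paper uses tacitly: stationary increments, $m$ nondecreasing and polynomial in the lag, and a window with at least two tasks (a single-task window has zero Instability while $m(k,1)>0$).

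One sub-step is invalid, although the conclusion survives. The general lower bound on the bias in Step 2 does not follow from a constant fraction of the window having lag at least $W/2$. The bias is the squared norm of an average, e.g.\ $\|\sum_j \rho_j^k(\theta_\star^j-\theta_\star^k)\|^2$ with scalar weights. Jensen only bounds this from above, so large individual displacements can cancel. This cancellation is already possible with scalar weights, so your ``Main obstacle'' paragraph misdiagnoses it as a matrix-weight effect, and working eigendirection by eigendirection does not remove it.

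You do not need this bound. Both terms of $\mathcal{I}^W$ are nonnegative, so $\Omega(m(k,W))$ follows from the variance term alone. There the matrix-weighted centroid is harmless without any eigendecomposition. For any point $c$, $\sum_j\rho_j^k\|\theta_\star^j-c\|^2\ge\sum_j\rho_j^k\|\theta_\star^j-\bar\theta\|^2=\tfrac12\sum_{i,j}\rho_i^k\rho_j^k\|\theta_\star^i-\theta_\star^j\|^2$, where $\bar\theta$ is the $\rho^k$-weighted Euclidean mean. For the matching upper bound, add $\|\theta_\star^{W,k}-\bar\theta\|^2$, which is $O(m(k,W))$ by your Jensen argument. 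Drop the general bias lower bound (the explicit diffusive and linear-drift computations can stay as illustrations), and the proof is complete.
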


\begin{proof}
    The proof follows the same structure as \cref{lemma:asymptotic_instability}. 

    The instability for the Window agent is:
    \[
        \mathcal{I}^W \approx \underbrace{\left(\mathcal{L}^k(\theta_\star^{W,k}) - \mathcal{L}^k(\theta_\star^{k})\right)}_{\text{Bias}} + \underbrace{\frac{\eta}{4B}\sum_{j=k-W}^k \rho_j^k \, \|\nabla \mathcal{L}^j(\theta_\star^{W,k})\|_{\mathcal{R}^k}^2}_{\text{Variance}}.
    \]
    
    For a quadratic loss, the bias term simplifies to the weighted squared distance between the task optimum and the Window centroid:
    \[
        \mathbb{E}[\mathcal{L}^k(\theta_\star^{W,k}) - \mathcal{L}^k(\theta_\star^{k})] = \frac{1}{2} \mathbb{E}[\|\theta_\star^{W,k} - \theta_\star^k\|_{H_k}^2].
    \]
    Since $\theta_\star^{W,k}$ is a convex combination of all tasks in the window $k-W \dots k$, its position relative to the current task $\theta_\star^k$ depends on the spread of the task distribution. By the definition of the Mean Squared Displacement, $\mathbb{E}[\|\theta_\star^{W,k} - \theta_\star^k\|^2]$ scales with the maximum extent of the task boundary relative to the origin, which is $\Theta(m(k,W))$.

    The variance term reads:
    \[
        \mathbb{E}[\text{Variance}] \propto \sum_{j=k-W}^k \rho_j^k\,\, \mathbb{E}[\|H_j (\theta_\star^{W,k} - \theta_\star^j)\|_{\mathcal{R}^k}^2].
    \]
    Assuming bounded Hessians (\cref{ass:bounded_curvature}), this term is proportional to the variance of the task optima distribution around their mean $\theta_\star^{W,k}$. As established in the proof of \cref{lemma:window_cov_drift}, this spatial variance scales exactly with the local environment growth rate $\Theta(m(k,W))$.

    Since both positive terms scale with $\Theta(m(k,W))$, the total instability scales as $\Theta(m(k,W))$.
\end{proof}

\subsubsection{Asymptotically Optimal Window}

Finally, we are ready to evaluate the asymptotic Transfer Efficiency of the Window agent and determine the optimal window choice. The following theorem mirrors the structure of \cref{theo:positive_transfer}, as the ITL baseline comparison remains exactly the same. 

\begin{lemma}[Asymptotic Optimality of Window Size]
    \label{lemma:optimal_window}
    Let the asymptotic excess risk of the Window agent be dominated by the trade-off between task diversity and tracking lag, given by $J(W) \asymp m(k,W) + \frac{m(k,W)}{W^2}$, where $m(k,W)$ is the Mean Squared Displacement (MSD) of tasks within the window. The optimal window size $W^*$ minimizing this risk depends on the environment's non-stationarity regime:
    \begin{enumerate}
        \item \emph{Bounded Regime:} If $m(k,W) \in \Theta(1)$, the risk decreases monotonically with $W$. The asymptotically optimal strategy is the JTL agent ($W^* \to \infty$).
        \item \emph{Linear Drift Regime:} If $m(k,W) \in \Theta(W^2)$, the risk increases monotonically with $W$. The asymptotically optimal strategy is the ITL agent ($W^* = 1$).
        \item \emph{Diffusive Regime:} If $m(k,W) \in \Theta(W)$, the risk is strictly convex. The asymptotically optimal strategy is a Window agent with finite memory ($1 < W^* < \infty$).
    \end{enumerate}
\end{lemma}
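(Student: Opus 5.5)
The plan is to treat the statement as a minimization of the surrogate risk $J(W) \asymp m(k,W) + m(k,W)/W^2$ over integers $W \ge 1$. First I would justify the form of $J$ from the earlier results, then substitute each of the three MSD scalings and study monotonicity and convexity case by case.

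For the justification, I would mirror the proof of \cref{theo:positive_transfer}. The ITL baseline transient is independent of $W$, so maximizing the Transfer Efficiency of the Window agent amounts to minimizing its own excess risk $\mathbb{E}[\mathcal{I}^W] + \mathbb{E}[\delta^W]$.
\begin{itemize}
\item By \cref{lemma:asymptotic_instability_window}, the instability contributes $\Theta(m(k,W))$.
\item By the Transient Potential Law (\cref{theo:transient_potential}), the transient is controlled by the initialization energy $\mathbb{E}[\|\tilde\mu_0^{W,k}\|^2]$. Combining the recursion bound (\cref{lemma:recursion_dominance}), the contraction estimate (\cref{lemma:contraction}) and \cref{lemma:window_drift}, this energy scales as $\Theta(m(k,W)/W^2)$ up to the factor $(1-\rho_\mu)^{-2}/(\tau N)$.
\item The covariance transient from \cref{lemma:window_cov_drift} depends only on the finite difference $\Delta m$ in $k$. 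For stationary increments this is either $W$-independent or dominated by $m(k,W)$, so I would absorb it into the constants.
\end{itemize}
This yields $J(W) = a\,m(W) + b\,m(W)/W^2$ with constants $a,b>0$ that do not depend on $W$.

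\textbf{Bounded regime.} Taking $m \equiv c$ gives $J(W) = ac + bc/W^2$. This is strictly decreasing in $W$ and tends to $ac$, so the infimum is approached as $W \to \infty$, which is the JTL limit.

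\textbf{Linear drift regime.} Taking $m = cW^2$ gives $J(W) = acW^2 + bc$. This is strictly increasing, so the minimum is at $W^\ast = 1$. Since $W = 1$ is the warm-started ITL agent, this is the ITL strategy.

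\textbf{Diffusive regime.} Taking $m = cW$ gives $J(W) = acW + bc/W$. Its second derivative is $2bc/W^3 > 0$, so $J$ is strictly convex. The minimizer is $W^\ast = \sqrt{b/a}$, and the integer optimum is one of its two neighbouring integers. This optimum is finite because $J \to \infty$ as $W \to \infty$. It exceeds $1$ whenever $b > a$, i.e.\ in the regime where the transient coefficient $(1-\rho_\mu)^{-2}/(\tau N)$ dominates the instability coefficient.

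\textbf{Main obstacle.} The step I expect to be hardest is making the $\Theta$ constants $a$ and $b$ uniform in $W$. The earlier lemmas hide $W$-dependence in the condition number $\kappa$ and in the contraction factor. I would fix $\eta$, $N$ and the curvature bounds of \cref{ass:bounded_curvature}, and check that none of these constants involves $W$; this holds because the window Hessian $\bar H_k$ always lies in $[\lambda_{\min},\lambda_{\max}]$. A second issue is that the strict inequality $1 < W^\ast$ in the diffusive case is not automatic from the scalings alone. If it cannot be derived, I would state it under the condition $b > a$, which is exactly the few-shot regime of small $N$.
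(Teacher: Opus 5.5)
Your proposal follows the paper's proof essentially step for step. The paper likewise substitutes $\mathbb{E}[\mathcal{I}^W]\in\Theta(m)$, the tracking lag $\Theta(m/W^2)$ and the covariance drift $\Theta(\Delta m)$ (taken as zero under stationary increments) into the positive-transfer condition, obtaining $J(W)\asymp c_1 N\,m(W)+c_2\,m(W)/W^2$ (your $a\,m+b\,m/W^2$ after dividing by $\tau N$), and then handles the three regimes exactly as you do, with $W^\ast=\sqrt{c_2/(c_1N)}$ in the diffusive case. Your caveat that $1<W^\ast$ requires $b>a$ (i.e.\ small $N$) is warranted: the paper asserts it without proof, and its own simulations give $W^\star=1$ at $N=1000$.
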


\begin{proof}
    \textbf{Step 1: The Asymptotic Transfer Condition.}
    We start from the general condition for positive transfer derived in \cref{theo:positive_transfer}. For the Window agent to outperform the Independent-Task Learning (ITL) baseline, its expected excess risk must be lower than the baseline expected risk. Let $C^k_{\text{ITL}}$ and $C^k_{W}$ denote the condition numbers scaling the respective bounds. The condition is:
    \[
        {\tau N \cdot \mathbb{E}[\mathcal{I}^W]} < {C^k_{\text{ITL}} \left( \mathbb{E}[\|\tilde \mu_0^{\text{ITL},k}\|^2] + \mathbb{E}[\text{Tr}(\tilde{\Sigma}_0^{\text{ITL},k})] \right)} - {C^k_{W} \left( \mathbb{E}[\|\tilde \mu_0^{W,k}\|^2] + \mathbb{E}[|\text{Tr}(\tilde{\Sigma}_0^{W,k})|] \right)}.
    \]
    
    We substitute the asymptotic scaling laws derived in \cref{lemma:asymptotic_instability_window,lemma:window_drift,lemma:window_cov_drift}:
    \begin{itemize}
        \item The expected instability $\mathbb{E}[\mathcal{I}^W]$ scales with the local task diversity $\Theta(m(k,W))$.
        \item The expected centroid tracking lag $\mathbb{E}[\|\tilde \mu_0^{W,k}\|^2]$ scales as $\Theta(m(k,W)/W^2)$.
        \item The expected covariance drift $\mathbb{E}[|\text{Tr}(\tilde{\Sigma}_0^{W,k})|]$ scales with the change in diversity $\Delta m(k,W) = |\Theta(m(k,W)) - \Theta(m(k-1,W))|$.
    \end{itemize}
    Substituting these into the inequality:
    \begin{equation} \label{eq:full_scaling}
        \tau N \cdot \Theta(m(k,W)) < C^k_{\text{ITL}} \Theta(\psi(k)) - C^k_{W} \left( \Theta\left(\frac{m(k,W)}{W^2}\right) + \Theta(\Delta m(k,W)) \right).
    \end{equation}

    \textbf{Step 2: Optimal Window Size.}
    To find the optimal window size $W^*$, we isolate the terms dependent on the Window agent's configuration. We define the asymptotic cost function $J(W)$ as the sum of the diversity penalty (scaled by the task duration $N$) and the tracking lag (independent of $N$).
    
    \textit{Remark on Time Dependence ($k$):} For the canonical environments considered (Bounded, Linear Drift, Random Walk), the statistics of the task increments are stationary. Therefore, the local diversity $m(k,W)$ within a window depends only on the window width $W$ and not on the absolute time index $k$. We can safely write $m(k,W) \to m(W)$.
    
    The cost function is defined as:
    \[
        J(W) \asymp c_1 N \cdot m(W) + c_2 \frac{m(W)}{W^2} + c_3 \Delta m(W).
    \]
    Here, $c_1$ captures the interference cost per task, while $c_2$ and $c_3$ capture the mechanical lag of the tracker. We now minimize $J(W)$ for the three canonical regimes.

    \paragraph{Case 1: Bounded Regime.}
    Tasks are drawn from a fixed distribution with finite covariance: $m(W) \sim \Theta(1)$ and $\Delta m(W) \approx 0$. Thus, $J(W) \sim c_1 N + \frac{c_2}{W^2}$.
    The term $c_2/W^2$ is strictly decreasing while the diversity cost remains constant. Thus, $J(W)$ is minimized as $W \to \infty$.
    \[ W^* = \infty \quad \text{(JTL agent is optimal)}. \]

    \paragraph{Case 2: Linear Drift Regime.}
    Tasks drift with a constant velocity $v$, i.e., $\mathbb{E}[\|\theta_k - \theta_{k-1}\|] = v$. The variance of a line segment of length $W$ is $m(W) \sim \Theta(W^2)$. Because $\frac{m(W)}{W^2} \sim \frac{W^2}{W^2} \sim \Theta(1)$, we have $J(W) \sim c_1 N W^2 + c_2$.
    The diversity penalty $N W^2$ grows quadratically, easily dominating the constant lag. The function is strictly increasing for $W \ge 1$.
    \[ W^* = 1 \quad \text{(ITL agent is optimal)}. \]

    \paragraph{Case 3: Diffusive Regime.}
    Tasks follow a Gaussian random walk with variance $\sigma^2$ per step. The variance grows linearly with the window size, $m(W) \sim \Theta(W)$. The expected variance is stationary, so $\Delta m(W) = 0$. Thus, $J(W) \sim c_1 N W + c_2 \frac{W}{W^2} = c_1 N W + \frac{c_2}{W}$. 
    This is a strictly convex function. We find the minimum by setting the derivative to zero:
    \[
        \frac{dJ}{dW} = c_1 N - \frac{c_2}{W^2} = 0 \implies W^2 = \frac{c_2}{c_1 N}.
    \]
    Asymptotically, the optimal window size follows the inverse square-root law:
    \[ W^* = \sqrt{\frac{c_2}{c_1 N}} \propto \frac{1}{\sqrt{N}}. \]
    In this case, the optimal strategy requires a finite window. Crucially, the optimal window size shrinks as the task duration $N$ increases.
\end{proof}

\subsubsection{Window Agent for Continuously Drifting Tasks}

We now explicitly instantiate this analysis for the Window agent in the continuous drift setting. Recall that the expected loss function for task $k$ is:
\begin{equation}
    \mathcal{L}^k(\theta) = \frac{1}{2} \| \theta - \theta_\star^k \|_{\Sigma_x}^2 + \frac{1}{2}\sigma_{\xi}^2,
\end{equation}
where $\|v\|_{\Sigma_x}^2 := v^\top \Sigma_x v$. The optimal parameters evolve according to a Wiener process:
\begin{equation}
    \theta_\star^k = \theta_\star^{k-1} + \Delta_k, \quad \Delta_k \sim \mathcal{N}(0, \Sigma_\Delta),
\end{equation}
where $\Sigma_\Delta$ is the drift covariance matrix. At each stage $k$, the minimizer for the Window agent is the centroid of the tasks currently inside the active window $\mathcal{W}_k = \{k-W+1, \dots, k\}$:
\begin{equation}
    \theta_\star^{W, k} = \frac{1}{W} \sum_{j=k-W+1}^k \theta_\star^j.
\end{equation}

\subsubsection{Instability of the Window Agent}

The instability $\mathcal{I}^W$ measures the discrepancy between the agent's target (the window centroid) and the true task optimum $\theta_\star^k$, combined with the internal diversity of the window. Following the Exact Instability Decomposition (\cref{prop:exact_instability}), this splits into the centroid lag and the task scatter.

\paragraph{Centroid Lag.} 
We define the lag vector $v_W := \theta_\star^{W, k} - \theta_\star^k$. Using the backward recursion $\theta_\star^j = \theta_\star^k - \sum_{m=j+1}^k \Delta_m$, we substitute this into the centroid definition:
\begin{equation}
    v_W = \frac{1}{W} \sum_{j=k-W+1}^k \left( \theta_\star^k - \sum_{m=j+1}^k \Delta_m \right) - \theta_\star^k = - \frac{1}{W} \sum_{j=k-W+1}^k \sum_{m=j+1}^k \Delta_m.
\end{equation}
Rearranging the double sum to group by the individual increment $\Delta_p$ (where each $\Delta_p$ appears for all tasks $j < p$):
\begin{equation}
    v_W = - \frac{1}{W} \sum_{p=1}^{W-1} p \Delta_{k-p+1}.
\end{equation}
The expected energy of the lag is the weighted sum of the increment covariances. Using the approximation $\sum_{p=1}^{W-1} p^2 \approx W^3/3$ for large $W$:
\begin{equation}
    \mathbb{E}[\| v_W \|_{\Sigma_x}^2] = \text{Tr}\left( \Sigma_x \frac{1}{W^2} \sum_{p=1}^{W-1} p^2 \Sigma_\Delta \right) \approx \frac{W}{3}\text{Tr}(\Sigma_x \Sigma_\Delta).
\end{equation}
This term represents the irreducible tracking error; the centroid inherently lags behind the random walk by a distance proportional to the window size.

\paragraph{Diversity Noise.}
The second component of instability is the scatter of task optima within the window. Using the identity $\sum (x_i - \bar{x})^2 = \frac{1}{2W}\sum_{i,j}(x_i - x_j)^2$, we write the sum of squared errors as:
\begin{equation}
    \sum_{j \in \mathcal{W}_k} \|\theta_\star^{W,k} - \theta_\star^j\|_{\Sigma_x^2}^2 = \frac{1}{2W} \sum_{i,j \in \mathcal{W}_k} \|\theta_\star^i - \theta_\star^j\|_{\Sigma_x^2}^2.
\end{equation}
For a random walk, the expected squared distance scales with the time lag $|i-j|$:
\[ \mathbb{E}[\|\theta_\star^i - \theta_\star^j\|_{\Sigma_x^2}^2] = |i-j|\text{Tr}(\Sigma_x^2 \Sigma_\Delta). \]
Summing this over the window indices (using $\sum_{i,j=1}^W |i-j| \approx W^3/3$):
\begin{equation}
    \mathbb{E}\left[ \sum_{j \in \mathcal{W}_k} \|\theta_\star^{W,k} - \theta_\star^j\|_{\Sigma_x^2}^2 \right] \approx \frac{1}{2W} \left( \frac{W^3}{3} \right) \text{Tr}(\Sigma_x^2 \Sigma_\Delta) = \frac{W^2}{6} \text{Tr}(\Sigma_x^2 \Sigma_\Delta).
\end{equation}
Finally, we substitute this into the instability decomposition. Note that the instability term averages this sum over the $W$ tasks (introducing a $1/W$ coefficient):
\begin{align}
    \mathbb{E}[\mathcal{I}^W] &\approx \frac{1}{2}\mathbb{E}[\| v_W \|_{\Sigma_x}^2] + \frac{\eta}{4 B W} \mathbb{E}\left[ \sum_{j \in \mathcal{W}_k} \|\theta_\star^{W,k} - \theta_\star^j\|_{\Sigma_x^2}^2 \right] \\
    &= \frac{1}{2}\left(\frac{W}{3}\text{Tr}(\Sigma_x \Sigma_\Delta)\right) + \frac{\eta}{4 B W} \left(\frac{W^2}{6} \text{Tr}(\Sigma_x^2 \Sigma_\Delta)\right) \\
    &= \frac{W}{6} \text{Tr}(\Sigma_x \Sigma_\Delta) + \frac{\eta W}{24B} \text{Tr}(\Sigma_x^2 \Sigma_\Delta).
    \label{eq:window_instability}
\end{align}

\subsubsection{Transient Error of the Window Agent}

The Transient Error $\delta^W$ arises from the initialization error at the start of each task. The Window agent initializes at the previous centroid $\theta_\star^{W, k-1}$. The expected error vector $\tilde{\mu}_0^{W,k}$ is driven entirely by the geometric shift in the centroid itself:
\begin{equation}
    \Delta \mu_W = \theta_\star^{W, k} - \theta_\star^{W, k-1} = \frac{1}{W} \left( \sum_{j=k-W+1}^k \theta_\star^j - \sum_{j=k-W}^{k-1} \theta_\star^j \right) = \frac{1}{W} (\theta_\star^k - \theta_\star^{k-W}).
\end{equation}
This shift collapses to the average of the $W$ increments that separate the new incoming task from the one exiting the window:
\begin{equation}
    \Delta \mu_W = \frac{1}{W} \sum_{m=k-W+1}^k \Delta_m.
\end{equation}
The expected squared norm of this drift is:
\begin{equation}
    \mathbb{E}[\|\Delta \mu_W\|^2] = \frac{1}{W^2} \sum_{m=1}^W \text{Tr}(\Sigma_\Delta) = \frac{1}{W} \text{Tr}(\Sigma_\Delta).
\end{equation}
Following the exact same logic as in \cref{sss:TB-CD}, the expected initialization error $\tilde{\mu}_0$ tracks this input drift with an amplification factor $c_\mu = \frac{1}{1-\rho_\mu}$. Unlike the JTL agent, the Window agent's target covariance is stationary in expectation (i.e., zero expected covariance drift), meaning the Transient Error is dominated purely by the centroid shift. Using the bound form derived in \cref{theo:positive_transfer}:
\begin{equation}
    \mathbb{E}[\delta^W] \approx \frac{\gamma_{\max}}{\tau N} c_\mu^2 \mathbb{E}[\|\Delta \mu_W\|^2] = \frac{\gamma_{\max}}{\tau N} \frac{c_\mu^2}{W} \text{Tr}(\Sigma_\Delta).
\end{equation}

\subsubsection{The Optimal Window \texorpdfstring{$W^\star$}{W*}}

To determine the optimal window size, we minimize the total asymptotic risk contributed by the Window agent. This risk is composed of the diversity cost (the Instability), which grows with $W$, and the Transient Error cost, which shrinks with $W$:
\begin{equation}
    W^\star = \arg \min_W \left[ \underbrace{ \mathbb{E}[\mathcal{I}^W]}_{\text{Instability Cost}} + \underbrace{\mathbb{E}[\delta^W]}_{\text{Transient Cost}} \right].
\end{equation}
Substituting the derived expressions and extracting the explicit dependence on $W$:
\begin{equation}
    W^\star = \arg \min_W \left[ W \cdot \underbrace{ \left( \frac{\text{Tr}(\Sigma_x \Sigma_\Delta)}{6} + \frac{\eta \text{Tr}(\Sigma_x^2 \Sigma_\Delta)}{24B} \right)}_{\mathcal{R_B}} + \frac{1}{W} \cdot \underbrace{\frac{\gamma_{\max} c_\mu^2 \sigma^2_\Delta}{\tau N}}_{\mathcal{R_T}} \right].
\end{equation}
The objective function is of the form:
\[
    f(W) = \mathcal{R_B} W + \mathcal{R_T} W^{-1},
\]
which is strictly convex. The minimum is achieved at the square-root balance:
\[
    W^\star = \sqrt{\mathcal{R_T} /  \mathcal{R_B}}.
\]

Written explicitly, the optimal window length is:
\begin{align}
\label{eq:optimal-window}
    W^\star
    &= \sqrt{\frac{\gamma_{\max} c_\mu^2 \sigma^2_\Delta / (\tau N)}{\frac{\text{Tr}(\Sigma_x \Sigma_\Delta)}{6} + \frac{\eta \text{Tr}(\Sigma_x^2 \Sigma_\Delta)}{24B} }} \\
    &= \frac{c_\mu \,\sqrt{{\gamma_{\max}}}}{\sqrt{\tau N}} \left(\frac{\text{Tr}(\Sigma_x \Sigma_\Delta)}{6\,\sigma^2_\Delta} + \frac{\eta \text{Tr}(\Sigma_x^2 \Sigma_\Delta)}{24B\,\sigma^2_\Delta}\right)^{-1/2}.
\end{align}
This exact, non-asymptotic result perfectly aligns with the asymptotic scaling derived in \cref{lemma:asymptotic_instability_window}, confirming that $W^\star \propto N^{-1/2}$.

\newpage

\section{Empirical Setup}
\label{Apdx: Exps}

\subsection{Simulations}
\label{apsec:simulations}

All simulations are implemented in JAX to heavily vectorize training over multiple random seeds and hyperparameter grids, ensuring statistically robust results. 

\subsubsection{Shallow Linear Experiments}

To validate our exact analytical derivations for the continuous drift setting, we simulate a sequence of linear tasks tracking a random walk. The input dimension is $d=10$. Data inputs are sampled as $x \sim \mathcal{N}(0, I)$, and task labels are generated by a teacher $\theta_\star^k$ as $y = x^\top \theta_\star^k + \xi$, where $\xi \sim \mathcal{N}(0, \sigma_\xi^2)$. The teacher undergoes a random walk such that $\theta_\star^k = \theta_\star^{k-1} + \Delta_k$, with $\Delta_k \sim \mathcal{N}(0, \frac{\sigma_\Delta^2}{d} I)$.

We sweep over the two main axes of our phase diagram:
\begin{itemize}
    \item \textbf{Task Duration ($N$):} Swept geometrically.
    \item \textbf{Drift Magnitude ($\sigma_\Delta^2$):} Swept logarithmically.
\end{itemize}
The models are trained using stochastic gradient descent (SGD) with a constant learning rate (e.g., $\eta=0.1$). We compute the exact Transfer Efficiency between the Independent-Task Learning (ITL) and Joint-Task Learning (JTL) agents averaged over $K$ tasks and $40$ random seeds, and overlay the theoretical breakeven boundary directly onto the empirical phase transitions to verify our non-asymptotic bounds.

\subsubsection{Deep Linear Experiments}

To analyze the scaling dynamics and the impact of the loss landscape, we extend the experiments to deep linear networks. We use a teacher-student setup with a depth-3 linear network, an input dimension of $d=10$, and an output dimension of $1$. The hidden width is swept across $m \in \{32, 64, 128, 256, 512, 1024\}$.

The teacher network's weights are initialized randomly and undergo a continuous random walk to induce task drift. We fix the total drift magnitude to $\sigma_\Delta^2 = 0.05$. Only three tasks are trained to isolate the initial transient and instability dynamics, with each task lasting for exactly $N=500$ steps. We use a batch size of $B=32$ and a label noise standard deviation of $\sigma_\xi=0.01$.

The Independent-Task Learning (ITL) agent simply continues training on the incoming data stream, while the Joint-Task Learning (JTL) agent utilizes exact experience replay, uniformly sampling its batches from the current and all previously seen tasks. 

We evaluate two different initialization and scaling regimes. Let the hidden width be denoted by $m$, and let the three trainable matrices be $U \in \mathbb{R}^{d \times m}$, $V \in \mathbb{R}^{m \times m}$, and $a \in \mathbb{R}^{m \times 1}$, with all entries initialized i.i.d. as $\mathcal{N}(0,1)$. For an input $x \in \mathbb{R}^d$, the depth-3 linear network is implemented as
\[
    h_1(x) = \frac{xU}{\sqrt{d}}, \qquad
    h_2(x) = \frac{h_1(x)V}{\sqrt{m}}, \qquad
    f_\Theta(x) = \frac{h_2(x)a}{\sqrt{m}\,\gamma_m}.
\]
Thus the input and hidden matrix multiplications use fan-in normalization, while the scalar factor $\gamma_m$ controls the output scale of the parametrization. The same parametrization is used for the teacher and the student networks.

The two regimes differ only in the output multiplier $\gamma_m$ and the learning-rate multiplier $s_\eta(m)$ applied to the base learning rate:
\begin{itemize}
    \item \textbf{NTK parametrization (NTP).} We set $\gamma_m = 1$ and $s_\eta(m)=1$, so the effective SGD learning rate is independent of width. This is the neural-tangent scaling used to approach lazy kernel dynamics in the wide-network limit: as $m$ grows, the network function changes mainly through the linearization around initialization, and the hidden representation moves by a vanishing amount \citep{Jacot18NTK,yang2021feature}.
    \item \textbf{Maximal Update parametrization ($\mu$P).} We set $\gamma_m=\sqrt{m}$ and multiply the base learning rate by $s_\eta(m)=m/m_0$, where the base width is $m_0=32$. Equivalently, compared with the NTK/NTP run, the output is scaled down by an additional factor $\sqrt{m}$ while the optimizer step size is scaled up linearly with width. This is the maximal-update parametrization: under width scaling, hidden-layer updates remain large enough to produce feature learning, and learning-rate choices transfer more stably across widths \citep{yang2021feature,yang2021tensorprogramsv}.
\end{itemize}

We sweep the base learning rate logarithmically from $10^{-3}$ to $10^1$ across $15$ values. We observe hyperparameter transfer in $\mu$P as expected, meaning the optimal learning rate remains stable across widths, whereas it shifts significantly in NTP. To ensure a fair comparison, the learning rate for the ITL agent is chosen based on the first task's performance, while the learning rate for the JTL agent is chosen based on the last task's performance (lowest loss). All grid points are averaged over $5$ random seeds.

The loss curves with the best selected learning rate are depicted in the figures below.

\begin{figure}[h!]
    \centering
    \includegraphics[width=0.67\linewidth]{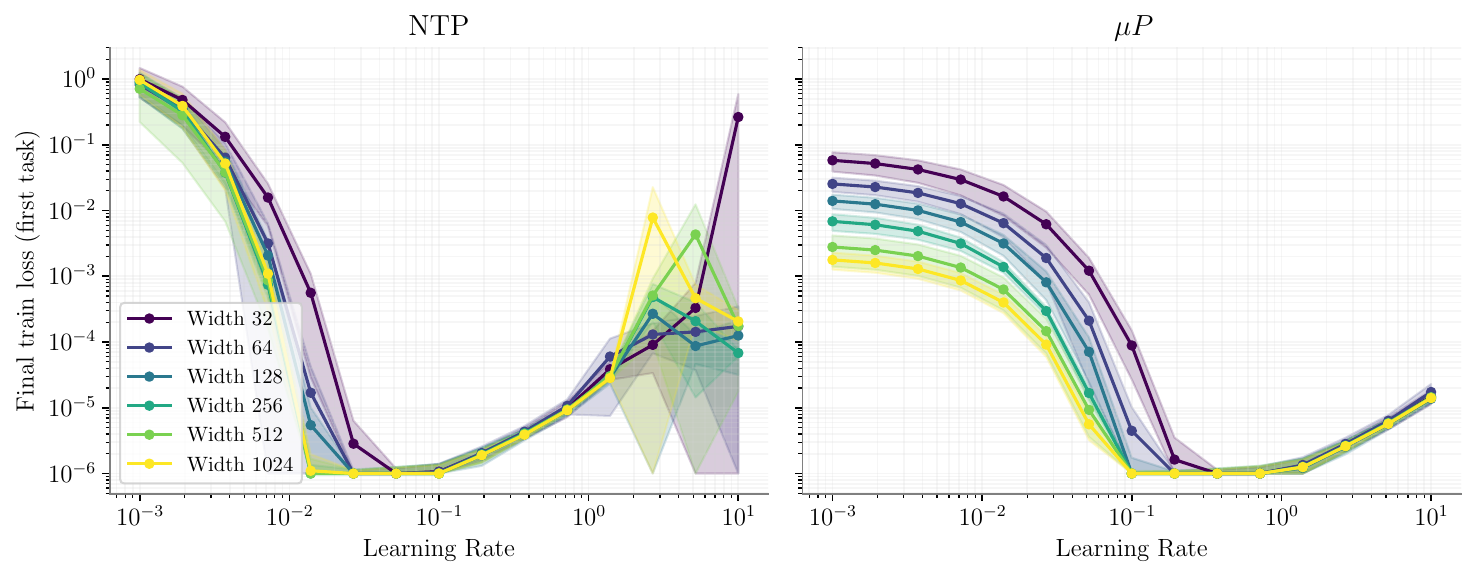}
    \caption{Learning rate sweeps across widths for ITL training in NTP and $\mu$P.}
    \label{fig:lr-sweeps-st}
\end{figure}

\begin{figure}[h!]
    \centering
    \includegraphics[width=0.67\linewidth]{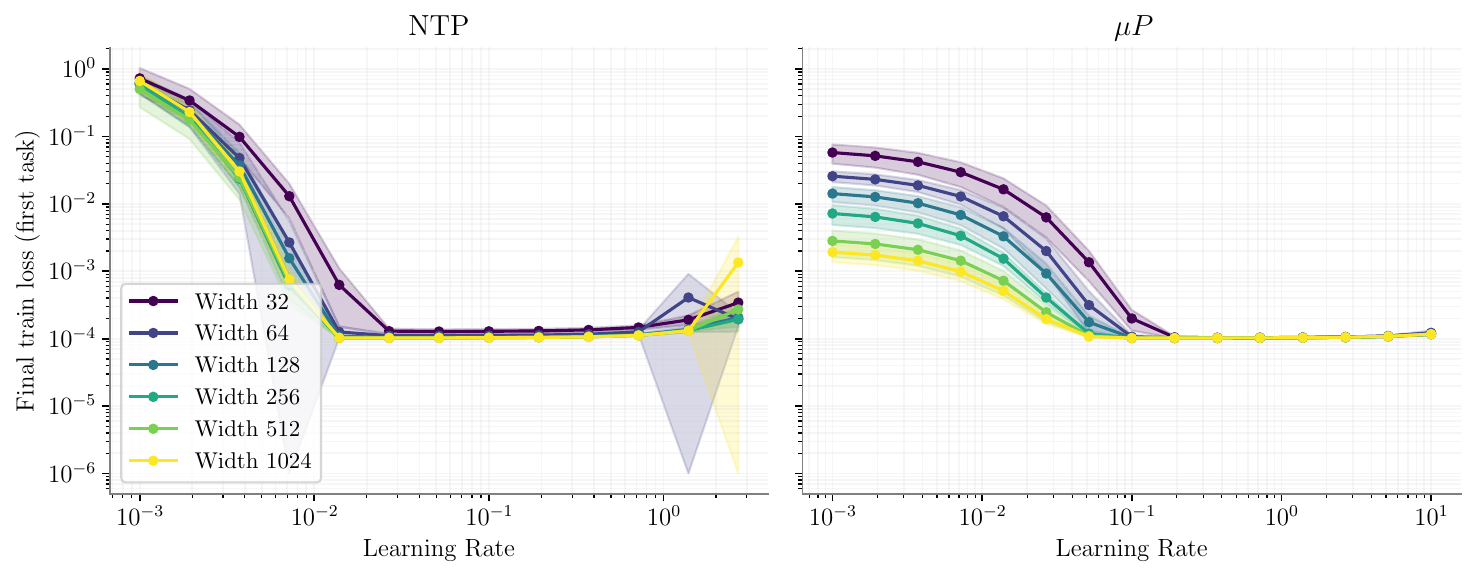}
    \caption{Learning rate sweeps for JTL training in NTP and $\mu$P.}
    \label{fig:lr-sweeps-mt}
    \end{figure}

\begin{figure}[h!]
    \centering
    \includegraphics[width=0.67\linewidth]{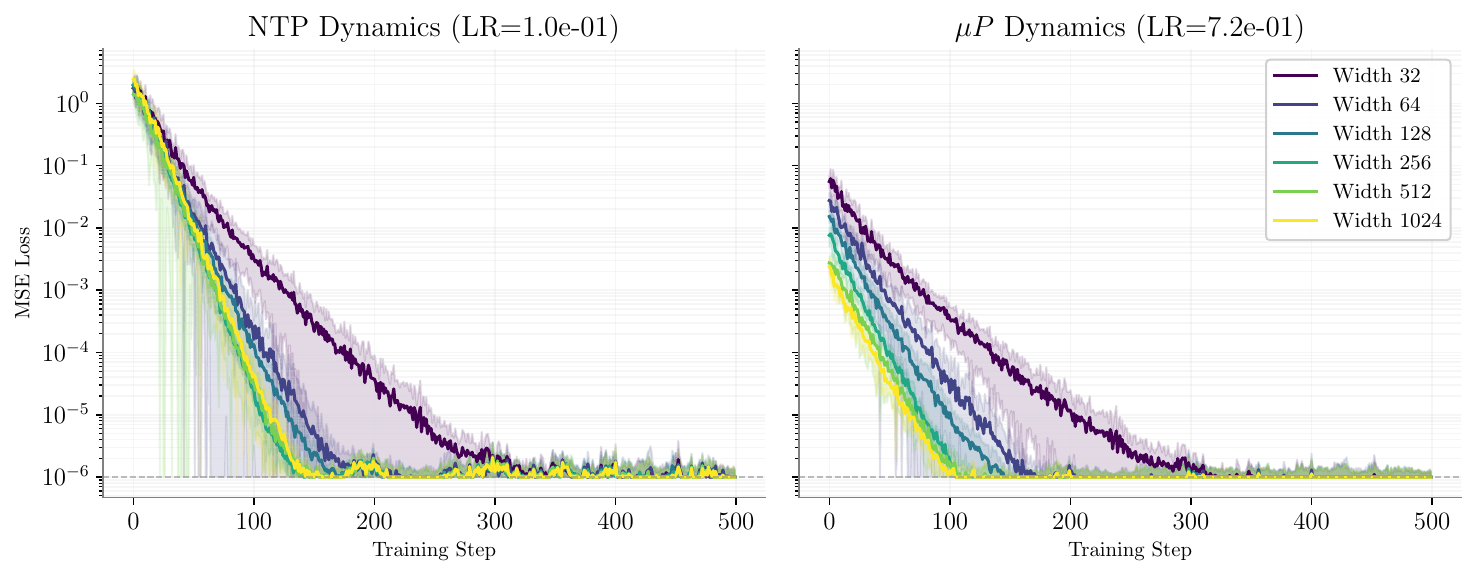}
    \caption{Loss curve for ITL training (first task) in NTP and $\mu$P across widths.}
    \label{fig:loss-curve-st}
\end{figure}

\begin{figure}[h!]
    \centering
    \includegraphics[width=0.67\linewidth]{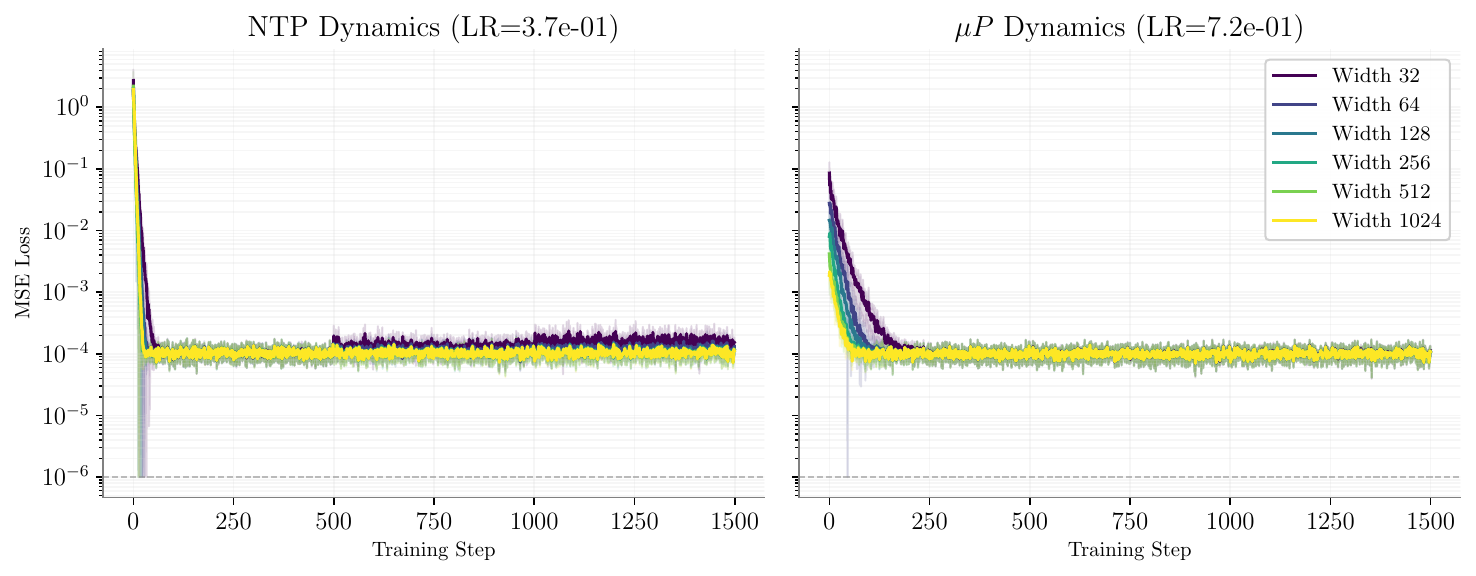}
    \caption{Loss curve for JTL training (three tasks) in NTP and $\mu$P across widths.}
    \label{fig:loss-curve-mt}
\end{figure}

\vspace{0.3cm}
\subsubsection{Window Predictive CL Algorithm Experiments}

To empirically validate our non-asymptotic derivations for the Window Predictive CL algorithm (\cref{sec:optimal_window}), we implement the Window agent in the shallow linear continuous drift environment. The Window agent optimizes a uniformly weighted mixture of the past $W$ tasks, $\mathcal{D}^k_W = \frac{1}{W} \sum_{j = k-W+1}^k \mathcal{P}^j$.

Our experiments for the Window agent are divided into two primary evaluations:

\paragraph{Real-Time Trajectory Evaluation with $W^\star$:}
First, we test the predictive power of our theoretical optimal window formula (\cref{eq:optimal-window}). For a given task duration $N$, we compute the theoretical optimum $W^\star$ using the environment and optimizer parameters (e.g., drift magnitude $\sigma_\Delta^2$, learning rate $\eta$, batch size $B$, and input covariance $\Sigma_x$). We then simulate the full training trajectory of the Independent-Task Learning (ITL) agent, the Joint-Task Learning (JTL) agent, and the Window agent explicitly parameterized with $W = W^\star$. We average the test loss over multiple seeds and smooth the curves by averaging over the task duration $N$ to clearly visualize the transient and stationary phases of adaptation. 

\paragraph{Transfer Efficiency Landscapes:}
Second, we rigorously map the space of positive transfer to show how the environment's non-stationarity limits the optimal memory horizon. We fix the task duration to distinct scales (e.g., $N \in \{100, 500, 1000\}$) and perform a 2D grid search over:
\begin{itemize}
    \item \textbf{Window Size ($W$):} Swept across a discrete range of integer values.
    \item \textbf{Drift Magnitude ($\sigma_\Delta^2$):} Swept logarithmically.
\end{itemize}
For every point in this grid, we compute the exact Average Task Error ($\mathrm{ALE}^k$) over a long sequence of tasks (e.g., $K=100$) to bypass initial task transient effects. We then calculate the Transfer Efficiency (TE) relative to the ITL baseline. This generates the 2D heatmaps (shown in \cref{fig:sim-W-TE}), which empirically verify our claims that (1) the area of positive transfer shrinks as task duration and drift increase, and (2) overly large window sizes introduce detrimental bias in highly non-stationary regimes, pulling the optimal $W$ toward $1$.

\begin{figure}[ht]
    \centering
    \includegraphics[width=0.32\linewidth]{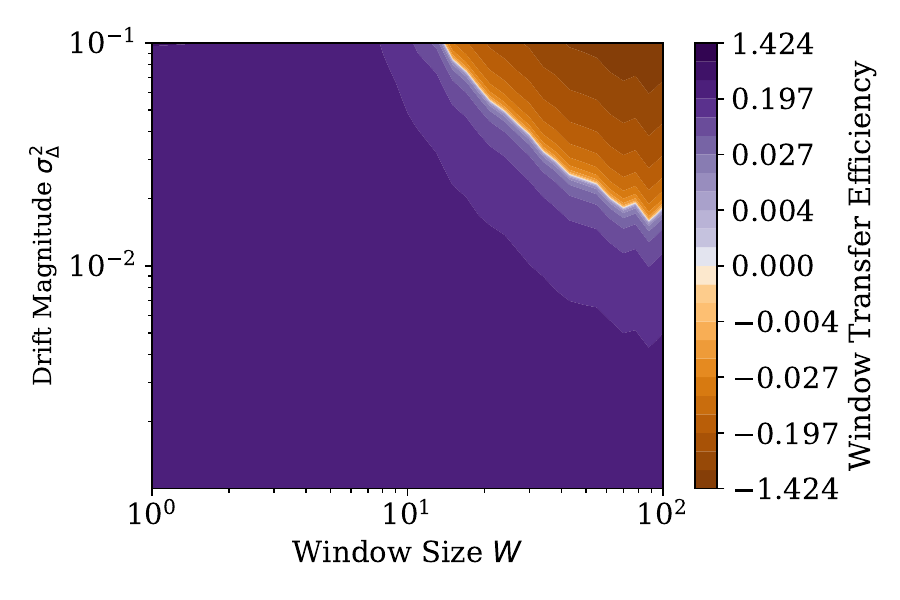}
    \includegraphics[width=0.32\linewidth]{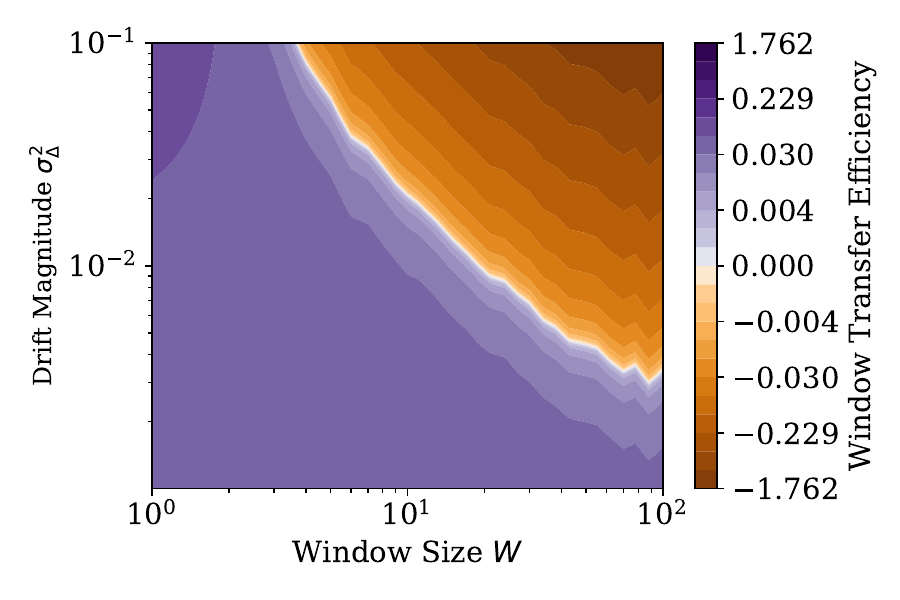}
    \includegraphics[width=0.32\linewidth]{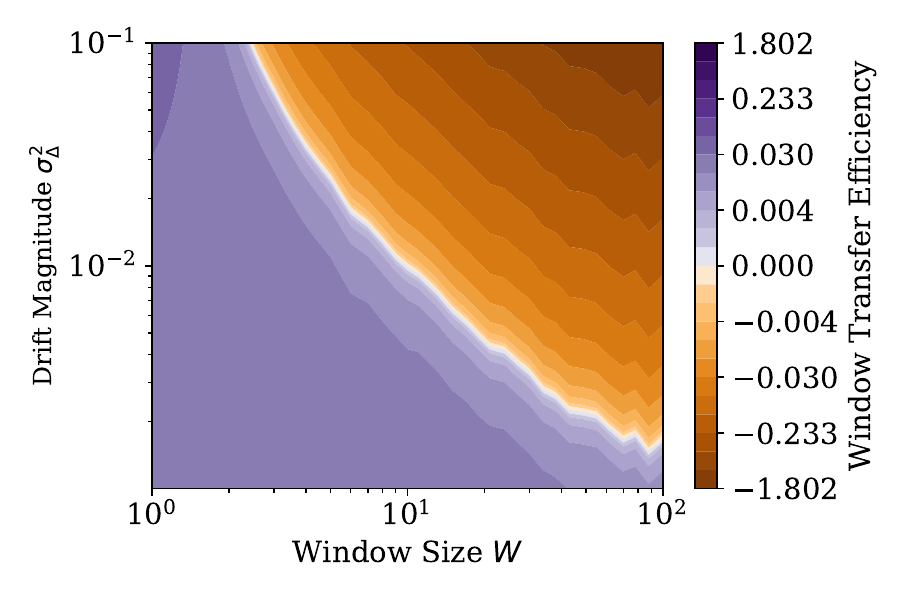}
    \caption{\textbf{Transfer Efficiency Landscape for the Window Agent.} The grids display the Transfer Efficiency of the Window agent as a function of Window Size $W$ (x-axis) and Drift Magnitude $\sigma_\Delta^2$ (y-axis). From left to right, the task duration increases ($N = 100, 500, 1000$). The area of positive transfer (white to orange) shrinks significantly as both task duration and drift magnitude increase. This visually confirms the theoretical analysis: overly large window sizes introduce a detrimental bias in highly non-stationary or long-duration environments, pulling the optimal $W^\star$ toward $1$.}
    \label{fig:sim-W-TE}
\end{figure}

\newpage

\subsection{Real-World Benchmarks}
\label{Apdx:datasets-networks-conf}

\begin{table}[ht]
    \centering
    \renewcommand{\arraystretch}{1.2} 
    \caption{Supervised Learning benchmark statistics.}
    \begin{tabular}{@{}lcccc@{}}
        \toprule
        \textbf{Benchmark} & \textbf{Tasks ($K$)} & \textbf{Input Size} & \textbf{Classes} & \textbf{Samples per Task ($N_k$)} \\ 
        \midrule
        CLEAR & 10 & 224$\times$224 & 100 & $\approx 109$K  \\
        MD5 & 5 & 224$\times$224 & 30 & $\in [1480, 27750]$ \\
        P/S CIFAR10 & 10 & 32$\times$32 & 10 & $10$K \\
        \bottomrule
    \end{tabular}
    \label{tab:benchmark_statistics}
\end{table}

\begin{table}[ht]
    \centering
    \renewcommand{\arraystretch}{1.2} 
    \caption{MULTIDATASET (MD5) individual dataset statistics prior to standardization.}
    \begin{tabular}{@{}lcc@{}}
        \toprule
         \textbf{Dataset} & \textbf{Original Classes} & \textbf{Total Samples} \\ 
        \midrule
        StanfordCars & 196 & 1523  \\
        FGVCAircraft & 100 & 2467 \\
        DTD & 47 & 1480 \\
        Food101 & 101 & 27750 \\
        OxfordPet & 37 & 3680 \\
        \bottomrule
    \end{tabular}
    \label{tab:MD5_statistics}
\end{table}

\subsubsection{CLEAR}
The CLEAR dataset \citep{lin2021clear} is a collection of images spanning the years 2004--2014, designed to evaluate models under natural temporal distribution shifts. We utilize the 100-class variant and split the collection into $K=10$ distinct tasks, one for each year. The tasks are organized in their natural temporal ordering. All input images are resized to 224$\times$224 pixels and normalized by subtracting the ImageNet mean $\mu = [0.485, 0.456, 0.406]$ and dividing by the standard deviation $\sigma = [0.229, 0.224, 0.225]$. 

\begin{figure}[h]
    \centering
    \includegraphics[width=\linewidth]{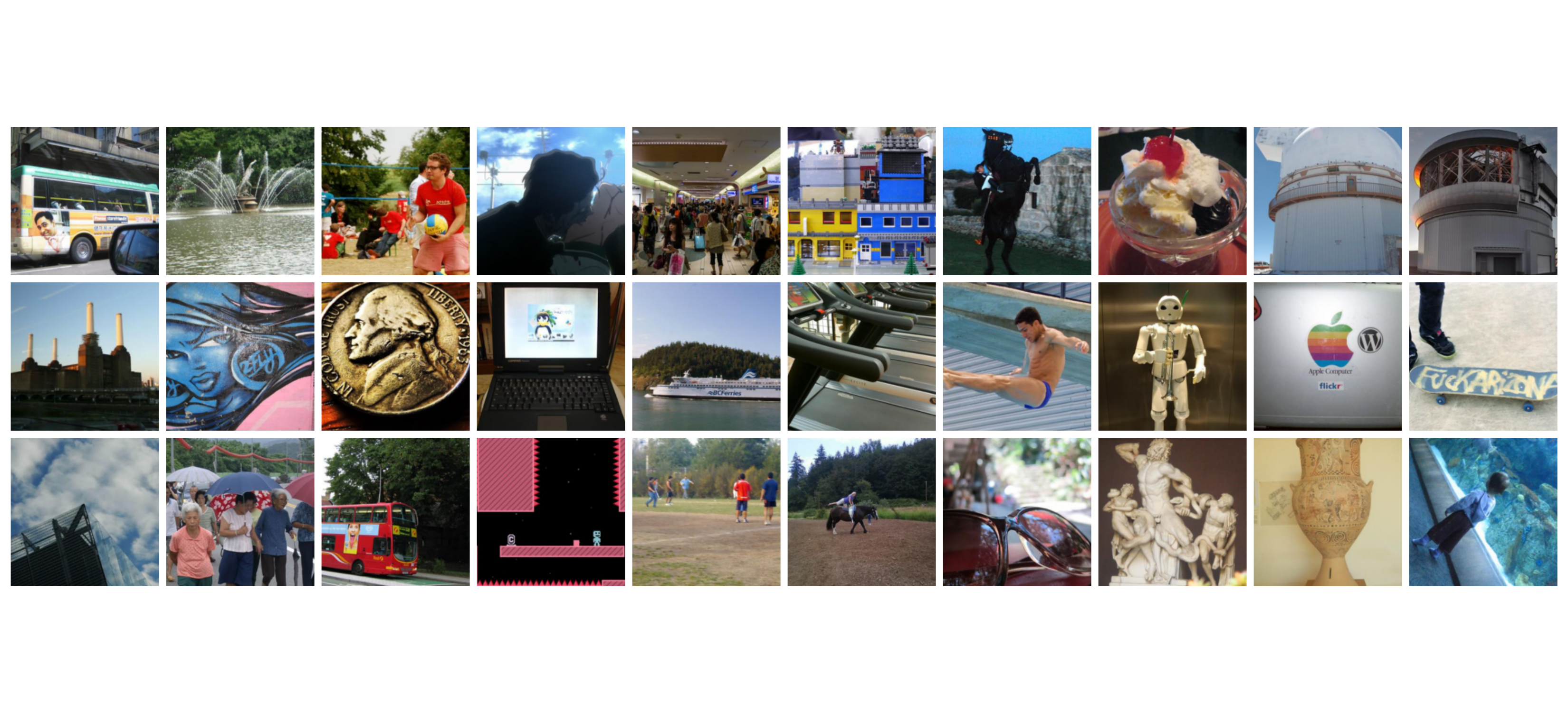}
    \caption{Samples from the CLEAR benchmark. Each column corresponds to a different temporally evolving task (year).}
    \label{fig:CLEAR-benchmark}
\end{figure}

\subsubsection{MULTIDATASET (MD5)}
The MULTIDATASET benchmark consists of a sequence of $K=5$ distinct open-source classification datasets with no semantic overlap. Specifically, the tasks consist of classifying automobile models \citep{krause2013}, aircraft models \citep{MajiRKBV13}, textures \citep{cimpoi14describing}, dishes \citep{bossard14}, and pets \citep{parkhi2012cats}. Each dataset originally features a different number of classes, samples, and input resolutions (detailed in \cref{tab:MD5_statistics}). 

To prevent the introduction of architectural biases or capacity imbalances, we standardize all tasks to have exactly 30 classes. Furthermore, we enforce identical batch sizes and total update steps across tasks, regardless of the original dataset's size. All input images are resized to 224$\times$224 pixels and normalized using the standard ImageNet statistics ($\mu = [0.485, 0.456, 0.406]$, $\sigma = [0.229, 0.224, 0.225]$). During training, the data is augmented with random crops, random horizontal flips, and random rotations of up to 15 degrees.

\begin{figure}[h]
    \centering
    \includegraphics[width=0.6\linewidth]{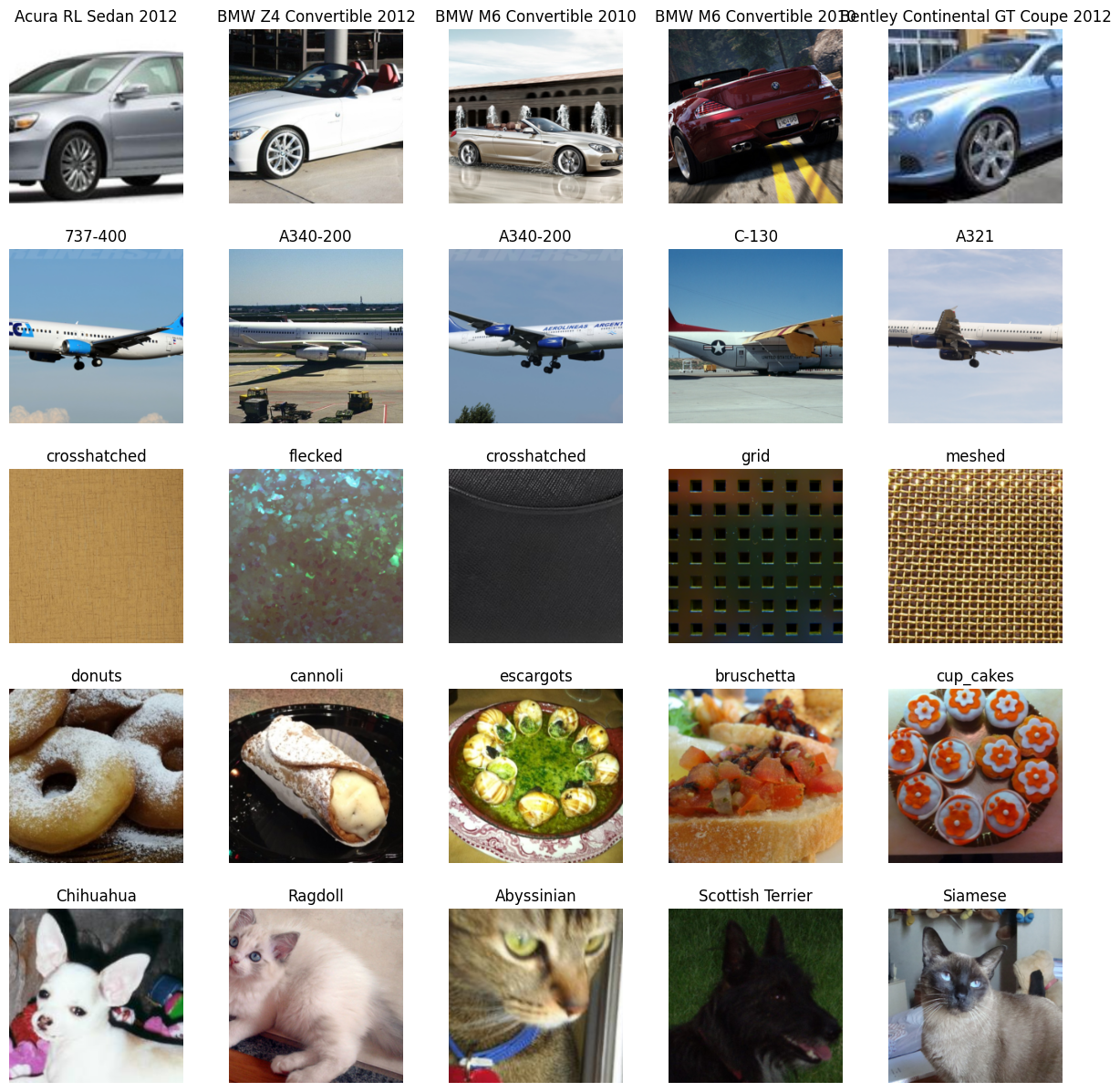}
    \caption{Samples from the MD5 benchmark. Each row corresponds to a completely distinct classification task.}
    \label{fig:MD5-benchmark}
\end{figure}

\subsubsection{Meta-World}
Meta-World \citep{yu2020meta} is a suite of simulated robotic manipulation tasks built on the MuJoCo physics engine. Our experiments use the \texttt{MT10} benchmark, a fixed collection of $10$ distinct manipulation tasks (\textit{reach}, \textit{push}, \textit{pick-place}, \textit{door-open}, \textit{drawer-open}, \textit{drawer-close}, \textit{button-press-topdown}, \textit{peg-insert-side}, \textit{window-open}, \textit{window-close}) on the shared Sawyer robot. We use the \texttt{v3} environment versions provided by the official Meta-World release. Unlike the meta-learning split (ML10), MT10 is designed for multi-task and continual evaluation on a common task set, with no held-out tasks.

At each step, the environment yields a $39$-dimensional observation vector that includes the robot's joint positions, the gripper state, and the spatial coordinates of relevant objects, with the goal position zero-masked. For the JTL agent, this observation is concatenated with a $10$-dimensional one-hot task identifier, yielding a $49$-dimensional input. The agent emits a $4$-dimensional continuous action (end-effector delta plus gripper command), with components squashed to $[-1, 1]$ via a $\tanh$. Each episode lasts at most $500$ time steps and yields a binary \texttt{success} flag in addition to the shaped reward.

\subsection{Synthetic Benchmarks}

\subsubsection{Permuted CIFAR-10}
Permuted CIFAR-10 is a benchmark constructed from the standard CIFAR-10 dataset \citep{krizhevsky2009learning} by applying fixed random spatial permutations to the images. We utilize two different permutation severities in our experiments, defined by the spatial dimensions of the perturbed area: 16$\times$16 and 32$\times$32. These denote the side length of the square patch of pixels—centered in the image—that undergoes the random permutation (see \cref{fig:C10-16-benchmark} and \cref{fig:C10-32-benchmark}). We refer to these respective benchmarks as `PC-16' and `PC-32'. All input images are normalized by subtracting the mean $\mu = [0.507, 0.486, 0.441]$ and dividing by $\sigma = [0.267, 0.256, 0.276]$. 

\begin{figure}[h!]
    \centering
    \includegraphics[width=0.5\linewidth]{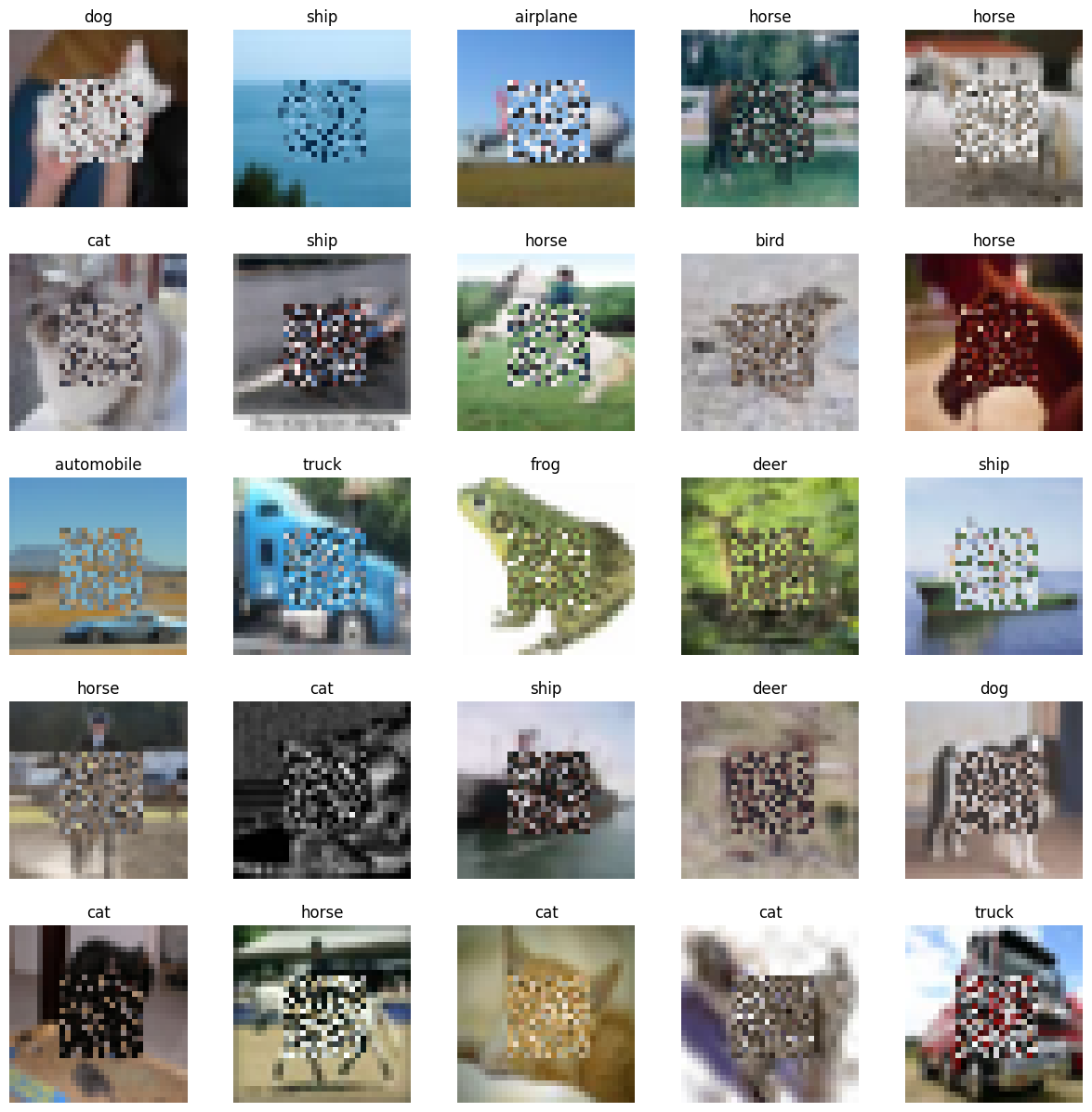}
    \caption{Samples from a Permuted CIFAR-10 task with a 16$\times$16 permutation box.}
    \label{fig:C10-16-benchmark}
    \vspace{0.5cm}
    \includegraphics[width=0.5\linewidth]{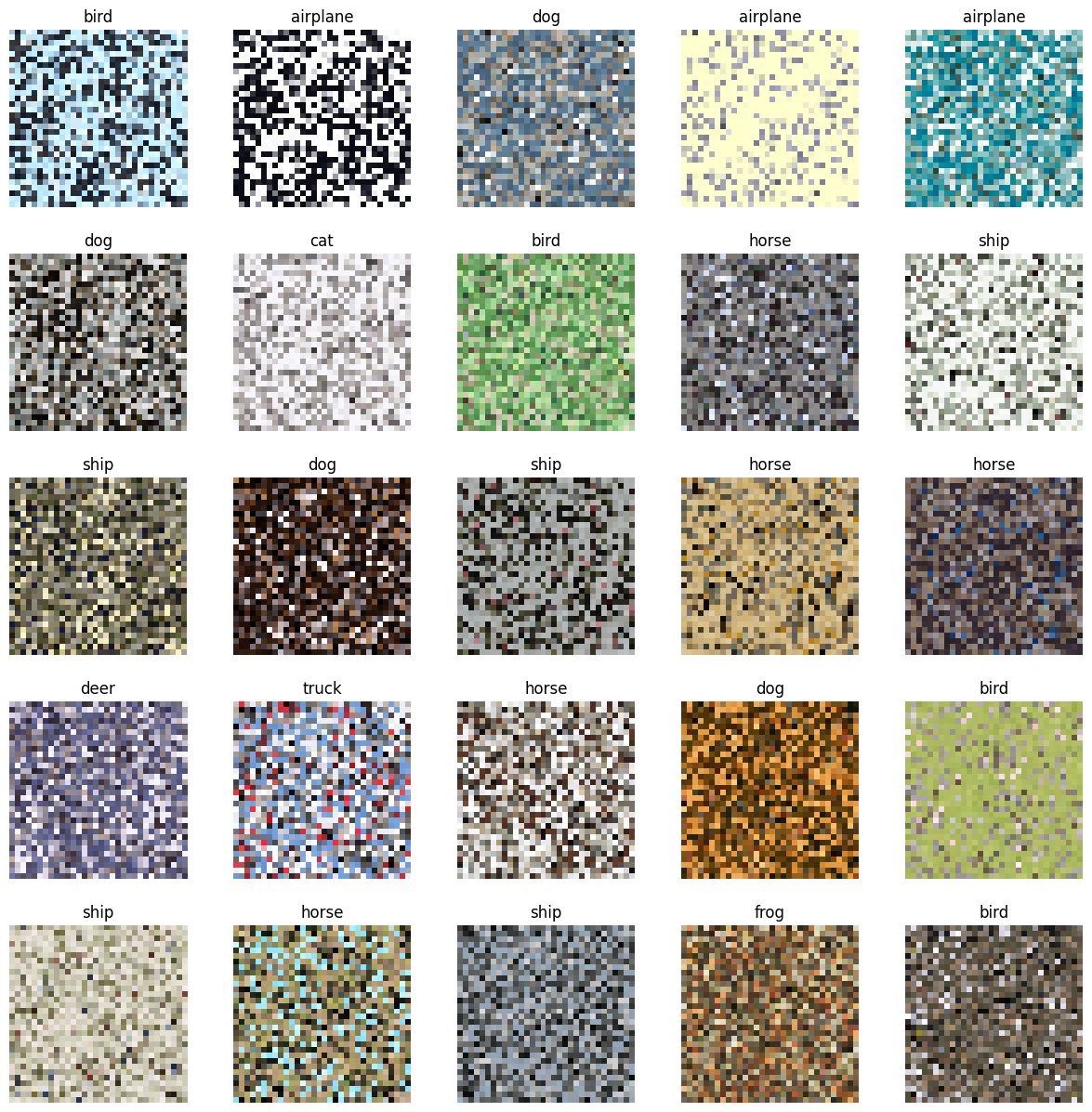}
    \caption{Samples from a Permuted CIFAR-10 task with a 32$\times$32 permutation box.}
    \label{fig:C10-32-benchmark}
\end{figure}

\subsubsection{Shuffled CIFAR-10}
Shuffled CIFAR-10 is an alternative benchmark built from the CIFAR-10 dataset \citep{krizhevsky2009learning}, which isolates the effect of concept drift by applying targeted label resampling to a set of samples \emph{fixed at the beginning of training}. We sweep the proportion of \emph{shuffled labels} across three severities: 10\%, 30\%, and 50\%. To guarantee that this injected label noise is strictly adversarial, we prevent the true label from being drawn during the reshuffling process. We refer to these benchmarks as `SC-10', `SC-30', and `SC-50'. Input images are normalized using the same statistics as Permuted CIFAR-10. Crucially, because the labels are irreversibly corrupted in the training set, we evaluate the resulting generalization penalty on the uncorrupted test set.

\subsection{Training Procedures \& Networks}
\label{Apdx:training}

\paragraph{Supervised Learning}

All supervised learning agents consist of a deep neural network backbone, an optimizer, and a learning rate scheduler. Across all supervised experiments, the core architecture is a ResNet-18 (\emph{RN18}). The output dimension of the final linear classification head is set to the total number of classes in the respective benchmark (100 for CLEAR, 30 for MD5, and 10 for Permuted/Shuffled CIFAR-10), and this head is shared across all tasks. Both the Independent-Task Learning (ITL) and Joint-Task Learning (JTL) agents are trained for an identical number of gradient steps $h$ per task. General optimization configurations are detailed in \cref{tab:default_sl_config}, with explicitly tuned hyperparameters listed in \cref{tab:tuned_sl_config}.

\emph{ITL Agent.~~} 
Given a sequence of $K$ tasks, the ITL agent is trained to minimize the cross-entropy loss exclusively on the training data of the current active task. To establish a strict ITL baseline and prevent any forward transfer, both the network weights and the optimizer state of the ITL agent are completely re-initialized to their random starting priors at the beginning of every task.

\emph{JTL Agent.~~} 
The JTL agent minimizes the expected loss over the union of all currently and previously observed tasks. In practice, this is implemented using an exact experience replay buffer: at each gradient step, the optimizer processes a concatenated batch comprising data from the current task stream alongside data uniformly sampled from the buffer of previous tasks. Unlike the ITL agent, the JTL agent's network parameters and optimizer state are preserved across task boundaries.

\paragraph{Reinforcement Learning}

In our reinforcement learning experiments, we compare ITL and JTL agents using Soft Actor-Critic (SAC) \citep{haarnoja2018soft} on the MT10 subset of the Meta-World benchmark \citep{yu2020meta}, with automatic entropy temperature tuning \citep{haarnoja2018softapps}. SAC is the standard off-policy continuous-control algorithm used in the multi-task Meta-World literature \citep{yu2020meta}, and its use of a single replay buffer makes the ITL and JTL agents differ only in (i) what data populates the buffer and (ii) whether the policy is conditioned on a task identifier; the underlying optimizer, network architecture, and per-epoch compute budget are otherwise identical.

\emph{Architecture (shared).~~} For both agents, the actor and the twin critics are MLPs with two hidden layers of $400$ units and ReLU activations. The actor outputs the mean and (clipped) log-standard-deviation of a diagonal Gaussian, squashed by $\tanh$; the critics output a scalar $Q$-value. The log-std is clipped to $[-20, 2]$.

\emph{ITL Agent.~~} For each task in MT10 we train a separate SAC agent from a random initialization, with its own replay buffer and its own learned entropy temperature $\alpha$. The agent observes only the 39-dimensional Meta-World state of its task; no task identifier is appended. Each ITL run lasts $500$ epochs; per epoch we collect $40$ rollouts of length $500$ ($2 \times 10^{4}$ environment steps) into the buffer and then perform $500$ SAC gradient updates with mini-batch size $500$.

\emph{JTL Agent.~~} We train a single SAC agent across all MT10 tasks under a \emph{curriculum}: at epoch $e$ the active set is the first $1 + \lfloor e / N \rfloor$ tasks in a fixed ordering, with task duration $N = 500$ epochs. The actor and critics are conditioned on a $10$-dimensional one-hot task identifier concatenated to the observation. A single replay buffer of capacity $10^{6}$ is shared across all tasks and is \emph{never} reset; whenever a new task becomes active we re-initialize the Adam states of the actor and critic optimizers (the entropy temperature is kept) so that the optimizer can adapt to the new objective. Per epoch we collect $200$ rollouts of length $500$ ($10^{5}$ environment steps) by uniformly cycling over the currently active environments, then perform $500$ SAC gradient updates with mini-batch size $500$ sampled uniformly from the shared buffer. Evaluation each epoch is performed on the current task.

\emph{Task ordering.~~} For the curriculum experiments shown in the main paper we use the \texttt{easy\_to\_hard} ordering, obtained by sorting MT10 tasks in decreasing order of the final single-task success rate of the ITL SAC agents under the same configuration. The resulting sequence is \texttt{drawer-close}, \texttt{window-close}, \texttt{button-press-topdown}, \texttt{drawer-open}, \texttt{reach}, \texttt{window-open}, \texttt{door-open}, \texttt{push}, \texttt{peg-insert-side}, \texttt{pick-place}. Each configuration (ITL and JTL) is replicated over $10$ seeds; per-task figures aggregate seeds for which the entire $500$-epoch stage is logged, with a minimum of $7$ seeds per relative epoch.

\subsubsection{Hyperparameters}

\paragraph{Supervised Learning}

The key hyperparameters—learning rate and weight decay—are carefully tuned separately for each combination of agent and benchmark using a grid search. To ensure a fair comparison, the optimizer and scheduler dynamics are fixed across most supervised learning experiments. We employ Stochastic Gradient Descent (SGD) paired with a task-wise Cosine Annealing scheduler. Specifically, the learning rate is smoothly annealed toward zero over the course of each task's duration ($h$ steps) and undergoes a hard reset to its maximum initial value at the beginning of the next task, providing the necessary plasticity to rapidly minimize the changing objective.

\begin{table}[ht]
    \centering
    \renewcommand{\arraystretch}{1.2} 
    \caption{Fixed Hyperparameters for Supervised Learning Experiments. Note that while the batch size was evaluated during tuning, the optimal batch size remained consistent across the primary benchmarks.}
    \begin{tabular}{@{}lc@{}}
        \toprule 
        \textbf{Hyperparameter} & \textbf{Value}  \\ 
        \midrule
        \textbf{Optimizer} & SGD \\ 
        \textbf{Momentum} & 0.9 \\ 
        \textbf{Scheduler} & Cosine Annealing (Task-wise) \\ 
        \textbf{Stream Batch Size} & 256 \\ 
        \bottomrule
    \end{tabular}
    \label{tab:default_sl_config}
\end{table}
\vspace{-0.3cm}

\begin{table}[ht]
    \centering
    \renewcommand{\arraystretch}{1.2} 
    \caption{Tuned Hyperparameters for Supervised Learning Experiments.}
    \begin{tabular}{@{}llcccl@{}}
        \toprule 
        \textbf{Dataset} & \textbf{Agent} & \textbf{Network} & \textbf{LR} & \textbf{Weight Decay} \\ 
        \midrule
        CLEAR & ITL & RN18 & 0.100 & $3 \times 10^{-4}$ \\
        CLEAR & JTL & RN18 & 0.100 & $1 \times 10^{-4}$ \\
        \midrule
        PC-16 & ITL  & RN18 & 0.055 & $8 \times 10^{-4}$  \\
        PC-16 & JTL  & RN18 & 0.075 & $7 \times 10^{-4}$  \\
        \midrule
        PC-32 & ITL  & RN18 & 0.060 & $1 \times 10^{-3}$  \\
        PC-32 & JTL  & RN18 & 0.074 & $1 \times 10^{-3}$ \\
        \midrule
        MD5 & ITL  & RN18 & 0.089 & $9 \times 10^{-4}$ \\
        MD5 & JTL  & RN18 & 0.080 & $6 \times 10^{-4}$  \\
        \bottomrule
    \end{tabular}
    \label{tab:tuned_sl_config}
\end{table}


\paragraph{Reinforcement Learning}
For a fair comparison, ITL and JTL share the same SAC core hyperparameters (network width, learning rates, $\gamma$, $\tau$, entropy auto-tuning, mini-batch size, and the number of SAC gradient updates per epoch). The two agents differ only along axes that are intrinsic to the ITL/JTL distinction: (i) the size of the rollout collected per epoch ($2 \times 10^{4}$ environment steps for ITL on its single active task vs.\ $10^{5}$ for JTL split uniformly over the currently active tasks), (ii) the presence of a one-hot task conditioning for JTL, (iii) replay buffer scope (per-task vs.\ shared across the curriculum), and (iv) optimizer state at task boundaries (full re-initialization at every task for ITL vs.\ Adam-state reset of actor/critic only when a new task becomes active for JTL). All optimization is performed with Adam. Here \emph{compute-matched} refers to the gradient-update budget---the number of SAC updates ($500$) and the mini-batch size ($500$) per epoch are identical for the two agents---rather than the environment-step count, which differs by construction ($2\times10^{4}$ steps per epoch for ITL on its single task vs.\ $10^{5}$ for JTL split over the active tasks); if anything, JTL is given \emph{more} environment data per epoch, yet still underperforms.

We follow the MT-SAC configuration of \citet{yu2020meta} for the shared core, with the ITL-side cycle/rollout schedule taken from the single-task SAC appendix of the same work. The resulting hyperparameters are reported in \cref{tab:optimal_config_rl}; per-task and per-curriculum runtime budgets follow directly from the rollout schedule and the number of gradient updates.

\begin{table}[htb]
    \centering
    \renewcommand{\arraystretch}{1.2}
    \caption{SAC hyperparameters used for MT10 ITL and JTL agents. The two agents share the SAC core; they differ only in rollout schedule (environment steps collected per epoch), task conditioning, replay-buffer scope, and the optimizer-reset policy at task boundaries.}
    \label{tab:optimal_config_rl}
    \begin{tabular}{@{}lcc@{}}
        \toprule
        \textbf{Parameter} & \textbf{ITL} & \textbf{JTL} \\
        \midrule
        \multicolumn{3}{l}{\emph{Shared SAC core} (identical for ITL and JTL)} \\
        Optimizer & \multicolumn{2}{c}{Adam} \\
        Hidden layers (actor / critic) & \multicolumn{2}{c}{$2 \times 400$} \\
        Activation & \multicolumn{2}{c}{ReLU} \\
        Actor / critic / $\alpha$ learning rate & \multicolumn{2}{c}{$3 \times 10^{-4}$} \\
        Discount $\gamma$ & \multicolumn{2}{c}{$0.99$} \\
        Target smoothing $\tau$ & \multicolumn{2}{c}{$5 \times 10^{-3}$} \\
        Entropy temperature $\alpha$ & \multicolumn{2}{c}{auto-tuned} \\
        $\log \sigma$ clip & \multicolumn{2}{c}{$[-20,\ 2]$} \\
        SAC mini-batch size & \multicolumn{2}{c}{$500$ (\cref{tab:results-wild}); $5000$ (ablation)} \\
        Gradient updates per epoch & \multicolumn{2}{c}{$500$} \\
        Epochs per task ($N$) & \multicolumn{2}{c}{$500$} \\
        \midrule
        \multicolumn{3}{l}{\emph{ITL / JTL-specific}} \\
        Task conditioning & none & one-hot, $\mathbb{R}^{10}$ \\
        Episodes per epoch (length $500$) & $40$ & $200$ \\
        Env.\ steps per epoch & $2 \times 10^{4}$ & $10^{5}$ \\
        Replay buffer & per task & shared, capacity $10^{6}$ \\
        Min.\ buffer size before updates & $1{,}500$ & $1{,}500$ \\
        Buffer reset at task switch & yes & no \\
        Optimizer reset & full reset every task & Adam state of actor/critic re-init ($\alpha$ kept) \\
        Curriculum / ordering & N/A & \texttt{easy\_to\_hard} \\
        Eval target each epoch & current task & current task \\
        Number of seeds & $10$ & $10$ \\
        \bottomrule
    \end{tabular}
\end{table}

\emph{Effect of the SAC mini-batch size on ITL agents.}
We fix the SAC mini-batch size to $500$ for both ITL and JTL so that the two agents share an identical per-update compute budget (\cref{tab:optimal_config_rl}). Since the quadratic analysis predicts that the optimization-induced (\emph{Variance}) penalty scales as $1/B$ in the batch size $B$ (\cref{sec:quadratic-losses}), it is natural to ask whether the absolute performance of the agents, and hence the task-difficulty ordering that drives the aggregate Transfer Efficiency, is sensitive to this choice. We conducted a batch-size ablation for the single-task (ITL) SAC agents, comparing $B=500$ against a larger $B=5000$ under otherwise identical hyperparameters. Each of the $10$ ITL agents is trained for $500$ epochs ($2\times10^{4}$ environment steps per epoch, i.e.\ $10^{7}$ steps per task) in the \texttt{easy\_to\_hard} ordering; we compared the mean success rate over $10$ seeds with a $\pm$SEM band, smoothed with a centered rolling average. Two observations stood out. First, the larger batch size yields a \emph{uniformly higher and lower-variance} success rate on nearly every task, consistent with the $1/B$ gradient-noise scaling: the reduced stochasticity in each SAC update translates into faster and more stable convergence (most visibly on the harder \emph{push}, \emph{door-open}, \emph{button-press-topdown}, and \emph{window-open} blocks). Second, and more importantly for our claims, the \emph{relative} ordering of the tasks by difficulty is essentially unchanged: the tasks that are easy at $B=500$ (e.g.\ \emph{drawer-close}, \emph{reach}, \emph{window-close}) remain easy at $B=5000$, and the tasks that single-task SAC fails to solve within the budget (e.g.\ \emph{pick-place}, \emph{peg-insert-side}) remain unsolved at both batch sizes. Because the per-task Transfer Efficiency decomposition is driven by this difficulty ordering and by the high-instability tasks, the qualitative conclusion---negative aggregate TE on MT10 dominated by a handful of high-instability tasks---is robust to the mini-batch size. We therefore report the compute-matched $B=500$ setting for all quantitative results in the main text; the per-task trajectory illustration in \cref{fig:example_performance_traj} instead shows the visually cleaner $B=5000$ runs, whose task-difficulty ordering and hence the qualitative transfer story is identical.

\newpage

\subsection{Additional Figures and Empirical Substantiation}
\label{Apx:add-results}
This subsection collects supporting diagnostics for the empirical claims in the main text. The figures should be read as mechanism checks for the Transfer Efficiency (TE) decomposition rather than as separate benchmark claims: they expose task heterogeneity, instability proxies, width- and parametrization-dependent optimization bias, and the real-time trajectories that are averaged into the final TE curves.

\begin{figure}[ht]
    \centering
    \begin{minipage}{0.32\textwidth}
        \centering
    \includegraphics[width=\linewidth]{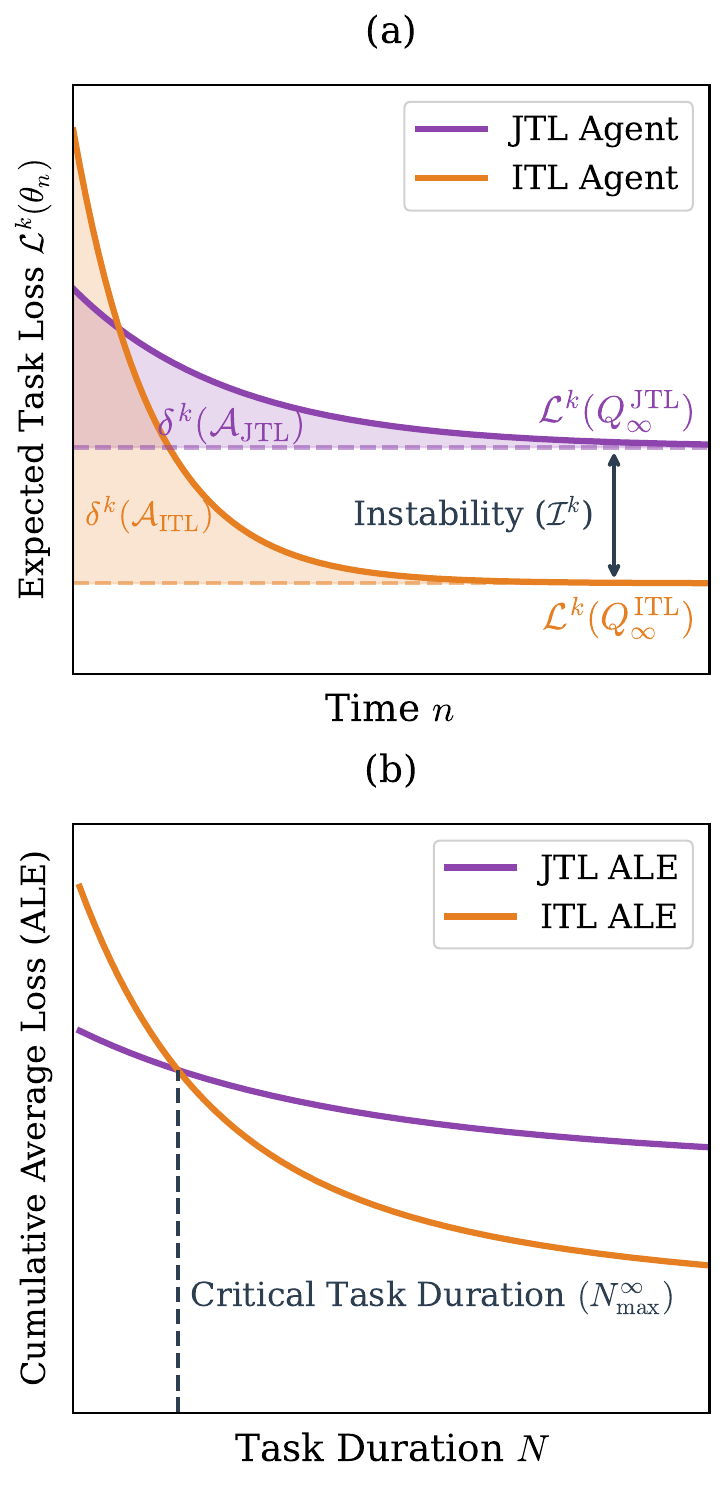}
    \end{minipage}
    \hspace{0.25cm}
    \begin{minipage}{0.6\textwidth}
    \caption{\textbf{Decomposition of Transfer Efficiency and the Critical Task Duration.}
    \emph{(a) Decomposition of Learning Curves:} Expected loss $\mathcal{L}^k(\theta_n)$ over time $n$ for the JTL (violet) and ITL (orange) agents. The shaded areas represent the Transient Errors ($\delta^k$) accumulated as the agents converge toward their respective stationary risks. The gap between these asymptotic limits represents the task Instability ($\mathcal{I}^k$). While JTL benefits from prior knowledge initially, it converges to a higher stationary risk due to interference.
    \emph{(b) Average Task Error and Critical Duration:} Cumulative Average Loss (ALE) as a function of total task duration $N$. Because the ITL agent eventually converges to a better local minimum, its ALE steadily drops, eventually crossing the JTL agent's curve. This intersection defines the Critical Task Duration ($N^\infty_{\max}$), beyond which transfer becomes detrimental ($\mathrm{TE}^k < 0$).\vspace{-5mm}}
    \label{fig:transfer_efficiency_final}
    \end{minipage}
\end{figure}

\subsubsection{Pairwise Transfer is Asymmetric, and Why Ordering Matters}
\label{apdx:rl-pairwise-asymmetry}
To isolate the \emph{directional} structure of transfer, we measure pairwise interference directly: for every ordered pair $(t_i, t_j)$ of MT10 tasks we train a two-task joint (JTL) agent and compare the performance it achieves on $t_i$ against the single-task (ITL) specialist for $t_i$. \Cref{fig:rl_pairwise_asymmetry} reports the resulting normalized gap
\[
    \Delta(t_i \mid t_j) \;=\; \frac{\text{ITL}(t_i) - \text{JTL}(t_i \mid t_j)}{\text{ITL}(t_i)},
\]
where the row indexes the \emph{evaluated} task $t_i$ and the column indexes the \emph{co-trained} task $t_j$. With the figure's sign convention, $\Delta>0$ (red) means joint training (JTL) \emph{hurts} the row task relative to its ITL specialist (ITL is better), $\Delta<0$ (blue) means JTL \emph{helps} it (JTL is better), and the diagonal is $0$ by construction.

The salient structural fact is that \textbf{the matrix is not symmetric}: in general $\Delta(t_i \mid t_j) \neq \Delta(t_j \mid t_i)$. Transfer between two tasks is directional---one task can be substantially harmed by a partner that it, in turn, helps. \emph{Drawer-close} and \emph{peg-insert-side}, for instance, are improved by joint training with almost every partner (predominantly blue rows), whereas \emph{push}, \emph{pick-place}, and \emph{button-press-topdown} are degraded by most partners (predominantly red rows), and the two directions of a given pair frequently carry opposite signs.

This asymmetry has a direct consequence for the lifelong objective. Because the quantity we ultimately optimize is the time-averaged error \emph{across the whole task sequence}, a pair in which one direction is positive and the other negative need not produce net-negative transfer: the gains on the helped task can offset or outweigh the losses on the harmed task. More importantly, asymmetry turns \emph{task ordering} into a usable degree of freedom. Concretely, suppose $\Delta(t_1 \mid t_2) > 0$ (joint training would hurt $t_1$) while $\Delta(t_2 \mid t_1) < 0$ (joint training helps $t_2$). Then presenting the tasks in the order $t_1 \rightarrow t_2$ (training $t_1$ \emph{alone} first, and only introducing $t_2$ afterwards under joint training) lets $t_1$ bank its ITL-level performance during its first phase while $t_2$ reaps the positive transfer during the joint phase. The lifelong error over the pair can then fall below the average of the two ITL specialists, even though the symmetric (simultaneous, from-scratch) joint agent would have shown mixed or negative transfer. In other words, the off-diagonal sign structure of \cref{fig:rl_pairwise_asymmetry} is precisely the lever that the curriculum exploits, and it is the structural reason to adopt an explicit ordering.

We stress that the matrix is measured under \emph{simultaneous} two-task training from a shared initialization, so it is a proxy for the directional interference. It motivates ordering as a mechanism and explains the observed sensitivity of aggregate TE to the task sequence. A systematic search over orderings that maximize lifelong performance using this asymmetry is a natural direction left to future work.

\begin{figure}[htb]
    \centering
    \includegraphics[width=0.78\linewidth]{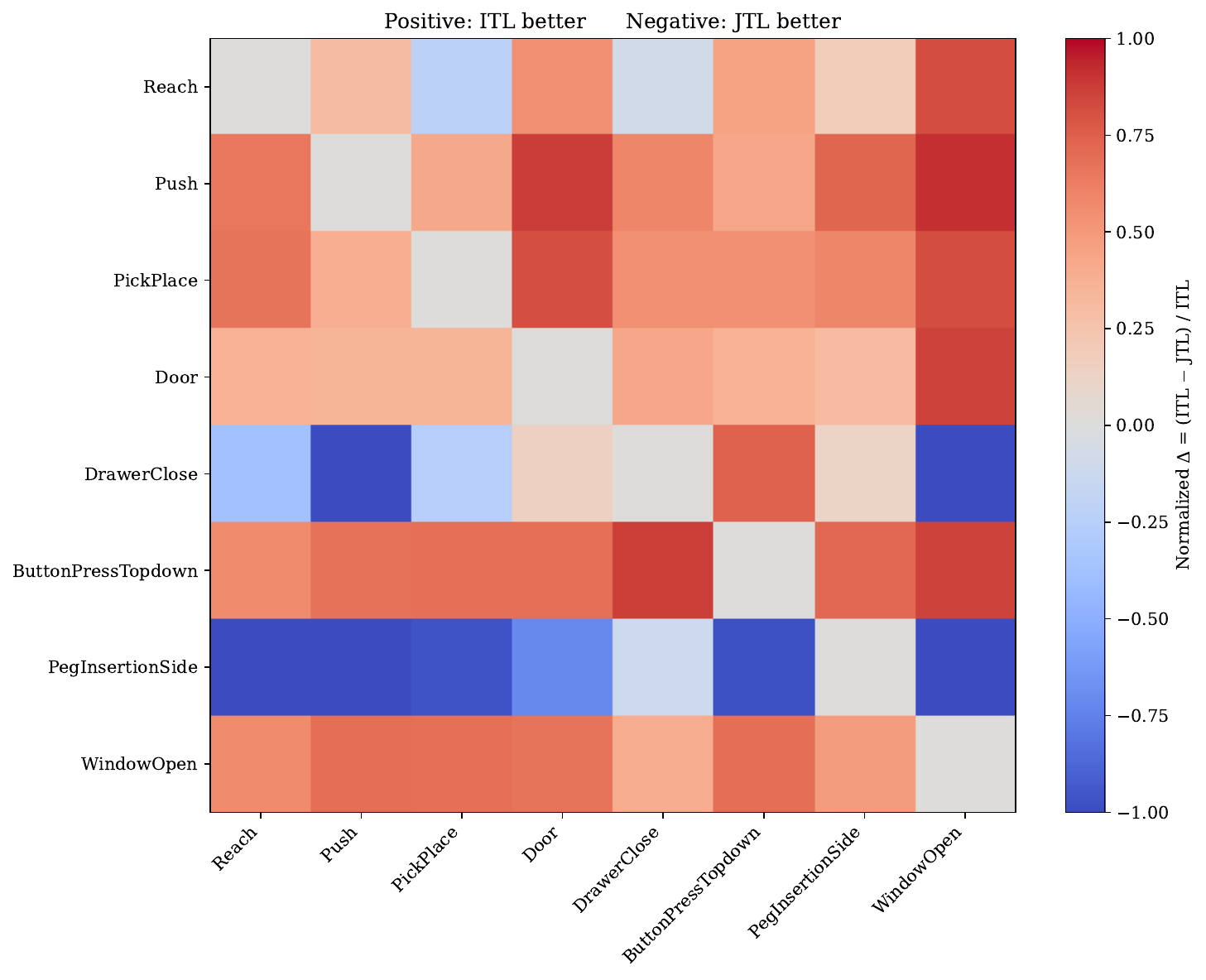}
    \caption{\textbf{Pairwise ITL-vs-JTL transfer is asymmetric.} Each cell is the normalized gap $\Delta(t_i \mid t_j) = (\text{ITL}(t_i) - \text{JTL}(t_i \mid t_j))/\text{ITL}(t_i)$ for the row task $t_i$ when co-trained with the column task $t_j$. \textcolor{red}{Red} ($\Delta>0$) marks pairs where joint training (JTL) makes the row task \emph{worse} than its ITL (single-task) specialist (ITL better); \textcolor{blue}{blue} ($\Delta<0$) marks pairs where JTL makes the row task \emph{better} (JTL better); the diagonal is $0$ by construction. The matrix is markedly \emph{non-symmetric}: $\Delta(t_i\mid t_j)\neq\Delta(t_j\mid t_i)$, so transfer is directional. \vspace{-3mm}
    }
    \label{fig:rl_pairwise_asymmetry}
\end{figure}

\subsubsection{Proxy measures of instability}

We consider two proxies to measure Instability. The first, $\mathcal{I}_{CU}$, approximates the theoretical definition (\cref{def:instability}) by training ITL and JTL models to convergence and measuring the difference in their average task-wise error. The second, $\mathcal{I}_{transfer}$, is a heuristic that measures the average off-diagonal error in a task transfer matrix (i.e., the error of a model trained on task $i$ when evaluated on task $j$). In practice, $\mathcal{I}_{transfer}$ is computationally cheaper.

\begin{table}[h]
    \centering
    \begin{minipage}[c]{0.35\textwidth} 
        \caption{Proxies for Instability. The higher the measure the higher the Instability. The range of values is not the same for supervised and RL benchmarks. In gray the toy benchmarks.}
        \label{tab:instability}
    \end{minipage}
    \hspace{0.1cm}
    \begin{minipage}[c]{0.6\textwidth} 
        \centering
        \begin{tabular}{l|c|c}
            \toprule
            Data  & {$\mathcal{I}_{CU}$} & $\mathcal{I}_{transfer}$  \\
            \midrule
            CLEAR & $-0.024_{\pm 0.003}$ & 0.007$_{\pm 0.001}$ \\
            MD5 & $0.017_{\pm 0.008}$  & 0.35$_{\pm 0.01}$ \\
            MT10 & $0.407_{\pm 0.002}$  & $0.139_{\pm 0.009}$  \\
            \midrule
            \rowcolor{gray!10} PC-16 & $-0.0213_{\pm 0.0024}$ & 0.03$_{\pm 0.002}$\\
            \rowcolor{gray!10} PC-32 & $0.0014_{\pm 0.004}$ & 0.30$_{\pm 0.005}$ \\
            \rowcolor{gray!10} SC-10\% & $0.014_{\pm 0.004}$ & 0.007$_{\pm 0.006}$ \\
            \rowcolor{gray!10} SC-30\% & $0.047_{\pm 0.009}$ & 0.008$_{\pm 0.001}$ \\
            \rowcolor{gray!10} SC-50\% & $0.137_{\pm 0.005}$ & 0.0083$_{\pm 0.007}$ \\
            \bottomrule
        \end{tabular}
    \end{minipage}
\end{table}

As shown in \cref{tab:instability}, both proxies confirm our qualitative assessment: CLEAR exhibits low Instability, while MD5 and MT10 are high. The proxies also validate our methodology for the controlled benchmarks, showing that Instability increases with the permutation size (PC-16 vs. PC-32) and the label shuffle percentage (SC-10\% to SC-50\%). The two proxy columns are not expected to match numerically: $\mathcal{I}_{CU}$ compares the converged JTL and ITL solutions directly, while $\mathcal{I}_{transfer}$ measures off-diagonal transfer between separately trained task specialists. Their agreement at the level of ordering is the relevant point for our empirical use of Instability.

\newpage

\subsubsection{Scaling Deep Linear Networks Experiments}

Figures~\ref{fig:sim-deep-lazy} and~\ref{fig:sim-deep-rich} expand the deep-linear study by varying hidden width while holding the data-generating process fixed. These experiments isolate a key message from the optimization-bias analysis: even when the realizable input-output map is linear, the training parametrization changes the TE landscape. Increasing width does not simply make JTL and ITL equivalent. Instead, the lazy and rich regimes trace different families of TE frontiers because they induce different optimization paths, different effective conditioning, and different amounts of Instability after joint training.

The lazy-regime simulations in Figure~\ref{fig:sim-deep-lazy} behave most like the fixed-feature or neural-tangent picture. Width increases capacity and smooths some finite-width effects, but the qualitative transfer boundary remains governed by the same tradeoff as in the shallow theory: JTL reduces transient optimization cost by reusing past data, while it pays an Instability cost when the current task has drifted too far from the historical average. Thus positive TE persists only in regions where the transient savings dominate the mismatch induced by jointly fitting previous tasks.

The rich-regime simulations in Figure~\ref{fig:sim-deep-rich} show that feature learning changes this balance. Because the representation can move during training, the model is not constrained to interpolate between tasks in the same way as in the lazy regime. This can improve transfer when the tasks share useful structure, but it can also amplify path dependence when the representation learned for earlier tasks is poorly aligned with later ones. The comparison between the two grids therefore supports the claim that TE is not determined only by task drift and sample count; it also depends on the optimizer-induced solution selected within an overparameterized model class.

\begin{figure}[H]
    \centering
    \includegraphics[width=0.35\linewidth]{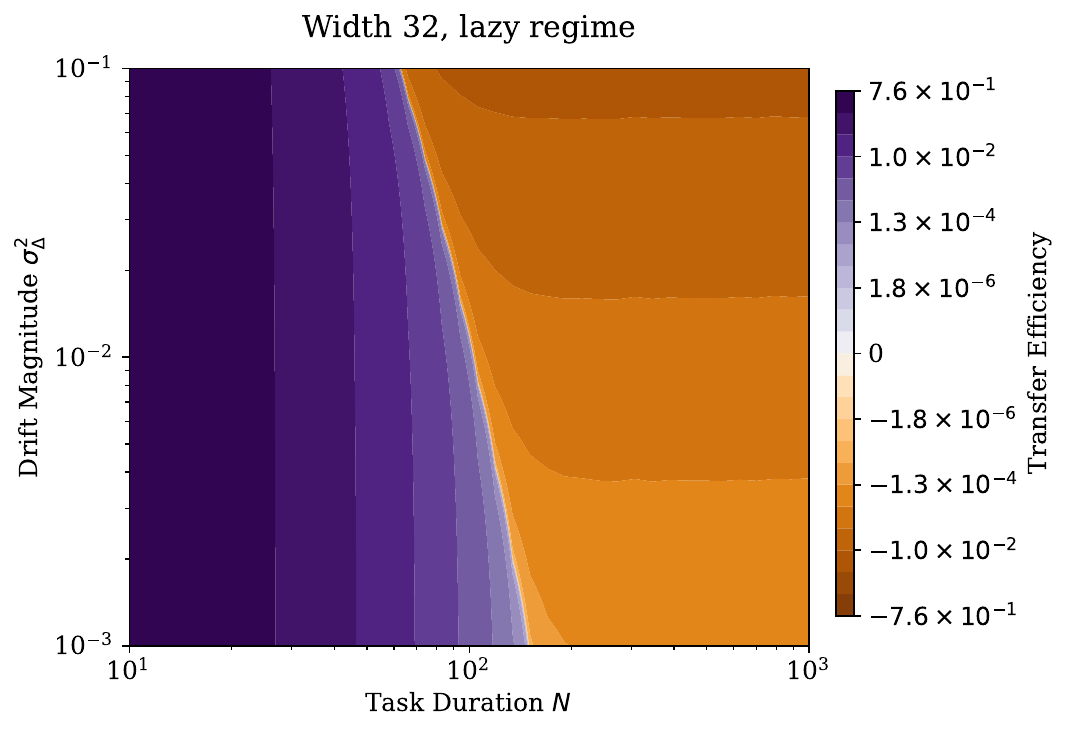}
    \includegraphics[width=0.35\linewidth]{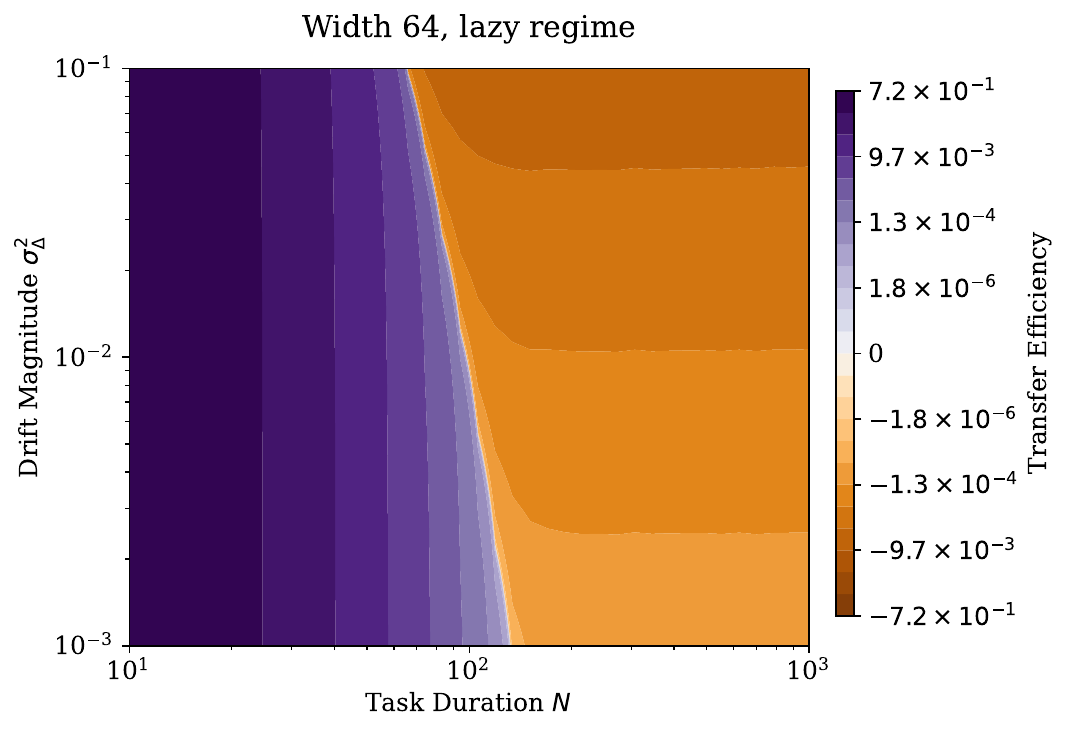}
    \includegraphics[width=0.35\linewidth]{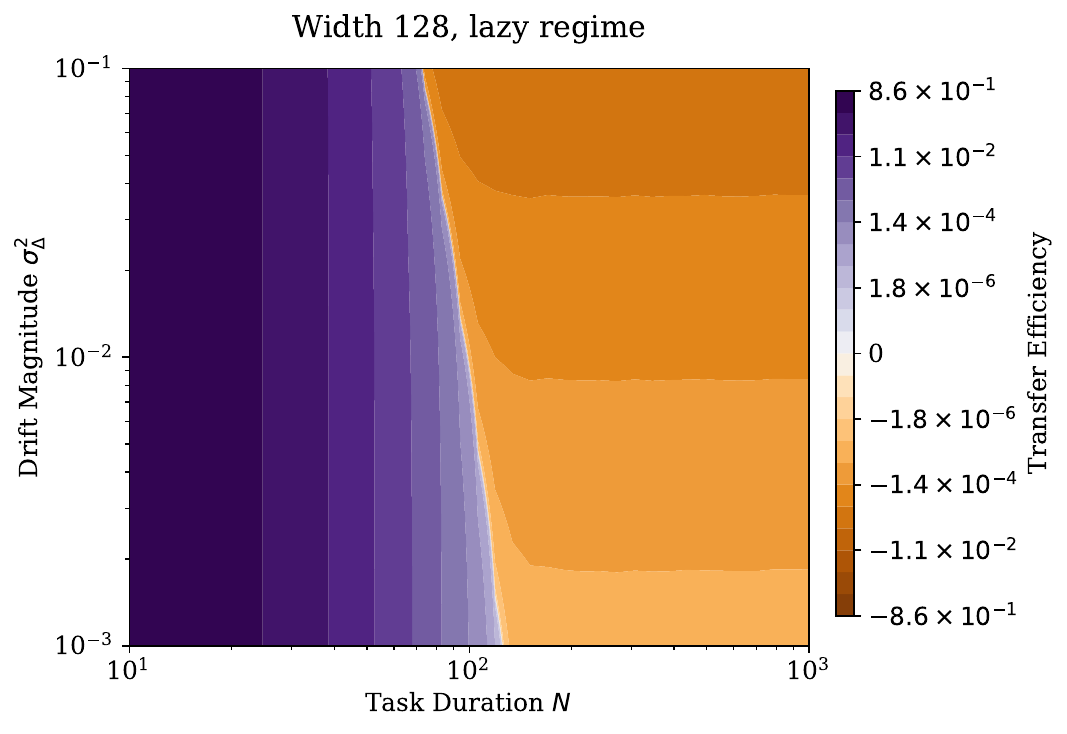}
    \includegraphics[width=0.35\linewidth]{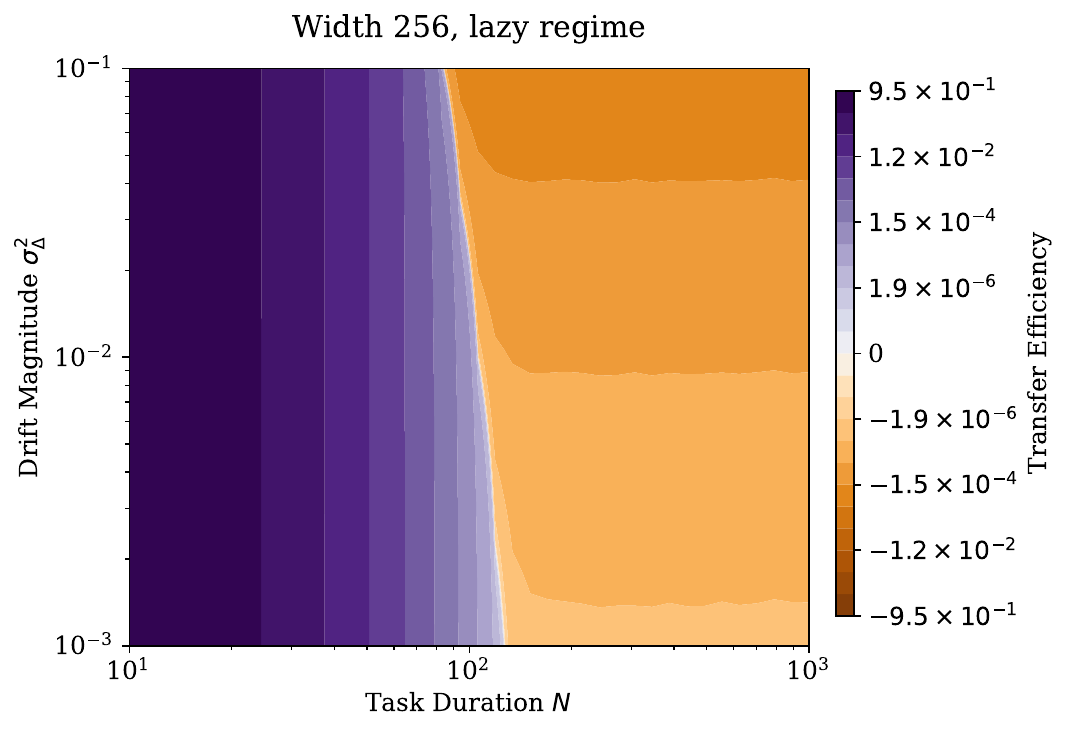}
    \includegraphics[width=0.35\linewidth]{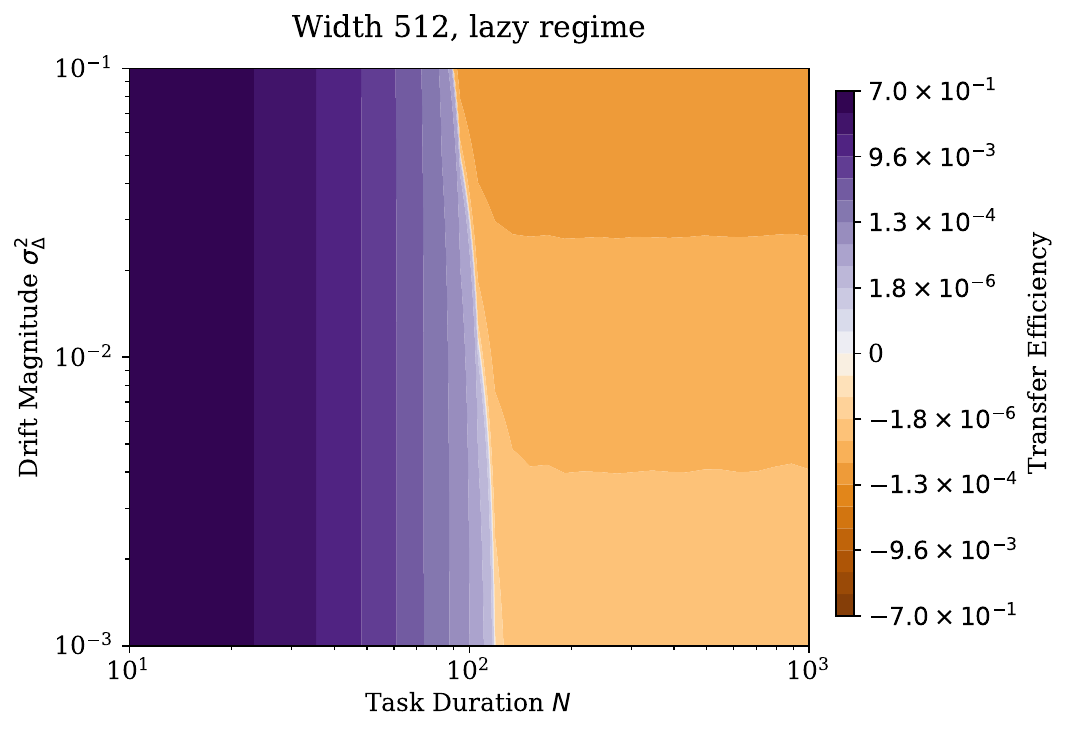}
    \includegraphics[width=0.35\linewidth]{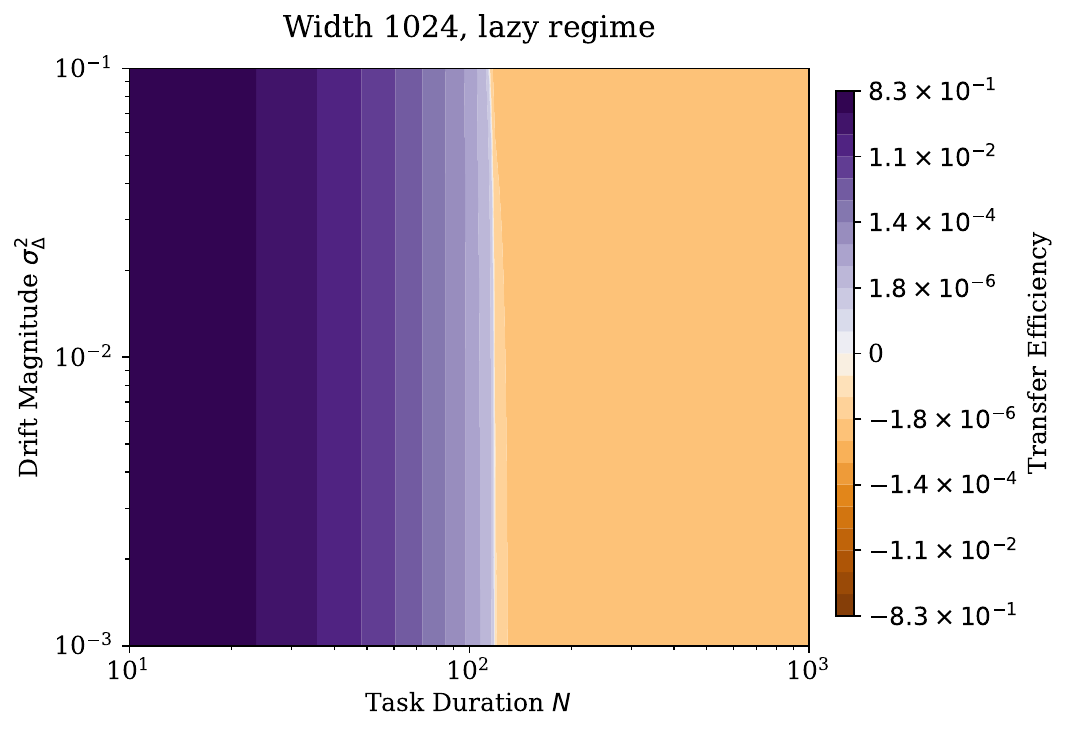}
    \caption{\textbf{Lazy regime simulations.} From left to right the \textbf{model width} is increased from $32$ to $1024$ ($2^5$ folds).}
    \label{fig:sim-deep-lazy}
\end{figure}

\begin{figure}[H]
    \centering
    \includegraphics[width=0.35\linewidth]{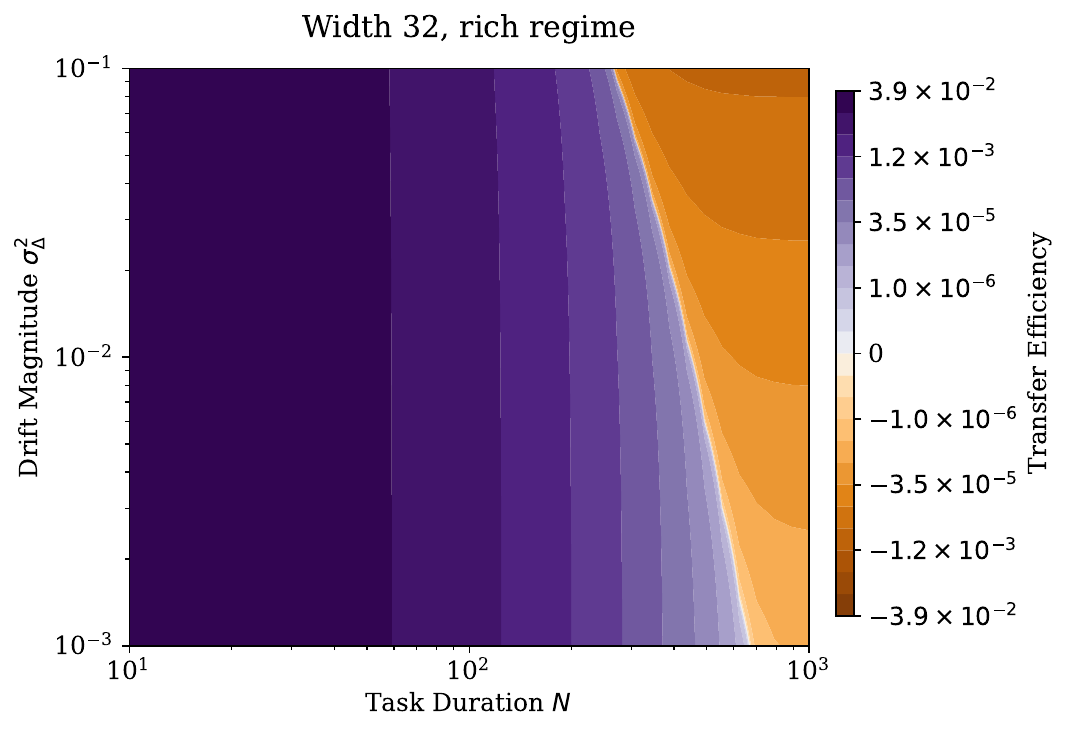}
    \includegraphics[width=0.35\linewidth]{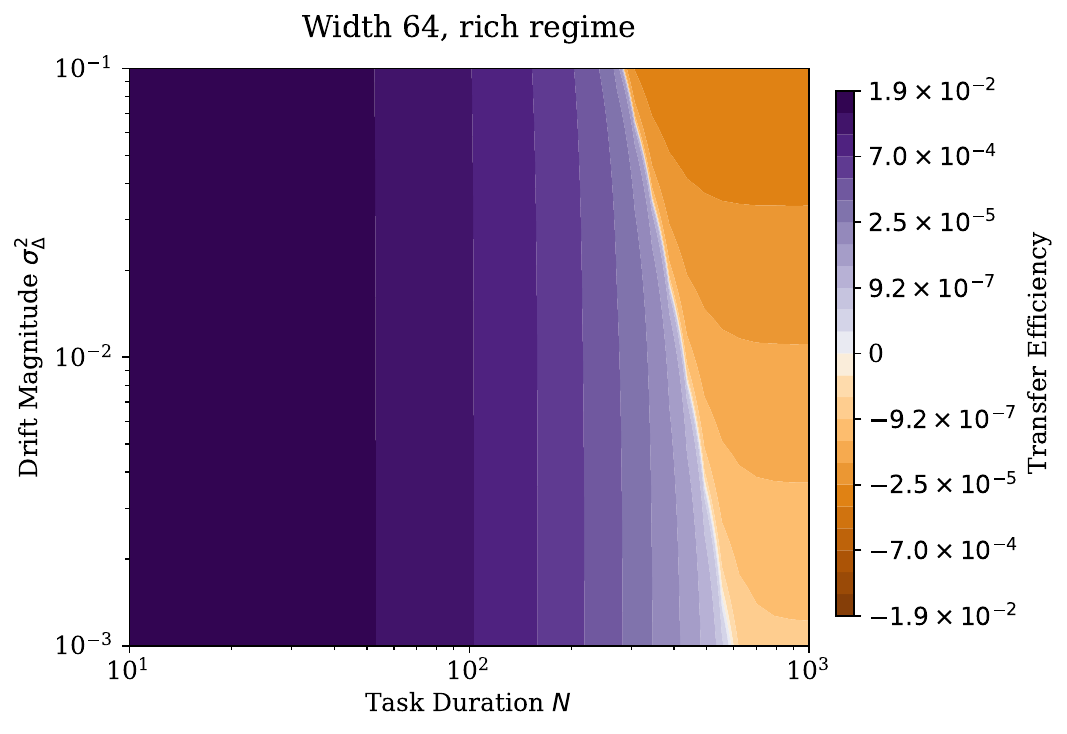}
    \includegraphics[width=0.35\linewidth]{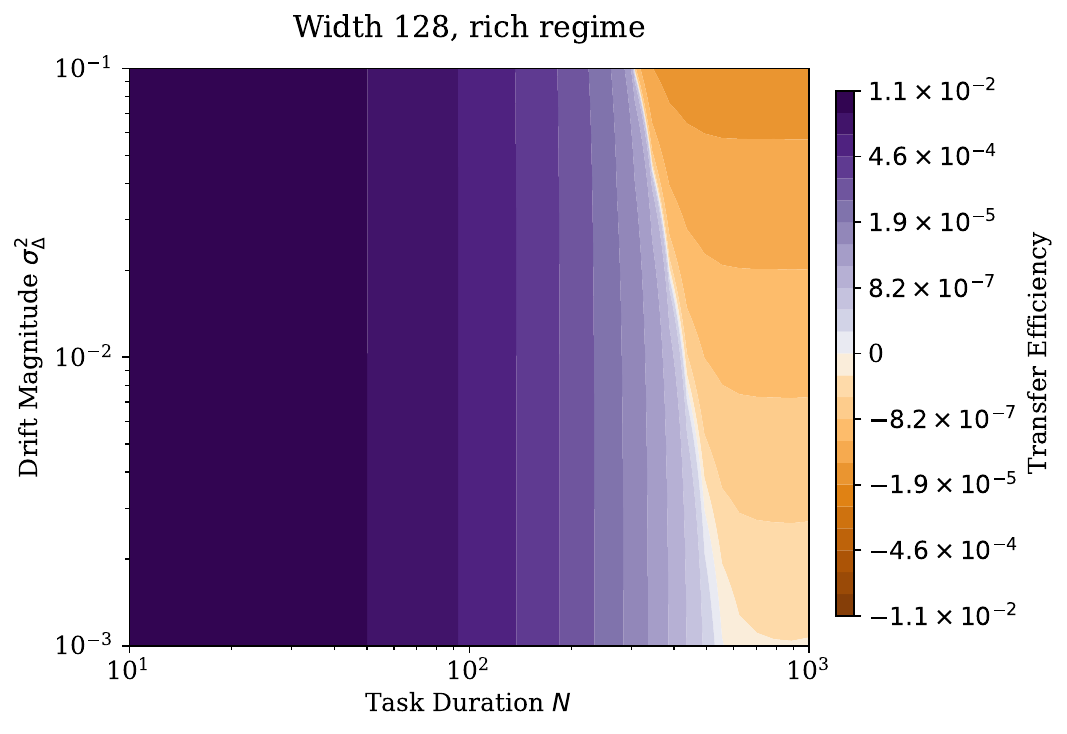}
    \includegraphics[width=0.35\linewidth]{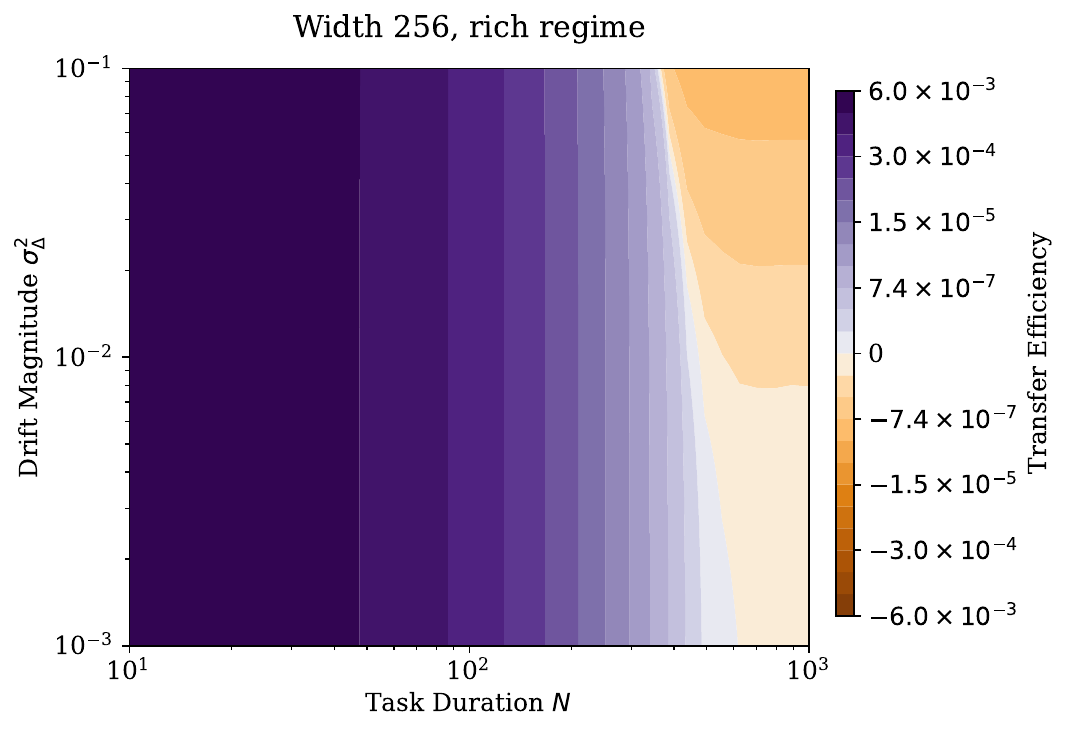}
    \includegraphics[width=0.35\linewidth]{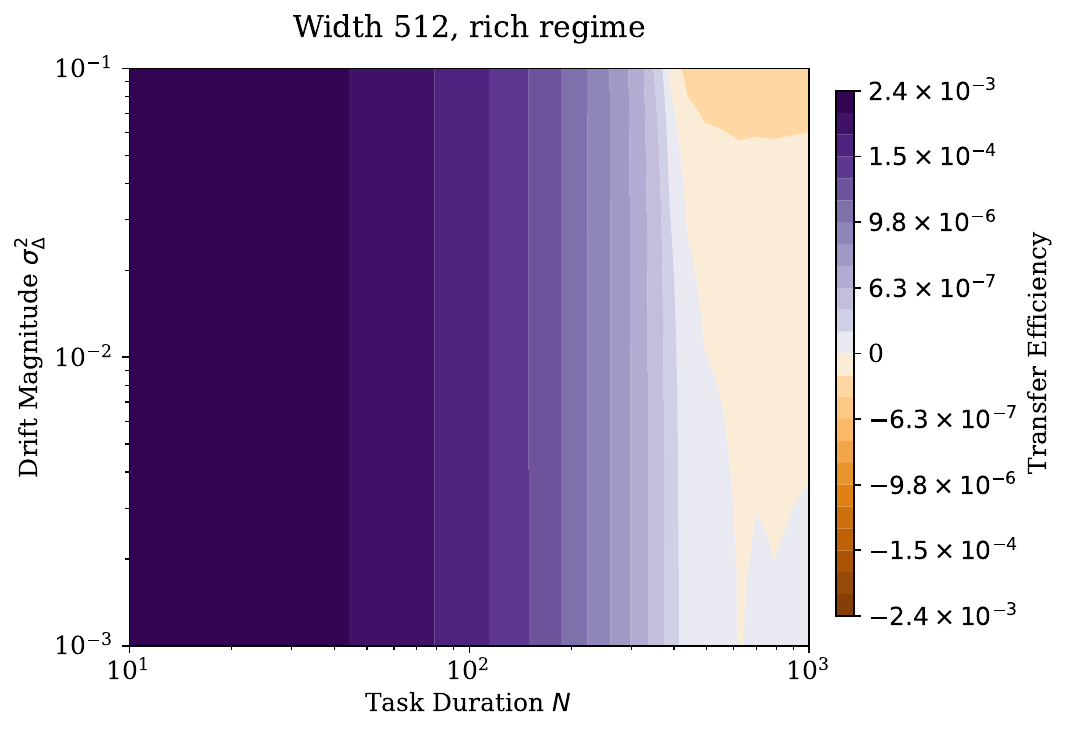}
    \includegraphics[width=0.35\linewidth]{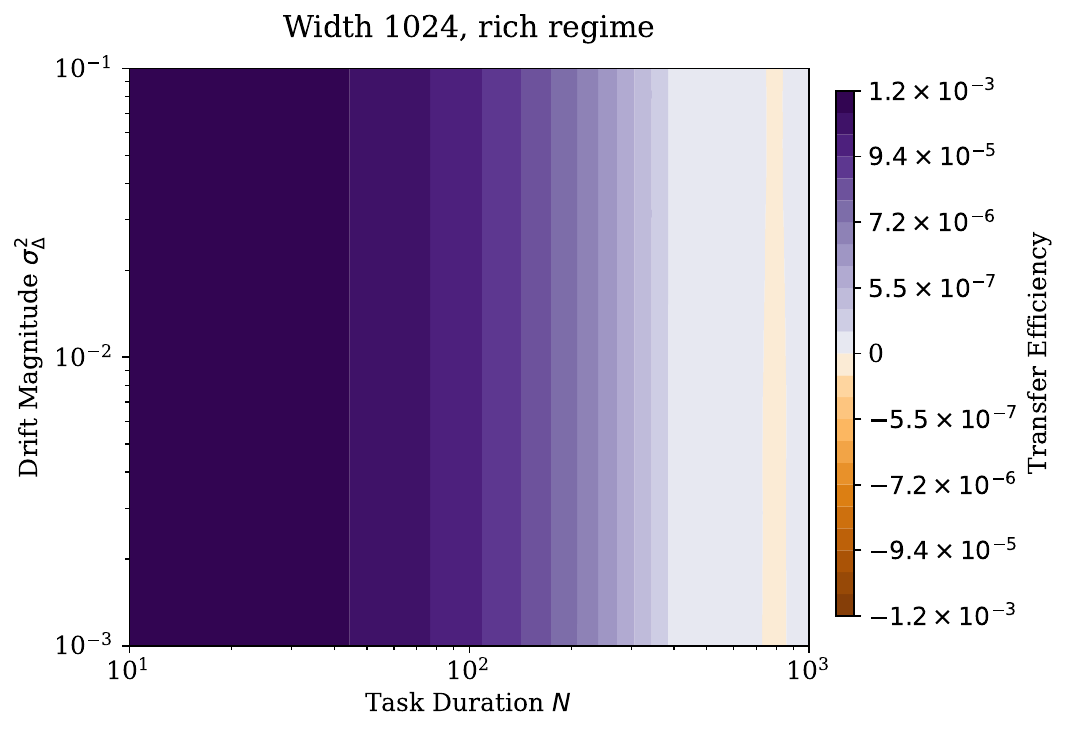}
    \caption{\textbf{Rich regime simulations.}  From left to right the \textbf{model width} is increased from $32$ to $1024$ ($2^5$ folds).}
    \label{fig:sim-deep-rich}
\end{figure}

\begin{figure}[H]
    \centering
    \begin{minipage}{0.4\textwidth}
        \centering
        \includegraphics[width=\linewidth]{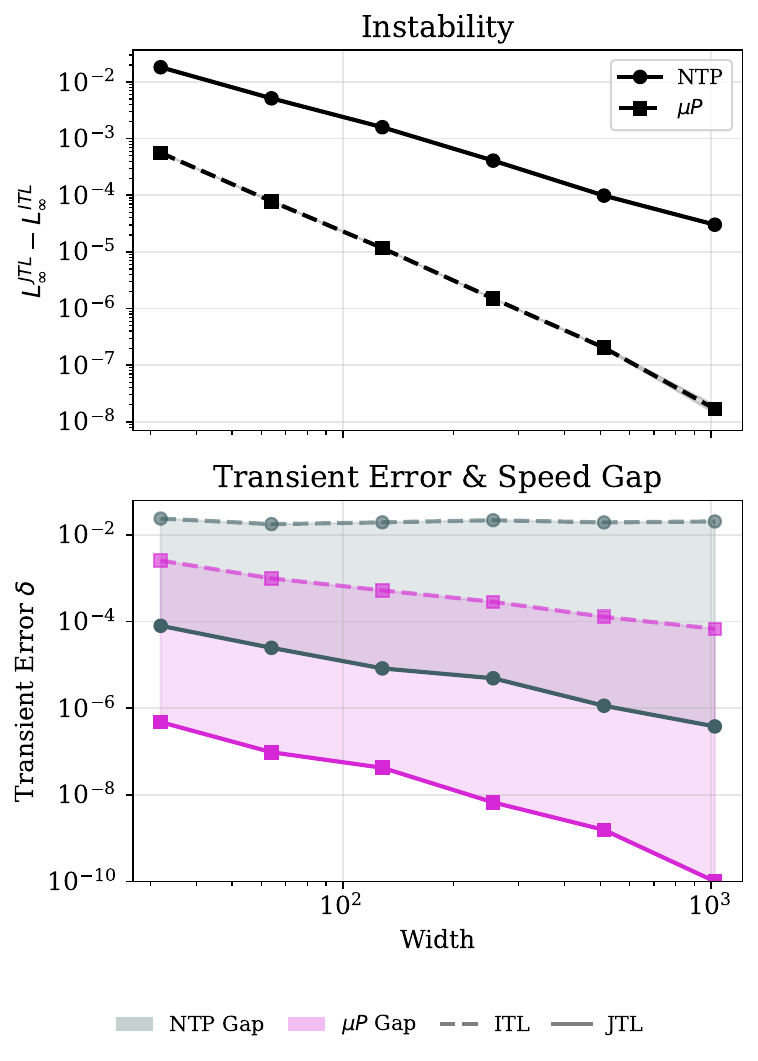}
    \end{minipage}
    \hspace{0.15cm}
    \begin{minipage}{0.5\textwidth}
    \caption{\textbf{Scaling of Instability and Transient Error with Network Width.} Empirical measurements taken in the deep linear network setup at a fixed task duration $N=1000$ and drift magnitude $\sigma_\Delta^2=0.05$. \emph{Left:} Instability (approximated as the average loss at convergence) decreases as network width scales up, but remains several orders of magnitude higher under the NTP parametrization (black circles) compared to $\mu$P (black squares). \emph{Right:} Decomposition of the Transient Error, with NTP shown in gray and $\mu$P in pink. Crucially, for both parametrizations and across all widths, the JTL Transient Error (solid lines) is orders of magnitude lower than the ITL Transient Error (dashed lines). The shaded areas emphasize the Transient Error gap, confirming that Transfer Efficiency is ultimately bottlenecked by Instability and ITL Transient Error.}
    \label{fig:insta-transient-scaling}
    \end{minipage}
\end{figure}

\subsubsection{Real-time training trajectories}

The main text reports TE after averaging over tasks and seeds. The trajectories below show where those averages come from in time. In particular, they distinguish transient effects at the beginning of each task from persistent bias or interference accumulated across the sequence.

\begin{figure}[hb]
    \centering
    \includegraphics[width=0.7\linewidth]{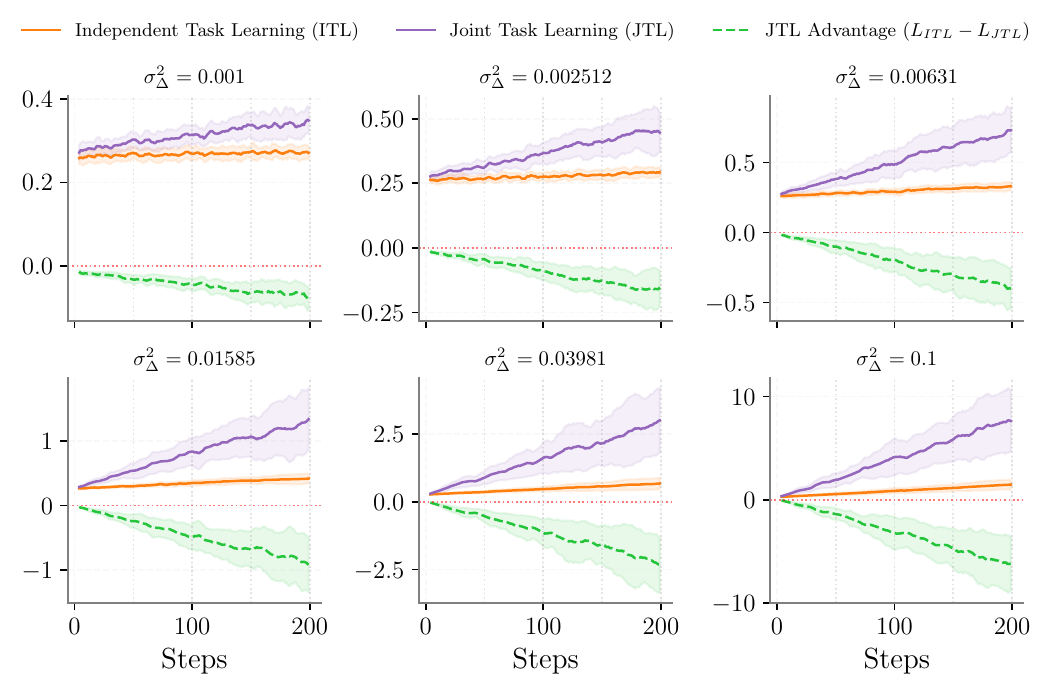}\\[-0.2cm]
    \caption{\textbf{Continuous-drift shallow-linear trajectories.} ITL and JTL losses are shown across task boundaries for several drift magnitudes, together with the instantaneous JTL advantage. Small drift produces positive transfer because JTL reduces repeated transient learning costs; larger drift erodes this advantage as the joint solution becomes increasingly biased toward obsolete tasks.}
    \label{fig:MT_Drift_Analysis}
\end{figure}
\vspace{-2mm}
Figure~\ref{fig:MT_Drift_Analysis} is the trajectory-level analogue of the continuous-drift phase diagrams. At small drift, the history remains predictive of the current optimum, so JTL starts each task closer to a useful solution and the advantage is mostly positive. As drift increases, the same history becomes stale: the transient benefit remains visible near some task changes, but it is eventually outweighed by the stationary error of fitting too many past tasks jointly.

\begin{figure}[h]
    \centering
    \includegraphics[width=0.85\linewidth]{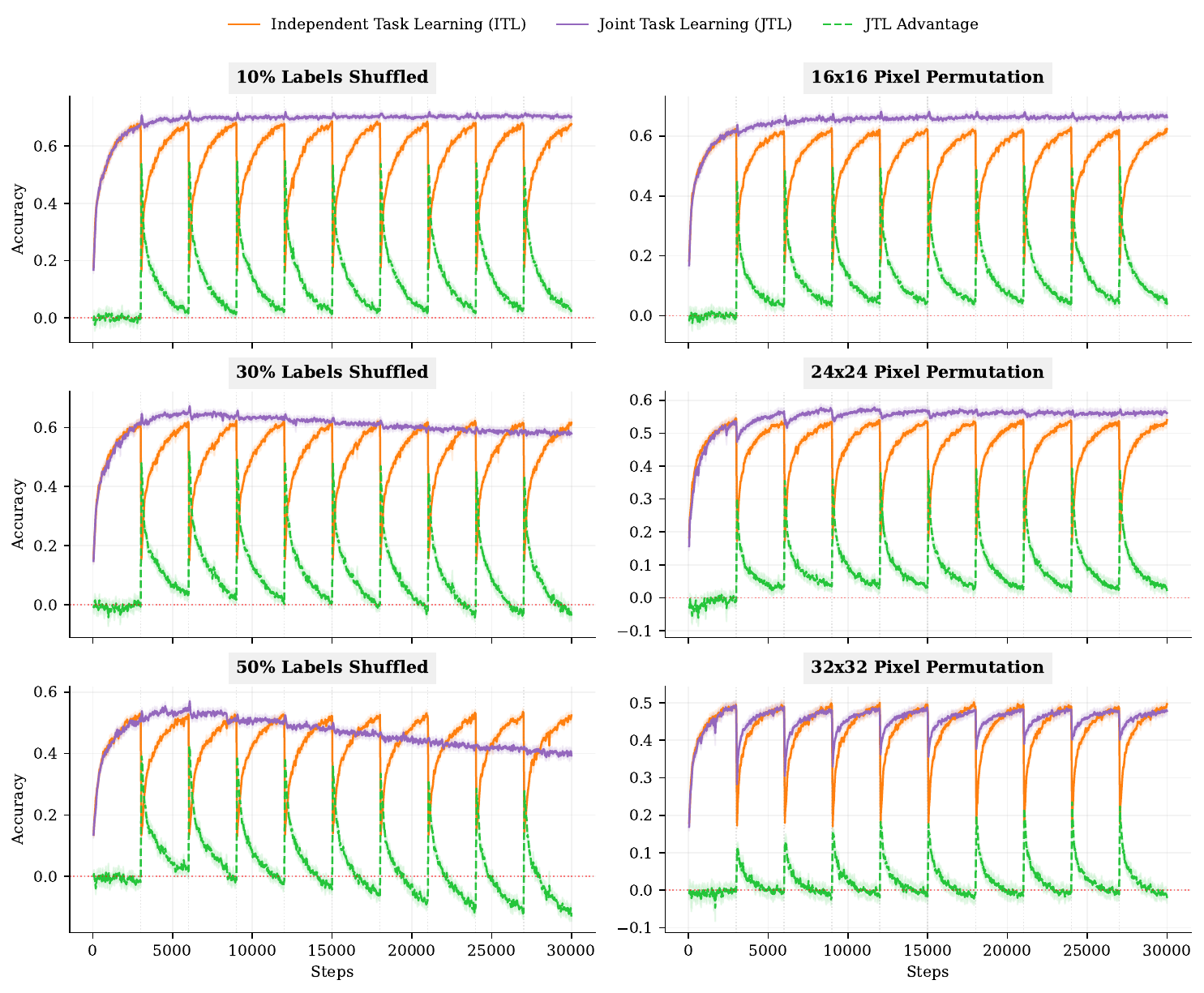}\\[-0.3cm]
    \caption{\textbf{Cross-benchmark comparison at task duration $N=3000$.} Real-time performance trajectories of JTL and ITL agents across the supervised image-classification benchmarks at a fixed task duration. See \cref{Apdx:datasets-networks-conf} for hyperparameter details.}
    \label{fig:benchmark_comparison_n3000}
\end{figure}

The supervised trajectories in Figure~\ref{fig:benchmark_comparison_n3000} make the same decomposition visible on real benchmarks. CLEAR is the closest empirical analogue of low drift: adjacent tasks share visual and semantic structure, so JTL can reuse previous training. MD5 and the synthetic CIFAR variants create sharper input or label shifts, making the historical data less aligned with the current task and increasing the Instability term.

\begin{figure}[h]
    \centering
    \includegraphics[width=0.49\linewidth]{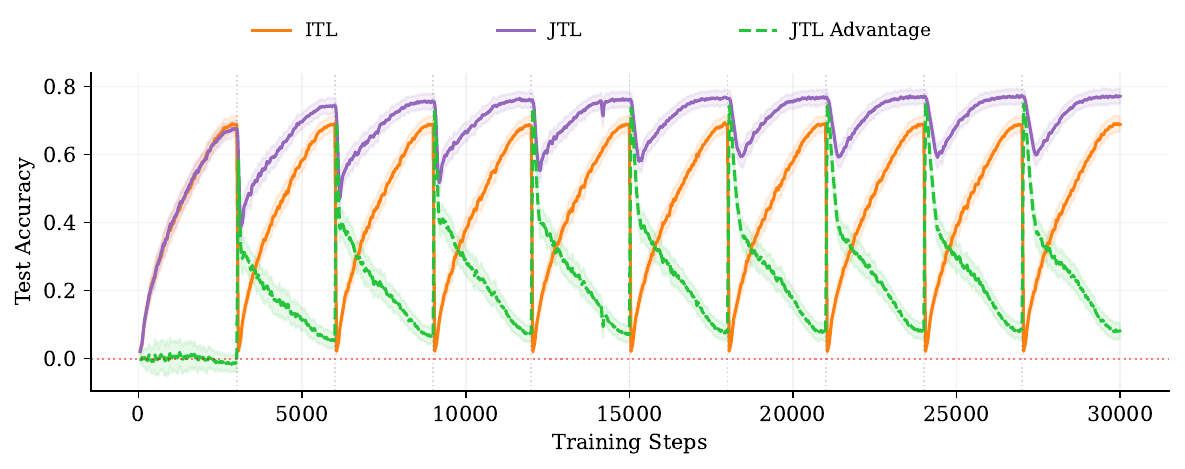}\hfill
    \includegraphics[width=0.49\linewidth]{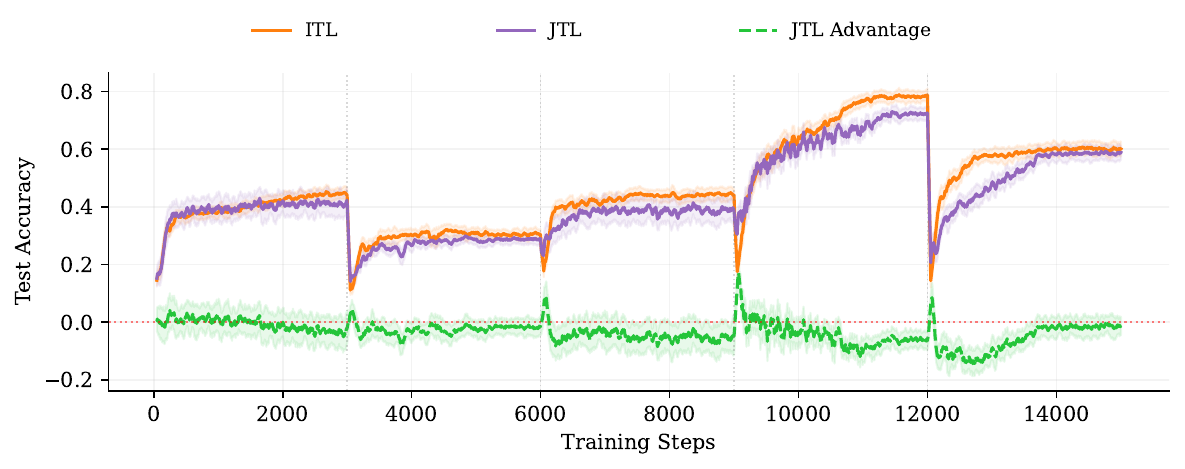}
    \caption{\textbf{Real-time training dynamics on CLEAR (left) and MD5 (right) at task duration $N=3000$.} Per-task evaluation accuracy of JTL and ITL agents across the full task sequence. The two benchmarks exhibit qualitatively different transfer regimes: smooth temporal drift in CLEAR (where JTL transfer is positive) versus sharp semantic shifts in MD5 (where JTL transfer becomes detrimental).\vspace{-3mm}}
    \label{fig:clear-md5-dynamics-n3000}
\end{figure}

Figure~\ref{fig:clear-md5-dynamics-n3000} highlights the contrast most directly. On CLEAR, the JTL trajectory tends to remain competitive or ahead because the stream changes gradually. On MD5, where each task corresponds to a more abrupt semantic change, JTL is frequently pulled toward earlier classes and ITL benefits from specializing to the current task. This is the empirical counterpart of the sign change in TE as Instability grows.

\begin{figure}[hb]
    \centering
    \includegraphics[width=0.65\linewidth]{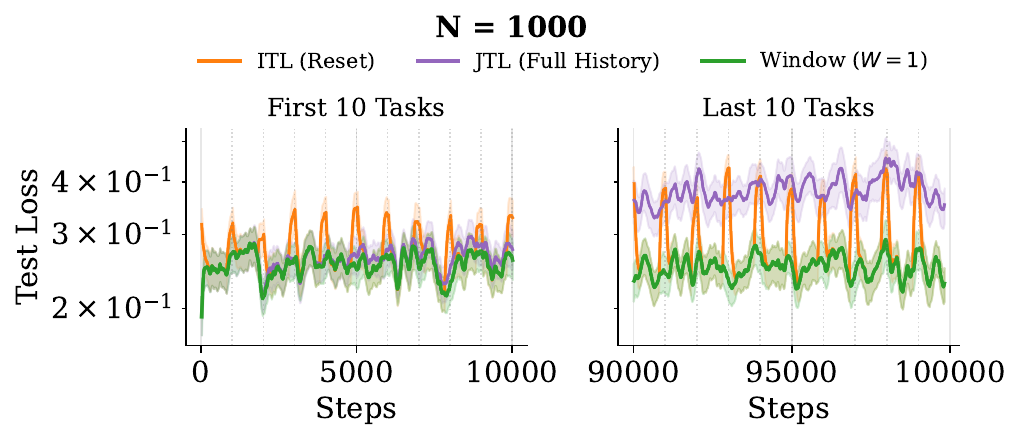}
    \includegraphics[width=0.6\linewidth]{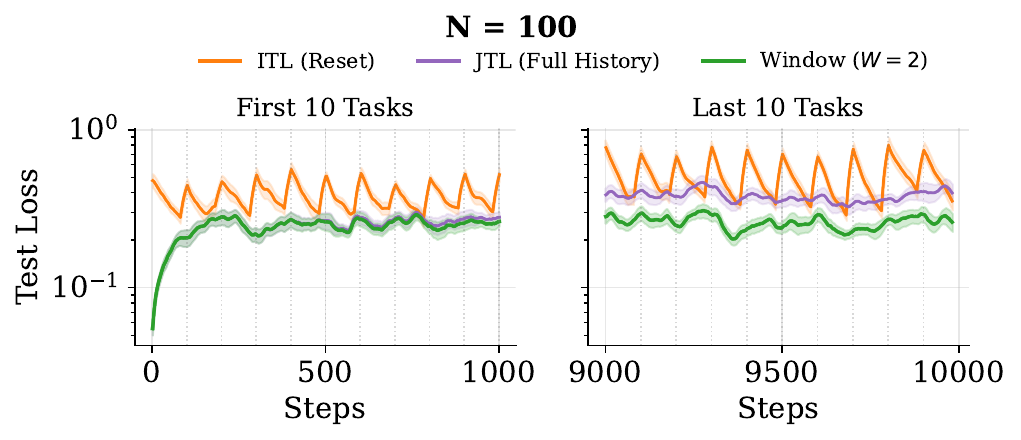}

    \includegraphics[width=0.6\linewidth]{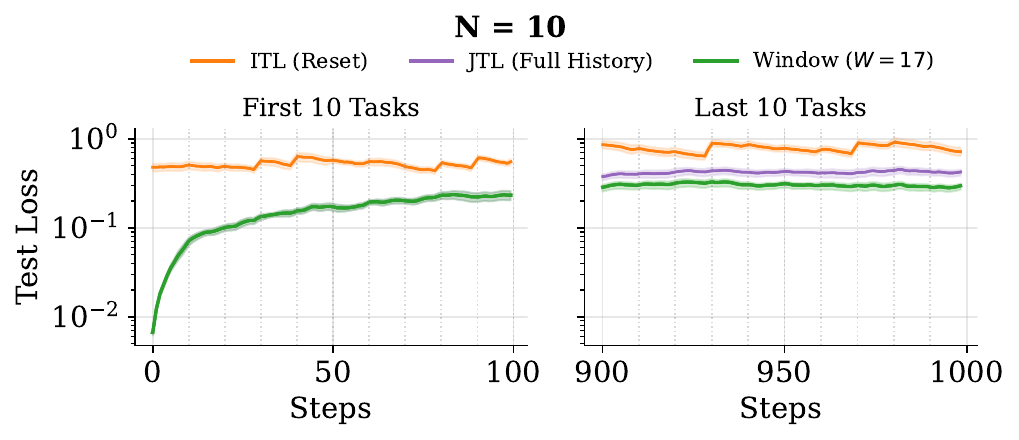}

    \caption{\textbf{Window-agent trajectory zooms.} The first and last ten tasks are shown for several task durations. The window agent keeps only a finite recent history, interpolating between ITL, which uses only the current task, and JTL, which retains the entire stream.\vspace{-3mm}}
    \label{fig:window_dynamics_zoom}
\end{figure}

The window-agent zooms in Figure~\ref{fig:window_dynamics_zoom} illustrate why a finite memory can outperform both extremes. Early in training, the window and JTL agents are similar because little history exists. Later, the finite window discards obsolete tasks, retaining the transient benefit of recent replay while reducing the bias from distant tasks. This behavior matches the theoretical picture: the best history length grows when tasks are short or drift is slow, and shrinks when old tasks become poor predictors of the current optimum.

\end{document}